\newtheorem{theorem}{Theorem}
\newtheorem{assumption}{Assumption}
\newtheorem{lemma}{Lemma}
\newtheorem{proposition}{Proposition}
\newtheorem{remark}{Remark}
\numberwithin{equation}{section}
\def\SS{\mathbb{S}}
\def\LLd{\mathcal{L}^2\left(\SS^{d-1}\right)}
\def\EE{\mathbb{E}}
\def\E{\mathbb{E}}
\def\HH{\mathcal{H}}
\def\RR{\mathbb{R}}
\def\LL{\mathcal{L}}
\def\Y{Y}
\def\NN{\mathbb{N}}
\def\PP{\mathcal{P}}
\def\FF{\mathcal{F}}
\def\DD{\mathcal{D}}
\def\lan{\left\langle}
\def\ran{\right\rangle}
\def\l[{\left[}
\def\r]{\right]}
\def\lVert{\left\Vert}
\def\rVert{\right\Vert}
\def\[{\left[}
\def\]{\right]}
\def\ee{\mathcal{E}}
\title{Truncated Kernel Stochastic Gradient Descent on Spheres$^\dag$\footnotetext{\dag~The work described in this paper is supported by the National Natural Science Foundation of China [Grant No.12171039]. Email addresses: 24110180001@m.fudan.edu.cn (J.-H. Bai), leishi@fudan.edu.cn (L. Shi). The
corresponding author is Lei Shi.}}
\author{Jinhui Bai}
\author{Lei Shi}
\affil{School of Mathematical Sciences and Shanghai Key Laboratory for
	Contemporary Applied Mathematics, Fudan University, Shanghai 200433, China.}
\date{}
\begin{document}
	\maketitle
\begin{abstract}
Inspired by the structure of spherical harmonics, we propose the truncated kernel stochastic gradient descent (T-kernel SGD) algorithm with a least-square loss function for spherical data fitting. T-kernel SGD introduces a novel regularization strategy by implementing stochastic gradient descent through a closed-form solution of the projection of the stochastic gradient in a low-dimensional subspace. In contrast to traditional kernel SGD, the regularization strategy implemented by T-kernel SGD is more effective in balancing bias and variance by dynamically adjusting the hypothesis space during iterations. The most significant advantage of the proposed algorithm is that it can achieve theoretically optimal convergence rates using a constant step size (independent of the sample size) while overcoming the inherent saturation problem of kernel SGD. Additionally, we leverage the structure of spherical polynomials to derive an equivalent T-kernel SGD, significantly reducing storage and computational costs compared to kernel SGD. Typically, T-kernel SGD requires only $\mathcal{O}(n^{1+\frac{d}{d-1}\epsilon})$ computational complexity and $\mathcal{O}(n^{\frac{d}{d-1}\epsilon})$ storage to achieve optimal rates for the d-dimensional sphere, where $0<\epsilon<\frac{1}{2}$ can be arbitrarily small if the optimal fitting or the underlying space possesses sufficient regularity. This regularity is determined by the smoothness parameter of the objective function and the decaying rate of the eigenvalues of the integral operator associated with the kernel function, both of which reflect the difficulty of the estimation problem. Our main results quantitatively characterize how this prior information influences the convergence of T-kernel SGD. The numerical experiments further validate the theoretical findings presented in this paper.
\end{abstract}
	
{\textbf{Keywords and phrases:} spherical data, stochastic gradient descent, spherical harmonics, convergence analysis}

{\textbf{MSC codes:}  68T05,  68Q32,  33C55,  62L20}
\section{Introduction}\label{Section: Introduction}

Spherical data is an important type of data used to describe the orientation of objects in space. For example, three-dimensional spherical data is widely used in topography, meteorology, geodesy, and many other research fields \cite{fasshauer1998scattered,freeden1998constructive,wendland2004scattered}. Additionally, observations that are not inherently orientations can sometimes be re-expressed as orientations and analyzed as spherical data. For instance, in text mining \cite{garcia2016testing} and image recognition \cite{di2019local}, to normalize the data for further analysis, we often unitize the collected high-dimensional data, converting it into spherical data. The study of spherical data analysis can be traced back to Fisher's seminal paper published in 1953 \cite{fisher1953dispersion}. With the increasing importance and emerging challenges in this field, the analysis and processing of spherical data have garnered significant attention from communities of numerical analysis, statistics, and machine learning.

In this paper, we consider the regression problem based on spherical data. Let $\SS^{d-1}$ denote the $d$-dimensional unit sphere, and let $\mu$ be a Borel probability measure on $\SS^{d-1} \times \mathbb{R}$. The goal of regression analysis of spherical data is to infer a functional relationship between the explanatory variable $X$, which takes values in $\SS^{d-1}$, and the response variable $Y \in \mathbb{R}$, under the assumption that $\mu$ is the joint distribution of $(X, Y)$, but is completely unknown. In the regression problem, the most important functional relation is the conditional mean of $Y$ given $X = x$, also known as the regression function, which minimizes the population risk
\begin{equation}\label{eq:population risk}
\EE_\mu\left[\left(f(X) - Y\right)^2\right]
\end{equation}
over all measurable functions. As a basic estimator in data analysis, regression function is used in many situations for forecasting, modeling and analysis of trends. 

In practice, although $\mu$ is unknown, we have a sequence of samples $\{(X_i, Y_i)\}_{i \geq 1}$ that are independently distributed according to $\mu$. In this paper, we fully utilize the fine structure of the space of spherical harmonics to design a kernel stochastic gradient descent algorithm for spherical data regression analysis. When the sequence of samples $\{(X_i, Y_i)\}_{i \geq 1}$ arrives in a stream, the proposed algorithm can process each sample one by one while updating the estimator in real time. We demonstrate that the proposed algorithm achieves optimal convergence rates for approximating the regression function. Each update of the estimator requires computational and storage complexities of $\mathcal{O}(n^{\frac{d}{d-1} \epsilon})$ and $\mathcal{O}(n^{\frac{d}{d-1} \epsilon})$, respectively. Moreover, for $n$ samples, the total computational and storage complexities of the estimator are $\mathcal{O}(n^{1+\frac{d}{d-1} \epsilon})$ and $\mathcal{O}(n^{\frac{d}{d-1} \epsilon})$, respectively. Here, $0 < \epsilon < 1/2$ can be made arbitrarily small if the optimal fitting or the underlying space has sufficient regularity, with $n$ denoting the sample size.

\subsection{Motivations}

Kernel methods, with their solid mathematical foundations and optimal convergence properties, have gained significant popularity in nonlinear and non-parametric regression. However, their effectiveness is hampered when it comes to large-scale data regression analysis. This is primarily due to the computational and storage requirements of the kernel matrix, which is dense and grows proportionally to the square of the sample size. Even for a moderate sample size, the standard implementation of these methods becomes computationally intractable, posing a significant challenge in the field of numerical and data analysis.

Various attempts have been made to improve the scalability of kernel methods, which can be broadly categorized based on how the data is processed. One category is based on batch processing, where the data set is acquired at once, requiring the algorithm to store and process the entire dataset. Within this framework, several practical strategies have emerged over the past decade, including divided-and-conquer approaches \cite{zhang2015divide,lin2017distributed}, low-rank approximations of kernel matrices \cite{bach2013sharp}, and gradient descent methods equipped with early-stopping or preconditioning techniques \cite{yao2007on,avron2017faster}. These approaches reduce algorithmic complexity by avoiding inversion or eigen-decomposition of the kernel matrix, which requires $\mathcal{O}(n^3)$ time and $\mathcal{O}(n^2)$ space. However, they still face challenges in terms of storage complexity. For example, iterative methods based on gradient descent typically require $\mathcal{O}(n^2)$ time per iteration and $\mathcal{O}(n^2)$ space. Divided-and-conquer approaches leverage parallel data processing but do not fundamentally reduce storage space requirements, making them unsuitable for data processing on a single machine with limited memory. Low-rank approximations of kernel matrices, including Nystr\"{o}m subsampling \cite{smola2000sparse,williams2000using,kumar2012sampling,musco2017recursive,ma2019nystrom}, random features \cite{rahimi2007random,liu2021random}, and their many variants, offer a way to reduce memory requirements, but the memory demand has a lower bound if approximation accuracy is to be maintained. Recent research has combined low-rank approximation with preconditioning iterative approaches, proposing kernel methods with optimal scalability for batch data processing scenarios \cite{alessandro2017falkon,lei2024iterative}. Theoretical studies indicate that these algorithms require $\mathcal{O}(n\sqrt{n})$ in time and $\mathcal{O}(n)$ in memory to guarantee optimal convergence rates.

When data cannot be processed in batches or is collected as real-time streaming data, it is natural to use iterative methods that update the current estimate with each new sample. Approaches of the second category, known as online data processing, typically employ the idea of stochastic approximation, with algorithms often based on stochastic gradient descent (SGD). Robbins and Monro introduced traditional linear SGD to solve parametric regression problems \cite{herbert1951a}. Kernel SGD, as proposed in the literature \cite{kivinen2004online}, leads to non-parametric estimations in infinite-dimensional function spaces. Kernel SGD does not require storing the kernel matrix; however, unlike linear SGD, the former needs to store each sample during iteration. This requirement arises because, in the $n$-th iteration, the kernel function values at $n-1$ pairs $\{(X_i,X_n)\}_{i=1}^{n-1}$ must be computed. Consequently, the computational cost in the $n$-th iteration is $\mathcal{O}(n-1)$. If the algorithm concludes after $n$ iterations (i.e., the total sample size is $n$), the total computational complexity is $\mathcal{O}(1+2+\cdots+n-1)=\mathcal{O}(n^2)$. Additionally, since each sample must be stored, the algorithm's memory requirement linearly depends on the sample size. If storing the kernel matrix generated by the total samples is allowed, the memory requirement is $\mathcal{O}(n^2)$, and the total computational cost of kernel SGD will reduce to $\mathcal{O}(n)$. The complexity mentioned above is for the single path (each sample used only once) kernel SGD; for other algorithm variants, the complexity will increase further. The optimal convergence of Kernel SGD has been studied and demonstrated in numerous studies \cite{smale2006online,ying2008online,tarres2014online,rosasco2015learning,dieuleveut2016nonparametric,marteauferey2019least,guo2019fast}.

In the framework of batch data processing, a key idea in designing scalable kernel methods is to sacrifice the accuracy obtained from using the entire data set to gain benefits in algorithmic complexity. From a theoretical perspective, a critical issue is describing the trade-off between convergence and computational complexity, i.e., under what conditions we can maximize computational complexity benefits without sacrificing optimal convergence properties. Current research indicates that to maintain the optimal convergence rates, kernel methods for batch data processing require at least $\mathcal{O}(n\sqrt{n})$ in time and $\mathcal{O}(n)$ in memory, which seems to be the limit for such approaches (see references \cite{alessandro2017falkon} and discussions therein). The online data processing setting is more aligned with the characteristics of modern data, which are often acquired in real time and incrementally. However, existing kernel SGD and their variants are still limited by computational costs of $\mathcal{O}(n^2)$. Classical linear SGD is equivalent to kernel SGD equipped with a linear kernel function. In the iterative process, linear SGD does not require data storage and only has a computational complexity of $\mathcal{O}(n)$. This inspires us to consider that if the kernel function has a specific expansion form, such as in the case of the linear kernel that can be represented as a coordinate function expansion, we can design kernel SGD algorithms with lower complexity without directly using the explicit closed form of kernels. Many kernel functions defined on the sphere have explicit series representations, such as those generated by spherical wavelets and spherical radial basis functions \cite{antoine2002wavelets,simon2015spherical}. These specific kernel functions not only capture the unique structure of the sphere but have also been used for batch processing of spherical data, constructing spherical frames, interpolation schemes, and regularized kernel methods \cite{grace1981spline,narcowich2002scattered,lin2024kernel,lin2024sketching}. In this paper, we leverage the fine structure of spherical kernel functions to take a substantial step in reducing the computational requirements of kernel SGD. Additionally, we demonstrate that the proposed algorithm guarantees optimal convergence properties.

\subsection{Contribution and Organization of the Paper}

This paper proposes a novel stochastic approximation algorithm for spherical data. We construct a series-form spherical kernel function $K(x,x')$ for $x,x'\in \SS^{d-1}$ by using the reproducing kernels in the space of spherical harmonics. The infinite-dimensional reproducing kernel Hilbert space (RKHS) $\mathcal{H}$ generated by this kernel function serves as the underlying space for approximating the regression function. Existing studies primarily design kernel SGD by constructing an unbiased estimator of the Fr\'{e}chet derivative of the population risk \eqref{eq:population risk} in the entire $\mathcal{H}$ using random samples. Our innovative approach is to consider the size of the underlying function space as an adjustable parameter. Specifically, we initially choose a smaller function space with fewer samples to reduce variance. As the iteration progresses and more samples are obtained, we increase the function space to reduce bias. To achieve this, we design an increasing family of finite-dimensional nested function spaces $\{\mathcal{H}_{L_n}\}_{n \geq 0}$, satisfying the closure of $\bigcup_{n=0}^{\infty}\mathcal{H}_{L_n}$ equal to $\HH$, using the infinite series expansion form of the spherical kernel function. In the $n-$th iteration, the update is performed along the negative direction of the unbiased estimator of the Fr\'{e}chet derivative of the population risk in the hypothesis space $\mathcal{H}_{L_n}$. This iterative form essentially truncates the kernel function $K$ by projecting the gradient onto $\mathcal{H}_{L_n}$, meaning we do not require a closed form of $K$ in the algorithm design. Instead, we only need to choose an appropriate $L_n$ to truncate the series expansion of $K$ and construct the corresponding $\mathcal{H}_{L_n}$. Therefore, we name the new algorithm truncated kernel SGD, or T-kernel SGD for short. On the other hand, this iterative form is precisely the closed-form solution of the stochastic gradient projection in nested function spaces $\{\mathcal{H}_{L_n}\}_{n \geq 0}$, allowing us to directly implement regularization by managing the size of the hypothesis space $\mathcal{H}_{L_n}$, avoiding the need to explicitly introduce regularization terms into the algorithm. This new regularization method balances bias and variance by controlling the rate at which the dimension of the space $\mathcal{H}_{L_n}$ expands as the sample size increases. This new algorithm eliminates the strict step size constraints of previous literature (e.g., \cite{smale2006online,ying2008online,tarres2014online,dieuleveut2016nonparametric,guo2019fast}), allowing the use of a constant step size (independent of sample size) to achieve optimal convergence rates. In addition to avoiding the hardship of dealing with multiple parameters polynomial in step size, the constant step size allows for faster forgetting of the initial conditions and leads to faster decaying of the bias term \cite{dieuleveut2020bridging}. We theoretically prove that compared to traditional SGD algorithms in infinite-dimensional RKHSs that rely on precise control of the step size decaying rate, T-kernel SGD effectively balances variance and bias by controlling the increasing rate of the size of the underlying function space $\{\mathcal{H}_{L_n}\}_{n \geq 0}$, making the algorithm more robust, adaptive, and overcoming the saturation problem of traditional kernel SGD. Our algorithm is incremental and entirely suitable for modern data acquired in real-time. During its execution, it does not require pre-specifying the function space dimension; instead, it is gradually increased during iterations. Moreover, working in the finite-dimensional function space (though it increases with sample size) allows us to design an equivalent version of T-kernel SGD with significantly lower storage and computational costs by utilizing the reproducing kernel's polynomial structure in the space of spherical harmonics. When the target function is sufficiently smooth, T-kernel SGD can achieve optimal convergence rates only requiring $\mathcal{O}(n^{1+\frac{d}{d-1}\epsilon})$ in time and $\mathcal{O}(n^{\frac{d}{d-1}\epsilon})$ in memory where $0<\epsilon<1/2$ can be arbitrarily small.

Polyak-averaging, as an optimization technique, has been shown to significantly enhance the robustness of algorithms in many studies. Recently, the schedule-free SGD \cite{defazio2024road}, an adaptive momentum algorithm that unifies scheduling and iterate averaging, has demonstrated excellent performance. In particular, schedule-free SGD is highly versatile, capable of processing data in batches or real-time streams, and has demonstrated outstanding performance both theoretically and in large-scale experiments. Compared to schedule-free SGD, T-kernel SGD also eliminates the need for manual tuning of the learning rate. Additionally, T-kernel SGD employs a novel regularization mechanism and similarly enhances algorithm robustness through Polyak-averaging. Theoretically, we demonstrate that T-kernel SGD achieves the optimal convergence rate with a constant step size.

It is worth noting that \cite{altschuler2023kernel} mentions that, due to the structure of polynomials, an increase in the spherical dimension makes it difficult to approximate non-degenerate kernels with spherical harmonic functions. This appears to cause T-kernel SGD to progressively resemble adaptive low-order polynomial fitting as the dimension $d$ increases. This implies that T-kernel SGD needs to match a larger $\epsilon$ to ensure the inclusion of higher-order polynomials in the function space to avoid underfitting, which could limit the performance of T-kernel SGD in such cases. However, in \autoref{subsec:High dim data}, in terms of numerical implementation, T-kernel SGD performs well on the standard 784-dimensional image classification dataset, where only second-order polynomials are employed. This suggests that for some high-dimensional problems, low-order polynomials may still provide a sufficiently large hypothesis space to ensure model performance.

The paper is organized as follows. In \autoref{sec: 2SHESA}, we first provide the theoretical background of the space of spherical harmonics required for this paper and then derive the specific form of T-kernel SGD. We use Polyak-averaging to increase the robustness of T-kernel SGD and achieve optimal convergence rates with large step sizes. Additionally, we discuss how to extend the design idea of T-kernel SGD to more general numerical optimization problems. Finally, we derive the equivalent form of T-kernel SGD using spherical polynomials in spherical harmonics and analyze the computational and storage costs. In \autoref{sec: 3AAC}, we establish the theoretical guarantees for the optimal convergence. Under the assumptions of function space capacity and regression function regularity, T-kernel SGD can achieve optimal convergence with constant step sizes by controlling the growth of the function space dimension, which directly acts as a regularization mechanism. Subsequently, we demonstrate that even when the regularity conditions are unknown, T-kernel SGD can still achieve theoretically optimal convergence rates with polynomially decaying step sizes without strictly controlling the growth rate of the function space. The theoretical results in this paper address an important issue: how to effectively achieve bias-variance trade-off by leveraging adaptive step size combined with space capacity control in SGD. In \autoref{sec:4RW}, we briefly review related work on kernel-based nonparametric estimation and fitting of spherical data. 
In \autoref{sec:6SS}, we demonstrate the main conclusions from \autoref{sec: 3AAC} through numerical simulations. In addition, we compare the performance of multi-scale methods, such as kernel SGD \cite{dieuleveut2016nonparametric}, FALKON \cite{alessandro2017falkon}, and Gaussian process regression \cite{schafer2021sparse}, with T-kernel SGD in low-dimensional settings. We compare the performance of kernel SGD with various universal kernels and T-kernel SGD on mid-dimensional spherical and non-spherical datasets. Finally, we used standard non-spherical high-dimensional datasets to validate the versatility of T-kernel SGD and infinite sequence datasets to assess the performance of T-SGD \footnote{A Python code is available at\ \url{https://github.com/JinHui-Bai/T-kernel-SGD-codes.git}.}. We have tried to make the paper as self-contained as possible; proofs are placed in Appendix for readability.

\section{Notations, Theoretical Background and Algorithms}\label{sec: 2SHESA}

In this section, we introduce some notations and provide the necessary theoretical background on the space of spherical harmonics required to construct spherical kernel expansion. We will then present the specific form of the T-kernel SGD, discuss its generalizations and equivalent forms, and analyze the complexity of the proposed algorithm. 

\subsection{Spaces of Spherical Harmonics}\label{subsec:SHF}

Let $\omega$ be the Lebesgue measure on $\SS^{d-1}$ with $d \geq 2$. The space $\mathcal{L}^2(\SS^{d-1})$ of square \(\omega\)-integrable functions is defined with the inner product
$$\langle f,g\rangle_{\omega}=\frac{1}{\Omega_{d-1}}\int_{\SS^{d-1}}f(x)g(x)d\omega(x),\quad \forall f,g\in\LLd,$$
where $\Omega_{d-1}$ denotes the surface area of $\SS^{d-1}$. According to the Weierstrass approximation theorem, the space of spherical polynomials defined on $\SS^{d-1}$ is dense in $\LLd$ \cite{dai2013approximation}. However, a stronger result holds: The space $\mathcal{L}^2(\SS^{d-1})$ can be decomposed into a direct sum of the spaces of spherical harmonics. Before presenting this conclusion, it is necessary to introduce some definitions and results related to spherical harmonic polynomials.

A homogeneous polynomial defined on $\RR^{d}$ of degree $k$ has the form
$\sum_{|\alpha|=k} C_\alpha x^\alpha,$ where $\alpha = (\alpha_1, \dots, \alpha_d) \in \NN^d$. The term $x^\alpha$ means a product $x^\alpha = x_1^{\alpha_1} \dots x_d^{\alpha_d}$ and the degree is given by $|\alpha| = \alpha_1 + \dots + \alpha_d$. We introduce $\PP_k^{d}(\RR^d)$ to represent the space containing all homogeneous polynomials of degree $k$ on $\RR^d$, and $\PP_k^{d} = \PP_k^{d}(\RR^d)|_{\SS^{d-1}}$ to denote homogeneous polynomials of degree $k$ restricted to $\SS^{d-1}$. We use $\Pi_k^{d}(\RR^d)$ to represent all polynomials of degree at most $k$, which have the form $\sum_{|\alpha| \leq k} C_\alpha x^\alpha$ for $x \in \RR^d$. The notation $\Pi_k^{d} = \Pi_k^{d}(\RR^d)|_{\SS^{d-1}}$ represents the polynomials in $\Pi_k^{d}(\RR^d)$ restricted to $\SS^{d-1}$. The Laplacian operator is denoted by $\Delta$. A spherical harmonic polynomial $P \in \PP_k^{d}$ satisfies the property
$\Delta P = 0.$
For $k \in \{0\} \cup \mathbb{N}$, the space of spherical harmonic polynomials of homogeneous degree $k$ is
$$\HH_k^{d} := \left\{ P \in \PP_k^{d}\,\Big\vert\,\Delta P = 0  \right\}.$$
Unless otherwise stated, we default to $\PP_k^{d}$, $\Pi_k^{d}$, and $\HH_k^{d}$ as subspaces of $\LLd$, equipped with the inner product $\langle \cdot, \cdot \rangle_{\omega}$. The symbol $\bigoplus$ denotes the direct sum of inner product spaces.

\begin{lemma}\label{lem:decompose LLd and PPkd}
For $j,k\in \{0\} \cup \mathbb{N}$, if $j\neq k$, $\HH_j^d$ and $\HH_k^d$ are orthogonal to one another. Furthermore, there holds
$$\PP_k^{d}=\bigoplus_{0\leq j\leq\frac{k}{2}}\HH_{k-2j}^{d}\quad\quad\text{and}\quad\quad\LLd=\bigoplus_{k\geq 0}\HH_{k}^{d}.$$
\end{lemma}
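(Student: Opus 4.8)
The plan is to prove the three assertions in sequence: orthogonality first, then the finite decomposition of $\PP_k^{d}$, and finally the full $\LLd$ decomposition by bootstrapping via density of polynomials.

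For the orthogonality of $\HH_j^d$ and $\HH_k^d$ when $j\neq k$, I would work with the harmonic homogeneous extensions to $\RR^d$ and invoke Green's second identity on the unit ball $B=\{x\in\RR^d:|x|\le 1\}$. For $P\in\HH_j^d$ and $Q\in\HH_k^d$, both are harmonic, so $\int_B(P\Delta Q-Q\Delta P)\,d x=0$, whence the boundary integral vanishes. On $\SS^{d-1}$ the outward normal derivative coincides with the radial derivative, and Euler's relation for homogeneous functions gives $\partial P/\partial r=jP$ and $\partial Q/\partial r=kQ$ there; the surface integral therefore collapses to $(k-j)\int_{\SS^{d-1}}PQ\,d\omega$. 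Since $j\neq k$, this forces $\langle P,Q\rangle_\omega=0$.

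The central and most delicate step is the decomposition $\PP_k^{d}=\bigoplus_{0\le j\le k/2}\HH_{k-2j}^{d}$. Here I would first establish the purely algebraic splitting on $\RR^d$,
$$\PP_k^{d}(\RR^d)=\HH_k^{d}(\RR^d)\oplus|x|^2\,\PP_{k-2}^{d}(\RR^d),$$
and then iterate. The cleanest route is to equip $\PP_k^d(\RR^d)$ with the Fischer (apolar) inner product $\langle P,Q\rangle_F=\bigl(P(\partial)\,Q\bigr)(0)$, under which multiplication by $|x|^2$ and the Laplacian $\Delta$ are mutually adjoint. The orthogonal-complement identity for a linear map then yields $\PP_k^d(\RR^d)=\mathrm{im}(|x|^2\,\cdot)\oplus\ker\Delta=|x|^2\PP_{k-2}^d(\RR^d)\oplus\HH_k^d(\RR^d)$. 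Iterating gives $\PP_k^d(\RR^d)=\bigoplus_{0\le j\le k/2}|x|^{2j}\HH_{k-2j}^d(\RR^d)$; restricting to $\SS^{d-1}$, where $|x|^2\equiv 1$ absorbs every factor $|x|^{2j}$, produces the claimed sum, and the orthogonality from the first step (together with the injectivity of restriction on homogeneous harmonics, since a nonzero homogeneous polynomial cannot vanish on $\SS^{d-1}$) guarantees the sum is direct. I expect this algebraic decomposition — in particular verifying the adjointness of $|x|^2\,\cdot$ and $\Delta$ under the Fischer product, or equivalently the surjectivity of $\Delta:\PP_k^d(\RR^d)\to\PP_{k-2}^d(\RR^d)$ — to be the main obstacle.

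Finally, for $\LLd=\bigoplus_{k\ge 0}\HH_k^{d}$, I would combine the previous step with the stated density of spherical polynomials. Since $\Pi_k^{d}=\sum_{j=0}^{k}\PP_j^{d}$ and each $\PP_j^d$ decomposes into harmonics of degrees $j,j-2,\dots$, collecting terms yields $\Pi_k^{d}=\bigoplus_{m=0}^{k}\HH_m^{d}$. As $\bigcup_k\Pi_k^{d}$ is dense in $\LLd$ by the Weierstrass theorem quoted above, the orthogonal family $\{\HH_m^d\}_{m\ge 0}$ is complete; together with the mutual orthogonality, the closure of $\bigoplus_{k\ge 0}\HH_k^d$ is an orthogonal Hilbert-space direct sum equal to all of $\LLd$.
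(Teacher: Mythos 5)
Your proposal is correct. Note that the paper offers no proof of its own here: it simply cites Theorems 1.1.2, 1.1.3, and 2.2.2 of the Dai--Xu monograph. Your argument is a complete, self-contained proof along exactly the classical lines of those cited results: Green's identity on the unit ball plus Euler's relation for the mutual orthogonality of $\HH_j^d$ and $\HH_k^d$; the Fischer inner product, under which multiplication by $|x|^2$ and $\Delta$ are adjoint, to obtain $\PP_k^{d}(\RR^d)=\HH_k^{d}(\RR^d)\oplus|x|^2\PP_{k-2}^{d}(\RR^d)$ and hence, after iteration and restriction to $\SS^{d-1}$, the decomposition of $\PP_k^{d}$; and Weierstrass density to upgrade the orthogonal family $\{\HH_k^d\}_{k\geq 0}$ to a Hilbert-space direct sum of $\LLd$. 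All three steps are sound, so your proof would serve as a valid substitute for the citation.
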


The lemma follows from Theorem 1.1.2, Theorem 1.1.3, and Theorem 2.2.2 of \cite{dai2013approximation}. In Chapter 1 of \cite{dai2013approximation}, it is established that $\HH_k^{d}$ has an orthonormal basis $\{Y_{k,j}\}_{1\leq j\leq\dim \HH_k^d}$ and is a reproducing kernel Hilbert space with kernel $K_k(x,x')$, satisfying the relation
\begin{equation}\label{eq:def Kk}
K_k(x,x')=\sum_{j=1}^{\dim \HH_k^d}Y_{k,j}(x)Y_{k,j}(x').
\end{equation}

We now define a kernel function for $x, x' \in \SS^{d-1}$, expressed as a series
\begin{equation}\label{eq:def K}
K(x,x')=\sum_{k=0}^\infty \left(\dim{\Pi_k^{d}}\right)^{-2s}K_k(x,x')=\sum_{k=0}^\infty \left(\dim{\Pi_k^{d}}\right)^{-2s}\sum_{j=1}^{\dim \HH_k^d}Y_{k,j}(x)Y_{k,j}(x'), 
\end{equation}
where $s>\frac{1}{2}$. Due to Theorem 1.2.7 and Section 1.6 of \cite{dai2013approximation}, we have
$\left|K_k(x,x')\right|\leq \dim\HH_k^d,\ $ $\forall x,x'\in\SS^{d-1}.$
Then, by \autoref{the:convergence factor} in the Appendix, there exists a constant $C_1>0$ such that 
$$|K(x,x')|\leq\sum_{k=0}^{\infty}\left(\dim{\Pi_k^d}\right)^{-2s}\left|K_k(x,x')\right|\leq C_1\sum_{k=1}^{\infty}k^{-2s}<\infty,\ \forall x,x'\in\SS^{d-1}.$$
Therefore, the convergence of series in \eqref{eq:def K} is absolute and uniform, and thus $K$ is continuous on $\SS^{d-1} \times \SS^{d-1}$. The semi-positive definiteness and symmetry of $K$ are obvious, so $K(x,x')$ is a Mercer kernel which induces an RKHS $(\HH_K,\langle\cdot,\cdot\rangle_K)$ \cite{cucker2002mathematical}.

Here we introduce the widely known covariance operator \cite{dieuleveut2016nonparametric}:
\begin{equation}\label{eq:covariance operator in omega}
L_{\omega,K}: \LLd \to \LLd, \quad f \to \frac{1}{\Omega_{d-1}} \int_{\SS^{d-1}} f(x) K(x, \cdot) \, d\omega(x).
\end{equation}
Using the definition of the kernel $K(x, x')$ in \eqref{eq:def K}, \autoref{proof orthogonal eigensystem} shows that the covariance operator $L_{\omega,K}$ has an orthonormal eigensystem $$\left\{\left(\left(\dim \Pi_k^d\right)^{-2s}, Y_{k,j}(x)\right) \right\}_{1 \leq j \leq \dim \HH_k^d, k \geq 0}$$ in $\LLd$. By Chapter 3 of \cite{cucker2002mathematical}, $\HH_K$ can be characterized as
\begin{equation}\label{eq:presentation of HH_K}
\HH_K = \left\{ f = \sum_{k=0}^\infty \sum_{j=1}^{\dim \HH_k^d} f_{k,j} Y_{k,j} \,\Biggr\vert\, \sum_{k=0}^\infty \left(\dim \Pi_k^d\right)^{2s} \sum_{j=1}^{\dim \HH_k^d} \left(f_{k,j}\right)^2 < \infty \right\},
\end{equation}
with the inner product
$$\langle f, g \rangle_K = \sum_{k=0}^\infty \left(\dim \Pi_k^d\right)^{2s} \sum_{j=1}^{\dim \HH_k^d} f_{k,j} \cdot g_{k,j}.$$
Next, we will demonstrate that the eigenvalues of the operator $L_{\omega,K}$ exhibit a mild decay rate. Specifically, if we extract the eigenvalues from the orthonormal eigensystem $$\left\{\left(\left(\dim \Pi_k^d\right)^{-2s}, Y_{k,j}(x)\right) \right\}_{1 \leq j \leq \dim \HH_k^d, k \geq 0}$$ and arrange them in decreasing order (noting that equal eigenvalues corresponding to different eigenvectors are counted separately) as $$\{\lambda_j\}_{j\geq1}=\{\left(\dim\Pi_0^d\right)^{-2s},\left(\dim\Pi_1^d\right)^{-2s},\left(\dim\Pi_1^d\right)^{-2s},\dots\},$$ 
\autoref{lemma:LmuK eigenvalue restriction} indicates that these eigenvalues decay polynomially
\begin{equation}\label{eq:eigenvalue decay rate}
\frac{(2d)^{-2s}}{j^{2s}} \leq \lambda_j \leq \frac{1}{j^{2s}}.
\end{equation}
The eigenvalue decay condition \eqref{eq:eigenvalue decay rate} is mild and commonly demonstrated in the convergence analysis of kernel SGD, see e.g., \cite{smale2006online, tarres2014online, dieuleveut2016nonparametric}.

Based on the above characterization of the RKHS $\HH_K$, $\HH_k^d$ is a subspace of $\HH_K$ equipped with the inner product $\langle \cdot, \cdot \rangle_K$, given by
\begin{equation}\label{eq:Hkd inner product in K}
\langle f_k, g_k \rangle_K = \left(\dim \Pi_k^d \right)^{2s} \sum_{j=1}^{\dim \HH_k^d} f_{k,j} \cdot g_{k,j},
\end{equation}
where $f_k(x) = \sum_{1 \leq j \leq \dim \HH_k^d} f_{k,j} Y_{k,j}(x)$ and $g_k(x) = \sum_{1 \leq j \leq \dim \HH_k^d} g_{k,j} Y_{k,j}(x) \in \HH_k^d$. Therefore, the spaces $\left\{\left(\HH_k^d, \langle \cdot, \cdot \rangle_K \right)\right\}_{k \geq 0}$ are orthogonal to each other and form an orthogonal decomposition of
$\HH_K = \bigoplus_{k \geq 0} \HH_k^d\ \text{and}\ \Pi_k^d = \bigoplus_{0 \leq j \leq k} \HH_j^d.$
Thus, by \eqref{eq:Hkd inner product in K}, the space $\HH_k^d$ has an orthonormal basis $\left\{ \left(\dim \Pi_k^d\right)^{-s} Y_{k,j}(x) \right\}_{1 \leq j \leq \dim \HH_k^d}$, and the projection of a function $f \in \HH_K$ onto $(\HH_k^d, \langle \cdot, \cdot \rangle_K)$ is given by
\begin{equation}\label{eq:projfk}
        \begin{aligned}
            \text{proj}_k(f)(x)=&\sum_{j=1}^{\dim\HH_k^{d}}\lan f,\left(\dim\Pi_k^{d}\right)^{-s}Y_{k,j}\ran_K\left(\dim\Pi_k^{d}\right)^{-s}Y_{k,j}(x)\\
=&\lan f,\left(\dim\Pi_k^{d}\right)^{-2s}K_k(x,\cdot)\ran_K.
        \end{aligned}
    \end{equation}
Similar to proving $K_k(x,x')$ is the reproducing kernel of the space $\HH_k^d$ of spherical harmonics, we have \autoref{kernel2}, which leads to $(\HH_k^d, \langle \cdot, \cdot \rangle_K)$ being an RKHS with the kernel
$$\left(\dim \Pi_k^d\right)^{-2s} K_k(x, x') = \sum_{j=1}^{\dim \HH_k^d} \left(\left(\dim \Pi_k^d\right)^{-s} Y_{k,j}(x)\right) \left(\left(\dim \Pi_k^d\right)^{-s} Y_{k,j}(x')\right),$$
and the reproducing property due to \eqref{eq:projfk}.

\subsection{Truncated Kernel Stochastic Gradient Descent}\label{subsec: SNR}

We are interested in inferring a functional relation from a set of random samples to make predictions about future observations. More concretely, assuming the unknown Borel probability distribution $\mu$ has the marginal distribution $\mu_X$ on $\SS^{d-1}$ about input $X$ and we use $\LL^{2}_{\mu_X}\left({\SS^{d-1}}\right)$ to denote square $\mu_X$-integrable functions. The space $\LL^{2}_{\mu_X}\left({\SS^{d-1}}\right)$ has inner product $\langle\cdot,\cdot\rangle_{\mu_X}$ to be defined as
$$\lan f,g\ran_{\mu_X}=\int_{\SS^{d-1}}f(x)g(x)d\mu_X(x).$$
Then, our goal is to minimize the population risk in a function space $\FF$, called hypothesis space, which is a subspace of $\LL^{2}_{\mu_X}({\SS^{d-1}})$,
$$\min_{f\in \FF}\ee(f)=\min_{f\in \FF}\EE_\mu\left[\left(f(X)-\Y\right)^2\right]=\min_{f\in \FF}\EE_\mu\left[\left(\langle f,K(X,\cdot)\rangle_K-\Y\right)^2\right].$$
In nonparametric regression, RKHS is a common choice of  hypothesis space for its ease of designing SGD \cite{smale2003estimating,caponnetto2007optimal,tarres2014online,dieuleveut2016nonparametric}. Here we similarly choose $\FF$ to be $\HH_K$ as the underlying space for approximating the regression function. 

Using the properties of the reproducing kernel, we obtain the following for $g,\Delta g \in \HH_K$:
\begin{equation*}
\begin{aligned}
&\lim_{\|\Delta g\|_K\rightarrow0}\frac{\ee(g+\Delta g)-\ee(g)+2\EE_\mu\left[\left\langle\Delta g, \left(Y-g(X)\right)K(X,\cdot)\right\rangle_K\right]}{\|\Delta g\|_K}\\
=&
\lim_{\|\Delta g\|_K\rightarrow0}\frac{\EE_\mu\left[\langle \Delta g, K(X,\cdot)\rangle_K^2\right]}{\|\Delta g\|_K}=0.
\end{aligned}
\end{equation*}
Using the definition of the  Fr\'{e}chet derivative \cite{ciarlet2013linear,dieuleveut2016nonparametric}, we obtain the derivative of the population risk with respect to the RKHS norm $\|\cdot\|_K$ for $g \in \HH_K$,
$$\nabla\ee(g)= -2\EE_\mu\left[\left(Y-g(X)\right)K(X,\cdot)\right],$$
classical kernel SGD uses unbiased derivative estimates to design each iteration, starting with the function $g_0=0$,
\begin{equation}\label{eq:kernel SGD}
g_n = g_{n-1} +\gamma_n\left(Y_n-g_{n-1}(X_n)\right)K(X_n,\cdot),
\end{equation}
where $\gamma_n$ is the step size. In the following text, $n$ typically represents a traversal index such that $n \in \{0\} \cup \mathbb{N}$. For example, it appears as a subscript related to the $n$-th iteration. Additionally, as long as it does not confuse, we use $n$ to denote an integer scalar, such as the sample size. Since each iteration processes only one sample, the total number of iterations equals the sample size.

By the analysis of the bias-variance trade-off in \cite{cucker2002mathematical}, a smaller hypothesis space will lead to a considerable bias, and extending the hypothesis space implies an increase in variance. In contrast, an increase in the sample size leads to a decrease in variance. In the initial iteration, we obtain less sample size and thus need a smaller hypothesis space to reduce the variance; as the iteration increases, we obtain more samples and need to extend the hypothesis space to reduce the bias. Unlike kernel SGD that uses a fixed hypothesis space $\HH_K$, we use an increasing family of hypothesis spaces. Let $\{L_n\}_{n\geq 0}\subset\NN\cup\{0\}$ be an increasing sequence, and consider the family of increasing hypothesis spaces $\left\{\left(\mathcal{H}_{L_n},\lan\cdot,\cdot\ran_K\right)\right\}_{n\geq 0}$, where $\mathcal{H}_{L_n}=\bigoplus_{k=0}^{L_n}\HH_k^d=\Pi_{L_n}^d$ by using Theorem 1.1.3 in \cite{dai2013approximation}. This family satisfies that the closure of $\bigcup_{n=0}^{\infty}\mathcal{H}_{L_n}$ equals $\HH_K$.

By Proposition 12.27 of \cite{wainwright2019high}, the space $\left(\mathcal{H}_{L_n},\langle \cdot, \cdot \rangle_K\right)$ is an RKHS with the kernel $K_{L_n}^T(x,x')$, which has the expansion
$$K_{L_n}^T(x,x') = \sum_{k=0}^{L_n}\left(\dim{\Pi_k^{d}}\right)^{-2s}K_k(x,x'),$$ where $K_k(x,x')$ is given by \eqref{eq:def Kk}.
The reproducing property of this kernel is straightforward. For any \(f \in \mathcal{H}_{L_n}\), and by \eqref{eq:projfk}, we have
\begin{equation}\label{eq:the reproducing property of HH_L_n}
\begin{aligned}
f(x) &= \sum_{k=0}^{L_n} \text{proj}(f)(x)= \sum_{k=0}^{L_n} \left\langle f, \left(\dim \Pi_k^{d}\right)^{-2s} K_k(x, \cdot) \right\rangle_K \\
&= \left\langle f, \sum_{k=1}^{L_n} \left(\dim{\Pi_k^{d}}\right)^{-2s} K_k(x, \cdot) \right\rangle_K = \langle f, K_{L_n}^T(x, \cdot) \rangle_K.
\end{aligned}
\end{equation}

We consider $\mathcal{H}_{L_n}$ as the hypothesis space for the \(n\)-th iteration. We now introduce the Fr\'{e}chet derivative of the population risk in RKHS $\mathcal{H}_{L_n}$. For any $f \in \mathcal{H}_{L_n}$, the derivative is given by
\begin{align}
\nabla \ee(f) = -2\mathbb{E}_\mu\left[\left(Y - \langle f, K_{L_n}^T(X, \cdot) \rangle_K\right) K_{L_n}^T(X, \cdot)\right], \notag
\end{align}
and its unbiased estimator is
\begin{equation*}
\widehat{\nabla \ee(f)} = -2\left(Y_n - \langle f, K_{L_n}^T(X_n, \cdot) \rangle_K\right) K_{L_n}^T(X_n, \cdot) = -2\left(Y_n - f(X_n)\right) K_{L_n}^T(X_n, \cdot). 
\end{equation*}

Since $\mathcal{H}_{L_{n-1}} \subset \mathcal{H}_{L_n}$ and $K_{L_n}^T(X_n, \cdot) \in \mathcal{H}_{L_n}$, we recursively define an iterative sequence  $\hat{f}_n \in \mathcal{H}_{L_n}$, starting with $\hat{f}_0 = 0 \in \mathcal{H}_{L_0}$, as follows:
\begin{align*}
\hat{f}_n &= \hat{f}_{n-1} + \gamma_n\left(Y_n - \hat{f}_{n-1}(X_n)\right) K_{L_n}^T(X_n, \cdot) \\
&= \hat{f}_{n-1} + \gamma_n\left(Y_n - \hat{f}_{n-1}(X_n)\right) \sum_{k=0}^{L_n}\left(\dim{\Pi_k^{d}}\right)^{-2s} K_k(X_n, \cdot).
\end{align*} 
We employ the average estimator as the prediction function:
\begin{equation}\label{eq:iterationbar}
\bar{f}_n = \frac{1}{n+1} \sum_{i=0}^{n} \hat{f}_i = \frac{n}{n+1} \bar{f}_{n-1} + \frac{1}{n+1} \hat{f}_n.
\end{equation}

It is important to observe that each kernel $K_k(x, x')$ has a component coefficient $\left(\dim{\Pi_k^{d}}\right)^{-2s}$ in addition to the step size $\gamma_n$ required for SGD. This decaying component coefficient, besides preventing gradient explosion, also indicates the different weights of kernels $\{K_k(x, x')\}_{k\geq 1}$ in the estimation process. Specifically, a larger $k$ leads to $K_k(x, x')$ contributing less fitting; indeed, $K_k(x, x')$ may not contribute at all in the $n$-th iteration if \(k > L_n\). This decaying component coefficient is implicitly implemented in kernel SGD. To be more specific, consider a Mercer kernel $\widetilde{K}(x,x')$ with RKHS $\mathcal{H}_{\widetilde{K}}$. By Theorem 12.20 of \cite{wainwright2019high}, there exists an orthonormal eigensystem $\{\left(\sigma_j, \phi_j\right)\}_{j \geq 1}$ in $\mathcal{L}^{2}_{\mu_X}\left({\mathbb{S}^{d-1}}\right)$ such that
$$\widetilde{K}(x,x') = \sum_{j=1}^\infty \sigma_j \phi_j(x) \phi_j(x'),$$
where the eigenvalues $\{\sigma_j\}_{j\geq 1}$ are arranged in decreasing order. Then, by Chapter 3 of \cite{cucker2002mathematical}, there exists a constant $C$ such that $\sigma_j \leq \frac{C}{j}$. Here, the eigenvalues and our component coefficients play similar roles; specifically, a basis function $\phi_j$ with a larger $j$ will contribute less fitting.

Both kernel SGD and T-kernel SGD require a predefined ordering of the basis functions $\{\phi_j\}_{j\geq 1}$ or the kernels $\{K_k(x,x')\}_{k\geq 1}$, which significantly influences the convergence rate of the excess risk. Specifically, if the set of orthonormal basis functions $\{\phi_{j}\}_{j\geq1}$ is continuous and uniformly bounded (see the example in \autoref{subsec:circle} for the orthonormal basis on $\SS^{1}$), and that $\{\sigma_j\}$ is an absolutely convergent series, rearranging the basis $\{\phi_j\}_{j\geq 1}$ to form a new orthogonal basis $\{\phi_{l_j}\}_{j \geq 1}$ constructs a new kernel
$$\widehat{K}(x,x')=\sum_{j=1}^\infty\sigma_j\phi_{l_j}(x)\phi_{l_j}(x'),$$
with an associated RKHS $\mathcal{H}_{\widehat{K}}$. This rearrangement leads to the optimal function $f^*$ having different norms in the RKHSs $\mathcal{H}_{\widetilde{K}}$ and $\mathcal{H}_{\widehat{K}}$. Specifically, let $f^*$ have the generalized Fourier expansion $f^* =\sum_{j=1}^\infty f_j \phi_j$, then the norms of $f^*$ in these two reproducing kernel Hilbert spaces are given by $\|f^*\|^2_{\mathcal{H}_{\widetilde{K}}}=\sum_{j=1}^\infty\frac{f_j^2}{\sigma_j}$ and $\|f^*\|^2_{\mathcal{H}_{\widehat{K}}}=\sum_{j=1}^\infty\frac{f_{l_j}^2}{\sigma_j}$. A more appropriate ordering of the basis functions $\{\phi_j\}_{j\geq 1}$ may lead to a smaller norm for the optimal function, thereby directly impacting the constant term in the convergence rate of the excess risk (Theorem 2 in \cite{dieuleveut2016nonparametric}). Consequently, carefully selecting the basis or kernel order is crucial to avoid slow convergence rates.

By default, we sort the kernels $\{K_k(x,x')\}_{k\geq 1}$ in increasing order of the degree of the spherical harmonic polynomial $P_k \in \mathcal{H}_k^d$. This default order is intuitive, as it ensures that the kernel's eigenvalues decay at a moderate rate \cite{smale2003estimating,smale2006online,caponnetto2007optimal,tarres2014online,dieuleveut2016nonparametric}. Additionally, lower-dimensional or even one-dimensional polynomials are frequently used in regression problems, which suggests that polynomials of lower degrees should have larger component coefficients. Furthermore, polynomials with lower degrees require less computational effort, and arranging them in the default order reduces computational time.

For kernel SGD, rearranging the basis functions $\{\phi_j\}_{j\geq 1}$ disrupts the original closed form of the kernel function $\widetilde{K}(x,x')$, leading to additional challenges in the iterative process. However, T-kernel SGD does not face this issue; the kernels $\{K_k(x,x')\}_{k\geq 1}$ can be reordered as $\{K_{l_k}(x,x')\}_{k \geq 1}$, resulting in new component coefficients $\{a_k\}_{k \geq 1}$. This allows for the construction of a new kernel 
$$\sum_{k=1}^\infty a_k K_{l_k}(x,x'),$$
which can be used to design the T-kernel SGD algorithm. Compared to kernel SGD, T-kernel SGD offers greater flexibility and more design options. In this paper, we limit our discussion to T-kernel SGD with kernels $\{K_k(x,x')\}_{k\geq 1}$ sorted in the default order.

We introduce the T-kernel SGD algorithm in Algorithm \ref{alg:T-kernel SGD}. The parameter $L_n$ is chosen to satisfy
\begin{equation}\label{eq:def t-level parameter}
L_n=\min\left\{k\,\Big\vert\,\dim{\Pi_k^{d}}\geq n^\theta\right\},
\end{equation}
where $0 < \theta < 1$ is the truncation level parameter. The step size is selected as $\gamma_n = \gamma_0 n^{-t}$.

\begin{algorithm}
\caption{T-kernel SGD algorithm}\label{alg:T-kernel SGD}
\small
\begin{algorithmic}
\State{\textbf{set}:  $s>\frac{1}{2},\gamma_0>0,t\geq0$, $\theta>0$, and $L_0=0$.}
\State{\textbf{initialize}: $\hat{f}_0 = \bar{f}_0 = 0, K^T_{L_0}(x,\cdot)=K_0(x,\cdot)$.}
\For{$n=1,2,3,\dots$}
\State{Collect sample $(X_n,\Y_n)$, calculate $\gamma_n = \gamma_0n^{-t}$, $L_n$.}
\If{$L_{n}= L_{n-1}+1$}
\State{Update $K^T_{L_n}(x,\cdot)$\ :\ 
\begin{align}
K^T_{L_n}(x,\cdot)&=K^T_{L_{n-1}}(x,\cdot)+\left(\dim{\Pi_{L_n}^{d}}\right)^{-2s}K_{L_n}(x,\cdot)\notag\\
&=\sum_{k=0}^{L_{n-1}}\left(\dim\Pi_k^{d}\right)^{-2s}K_k(x,\cdot)+\left(\dim{\Pi_{L_n}^{d}}\right)^{-2s}K_{L_n}(x,\cdot)\notag
\end{align}}
\EndIf
\State{Update $\hat{f}_n$\ :\ $$\hat{f}_n=\hat{f}_{n-1}+\gamma_n\left(\Y_n-\hat{f}_{n-1}(X_n)\right)K^T_{L_n}(X_n,\cdot)$$}
\State{Update $\bar{f}_n$\ :\ $$\bar{f}_n=\frac{1}{n+1}\sum_{i=0}^{n}\hat{f}_i=\frac{1}{n+1}\hat{f}_n+\frac{n}{n+1}\bar{f}_{n-1}$$}
\State{$n\leftarrow n+1$}
\EndFor{}
\State{\textbf{return}\ $\hat{f}_n,\bar{f}_n$}
\end{algorithmic}
\end{algorithm}

The T-kernel SGD algorithm is built by considering a sequence of increasingly large finite-dimensional hypothesis spaces,  $\{\mathcal{H}_{L_n}\}_{n \geq 0}$. At the $n$-th iteration, the algorithm updates by moving in the negative direction of the unbiased estimate of the Fréchet derivative of the population risk within the hypothesis space $\mathcal{H}_{L_n}$. Notably, we observe that the RKHS framework is not essential for this process, enabling us to generalize T-kernel SGD to a broader range of optimization problems without requiring a kernel.

Let $\mathcal{H}$ be a separable Hilbert space equipped with an inner product $\langle \cdot, \cdot \rangle$ and norm $\|\cdot\|$. The family $\{\mathcal{H}_{L_n}\}_{n \geq 0}$ represents a sequence of increasing hypothesis spaces, with the closure of $\bigcup_{n=0}^{\infty} \mathcal{H}_{L_n}$ equal to $\mathcal{H}$. Consider a convex loss function \( F: \mathcal{H} \to \mathbb{R} \). Our objective is to minimize this convex loss:
$$\min_{x \in \mathcal{H}} F(x).$$
By querying an oracle at a point $x$, we obtain $g_x$, an unbiased estimate of the gradient $\nabla F(x)$. Let $\text{Proj}_{\mathcal{H}_{L_n}}$ denote the projection operator from $\mathcal{H}$ onto $\mathcal{H}_{L_n}$. This allows us to define the generalized T-kernel SGD, which is truncated SGD, referred to as T-SGD for short. Starting with an initial value of $x_0 = 0$, the update at the $n$-th iteration is given by:
\begin{equation}\label{iteration_T_SGD}
x_n = x_{n-1} - \gamma_n \text{Proj}_{\mathcal{H}_{L_n}}(g_{x_{n-1}}).
\end{equation}
By induction, we have $ x_{n-1} \in \mathcal{H}_{L_{n-1}}$, and thus, the gradient of $F(x)$ at $x_{n-1}$ in the space $\mathcal{H}_{L_n}$ is used to construct the next iteration. Indeed, as shown in \autoref{prop:gradient}, $\text{Proj}_{\mathcal{H}_{L_n}}(g_{x_{n-1}})$ is an unbiased estimate of the gradient of $F(x)$ in $\mathcal{H}_{L_n}$ at the point $x_{n-1}$. This approach leads to developing a new stochastic optimization algorithm that does not rely on a kernel framework. The convergence properties of T-SGD will not be explored further in this paper.

\subsection{Equivalent Truncated Kernel Stochastic Gradient Descent}\label{sec:Equivalent T-kernel SGD}

The T-kernel SGD algorithm, based on kernel design, will lead to high computational costs. Specifically, the total computational time for $n$ iterations exceeds $\mathcal{O}(n^2)$. Therefore, it is necessary to develop an equivalent verision of T-kernel SGD that requires computational time not exceeding $\mathcal{O}(n^{1+\epsilon})$ for $n$ iterations with $0<\epsilon<1$.

We begin by analyzing the computation time of the original T-kernel SGD over $n$ iterations. It is clear from the following expressions that the computational time to evaluate a kernel function  $K_k(X_i, x)$ at point $x$ exceeds $\mathcal{O}(1)$. Based on the expression for $\hat{f}_n$ at the n-th iteration, where $\alpha_{i,k}$ are coefficients,
$$\hat{f}_n = \sum_{i=1}^n\sum_{k=0}^{L_i}\alpha_{i,k}K_k(X_i, \cdot),$$
it can be seen that the computational time to evaluate $\hat{f}_n(X_{n+1})$ once exceeds $\mathcal{O}(nL_n)$. At the $n$-th iteration, we need to compute $\hat{f}_n(X_{n+1})$ once, implying that the total computation time for $n$ iterations exceeds $\mathcal{O}(n^2L_n)$.

According to Theorem 1.2.6 in \cite{dai2013approximation}, the kernel $K_k(x, x')$ can be represented as a polynomial of the variables $x$ and $x'$:
\begin{equation}\label{eq_Kernel_K_k}
K_k(x, x') = \sum_{j=0}^{\lfloor k/2 \rfloor} \frac{\left(\frac{d}{2}\right)_k 2^{k-2j}}{j!(2-k-\frac{d}{2})_j} \frac{\langle x, x' \rangle^{k-2j}}{(k-2j)!},
\end{equation}
where $(a)_j = a(a+1)\dots(a+j-1)$ is the Pochhammer symbol, $\lfloor c \rfloor$ denotes the largest integer not exceeding $c$, and $\langle x, x' \rangle = x_1{x'}_1 + \dots + x_d{x'}_d$. Next, we define the combination number for the vector $\alpha = (\alpha_1, \dots, \alpha_d) \in \NN^d$ as $\binom{|\alpha|}{\alpha}=\frac{|\alpha|!}{\alpha_1!\dots\alpha_d!}$.

Let $c(p_1, p_2, \dots, p_n)$ be a constant dependent solely on the parameters $p_1, p_2, \dots,\newline p_n$. For $k \in \mathbb{N}$, $x, x' \in \mathbb{S}^{d-1}$, and $d \geq 2$, one has
\begin{equation*}
\begin{aligned}
    K_k(x, x') &= \sum_{j=0}^{\lfloor k/2 \rfloor} \frac{\left(\frac{d}{2}\right)_k 2^{k-2j}}{j!(k-2j)!(2-k-\frac{d}{2})_j} \langle x, x' \rangle^{k-2j} \\
    &= \sum_{j=0}^{\lfloor k/2 \rfloor}\frac{\left(\frac{d}{2}\right)_k 2^{k-2j}}{j!(k-2j)!(2-k-\frac{d}{2})_j}  \sum_{|\alpha|=k-2j} \binom{|\alpha|}{\alpha} x^\alpha \left(x'\right)^\alpha \\
    &=\sum_{j=0}^{\lfloor k/2 \rfloor}\sum_{|\alpha|=k-2j}\frac{\left(\frac{d}{2}\right)_k 2^{k-2j}}{j!(k-2j)!(2-k-\frac{d}{2})_j}   \binom{|\alpha|}{\alpha} x^\alpha \left(x'\right)^\alpha \\
    &\overset{\text{(i)}}{=} \sum_{j=0}^{k} \sum_{|\alpha|=j} c(d, k, j, \alpha) \left(x'\right)^\alpha x^\alpha,
\end{aligned}
\end{equation*}
the coefficients $c(d, k, j, \alpha)$ in (i) has the expression 
\begin{equation*}
c(d, k,l, \alpha)=\left\{
\begin{aligned}
&\frac{\left(\frac{d}{2}\right)_k 2^{l}}{(\frac{k-l}{2})!l!(2-k-\frac{d}{2})_{\frac{k-l}{2}}}  \binom{|\alpha|}{\alpha},\ \text{if }\frac{k-l}{2}\text{ is non-negative integer},\\
&0,\ \text{otherwise.}\\
\end{aligned}
\right.
\end{equation*}
Since the terms $x^\alpha$ and $(x')^\alpha$ in the polynomial have identical forms, in order to reduce storage and computational costs, we treat $x^\alpha (x')^\alpha$ as a single monomial of degree $\alpha$. It is worth noting that the core process of the T-kernel SGD algorithm involves updating and iterating over the coefficients of the polynomial, rather than operating on the polynomial itself. As a result, we do not store the polynomial functions directly, but instead store only the coefficients and their corresponding degrees $\alpha$, which serve as indices for efficiently locating the corresponding monomials $x^\alpha$ when evaluating polynomials. Here, we primarily provide a formal derivation, while the more detailed and experimentally feasible algorithm is placed in \autoref{details of T-kernel SGD}
 of the paper.

By using $K_{L_0}^T(x, x') = 0$ and induction, assume that the kernel $K_{L_{n-1}}^T(x, x')$ can be represented as an $L_{n-1}$-degree polynomial $$K_{L_{n-1}}^T(x, x')= \sum_{j=0}^{L_{n-1}} \sum_{|\alpha|=j} \left[c_1(d, L_{n-1}, j, \alpha) \left(x'\right)^\alpha\right] x^\alpha.$$ Then, the kernel function $K_{L_n}^T(x, x')$ can similarly be represented as an $L_n$-degree polynomial,
\begin{equation*}
  \begin{split}
     &K_{L_n}^T(x, x') \\
      =& K_{L_{n-1}}^T(x, x') +\left(\dim{\Pi_{L_n}^{d}}\right)^{-2s} K_{L_n}(x, x')\\
        =& \sum_{j=0}^{L_{n-1}} \sum_{|\alpha|=j} c_1(d, L_{n-1}, j, \alpha) \left(x'\right)^\alpha x^\alpha+ \sum_{j=0}^{L_n} \sum_{|\alpha|=j} \left(\dim{\Pi_{L_n}^{d}}\right)^{-2s}c(d, L_n, j, \alpha) \left(x'\right)^\alpha x^\alpha\\
       =& \sum_{j=0}^{L_{n-1}} \sum_{|\alpha|=j}\left(c_1(d, L_{n-1}, j, \alpha)+ \left(\dim{\Pi_{L_n}^{d}}\right)^{-2s}c(d, L_n, j, \alpha)\right)\left(x'\right)^\alpha x^\alpha\\
       &+ \sum_{|\alpha|=L_n} \left(\dim{\Pi_{L_n}^{d}}\right)^{-2s}c(d, L_n, j, \alpha) \left(x'\right)^\alpha x^\alpha\\
       =&\sum_{j=0}^{L_{n}} \sum_{|\alpha|=j} c_1(d, L_{n}, j, \alpha) \left(x'\right)^\alpha x^\alpha,
  \end{split}
\end{equation*}
where $c_1(p_1,\dots,p_n)$ also denotes a constant that depends only on $p_1, \dots, p_n$.

Here, we inductively show that both $\hat{f}_n$ and $\bar{f}_n$ can similarly be denoted by a set of $L_n$-degree polynomials. Starting with $\hat{f}_0 = 0$ and $\bar{f}_0 = 0$, and assuming that $\hat{f}_{n-1}$ and $\bar{f}_{n-1}$ can be expressed as polynomials of degree at most $L_{n-1}$ of the form
\begin{equation*}
\hat{f}_{n-1}(x)   = \sum_{j=0}^{L_{n-1}} \sum_{|\alpha|=j} \hat{c}(\alpha, j, n-1) x^\alpha,\ \ \text{and}\ \ \bar{f}_{n-1}(x)  = \sum_{j=0}^{L_{n-1}} \sum_{|\alpha|=j} \bar{c}(\alpha, j, n-1) x^\alpha.
\end{equation*}
Below, $\hat{f}_n$ and $\bar{f}_n$ can be recursively obtained, 

\begin{align*}
     &\hat{f}_{n}(x)  =\hat{f}_{n-1}(x)+\gamma_n\left(\Y_{n}-\hat{f}_{n-1}(X_{n})\right)
\sum_{k=0}^{L_n}(\dim{\Pi_k^{d}})^{-2s}K_k(X_n,x)\\
        =&\sum_{j=0}^{L_{n-1}} \sum_{|\alpha|=j} \hat{c}(\alpha, j, n-1) x^\alpha+
\sum_{j=0}^{L_n}\sum_{|\alpha|=j}\gamma_n\left(\Y_n-\hat{f}_{n-1}(X_n)\right)\left[c_1(d,L_n,j,\alpha) X_n^\alpha\right] x^\alpha.\\
=&\sum_{j=0}^{L_{n-1}} \sum_{|\alpha|=j}\left(\hat{c}(\alpha, j, n-1)+\left[\gamma_n\left(\Y_n-\hat{f}_{n-1}(X_n)\right)c_1(d,L_n,j,\alpha) X_n^\alpha\right]\right)x^\alpha\\
&+\sum_{|\alpha|=L_n}\gamma_n\left(\Y_n-\hat{f}_{n-1}(X_n)\right)\left[c_1(d,L_n,L_n,\alpha) X_n^\alpha\right] x^\alpha,\\
&\bar{f}_n(x) = \frac{1}{n+1}\hat{f}_n+\frac{n}{n+1}\bar{f}_{n-1}\\
=&\sum_{j=0}^{L_{n-1}} \sum_{|\alpha|=j}\left(\frac{\hat{c}(\alpha, j, n)}{n+1}+\frac{n}{n+1}\bar{c}(\alpha, j, n-1)\right) x^\alpha+\sum_{|\alpha|=L_n} \frac{\hat{c}(\alpha, L_n, n) }{n+1}x^\alpha.
\end{align*}

Using the above recursive process, we can present the polynomial form of the T-kernel SGD algorithm in Algorithm \ref{alg:Equivalent T-kernel SGD}. A more detailed and implementable version of the algorithm is provided in \autoref{details of T-kernel SGD}, with Algorithm \ref{alg:Equivalent T-kernel SGD} serving only as an illustration. 

\begin{algorithm}[t!]
\small
\caption{Equivalent T-kernel SGD algorithm}
\label{alg:Equivalent T-kernel SGD}
\begin{algorithmic}
\State{\textbf{set}:  $s>\frac{1}{2},\gamma_0>0,t\geq0$, $\theta>0$, and $L_0=0$.}
\State{\textbf{initialize}: $\hat{f}_0 = \bar{f}_0 = 0, K^T_{L_0}(x,x')=1$.}
\For{$n=1,2,3,\dots$}
\State{Collect sample $(X_n,\Y_n)$, calculate $\gamma_n = \gamma_0n^{-t}$, $L_n$}
\If{$L_{n}= L_{n-1}+1$}
\State{Update $K^T_{L_n}(x,x')$\ :\
\begin{align*}
K^T_{L_n}(x,x')=&K^T_{L_{n-1}}(x,x')+\left(\dim{\Pi_{L_n}^{d}}\right)^{-2s}K_{L_n}(x,x')\\
=& \sum_{j=0}^{L_{n-1}} \sum_{|\alpha|=j} \left(c_1(d, L_{n-1}, j, \alpha)+ \left(\dim{\Pi_{L_n}^{d}}\right)^{-2s}c(d, L_n, j, \alpha)\right)\left(x'\right)^\alpha x^\alpha\\
       &+ \sum_{|\alpha|=L_n} \left(\dim{\Pi_{L_n}^{d}}\right)^{-2s}c(d, L_n,L_n, \alpha) \left(x'\right)^\alpha x^\alpha.
\end{align*}}
\EndIf
\State{Update $\hat{f}_n$\ :\ 
\begin{align*}
\hat{f}_n(x)=&\hat{f}_{n-1}(x)+\gamma_n\left(\Y_n-\hat{f}_{n-1}(X_n)\right)K^T_{L_n}(X_n,x)\\
=&\sum_{j=0}^{L_{n-1}} \sum_{|\alpha|=j}\left(\hat{c}(\alpha, j, n-1)+\left[\gamma_n\left(\Y_n-\hat{f}_{n-1}(X_n)\right)c_1(d,L_n,j,\alpha) X_n^\alpha\right]\right)x^\alpha\\
&+\sum_{|\alpha|=L_n}\gamma_n\left(\Y_n-\hat{f}_{n-1}(X_n)\right)\left[c_1(d,L_n,L_n,\alpha) X_n^\alpha\right] x^\alpha.
\end{align*}}
\State{Update $\bar{f}_n$\ :\ 
\begin{align*}
\bar{f}_n(x)&=\frac{1}{n+1}\hat{f}_n(x)+\frac{n}{n+1}\bar{f}_{n-1}(x)\\
&=\frac{1}{n+1}\sum_{j=0}^{L_n}\sum_{|\alpha|=j}\hat{c}(\alpha,j,n)x^\alpha
+\frac{n}{n+1}\sum_{j=0}^{L_{n-1}}\sum_{|\alpha|=j}\bar{c}(\alpha,j,n-1)x^\alpha
\end{align*}}
\State{$n\leftarrow n+1$}
\EndFor
\State{\textbf{return} $\hat{f}_n,\bar{f}_n$}
\end{algorithmic}
\end{algorithm}

In the equivalent T-kernel SGD, we only need to store $\hat{f}_n$, $\bar{f}_n$, and $K_{L_n}^T(x,x')$ at the $n$-th iteration. As shown in Algorithm \ref{alg:Equivalent T-kernel SGD}, these quantities can be represented by a set of $L_n$-degree polynomials, which are linear combinations of monomials $\{x^\alpha\}_{|\alpha|=j,0\leq j\leq L_n}$. Therefore, it is sufficient to store the coefficients of these polynomials. Since these polynomials $\{x^\alpha\}_{|\alpha|=j,0\leq j\leq L_n}$ form the basis of $\Pi_{L_n}^{d}(\mathbb{R}^d)$, the storage requirement for the $n$-th iteration is $\mathcal{O}(\dim{\Pi_{L_n}^d(\mathbb{R}^d)})$.

Now we analyze the computational expense. In \autoref{details of T-kernel SGD}, we provide a fast algorithm for polynomial expansion to obtain the expression of $K_{L_n}(x, x')$ with $\mathcal{O}(\dim{\Pi_{L_n}^d(\mathbb{R}^d)})$ computational time. Furthermore, during each iteration, except for the evaluations of the polynomial basis $\{x^\alpha\}_{|\alpha|=j,0\leq j\leq L_n}$ in computing $K_{L_n}^T(X_n, x')$ and $f_{n-1}(X_n)$, the remaining steps involve only addition and multiplication of the coefficients of these $L_n$-degree polynomials, without requiring the manipulation of the polynomials themselves. Therefore, the computational time of these operations in each iteration is $\mathcal{O}(\dim{\Pi_{L_n}^d(\mathbb{R}^d)})$. Efficient algorithms also exist for computing the values of polynomial bases $\{x^\alpha\}_{|\alpha|=j,0\leq j\leq L_n}$ at the points $X_n$. Specifically, for any polynomial of degree $|\alpha|$ with $|\alpha| \geq 1$, there exists a polynomial of degree $|\alpha| - 1$, denoted $x^\beta$, and a component $x_i$ such that $x^\alpha = x^\beta x_i$. This implies that once the values of $\{(X_n)^\beta\}_{|\beta|=|\alpha|-1}$ are obtained, $(X_n)^\alpha$ can be computed by a single multiplication. As a result, each $|\alpha|$-degree polynomial requires only one computation. The detailed efficient algorithm is provided in \autoref{details of T-kernel SGD}, this allows the computation of all monomial $\{(X_n)^\alpha\}_{|\alpha|=j,0\leq j\leq L_n}$ in $\mathcal{O}(\dim{\Pi_{L_n}^d(\mathbb{R}^d)})$ time. Using the expressions for $\bar{f}_n(X_n)$ and $K_{L_n}^T(X_n,x)$, once all the values of $\{(X_n)^\alpha\}_{|\alpha|=j,0\leq j\leq L_n}$ are obtained, only $2\dim{\Pi_{L_n}^d(\mathbb{R}^d)}$ multiplications are required to compute these two terms. Thus, the computational complexity per iteration of the algorithm is $\mathcal{O}(\dim{\Pi_{L_n}^d(\mathbb{R}^d)})$. The total computational time for processing $n$ iterations is thus $\mathcal{O}(n \times \dim{\Pi_{L_n}^d(\mathbb{R}^d)})$.

To clarify the relationship between the sample size $n$ and the associated storage and computational expenses, we present the following lemma concerning the dimensions of $\Pi_k^{d}(\mathbb{R}^d)$.
\begin{lemma}\label{lemma:dimensional inequality}
For $d \geq 2$ and $k \in \mathbb{N}$, 
$$\dim{\Pi_{k+1}^{d}} \leq 2d \cdot \dim\Pi_k^{d}.$$
If $k \geq 0$,
$$\dim{\Pi_{k}^{d}(\mathbb{R}^d)} \leq \left(1 + \frac{d}{k}\right)\left(\dim\Pi_k^{d}\right)^{\frac{d}{d-1}}.$$
\end{lemma}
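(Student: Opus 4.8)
The plan is to reduce both inequalities to elementary binomial estimates after recording the exact dimension counts. Write $P_k := \dim \PP_k^{d}(\RR^d) = \binom{k+d-1}{d-1}$ for the number of degree-$k$ monomials in $d$ variables (with $P_{-1}:=0$); the full space has $\dim \Pi_k^{d}(\RR^d) = \binom{k+d}{d}$. For the sphere-restricted space I would invoke the orthogonal decomposition $\Pi_k^{d} = \bigoplus_{0\le j\le k}\HH_j^d$ from \autoref{lem:decompose LLd and PPkd}; since the restriction map $\PP_j^d(\RR^d)\to\PP_j^d$ is injective (a homogeneous $P$ with $P|_{\SS^{d-1}}=0$ satisfies $P(x)=|x|^jP(x/|x|)=0$, hence $P\equiv 0$), one gets $\dim\Pi_k^{d}=P_k+P_{k-1}=\binom{k+d-1}{d-1}+\binom{k+d-2}{d-1}$. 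All three counts are standard (Chapter~1 of \cite{dai2013approximation}), and once they are in hand the lemma is purely arithmetic.

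For the first inequality the key observation is the ratio bound $\binom{k+d}{d-1}=\frac{k+d}{k+1}\binom{k+d-1}{d-1}\le d\,\binom{k+d-1}{d-1}$, which holds because $\frac{k+d}{k+1}\le d$ is equivalent to $1\le d$. I would then estimate
\[
\dim \Pi_{k+1}^{d}=\binom{k+d}{d-1}+\binom{k+d-1}{d-1}\le (d+1)\binom{k+d-1}{d-1}\le 2d\,P_k\le 2d\,\dim\Pi_k^{d},
\]
using $d+1\le 2d$ and $P_k\le \dim\Pi_k^{d}$. The factor $2d$ (rather than the sharper $d$ available for $k\ge 1$) is what makes the estimate uniform, since $\dim\Pi_{1}^{d}/\dim\Pi_{0}^{d}=d+1$.

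For the second inequality I would first cancel a common factor. Since $\binom{k+d}{d}=\frac{k+d}{d}\binom{k+d-1}{d-1}=\frac{k+d}{d}P_k$ and $1+\frac dk=\frac{k+d}{k}$, the claim $\frac{k+d}{d}P_k\le \frac{k+d}{k}\big(\dim\Pi_k^{d}\big)^{\frac{d}{d-1}}$ is equivalent, after dividing by $k+d>0$, to $\frac{k}{d}P_k\le \big(\dim\Pi_k^{d}\big)^{\frac{d}{d-1}}$. Because $\dim\Pi_k^{d}\ge P_k$ and $x\mapsto x^{d/(d-1)}$ is increasing, it suffices to prove $\frac kd P_k\le P_k^{\frac{d}{d-1}}$, i.e.\ $\big(\frac kd\big)^{d-1}\le P_k=\binom{k+d-1}{d-1}$. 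This final bound follows from
\[
\binom{k+d-1}{d-1}=\frac{(k+1)(k+2)\cdots(k+d-1)}{(d-1)!}\ge \frac{k^{d-1}}{(d-1)!}\ge \frac{k^{d-1}}{d^{d-1}},
\]
where the last step uses $(d-1)!\le d^{d-1}$.

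The computations are routine; the two points that require care are obtaining the restricted count $\dim\Pi_k^{d}=P_k+P_{k-1}$ correctly (the sphere relation $|x|^2=1$ makes this genuinely different from $\binom{k+d}{d}$), and, in the second part, the decision to bound $\dim\Pi_k^{d}$ from below by the single term $P_k$ so that the exponent $\frac{d}{d-1}$ collapses the problem to the clean estimate $(k/d)^{d-1}\le\binom{k+d-1}{d-1}$. I expect this reduction to be the only mildly nonobvious step. Finally, the case $k=0$ in the second inequality is degenerate, as the factor $1+\frac dk$ is infinite there, so the bound holds trivially and I would simply take $k\ge 1$ in that part.
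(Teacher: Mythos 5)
Your proof is correct and follows essentially the same route as the paper's: both rest on the explicit counts $\dim\Pi_k^{d}=\binom{k+d-1}{d-1}+\binom{k+d-2}{d-1}$ and $\dim\Pi_k^{d}(\RR^d)=\binom{k+d}{d}$, prove the first bound via the ratio estimate $\frac{k+d}{k+1}\le d$, and prove the second by reducing to $\left(\frac{k}{d}\right)^{d-1}\le\binom{k+d-1}{d-1}$ using $(d-1)!\le d^{d-1}$. The only differences are cosmetic (you cancel the common factor $k+d$ before comparing, and you rederive the restricted dimension count instead of citing it), and your write-up is in fact slightly more careful than the paper's, whose displayed intermediate ratio contains a typo (the denominator $\binom{k+d-1}{d}$ should read $\binom{k+d-1}{d-1}$).
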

By choosing $L_n$ as defined in \eqref{eq:def t-level parameter}, we obtain
$$\dim{\Pi_{L_{n}-1}^{d}} \leq n^\theta \leq \dim{\Pi_{L_{n}}^{d}}.$$
If $L_n = 0$, we define $\dim{\Pi_{L_n-1}^{d}} = 1$. By applying \autoref{lemma:dimensional inequality}, we  derive
$$\left[\frac{L_n}{L_n+d} \dim{\Pi_{L_n}^{d}(\mathbb{R}^d)}\right]^{\frac{d-1}{d}} \leq \dim{\Pi_{L_n}^{d}} \leq 2d \cdot \dim\Pi_{L_n-1}^{d} \leq 2d n^\theta.$$

Thus, the total computational time for processing $n$ iterations is $\mathcal{O}(n^{1 + \theta \frac{d}{d-1}})$, and the storage expense is $\mathcal{O}(n^{\theta \frac{d}{d-1}})$. It is worth noting that the derivation above indicates that the constant factors in both computational complexity $\mathcal{O}(n^{\theta \frac{d}{d-1}})$ and storage complexity $\mathcal{O}(n^{1 + \theta \frac{d}{d-1}})$ are dependent on $d$, with an upper bound of $\mathcal{O}\left((2d)^{\frac{d}{d-1}}\frac{L_n+d}{L_n}\right)$. When the sample size is limited, an increase in dimensionality still leads to growth in computational and storage complexity.

\section{Main Results}\label{sec: 3AAC}

In this section, we present the theoretical guarantees of T-kernel SGD. Our objective is to minimize the population risk with respect to $f \in \HH_K$,
\begin{align*}
\min_{f \in \HH_K} \ee(f) = \min_{f \in \HH_K}\EE_\mu\left[\left(f(X) - Y\right)^2\right].
\end{align*}
It is evident that the minimizer $f$ of the population risk $\ee(f)$ over $\LL^{2}_{\mu_X}(\SS^{d-1})$ is $f^*(X) = \EE[Y | X]$. Assuming that $f^* \in \HH_K$, we are interested in the convergence of the excess risk (in expectation), given by  
\begin{align*}
\EE\left[\ee\left(\bar{f}_n\right) - \ee\left(f^*\right)\right] = \EE\left[\left(\bar{f}_n(X) - f^*(X)\right)^2\right],
\end{align*} of the iterative sequence $\{\bar{f}_n\}_{n \geq 1}$, which we constructed in \eqref{eq:iterationbar}.

In this section, we demonstrate that T-kernel SGD can achieve the optimal convergence rates under mild conditions and discuss the effect of hyperparameters in T-kernel SGD on the bias-variance balance. To present our convergence analysis, we first introduce several basic assumptions. The two most critical assumptions are the regularity assumption regarding the optimal fitting $f^*$ (see \autoref{hyp2} (b) below) and the capacity assumption of the function space $\HH_K$ (see \autoref{hyp3} below). These assumptions are grounded in the following covariance operators. For $L_n \in \mathbb{N} \cup \{0\}$, we define
\begin{equation}\label{eq:operator Lmu Lomega}
\begin{split}
L_{\mu_X,L_n}: \LL_{\mu_X}^2\left(\SS^{d-1}\right) &\to \LL_{\mu_X}^2\left(\SS^{d-1}\right),\\ 
g &\to \int_{\SS^{d-1}}\langle g,K^T_{L_n}(\cdot,x)\rangle_K K^T_{L_n}(\cdot,x) \,d\mu_X(x),\\
L_{\omega,L_n}: \LLd &\to \LLd,\\
g &\to \frac{1}{\Omega_{d-1}} \int_{\SS^{d-1}} \langle g,K^T_{L_n}(\cdot,x)\rangle_K K^T_{L_n}(\cdot,x) \,d\omega(x).
\end{split}
\end{equation}

There exists an orthonormal eigensystem $\left\{\left(\sigma_{j,L_n},\phi_{j,L_n}\right)\right\}_{1\leq j\leq \dim\Pi_{L_n}^d}$ for operator $L_{\mu_X,L_n}$, with eigenvalues $\left\{\sigma_{j,L_n}\right\}_{1\leq j\leq \dim\Pi_{L_n}^d}$ sorted in decreasing order. To establish a regularity assumption, recall the operator $L_{\omega,K}$ in \eqref{eq:covariance operator in omega} and its orthonormal eigensystem $$\left\{\left(\left(\dim{\Pi_k^d}\right)^{-2s},Y_{k,j}(x)\right)\right\}_{1\leq j\leq \dim \HH_k^d, k\in\NN\cup\{0\}}.$$ Here, we define the $r$-th power of the operator $L_{\omega,K}$ as 
\begin{equation*}
\begin{split}
L_{\omega,K}^r: \LLd &\to \LLd,\\ 
\sum_{k=0}^\infty\sum_{j=1}^{\dim\HH_k^{d}} f_{k,j} Y_{k,j} &\to \sum_{k=0}^\infty\sum_{j=1}^{\dim\HH_k^{d}} {(\dim\Pi_k^{d})}^{-2sr} f_{k,j} Y_{k,j},
\end{split}
\end{equation*}
where $r \geq \frac{1}{2}$.

\begin{assumption}\label{hyp1}
\hfill\par
\noindent Samples $\{(X_i,Y_i)\}_{i\geq 1}\subset \SS^{d-1}\times\RR$ are independent and identically distributed samples from distribution $\mu$.
\end{assumption}

\autoref{hyp2} includes three conditions: (a), (b), and (c), each describing the convergence results under different scenarios.

\begin{assumption}\label{hyp2}
\hfill\par
\begin{enumerate}[(a)]
    \item The marginal distribution $\mu_X$ is absolutely continuous with respect to $\frac{1}{\Omega_{d-1}}\omega$, so $\mu_X$ has a Radon-Nikodym derivative $\frac{d\mu_X}{d\omega}$, which is bounded above by $M_u < \infty$.
    \item \autoref{hyp2} (a) holds, and there exist constants $0 < C_2 \leq 1 \leq C_1 < \infty$ and $s>1/2$ such that for all $L_n\in\NN\cup\{0\}$,
\begin{equation}\label{eq:Lmu_XL_n eigenvalues decay rate}
\frac{C_2}{j^{2s}} \leq \sigma_{j,L_n} \leq \frac{C_1}{j^{2s}} \quad \text{for} \quad 1 \leq j \leq \dim\Pi_{L_n}^d.
\end{equation}
    \item \autoref{hyp2} (b) holds, and the Radon-Nikodym derivative $\frac{d\mu_X}{d\omega}$ is bounded below by $\frac{1}{M_l} > 0$, meaning
$$\frac{1}{M_l} \leq \frac{d\mu_X}{d\omega}.$$
\end{enumerate}
\end{assumption}

Similar to \autoref{hyp2}, \autoref{hyp3} includes two conditions used to describe different convergence analysis scenarios.

\begin{assumption}\label{hyp3}
\hfill\par
\begin{enumerate}[(a)]
    \item $f^* \in \HH_K$.
    \item $f^* = L_{\omega,K}^r\left(u^*\right)$, where $r \geq \frac{1}{2}$ and $u^* \in \LLd$.
\end{enumerate}
\end{assumption}

We also need the noise condition.

\begin{assumption}\label{hyp4}
\hfill\par
\noindent The noise $\epsilon = Y - f^*(X)$ satisfies one of the following two conditions:
\begin{enumerate}[(a)]
    \item $\epsilon$ is bounded, meaning there exists a constant $C_\epsilon > 0$ such that $|\epsilon| \leq C_\epsilon$.
    \item $\epsilon$ is independent of $X$ and $\EE_\mu\left[\epsilon^2\right] = C_\epsilon^2<\infty$.
\end{enumerate}
\end{assumption}

This paper establishes capacity assumptions similar to those required for kernel SGD as discussed in \cite{ying2008online,tarres2014online,dieuleveut2016nonparametric,guo2019fast}. The capacity of RKHS in this context is determined by the parameter $s$ in the definition of the kernel $K(x,x')$ \eqref{eq:def K}; a smaller value of $s$ corresponds to a larger RKHS $\HH_K$, which allows T-kernel SGD to easily adjust the size of the space $\HH_K$. 

We expect the eigenvalues of the operator $L_{\mu_X,L_n}$, under the unknown distribution $\mu$, to decay at a rate comparable to that of the operator $L_{\omega,L_n}$ under the Lebesgue measure $\omega$. Therefore, \autoref{hyp2} (a) requires that the Radon-Nikodym derivative $\frac{d\mu_X}{d\omega}$ be bounded above by $M_u$. In \autoref{hyp2}, we observe that condition (a) is weaker than conditions (b) and (c). Condition (a) will be used in the analysis of the noiseless case. In \autoref{hyp2} (b), we have selected a family of kernel functions $\left\{K_{L_n}^T(x,x')\right\}_{n\geq 0}$ for the iterations, necessitating that the eigenvalues of the family of operators $\left\{L_{\mu_X,L_n}\right\}_{n\geq 0}$ exhibit a uniform decay rate as specified in \eqref{eq:Lmu_XL_n eigenvalues decay rate}. If $\mu_X = \frac{1}{\Omega_{d-1}}\omega$, the covariance operator $L_{\mu_X,L_n} = L_{\omega,L_n}$ has an orthogonal and orthonormal eigensystem $\left\{\left(\dim\Pi_k^d\right)^{-2s},Y_{k,j}\right\}_{1\leq j\leq \dim\HH_{k}^d, 0\leq k\leq L_n}$. Consequently, by \autoref{lemma:LmuK eigenvalue restriction}, the eigenvalues of $L_{\mu_X,L_n}$ satisfy \autoref{hyp2} (b). If the Radon-Nikodym derivative $\frac{d\mu_X}{d\omega}$ is bounded both above and below, i.e., 
$$0 < \frac{1}{M_l} \leq \frac{d\mu_X}{d\omega} \leq M_u < \infty,$$
then Lemma C.14 of \cite{zhang2022sieve} confirms that \autoref{hyp2} (b) holds, and consequently, \autoref{hyp2} (c) also holds. Thus, \autoref{hyp2} can be characterized by the upper and lower bounds of the Radon-Nikodym derivatives $\frac{d\mu_X}{d\omega}$.

Our basic assumption regarding the optimal fitting $f^*$ is that $f^*\in\HH_K$. From Chapter 3 of \cite{cucker2002mathematical}, \autoref{hyp3} (b) implies \autoref{hyp3} (a), i.e., $L_{\omega,K}^r\left(u^*\right)\in\HH_K$ when $r\geq\frac{1}{2}$. The regularization parameter $r$ characterizes the regularity of $f^*$, where a larger $r$ indicates a smoother $f^*$ and a faster decay rate of the coefficients of $f^*$ in the basis $\left\{Y_{k,j}\right\}_{1\leq j\leq \dim\HH_{k}^d, k\geq0}$. The equivalent statement for \autoref{hyp3} (b) is $f^*\in L_{\omega,K}^r\left(\LLd\right)$, with the inclusion $L_{\omega,K}^r\left(\LLd\right)\subset L_{\omega,K}^{r'}\left(\LLd\right)$ holding whenever $r' \leq r$. \autoref{hyp3} is a standard assumption in the convergence analysis of kernel methods \cite{smale2006online,dieuleveut2016nonparametric,guo2019fast}.

\autoref{hyp4}, concerning the noise $\epsilon$, is intuitive. \autoref{hyp4} (b) merely requires that the second-order moments of $\epsilon$ be finite, implying that the convergence analysis in this paper applies to various types of noises. \autoref{hyp4} serves as a sufficient condition for the noise Assumption A6 in \cite{dieuleveut2016nonparametric}.

\subsection{Optimality Guarantees for Constant Step Sizes}\label{subsec:ConstantSS}

In this section, we present the first main result of this paper, establishing convergence guarantees for constant step sizes.

\begin{theorem}\label{theorem:constant step size}
Suppose that \autoref{hyp1}, \autoref{hyp2} (c) (with $0 < C_2 \leq 1 \leq C_1 < \infty$, $0<\frac{1}{M_l}\leq M_u<\infty$ and $s>1/2$), \autoref{hyp3} (b) (with $r\geq 1/2$), and \autoref{hyp4} hold, the truncation level parameter $\theta$ in \eqref{eq:def t-level parameter} satisfies $0 < \theta < \frac{1}{4sr}$ such that $L_n = \min\{k \,\vert\, \dim\Pi^d_k \geq n^\theta\}$. Take a constant step size $\gamma_n = \gamma_0$ where $\gamma_0 < \min\left\{\frac{1}{C_1}, \frac{2s-1}{2s}\right\}$,  then there holds
\begin{align*}
\EE\left[\left\Vert \bar{f}_n-f^*\right\Vert_{\mu_X}^2\right]\leq&\left(\frac{\left(32M_lM_u(2d)^{2s}+8\right)}{\gamma_0n^{1-4\theta sr}}+\left(\frac{32}{1-4\theta sr} + 64M_lM_u\right)\right)\frac{\Vert u^*\Vert_\omega^2}{n^{4\theta sr}}\\
&+M_u\frac{\Vert u^*\Vert_\omega^2}{n^{4\theta sr}}+\frac{8d\left[C_\epsilon^2+C_d^2\right]C_1^2}{\left(1-\left(\gamma_0\frac{2s}{2s-1}\right)^{\frac{1}{2}}\right)^2C_2^2}\frac{1}{n^{1-\theta}},
\end{align*}
where $C_d^2 = \frac{2s}{2s-1}\left(1 + 4M_lM_u(2d)^{2s}\right)\Vert u^*\Vert_\omega^2$.
\end{theorem}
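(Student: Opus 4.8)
The plan is to split the excess risk into an \emph{approximation} (truncation) error and an \emph{estimation} error, and to control the latter by the classical bias--variance decomposition for averaged SGD, adapted to the fact that both the hypothesis space $\HH_{L_n}$ and its associated covariance operator vary with $n$. Writing $P_{L_n}$ for the projection onto $\HH_{L_n}=\bigoplus_{k\leq L_n}\HH_k^d$ (simultaneously orthogonal for $\langle\cdot,\cdot\rangle_\omega$ and $\langle\cdot,\cdot\rangle_K$, since the blocks $\HH_k^d$ are orthogonal in both), I would first use
$$\EE\left[\|\bar{f}_n-f^*\|_{\mu_X}^2\right]\leq 2\,\EE\left[\|\bar{f}_n-P_{L_n}f^*\|_{\mu_X}^2\right]+2\,\|P_{L_n}f^*-f^*\|_{\mu_X}^2.$$
The second term is pure approximation error: by \autoref{hyp3} (b), $f^*=L_{\omega,K}^r(u^*)$, so the tail beyond level $L_n$ satisfies $\|P_{L_n}f^*-f^*\|_\omega^2\leq(\dim\Pi_{L_n}^d)^{-4sr}\|u^*\|_\omega^2$, and the upper density bound $M_u$ of \autoref{hyp2} (c) converts this to the $\mu_X$-norm. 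Since $\dim\Pi_{L_n}^d\geq n^\theta$ by \eqref{eq:def t-level parameter}, this yields exactly the $M_u\,\|u^*\|_\omega^2\,n^{-4\theta sr}$ contribution.

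For the estimation error $\EE[\|\bar{f}_n-P_{L_n}f^*\|_{\mu_X}^2]$ I would rewrite the update as a perturbed linear recursion. Substituting $Y_n=f^*(X_n)+\epsilon_n$ and using the reproducing property $g(X_n)=\langle g,K_{L_n}^T(X_n,\cdot)\rangle_K$ for $g\in\HH_{L_n}$, the iteration reads
$$\hat{f}_n=\left(I-\gamma_0 T_n\right)\hat{f}_{n-1}+\gamma_0 T_n\,P_{L_n}f^*+\gamma_0\left(\epsilon_n+\zeta_n\right)K_{L_n}^T(X_n,\cdot),$$
where $T_n=K_{L_n}^T(X_n,\cdot)\otimes_K K_{L_n}^T(X_n,\cdot)$ is the truncated rank-one empirical covariance and $\zeta_n=(f^*-P_{L_n}f^*)(X_n)$ is the pointwise truncation residual. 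By linearity I would decompose $\hat{f}_n-P_{L_n}f^*$, and hence its Polyak average, into a \emph{bias} part (the noise-free recursion started at $\hat{f}_0=0$) and a \emph{variance} part (driven only by $\epsilon_n$), folding the residual $\zeta_n$ into the bias/effective-noise analysis.

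For the variance part I would exploit the finite-dimensionality of $\HH_{L_n}$. The eigenvalues of $L_{\mu_X,L_n}$ obey the two-sided bound \eqref{eq:Lmu_XL_n eigenvalues decay rate}, and the step-size condition $\gamma_0<\min\{1/C_1,(2s-1)/(2s)\}$ both guarantees the per-coordinate contractivity $\gamma_0\sigma_{j,L_n}<1$ and keeps $\gamma_0\tfrac{2s}{2s-1}<1$, so that the geometric factor $\big(1-(\gamma_0\tfrac{2s}{2s-1})^{1/2}\big)^{-2}$ in the statement stays finite. Averaged constant-step SGD then gives the familiar (noise)$\times$(dimension)/$n$ behaviour, namely $O\big(\tfrac{(C_\epsilon^2+C_d^2)\,\dim\Pi_{L_n}^d}{n}\big)$, with the $C_1^2/C_2^2$ factor arising from comparing the $\mu_X$-geometry to the eigenbasis; since $\dim\Pi_{L_n}^d\leq 2d\,n^\theta$ by \autoref{lemma:dimensional inequality}, this is the stated $n^{-(1-\theta)}$ term. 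The effective noise level $C_\epsilon^2+C_d^2$ absorbs both the genuine noise of \autoref{hyp4} and the contribution of $\zeta_n$, with $C_d^2$ governed by the source condition via $\|u^*\|_\omega^2$.

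The main obstacle, where I would spend the most care, is the bias part, because the reference $P_{L_n}f^*$ and the population operator both \emph{drift} as $L_n$ increases: unlike standard kernel SGD, the sequence $\{T_n\}$ is not i.i.d.\ about a single fixed operator. I would control this by telescoping the increments $P_{L_n}f^*-P_{L_{n-1}}f^*$, whose $\omega$-norms are again governed by the source condition and hence summable precisely because $\theta<\tfrac{1}{4sr}$ forces $4\theta sr<1$; this is where the factor $\tfrac{1}{1-4\theta sr}$ and the constant $64M_lM_u$ (the latter from transferring between the $\omega$- and $\mu_X$-geometries through the two-sided density bounds, with $(2d)^{2s}$ coming from the lower eigenvalue bound in \eqref{eq:eigenvalue decay rate}) enter. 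Combining the forgetting of the initial condition, which contributes the faster $\gamma_0^{-1}n^{-1+4\theta sr}\cdot n^{-4\theta sr}$ term, with the summed truncation-drift bias of order $\|u^*\|_\omega^2\,n^{-4\theta sr}$, reproduces the first bracketed group. Throughout, the two recurring tools that convert abstract operator bounds into explicit powers of $n$ are the uniform eigenvalue decay of $\{L_{\mu_X,L_n}\}$ and the estimate $\dim\Pi_{L_n}^d\leq 2d\,n^\theta$.
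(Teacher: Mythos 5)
Your high-level architecture (truncation error plus averaged-SGD bias--variance) matches the paper's, but two of your specific choices create genuine gaps. The most serious is your choice of reference function: you center the estimation error at the $\omega$-orthogonal projection $P_{L_n}f^*$, whereas the paper centers it at the $\mu_X$-orthogonal projection $f^*_{L_n}$, the minimizer of the population risk in $\mathcal{H}_{L_n}$. This is not cosmetic. With the paper's choice, the effective noise $\mathcal{B}_i=(Y_i-f^*_{L_n}(X_i))K^T_{L_i}(X_i,\cdot)$ is a martingale difference, $\EE\left[\mathcal{B}_i\mid\DD_{i-1}\right]=0$, because $f^*-f^*_{L_n}$ is $\mu_X$-orthogonal to $\mathcal{H}_{L_n}\supset\mathcal{H}_{L_i}$; this centering is exactly what kills all cross terms in the variance computation (the paper's bounds on $\EE\left[\mathcal{B}_i^q\otimes\mathcal{B}_i^q\right]$ and the subsequent trace estimates rest on it). With your choice, $\zeta_n=(f^*-P_{L_n}f^*)(X_n)$ satisfies $\EE\left[\zeta_n K^T_{L_n}(X_n,\cdot)\right]=\int_{\SS^{d-1}}(f^*-P_{L_n}f^*)(x)\,K^T_{L_n}(x,\cdot)\,d\mu_X(x)\neq 0$ whenever $\mu_X$ is not proportional to $\omega$ (you only have $\omega$-orthogonality), so your ``effective noise'' carries a systematic drift; ``folding $\zeta_n$ into the bias/effective-noise analysis'' is precisely the step that needs an argument, and as written the martingale-based variance bound you invoke does not apply to it. Incidentally, the $\mu_X$-projection also makes the error decomposition exactly orthogonal, which is why the theorem has $M_u\Vert u^*\Vert_\omega^2 n^{-4\theta sr}$ rather than the $2M_u$ your factor-2 triangle inequality would produce.

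Second, your treatment of the truncation drift rests on a claim that is false in the relevant regime: the increments $P_{L_n}f^*-P_{L_{n-1}}f^*$ are not summable when $4\theta sr<1$; the hypothesis $\theta<\frac{1}{4sr}$ makes the associated partial sums \emph{diverge} like $n^{1-4\theta sr}/(1-4\theta sr)$, and it is this controlled divergence, divided by $n$ through Polyak averaging, that produces the $n^{-4\theta sr}$ bias rate and the constant $\frac{1}{1-4\theta sr}$. The paper needs no telescoping at all: it fixes the reference at the terminal level ($f^*_{L_n}$ with $n$ the final index) and lets each step act through the truncated operator $K^T_{X_i,L_i}\otimes K^T_{X_i,L_n}$, so the level mismatch enters only through $\lan\beta_{i-1},K^T_{X_i,\infty}-K^T_{X_i,L_i}\ran_K$, which collapses, via the observation that $\beta_{i-1}+f^*_{L_n}\in\mathcal{H}_{L_{i-1}}$, to a deterministic tail of $f^*_{L_n}$ beyond level $L_i$ of size $O(i^{-4\theta sr})$. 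Finally, your variance step cites ``familiar averaged constant-step SGD'' behaviour, but here the rank-one operators $T_n$ are random and level-varying (multiplicative operator noise); the paper has to run the full semi-stochastic expansion $\{\xi_i^q\}_{q\geq 0}$ with the geometric summation $\sum_q\left(\gamma_0\frac{2s}{2s-1}\right)^{q/2}$ to obtain the factor $\left(1-\left(\gamma_0\frac{2s}{2s-1}\right)^{1/2}\right)^{-2}$, and without that device (or an equivalent one) this step remains a gap rather than a routine citation.
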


The proof of \autoref{theorem:constant step size} is provided in \autoref{AA}. Since T-kernel SGD selects a family of increasing subspaces $\mathcal{H}_{L_n} = \bigoplus_{k=0}^{L_n}\HH_k^d$ as the hypothesis space, the estimator $\bar{f}_n$ directly fits the minimizer $f_{L_n}^*$ of the population risk in $\left(\mathcal{H}_{L_n},\lan\cdot,\cdot\ran_{\mu_X}\right)$ rather than $f^*$ itself. Consequently, we need a new error decomposition, which decomposes $\bar{f}_n - f^*$ into $(\bar{f}_n - f_{L_n}^*) + (f_{L_n}^* - f^*)$. This decomposition introduces an additional bias term related to $f_{L_n}^* - f^*$ (the second term), while the bias of $\bar{f}_n - f_{L_n}^*$ forms the first term, and the variance constitutes the third term. Unlike earlier analyses of kernel SGD, the key feature of this new error decomposition is the introduction of the truncation level parameter $\theta$, which is crucial for balancing bias and variance. When $\theta > \frac{1}{2s}$, the convergence behavior of T-kernel SGD is similar to kernel SGD, i.e.,
\begin{equation}\label{eq:thetageq4sr}
\EE\left[\left\Vert \bar{f}_n - f^*\right\Vert_{\mu_X}^2\right] \leq \mathcal{O}\left(\frac{1}{n^{1-\frac{1}{2s}}}\right),
\end{equation}
the proof of which is analogous to Theorem 2 of \cite{dieuleveut2016nonparametric}. Therefore, we do not discuss this case further.

Essentially, T-kernel SGD implements regularization directly by controlling the size of the function space $\HH_{L_n}$. For smoother optimal prediction functions $f^*$ with a larger regularization parameter $r$, we choose a smaller hyperparameter $\theta$ to slow the growth of the dimensionality of the hypothesis space, achieving stronger regularization and preventing rapid variance accumulation that could lead to overfitting. Conversely, for optimal prediction functions $f^*$ with a smaller regularization parameter $r$, we tend to choose a larger hyperparameter $\theta$, reducing bias and avoiding underfitting. It is easy to observe that when $\theta = \frac{1}{4sr + 1}$, T-kernel SGD reaches the equilibrium point between variance and bias, ensuring that the algorithm maintains the optimal rate and avoids the saturation problem. In practical applications, a constant step size is a common choice, with a larger step size leading to faster bias decay and quicker convergence to the optimal point. However, achieving the optimal rate in kernel SGD often requires a polynomial step size to prevent variance accumulation \cite{tarres2014online,dieuleveut2016nonparametric}. We provide \autoref{corollary:optimal rate for constant step size} to guarantee the optimality of the constant step size. Here, we intentionally select a constant step size to avoid disrupting the model’s bias-variance balance, allowing $L_n$ to independently reach the optimal rate. The parameter $L_n$ directly performs regularization by controlling the complexity of the hypothesis space, effectively preventing rapid variance growth and overcoming the saturation problem in kernel SGD \cite{tarres2014online,dieuleveut2016nonparametric,guo2019fast}.

Here, we present a non-asymptotic analysis where the constant term of the variance is related to the spherical dimension $d$. Our focus is on achieving the optimal rate, which leads us to pay less attention to controlling the constant term. Therefore, we consider that the growth in dimension $d$ does not result in a rapid accumulation of variance. In fact, the constant term here is merely a descriptor of the upper bound of the variance. In this paper, we do not delve deeply into the estimation of the optimal constants or the dependence of these constants on the dimension $d$, as these aspects are beyond the scope of our discussion.

With this in mind, we immediately arrive at the following proposition.
\begin{proposition}\label{corollary:optimal rate for constant step size}
Under the same assumptions as in \autoref{theorem:constant step size}, we consider a constant step size $\gamma_n = \gamma_0$, where $\gamma_0 < \min\left\{\frac{1}{C_1}, \frac{2s-1}{4s}\right\}$, and $\theta = \frac{1}{4sr+1}$. Then, we have
$$\EE\left[\left\Vert \bar{f}_n-f^*\right\Vert_{\mu_X}^2\right] = \mathcal{O}\left(n^{-\frac{4sr}{4sr+1}}\right).$$
\end{proposition}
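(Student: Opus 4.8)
The plan is to derive \autoref{corollary:optimal rate for constant step size} directly from the explicit non-asymptotic bound in \autoref{theorem:constant step size} by substituting the specific truncation level $\theta=\frac{1}{4sr+1}$ and tracking the exponent of $n$ in each of the four contributions. First I would check admissibility: since $4sr>0$ we have $4sr+1>4sr>0$, hence $0<\frac{1}{4sr+1}<\frac{1}{4sr}$, so $\theta=\frac{1}{4sr+1}$ satisfies the constraint $0<\theta<\frac{1}{4sr}$ of the theorem. Moreover the step-size restriction $\gamma_0<\min\{\frac{1}{C_1},\frac{2s-1}{4s}\}$ demanded here is strictly stronger than the theorem's $\gamma_0<\min\{\frac{1}{C_1},\frac{2s-1}{2s}\}$, so \autoref{theorem:constant step size} applies verbatim and I may simply insert $\theta=\frac{1}{4sr+1}$ into its conclusion.

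The key observation is the equilibrium identity produced by this choice of $\theta$:
$$4\theta sr=\frac{4sr}{4sr+1}=1-\frac{1}{4sr+1}=1-\theta,$$
so the bias exponent $4\theta sr$ and the variance exponent $1-\theta$ coincide, and in addition $1-4\theta sr=\frac{1}{4sr+1}$. With these substitutions I would process the bound term by term. The contribution carrying $n^{-1+4\theta sr}$ is multiplied by $n^{-4\theta sr}$, giving an overall factor $n^{-1}$, which is $\mathcal{O}(n^{-1})$ and hence negligible relative to the target, since $\frac{4sr}{4sr+1}<1$. The term $\left(\frac{32}{1-4\theta sr}+64M_lM_u\right)\frac{\|u^*\|_\omega^2}{n^{4\theta sr}}$ has $n$-power $n^{-\frac{4sr}{4sr+1}}$, and likewise the term $M_u\frac{\|u^*\|_\omega^2}{n^{4\theta sr}}$ is of order $n^{-\frac{4sr}{4sr+1}}$. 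Finally the variance term carries $n^{-(1-\theta)}=n^{-\frac{4sr}{4sr+1}}$. Thus every contribution is $\mathcal{O}(n^{-\frac{4sr}{4sr+1}})$ or faster, and summing finitely many such terms preserves the rate.

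The only step requiring care beyond pure bookkeeping is confirming that all multiplicative constants stay finite once $\theta$ is fixed. The factor $\frac{32}{1-4\theta sr}=32(4sr+1)$ is finite precisely because $1-4\theta sr=\frac{1}{4sr+1}>0$. The denominator $\left(1-(\gamma_0\tfrac{2s}{2s-1})^{1/2}\right)^2$ in the variance constant is where the sharpened step-size bound is used: from $\gamma_0<\frac{2s-1}{4s}$ one gets $\gamma_0\tfrac{2s}{2s-1}<\tfrac12$, whence $(\gamma_0\tfrac{2s}{2s-1})^{1/2}<\tfrac{1}{\sqrt2}<1$ and the denominator is bounded below by $(1-\tfrac{1}{\sqrt2})^2>0$, keeping the constant uniformly finite. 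The remaining quantities $C_d^2,\|u^*\|_\omega^2,M_l,M_u,C_1,C_2,C_\epsilon,d,s$ are all fixed and finite under the standing assumptions of \autoref{theorem:constant step size}.

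Collecting the finitely many finite constants and absorbing the faster $\mathcal{O}(n^{-1})$ piece then yields $\EE[\|\bar f_n-f^*\|_{\mu_X}^2]=\mathcal{O}(n^{-\frac{4sr}{4sr+1}})$, as claimed. I do not expect a genuine obstacle: the entire content lies in the exponent arithmetic, and $\theta=\frac{1}{4sr+1}$ is exactly the value that balances the bias exponent $4\theta sr$ against the variance exponent $1-\theta$, which is why the proposition states it as the equilibrium choice. If anything, the subtle point to flag in the write-up is simply ensuring the two constants above remain bounded, rather than any analytic difficulty.
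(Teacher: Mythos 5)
Your proposal is correct and matches the paper's own route exactly: the paper treats this proposition as an immediate consequence of \autoref{theorem:constant step size}, obtained precisely by substituting $\theta=\frac{1}{4sr+1}$ so that the bias exponent $4\theta sr$ and the variance exponent $1-\theta$ both equal $\frac{4sr}{4sr+1}$, with the $n^{-1}$ piece dominated and all constants finite. Your additional checks (admissibility $0<\theta<\frac{1}{4sr}$, and that the sharper step-size condition $\gamma_0<\frac{2s-1}{4s}$ only strengthens the theorem's hypothesis while keeping the variance denominator bounded away from zero) are exactly the right bookkeeping.
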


Furthermore, a smaller value of $\theta$ results in lower computational and storage costs. If we choose $\theta = \frac{1}{4sr+1}$ to achieve the optimal rate, the total computational time for processing $n$ samples is $\mathcal{O}\left(n^{1 + \frac{d}{(d-1)(1+4sr)}}\right)$, and the storage cost is $\mathcal{O}\left(n^{\frac{d}{(d-1)(1+4sr)}}\right)$. When $s$ or $r$ is sufficiently large, this leads to $\mathcal{O}(n^{1+\frac{d}{d-1}\epsilon})$ computational time and $\mathcal{O}(n^{\frac{d}{d-1}\epsilon})$ storage cost with $\epsilon$ arbitrarily small.

While the paper primarily focuses on kernel functions $$K_{L_n}^T(x,x') =  \sum_{k=0}^{L_n}a_kK_k(x,x')$$ with component coefficients $a_k =\left(\dim{\Pi_k^{d}}\right)^{-2s}$ that exhibit the optimal convergence rates discussed above, the conclusions can be extended to more general kernel functions. Specifically, when the positive component coefficients satisfy 
$$\lim_{k\rightarrow\infty}\frac{a_k}{k^{-2s(d-1)}} =l,$$ where $0<l<\infty$, the kernel function $K_{L_n}^a(x,x')=\sum_{k=0}^{L_n}a_kK_k(x,x')$ is well-defined, and its eigenvalues $\lambda_j$ decay at a rate of $\mathcal{O}(j^{-2s})$ (see \autoref{general_kernel} for proof). For example, the kernel $K_{L_n}^a(x,x')=\sum_{k=0}^{L_n}(k+1)^{-2s(d-1)}K_k(x,x')$ satisfies these properties. For such kernels, the associated hypothesis space will also achieve the optimal convergence rates described in \autoref{corollary:optimal rate for constant step size}.

Next, we will demonstrate that the T-kernel SGD achieves the optimal minimax-rate. For some distribution $\mu_X$ such as $\mu_X = \frac{1}{\Omega_{d-1}}\omega$, the corresponding space of square \(\mu_X\)-integrable function is $\LL_{\mu_X}^2\left(\SS^{d-1}\right)$ which has an orthonormal basis $\{\phi_j\}_{j\geq1}$. We consider the model $Y=h^*(X)+\epsilon$, where the noise $\epsilon$ obeys the Normal distribution $N(0,\sigma^2)$. We also consider a ellipsoid in $\LL_{\mu_X}^2\left(\SS^{d-1}\right)$ given by 
$$\FF_{s,r}=\left\{\sum_{j=1}^\infty f_j\phi_j\,\Bigg\vert\,\sum_{j=1}^\infty \left(f_jj^{2sr}\right)^2\leq1\right\}.$$
According to Example 15.23 in Chapter 15 of \cite{wainwright2019high}, we can derive the lower bound for the minimax-rate
$$\inf_{h_n}\sup_{h^*\in\FF_{s,r}}\EE\left[\Vert h_n-h^*\Vert_{\rho_X}^2 \right] \geq C\min\left\{1,\sigma^2n^{-\frac{4sr}{4sr+1}}\right\}.$$
Here, $C > 0$ is a constant, and $h_n$ is selected from all algorithms that map $\{(X_i,\Y_i)\}_{1\leq i\leq n}$ to $h_n \in \HH_K$. By \eqref{eq:eigenvalue decay rate}, if we choose $\mu_X = \frac{1}{\Omega_d}\omega$, it is evident that $\FF_{s,r}\subset L_{\omega,K}^r\left(\LLd\right)$, indicating that T-kernel SGD is optimal in terms of the minimax-rate.

\subsection{Optimality Guarantees for Polynomial Step Sizes}\label{subsec:ConvergeSS}

In many cases, it is typically difficult to obtain prior information about the regression function $f^*$, making it challenging to determine the range of the regularization parameter $r$ in \autoref{hyp3} (b). The following theorem demonstrates that, even in such situations, by choosing an appropriate step size, the T-kernel SGD algorithm still exhibits good convergence properties, highlighting its robustness. We first present the convergence of the algorithm in the noiseless case. For this, we make the following assumption.

\begin{assumption}\label{hyp5}
\hfill\par
\noindent The noise $\epsilon = \Y - f^*(X) \equiv 0$.
\end{assumption}

\begin{theorem}\label{theorem:noiseless case}
Suppose that \autoref{hyp1}, \autoref{hyp2} (a) (with $0<M_u<\infty$), \autoref{hyp3} (a), and \autoref{hyp5} hold. For any $s>1/2$ and any truncation level parameter $\theta>0$, take the step size $\gamma_n = \gamma_0 n^{-t}$ with $\gamma_0 \frac{2s}{2s-1} < \frac{1}{2}$ and $0 \leq t < 1$, one has

When $2\theta s > 1 - t$,
$$\EE\left[\left\Vert \bar{f}_n - f^*\right\Vert^2_{\mu_X}\right] \leq \left(\frac{2}{\gamma_0} + \frac{8 M_u}{2\theta s + t - 1}\right)\frac{\Vert f^*\Vert_K^2}{n^{1-t}};$$

When $2\theta s = 1 - t$,
$$\EE\left[\left\Vert \bar{f}_n - f^*\right\Vert^2_{\mu_X}\right] \leq \left(\frac{2}{\gamma_0} + 8 M_u \log(n+1)\right)\frac{\Vert f^*\Vert_K^2}{n^{1-t}};$$

When $2\theta s < 1 - t$,
$$\EE\left[\left\Vert \bar{f}_n - f^*\right\Vert^2_{\mu_X}\right] \leq \left(\frac{2}{\gamma_0}n^{-(1-2\theta s - t)} + \frac{16M_u}{1 - 2\theta s - t}\right)\frac{\Vert f^*\Vert_K^2}{n^{2\theta s}}.$$
\end{theorem}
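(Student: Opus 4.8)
The plan is to exploit the noiseless hypothesis to linearize the recursion, then run a standard averaged-SGD energy argument in the $\HH_K$-norm, the only novelty being the bookkeeping forced by the growing truncation level. Writing $\eta_n:=\hat f_n-f^*$ and using $Y_n=f^*(X_n)$ from \autoref{hyp5}, the update collapses to
$$\eta_n=\eta_{n-1}-\gamma_n\,\eta_{n-1}(X_n)\,K^T_{L_n}(X_n,\cdot).$$
Since $\hat f_{n-1}\in\mathcal{H}_{L_{n-1}}\subseteq\mathcal{H}_{L_n}$ while $f^*\in\HH_K$ carries mass beyond level $L_n$, the reproducing identity \eqref{eq:the reproducing property of HH_L_n} on $\mathcal{H}_{L_n}$ gives $\eta_{n-1}(X_n)=\langle\eta_{n-1},K^T_{L_n}(X_n,\cdot)\rangle_K-r_n(X_n)$, where $r_n:=f^*-\mathrm{proj}_{\mathcal{H}_{L_n}}f^*$ is the truncation tail. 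Hence
$$\eta_n=(I-A_n)\eta_{n-1}+\gamma_n r_n(X_n)K^T_{L_n}(X_n,\cdot),\qquad A_n g:=\gamma_n\langle g,K^T_{L_n}(X_n,\cdot)\rangle_K K^T_{L_n}(X_n,\cdot),$$
so the scheme is an SGD recursion whose only stochastic input is the truncation tail $r_n$.

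I would then record two structural facts. First, $A_n$ is rank-one and positive semidefinite with $\|A_n\|\le\gamma_0\|K^T_{L_n}(X_n,\cdot)\|_K^2\le\gamma_0\tfrac{2s}{2s-1}<\tfrac12$, using the kernel-diagonal estimate $K^T_{L_n}(x,x)\le\frac{2s}{2s-1}$ (the convergence-factor bound), so $\tfrac12 I\preceq I-A_n\preceq I$. Second, by the definition \eqref{eq:operator Lmu Lomega}, $\EE[\langle g,K^T_{L_n}(X_n,\cdot)\rangle_K^2\mid\mathcal{F}_{n-1}]=\langle g,L_{\mu_X,L_n}g\rangle_K=\|\mathrm{proj}_{\mathcal{H}_{L_n}}g\|_{\mu_X}^2$. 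Expanding $\|\eta_n\|_K^2$, taking the conditional expectation, dominating the second-order term via $\gamma_n\|K^T_{L_n}(X_n,\cdot)\|_K^2\le\tfrac12$, and applying Young's inequality to the cross term, I would obtain a one-step inequality of the form
$$\EE[\|\eta_n\|_K^2\mid\mathcal{F}_{n-1}]\le\|\eta_{n-1}\|_K^2-\gamma_n\|\mathrm{proj}_{\mathcal{H}_{L_n}}\eta_{n-1}\|_{\mu_X}^2+C\gamma_n\|r_n\|_{\mu_X}^2,$$
where by \autoref{hyp2} (a) and the truncation rule \eqref{eq:def t-level parameter} one has $\|r_n\|_{\mu_X}^2\le M_u\|r_n\|_\omega^2\le M_u(\dim\Pi^d_{L_n})^{-2s}\|f^*\|_K^2\le M_u n^{-2\theta s}\|f^*\|_K^2$. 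Note only \autoref{hyp2} (a) (no eigenvalue-decay) enters here, consistent with the hypotheses. Summing from $1$ to $n$ and discarding $\EE\|\eta_n\|_K^2\ge0$ telescopes the energy into a bound on $\sum_i\gamma_i\EE\|\mathrm{proj}_{\mathcal{H}_{L_i}}\eta_{i-1}\|_{\mu_X}^2$ in terms of $\|\eta_0\|_K^2=\|f^*\|_K^2$ (recall $\hat f_0=0$) and the error sum $\sum_i\gamma_i\|r_i\|_{\mu_X}^2\le M_u\gamma_0\|f^*\|_K^2\sum_{i=1}^n i^{-(t+2\theta s)}$.

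Finally I would pass to the averaged iterate. Convexity gives $\|\bar f_n-f^*\|_{\mu_X}^2\le\frac{1}{n+1}\sum_i\|\eta_i\|_{\mu_X}^2$; splitting $\eta_i=\mathrm{proj}_{\mathcal{H}_{L_i}}\eta_i-r_i$ and using monotonicity of $\gamma_i$ to pass from $\sum\|\cdot\|^2$ to $\gamma_n^{-1}\sum\gamma_i\|\cdot\|^2$ (after a harmless index shift), I would reach
$$\EE\|\bar f_n-f^*\|_{\mu_X}^2\lesssim\frac{1}{(n+1)\gamma_n}\Big(\|f^*\|_K^2+M_u\gamma_0\|f^*\|_K^2\textstyle\sum_{i=1}^n i^{-(t+2\theta s)}\Big)+\frac{M_u\|f^*\|_K^2}{n+1}\textstyle\sum_{i=1}^n i^{-2\theta s}.$$
Because $(n+1)\gamma_n\asymp\gamma_0 n^{1-t}$, the first bracket yields the universal $n^{-(1-t)}$ term, and the three stated regimes arise exactly from the elementary sum $\sum_{i=1}^n i^{-(t+2\theta s)}$, which is bounded by $\frac{1}{t+2\theta s-1}$ when $2\theta s>1-t$, by $\log(n+1)$ when $2\theta s=1-t$, and by $\frac{n^{1-t-2\theta s}}{1-t-2\theta s}$ when $2\theta s<1-t$; in the last case the leftover tail $\frac{1}{n+1}\sum_i i^{-2\theta s}\asymp n^{-2\theta s}$ dominates, producing the $n^{-2\theta s}$ rate. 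I expect the main obstacle to be the time-varying hypothesis space: since $L_n$ grows, both the target $\mathrm{proj}_{\mathcal{H}_{L_n}}f^*$ and the sampling operator $L_{\mu_X,L_n}$ change at every step, so the energy recursion is not a clean contraction toward a fixed point but instead continually absorbs the tail inputs $b_n=\gamma_n r_n(X_n)K^T_{L_n}(X_n,\cdot)$. Controlling the resulting cross term and verifying that these inputs aggregate into precisely the sum $\sum_i i^{-(t+2\theta s)}$ — and no larger quantity — is the delicate part of the argument, and it is exactly what discriminates the three convergence regimes.
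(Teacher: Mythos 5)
Your proposal is correct and is essentially the paper's own argument: the same $\Vert\cdot\Vert_K$-energy recursion with drift measured in the $\mu_X$-norm, the same identification of the stochastic forcing as the truncation tail of $f^*$ bounded by $M_u n^{-2\theta s}\Vert f^*\Vert_K^2$ (the paper extracts this inside \autoref{lemma: noiseless lemma} via the observation that $\beta_{i-1}+f^*=\hat f_{i-1}\in\mathcal{H}_{L_{i-1}}\subseteq\mathcal{H}_{L_i}$, whereas you build it into the recursion from the start), the same telescoping plus Jensen passage to the averaged iterate with $\gamma_n$ pulled out by monotonicity of the step size, and the same integral estimates of $\sum_i i^{-(2\theta s+t)}$ yielding the three regimes. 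The only cosmetic difference is that your one-step drift is $\Vert\mathrm{proj}_{\mathcal{H}_{L_n}}\eta_{n-1}\Vert_{\mu_X}^2$ rather than the full $\Vert\eta_{n-1}\Vert_{\mu_X}^2$ obtained by the paper, which you correctly compensate with the splitting $\eta_i=\mathrm{proj}_{\mathcal{H}_{L_i}}\eta_i-r_i$; this costs an extra additive term of order $n^{-2\theta s}$ that never exceeds the stated rates, so the conclusion is unaffected.
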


If we choose a constant step size, i.e., $t = 0$ and $\theta > \frac{1}{2s}$, the excess risk achieves a rate of $\mathcal{O}\left(\frac{1}{n}\right)$. In the noiseless case, even without the typical prior assumptions used in kernel methods—such as the regularization parameter and the eigenvalue conditions of the unknown distribution (\autoref{hyp2} (b))—the T-kernel SGD algorithm still achieves the optimal rate of $\mathcal{O}\left(\frac{1}{n}\right)$ \cite{berthier2020tight}.

We now consider the case where noise is present (\autoref{hyp4} holds), but the regularization parameter $r$ is unknown.

\begin{theorem}\label{theorem:noise case}
Suppose that \autoref{hyp1}, \autoref{hyp2} (b) (with $0 < C_2 \leq 1 \leq C_1 < \infty$, $0<M_u<\infty$ and $s>1/2$), \autoref{hyp3} (a), and \autoref{hyp4} hold. For any truncation level parameter $\theta>0$, take the step size $\gamma_n = \gamma_0 n^{-t}$ with $0 \leq t < \frac{1}{2}$ such that $\gamma_0 < \min\left\{\frac{2s-1}{4s}, \frac{1}{C_1}\right\}$ and  $1 - t < 2\theta s$, then
\begin{align*}
\EE\left[\left\Vert \bar{f}_n - f^*\right\Vert^2_{\mu_X}\right]& \leq \left(\frac{4}{\gamma_0} + \frac{16 M_u}{2\theta s + t - 1}\right)\frac{\Vert f^*\Vert_K^2}{n^{1-t}} \\
+ &\frac{288 C_\epsilon^2}{\left(1 - \left(\gamma_0 \frac{2s}{2s-1}\right)^{\frac{1}{2}}\right)^2} \frac{C_1^6}{C_2^8 (\gamma_0)^2} \left[2 D_1 + D_2 + \frac{3-4t}{2-4t}\right]n^{-1 + \frac{1-t}{2s}}, 
\end{align*}
where $D_1$ and $D_2$ are constants dependent only on $t$ which will be specified in the proof. If we choose $t = \frac{1}{2s+1}$, the optimal rate can be achieved:
$$\EE\left[\left\Vert \bar{f}_n - f^*\right\Vert^2_{\mu_X}\right] = \mathcal{O}\left(n^{-\frac{2s}{2s+1}}\right).$$
\end{theorem}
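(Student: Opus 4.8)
The plan is to combine the linearity of the stochastic recursion with a split of the error into an optimization bias, a truncation (approximation) error, and a variance term, following the strategy sketched after \autoref{theorem:constant step size} but now carrying the polynomially decaying step size through the bookkeeping. Using the reproducing property \eqref{eq:the reproducing property of HH_L_n}, I would first rewrite Algorithm~\ref{alg:T-kernel SGD} as a linear recursion in $\HH_K$,
\begin{equation*}
\hat f_n = \left(I - \gamma_n \Xi_n\right)\hat f_{n-1} + \gamma_n Y_n K^T_{L_n}(X_n,\cdot), \qquad \Xi_n := \langle\, \cdot\,, K^T_{L_n}(X_n,\cdot)\rangle_K\, K^T_{L_n}(X_n,\cdot),
\end{equation*}
where $\Xi_n$ is a random positive rank-one operator with range in $\mathcal{H}_{L_n}$, and the bound $\gamma_0<1/C_1$ guarantees $\gamma_n\Xi_n\preceq I$. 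Splitting $Y_n=f^*(X_n)+\epsilon_n$ with $\epsilon_n:=Y_n-f^*(X_n)$ and exploiting linearity, I would write $\hat f_n=\hat f_n^{\mathrm b}+\hat f_n^{\mathrm v}$, where $\hat f_n^{\mathrm b}$ is driven by the signal $f^*(X_n)$ from the initial value $0$ and $\hat f_n^{\mathrm v}$ by the centered noise $\epsilon_n$ from $0$. Since $\EE[\epsilon_n\mid X_n]=0$ under \autoref{hyp4}, averaging as in \eqref{eq:iterationbar} and applying $\Vert a+b\Vert^2\le 2\Vert a\Vert^2+2\Vert b\Vert^2$ reduces the theorem to
\begin{equation*}
\EE\bigl[\Vert \bar f_n - f^*\Vert_{\mu_X}^2\bigr] \le 2\,\EE\bigl[\Vert \bar f_n^{\mathrm b} - f^*\Vert_{\mu_X}^2\bigr] + 2\,\EE\bigl[\Vert \bar f_n^{\mathrm v}\Vert_{\mu_X}^2\bigr],
\end{equation*}
a separate bias and variance bound.

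For the bias term I would insert the projection $P_{L_n}f^*$ of $f^*$ onto $\mathcal{H}_{L_n}$ and split $\bar f_n^{\mathrm b}-f^*$ into the in-space optimization error $\bar f_n^{\mathrm b}-P_{L_n}f^*$ and the truncation error $P_{L_n}f^*-f^*$. The latter is controlled purely by capacity: by \autoref{hyp2}~(a), \autoref{hyp3}~(a), and the characterization \eqref{eq:presentation of HH_K},
\begin{equation*}
\Vert P_{L_n}f^* - f^*\Vert_{\mu_X}^2 \le M_u \sum_{k>L_n}\Vert \mathrm{proj}_k(f^*)\Vert_\omega^2 \le M_u\bigl(\dim\Pi_{L_n}^d\bigr)^{-2s}\Vert f^*\Vert_K^2 \le M_u\, n^{-2\theta s}\Vert f^*\Vert_K^2,
\end{equation*}
where the last step uses $\dim\Pi_{L_n}^d\ge n^\theta$ from \eqref{eq:def t-level parameter}. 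Propagating this per-iterate error through the averaged recursion and weighting by the step sizes produces a series of the form $\sum_i \gamma_i\, i^{-2\theta s}$ normalized by $\sum_i\gamma_i$; the hypothesis $1-t<2\theta s$ is exactly the condition $t+2\theta s>1$ making this sum converge, which yields the constant $\tfrac{16M_u}{2\theta s+t-1}$ together with the overall scaling $n^{-(1-t)}$ (since $\sum_{i\le n}\gamma_i\asymp \gamma_0 n^{1-t}$). The residual forgetting of the initial condition $\hat f_0=0$ contributes the $\tfrac{4}{\gamma_0}$ term with the same $\Vert f^*\Vert_K^2 n^{-(1-t)}$ scaling, giving the first term of the bound.

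The variance term is the heart of the argument. Here I would expand $\EE[\Vert\bar f_n^{\mathrm v}\Vert_{\mu_X}^2]$, kill the cross terms using that $\epsilon_i$ is conditionally mean-zero and the samples are independent, bound $\EE[\epsilon_i^2]\le C_\epsilon^2$ by \autoref{hyp4}, and reduce the estimate to traces of products of $\{I-\gamma_i\Xi_i\}$ against $L_{\mu_X,L_n}$. The decisive tool is the uniform eigenvalue decay \eqref{eq:Lmu_XL_n eigenvalues decay rate} of \autoref{hyp2}~(b), which holds with the \emph{same} constants $C_2,C_1$ for every $L_n$; this lets me treat the growing family of operators uniformly and estimate the effective dimension $\mathrm{df}(\lambda)=\sum_j \sigma_{j,L_n}/(\sigma_{j,L_n}+\lambda)\asymp \lambda^{-1/(2s)}$. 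With the effective regularization set by averaging, $\lambda\asymp 1/\sum_{i\le n}\gamma_i\asymp n^{-(1-t)}$, one obtains $\mathrm{df}(\lambda)/n\asymp n^{-1+(1-t)/(2s)}$, the order of the second term; the powers $C_1^6/C_2^8$ and the factor $(1-(\gamma_0\tfrac{2s}{2s-1})^{1/2})^{-2}$ (finite precisely because $\gamma_0<\tfrac{2s-1}{4s}$, which forces $\gamma_0\tfrac{2s}{2s-1}<\tfrac12$) arise from repeatedly comparing the instantaneous $\Xi_i$ with its population average $L_{\mu_X,L_i}$ via a semi-stochastic recursion and summing the resulting geometric and step-size series, the latter converging at the stated rate exactly when $t<\tfrac12$ (producing the $\tfrac{1}{2-4t}$ contribution). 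I expect this variance estimate — the bookkeeping of time-varying, growing-range operators with a decaying step size, requiring the comparison bounds to hold uniformly over the nested spaces and the step-size sums to be matched sharply to the eigenvalue decay — to be the main obstacle.

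Finally, adding the two orders $n^{-(1-t)}$ and $n^{-1+(1-t)/(2s)}$ and choosing $t=\tfrac{1}{2s+1}$ equalizes the exponents, since $1-t=\tfrac{2s}{2s+1}$ and $1-\tfrac{1-t}{2s}=1-\tfrac{1}{2s+1}=\tfrac{2s}{2s+1}$, delivering the claimed optimal rate $\mathcal{O}\bigl(n^{-\frac{2s}{2s+1}}\bigr)$.
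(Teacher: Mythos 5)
Your proposal is correct and follows essentially the same route as the paper's proof: the same splitting of the recursion into a noise-free (bias) part and a noise-driven (variance) part, the same per-iterate truncation-error bound of order $M_u\, i^{-2\theta s}\Vert f^*\Vert_K^2$ summed under the condition $2\theta s + t > 1$, and the same semi-stochastic comparison of the random rank-one operators with $L_{\mu_X,L_i}$ followed by a geometric series in $\gamma_0\tfrac{2s}{2s-1}$ (the paper's $\xi_i^q$ hierarchy). The only place you stop short of the paper is the variance bookkeeping, where your effective-dimension heuristic $\mathrm{df}(\lambda)\asymp\lambda^{-1/(2s)}$ with $\lambda\asymp n^{-(1-t)}$ is made rigorous in the paper by explicit trace estimates, namely the integral bound on $\sum_{i=k}^n\prod_{l=k+1}^i\left(1-\gamma_l \tfrac{C_2}{C_1}\sigma_{p,L_n}\right)$ and the bounds on the sums $S_1,S_2$ that produce the constants $D_1,D_2$.
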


In this context, we apply the same strategy as in kernel SGD (e.g. \cite{dieuleveut2016nonparametric}) to balance bias and variance, using a precise polynomial step size $\gamma_n = \gamma_0 n^{-t}$. This approach allows us to achieve the optimal rate without the need for strict restrictions on the dimensionality of the hypothesis space $\mathcal{H}_{L_n}$; instead, only an appropriate lower bound is required. The polynomial step size used in kernel SGD is also effective in T-kernel SGD. Additionally, the parameter $\theta$ offers a new option for balancing bias and variance, which provides greater flexibility. While ensuring the optimal convergence rates, we typically select $\theta=\frac{1}{2s+1}$ to minimize computational and storage costs. However, a larger $s$ corresponds to a smaller hypothesis space, so we typically choose a moderate $s \in\left(\frac{1}{2},\frac{3}{2}\right]$. This avoids the need for careful selection of the regularization coefficients, as required by methods like FALKON \cite{alessandro2017falkon} or other conjugate gradient-based approaches.

\section{Related Work}\label{sec:4RW}
In this section, we compare our convergence results from \autoref{sec: 3AAC} with several established findings on kernel method in the existing literature. Over the past two decades, kernel methods have been widely applied in nonparametric estimation, especially in online settings that do not require prior knowledge of samples. Numerous valuable algorithms and results have emerged in this area \cite{ying2008online,blanchard2010optimal,tarres2014online,rosasco2015learning,dieuleveut2016nonparametric}. A notable example is a classical iterative algorithm that uses RKHSs to construct the unregularized stochastic gradient descent, i.e., Algorithm \eqref{eq:kernel SGD}. The work in \cite{ying2008online} investigates the convergence rate of Algorithm \eqref{eq:kernel SGD} in a capacity-independent setting (i.e., without considering the capacity parameter $s$).

Before presenting specific results, we first introduce two fundamental concepts. For comparison, we limit the convergence results from other literature to spherical kernel functions $K(x, x')$ with $s > \frac{1}{2}$ and we define the covariance operator $L_{\mu_X,K}$.
\begin{equation*}
\begin{split}
L_{\mu_X,K}: \LL_{\mu_X}^2\left(\SS^{d-1}\right) &\to \LL_{\mu_X}^2\left(\SS^{d-1}\right),\\ 
g &\to \int_{\SS^{d-1}}g(x)K(x,\cdot) \,d\mu_X(x).\\
\end{split}
\end{equation*}
Next, we consider the hypothesis space $\left(L_{\mu_X,K}\right)^r\left(\LL_{\mu_X}^2(\SS^{d-1})\right)$, which is analogous to $\HH_K$ (as defined in \eqref{eq:presentation of HH_K}). The operator $L_{\mu_X,K}$ has an orthonormal eigensystem $\{\sigma_j, \phi_j\}_{j \geq 1}$, and we have
$$\left(L_{\mu_X,K}\right)^r\left(\LL_{\mu_X}^2(\SS^{d-1})\right)=\left\{\sum_{j=1}^\infty a_j\phi_j\,\Bigg\vert\,\sum_{j=1}^\infty \frac{a_j^2}{\sigma_j^{2r}}<\infty\right\}.$$
where $r>0$. It is worth highlighting the assumptions regarding the moments of the response variable $Y$. A common assumption is $\EE[Y^2]$ and $\kappa^2=\sup_{x\in\SS^{d-1}}K(x,x)$ are finite \cite{dieuleveut2016nonparametric}, while some studies (e.g. \cite{tarres2014online,guo2019fast}) impose a stronger condition, requiring that the response variable itself is bounded as $|Y| \leq M$. The response variable satisfies $Y = f^*(X) + \epsilon$. When $f^*\in \left(L_{\mu_X,K}\right)^{\frac{1}{2}}\left(\LL_{\mu_X}^2(\SS^{d-1})\right)$, the optimal predictor $\sup_{x\in\SS^{d-1}}|f^*(x)|\leq \kappa \|f\|_K $ is necessarily bounded. In our \autoref{hyp4}, we only require that the noise term has a finite second moment, which is analogous to the common assumption that the response variable has a finite second moment. Here, we require a stronger assumption on the unknown distribution $\mu_X$ than the typical assumptions in kernel methods \cite{dieuleveut2016nonparametric,guo2019fast}. In \autoref{hyp2}, we specify that the unknown distribution $\mu_X$ is absolutely continuous with respect to the Lebesgue measure and has a bounded Radon-Nikodym derivative $\frac{d\mu_X}{d\omega}$.

As shown in \cite{ying2008online}, if the optimal function $f^*\in\left(L_{\mu_X,K}\right)^r\left(\LL_{\mu_X}^2(\SS^{d-1})\right)$ with $0 < r \leq \frac{1}{2}$, then for the final iterate $g_n$ of \eqref{eq:kernel SGD} with a step size $\gamma_n = \gamma_0 n^{-\frac{2r}{2r+1}}$, we have the following result:
$$ \EE[\Vert g_{n+1}-f^*\Vert^2_{\mu_X}]= \mathcal{O}\left(n^{-\frac{2r}{2r+1}}\ln n\right).$$  The authors in \cite{ying2008online} also pointed out that the convergence rates obtained by their convergence analysis are saturated at $r=\frac{1}{2}$, which cannot be improved further when $f^*$ is more smooth, i.e., $r>\frac{1}{2}$. To resolve this, \cite{guo2019fast} incorporates the capacity parameter $s$ into the convergence analysis, leading to an improved rate. For the optimal function $f^*\in\left(L_{\mu_X,K}\right)^r\left(\LL_{\mu_X}^2(\SS^{d-1})\right)$, if $\frac{1}{2} \leq r \leq 1 - \frac{1}{4s}$ and the step size is chosen as $\gamma_n = \gamma_0 n^{-\frac{2r}{2r+1}}$, the following result is obtained:
$$ \EE[\Vert g_{n+1}-f^*\Vert^2_{\mu_X}]= \mathcal{O}\left(n^{-\frac{2r}{2r+1}}\right),$$
if $r>1-\frac{1}{4s}$ with step size $\gamma_n=\gamma_0n^{-\frac{4s-1}{6s-1}}$, it obtains that 
$$ \EE[\Vert g_{n+1}-f^*\Vert^2_{\mu_X}]= \mathcal{O}\left(n^{-\frac{4s-1}{6s-1}}\right).$$
The convergence analyses in \cite{ying2008online} and \cite{guo2019fast} are both affected by saturation effects. Specifically, the convergence rate of the algorithms in \eqref{eq:kernel SGD} ceases to improve once the regularization parameter $r$ exceeds a certain threshold. Specifically, when $r \geq \frac{1}{2}$, the rate in \cite{ying2008online} does not improve, while in \cite{guo2019fast}, the rate saturates at $r \geq 1 - \frac{1}{4s}$. The performance of Algorithm \eqref{eq:kernel SGD} depends on the step size, which controls the bias-variance tradeoff under different capacity conditions $s$ and regularization parameters $r$. The decay rate of the polynomial step size $\gamma_n$ affects both the convergence rate and the regularization. Although adding a regularization term to the iterative Algorithm \eqref{eq:kernel SGD} improves the rate in the unregularized SGD, the saturation issue still arises. In \cite{tarres2014online}, the iteration for regularized stochastic gradient descent is presented as follows, where $\lambda_n$ is the regularization coefficient,
\begin{equation}\label{eq:regularized SGD}
g_n=g_{n-1}-\gamma_n\left[(g_{n-1}(X_n)-\Y_n)K(X_n,\cdot)+\lambda_ng_{n-1}\right].
\end{equation}
For $r\in\l[\frac{1}{2},1\r]$, $f^*\in\left(L_{\mu_X,K}\right)^r \left(\LL_{\mu_X}^2(\SS^{d-1})\right)$, \cite{tarres2014online} shows that
$$\Vert g_n-f^*\Vert^2_{\mu_X}\leq \mathcal{O}\left(n^{-\frac{2r}{2r+1}}\left(\log\frac{\alpha}{2}\right)^4\right)$$
with probability at least $1-\alpha$. Although \eqref{eq:regularized SGD} presents an optimal rate in probability, it is also affected by the saturation problem introduced by Tikhonov regularization. Specifically, the convergence rate of algorithm \eqref{eq:regularized SGD} no longer improves when $r \geq 1$.

In contrast to several earlier works, \cite{dieuleveut2016nonparametric} introduces Polyak-averaging in the iterative process \eqref{eq:kernel SGD} of the unregularized stochastic gradient descent, focusing on the convergence of the averaged estimator $\bar{g}_n=\frac{1}{n+1}\sum_{k=0}^ng_k$. In their capacity-dependent analysis, \cite{dieuleveut2016nonparametric} achieves minimax-rate optimality for certain regions of the regularization parameter $r$. Specifically, when $r\in\left(\frac{2s-1}{4s},\frac{4s-1}{4s}\right)$ and $f^*\in\left(L_{\mu_X,K}\right)^r\left(\LL_{\mu_X}^2(\SS^{d-1})\right)$ with the step size $\gamma_n = \gamma_0 n^{-1 + \frac{1}{4sr + 1}}$, \cite{dieuleveut2016nonparametric} derives the following result:
$$ \EE[\Vert \bar{g}_n-f^*\Vert^2_{\mu_X}]= \mathcal{O}\left(n^{-\frac{4sr}{4sr+1}}\right),$$
if $r>\frac{4s-1}{4s}$ and $f\in\left(L_{\mu_X,K}\right)^r\left(\LL_{\mu_X}^2(\SS^{d-1})\right)$ with step size $\gamma_n=\gamma_0n^{-\frac{1}{2}}$, it can be obtained with
$$ \EE[\Vert \bar{g}_n-f^*\Vert^2_{\mu_X}]= \mathcal{O}\left(n^{-\frac{4s-1}{4s}}\right).$$
It is evident that \cite{dieuleveut2016nonparametric} achieves rate optimization in the region $r \in \left(\frac{2s-1}{4s}, \frac{4s-1}{4s}\right)$, but similar to \cite{guo2019fast}, saturation occurs when $r > \frac{4s-1}{4s}$. One reason is that averaging effectively reduces variance, making the algorithm more robust and achieving a better convergence rate. However, the polynomial step size in \cite{dieuleveut2016nonparametric} still significantly influences the regularization and convergence rate, as it can be used to balance variance and bias. Unlike prior related work \cite{ying2008online,tarres2014online,rosasco2015learning,dieuleveut2016nonparametric, guo2019fast} in kernel methods, our work directly regularizes by controlling the capacity of the hypothesis space and uses this capacity to manage variance. As a result, the step size only influences the control of bias convergence, not regularization. Compared to traditional regularized SGD \cite{tarres2014online}, T-kernel SGD achieves a better balance between bias and variance, helping to overcome the saturation problem.

Spherical stochastic estimation has produced numerous valuable results in recent years, e.g., \cite{di2014nonparametric, lin2019nonparametric, lin2024sketching, lin2024kernel}. The work by \cite{di2014nonparametric} focuses on employing local polynomial fitting to solve nonparametric regression problems. In \cite{lin2019nonparametric}, a novel, well-structured needlet kernel is introduced, enabling the design of kernel ridge regression on the sphere. Subsequently, \cite{lin2024sketching} proposes an efficient algorithm for fitting large-scale spherical data, using the classical spherical basis function method, which demonstrates favorable convergence properties. Finally, \cite{lin2024kernel} develops a new analytical framework to address the interpolation problem of high-dimensional spherical data. It is worth noting that all above methods fall under the category of batch learning. To the best of our knowledge, our work presents the first online algorithmic framework for spherical data analysis.

\section{Simulation Study}\label{sec:6SS}

In this section, we will validate the theoretical results in \autoref{sec: 3AAC} and the performance of T-kernel SGD through five numerical simulation experiments. In \autoref{subsec:circle}, we compare the performance of T-kernel SGD with that of the classical Kernel SGD \cite{dieuleveut2016nonparametric}, FALKON \cite{alessandro2017falkon}, and Gaussian process regression \cite{schafer2021sparse} on one-dimensional dataset, which strongly supports the theoretical results presented in this paper. In \autoref{subsec:TDS}, we discuss the impact of different hyperparameters of the T-kernel SGD algorithm on model performance. In \autoref{subsec:low medium data}, we demonstrate the generalizability of T-kernel SGD and validate the improvements in computational complexity discussed in the paper by comparing T-kernel SGD with kernel SGD using various universal kernel functions on mid-dimensional spherical and non-spherical datasets. In \autoref{subsec:High dim data}, we compare T-kernel SGD with the recent Eigenpro 3.0 \cite{abedsoltan2023toward} and Kernel SGD on the standard 784-dimensional non-spherical image recognition datasets, demonstrating the versatility of T-kernel SGD. Finally, in \autoref{Infinite Sequence Data Regression}, we conducted an ablation study on two infinite sequence datasets to demonstrate the performance of the novel regularization mechanism employed by T-SGD.

\subsection{Splines on the Circle}\label{subsec:circle}

In this subsection, we constructed a suitable regression model based on the simple structure of the $\SS^1$ sphere and validated the theoretical results proposed in \autoref{sec: 3AAC} through numerical experiments. Additionally, we compared T-kernel SGD with other kernel methods, such as kernel SGD \cite{dieuleveut2016nonparametric} and the well-known and effective FALKON \cite{alessandro2017falkon} . Moreover, given the widespread application of sparse statistical and numerical PDE methods \cite{lindgren2011explicit,ho2013hierarchical,katzfuss2018vecchia,litvinenko2019likelihood,geoga2020scalable,schafer2021sparse,katzfuss2021general,zilber2021vecchia} in low-dimensional problems, we also compared T-kernel SGD with Gaussian process regression models \cite{schafer2021sparse} designed for efficient covariance matrix recovery. The experimental results indicate that T-kernel SGD demonstrates significant effectiveness in low-dimensional spherical models.

We begin by presenting the closed form of the function $K(x,x')$. Consider a point $x=(x_1,x_2)\in\SS^1$, where its polar coordinates are given by $(x_1,x_2)=(\cos{\theta},\sin{\theta})$. Using the results from Section 1.6.1 of \cite{dai2013approximation}, we find that $\HH_k^2$ has a set of orthonormal basis $\psi_k^1(x)=\cos{k\theta},\ \psi_k^2(x)=\sin{k\theta}.$
Based on \eqref{eq:def Kk}, we select $x=(\cos{\theta},\sin{\theta})$ and $x'=(\cos{\phi},\sin{\phi})$, resulting in the kernel $K_k(x, x')$ of the form
$K_k(x,x')=\cos{k(\theta-\phi)}.$
For simplicity, we omit the space $\HH_0^2$ and set the kernel $K_0(x, x') = 1$. Consequently, the kernel $K(x, x')$ has the following series representation:
$K(x,x')=\sum_{k=1}^\infty\frac{1}{(2k)^{2s}}\cos{k(\theta-\phi)}.$
As shown in \cite{dieuleveut2016nonparametric}, Bernoulli polynomials can be interpreted as $B_l(\eta)=-2l!\sum_{k=1}^\infty\frac{\cos{(2k\pi \eta-\frac{l\pi}{2})}}{(2k\pi)^l}$ for $l\in\NN$ and $\eta\in[0,1]$. Therefore, we have
$K(x,x')|_{s=l}=\frac{(-1)^{l+1}\pi^{2l}}{2(2l)!}B_{2l}\left(\frac{\{\theta-\phi\}}{2\pi}\right),$
where $\{x\}$ denotes the fractional part of $x$. In our simulations, we consider only the cases where $s = 1$ or $s = 2$. The Bernoulli polynomials in these cases have the closed forms $B_2(x)=x^2-x+\frac{1}{6}$ and $B_4(x)=x^4-2x^3+x^2-\frac{1}{30}$, leading to:
$$K(x,x')|_{s=1}=\frac{\pi^2}{4}\left[\left(\frac{\{\theta-\phi\}}{2\pi}\right)^2-\left(\frac{\{\theta-\phi\}}{2\pi}\right)+\frac{1}{6}\right],$$
$$K(x,x')|_{s=2}=-\frac{\pi^4}{48}\left[\left(\frac{\{\theta-\phi\}}{2\pi}\right)^4-2\left(\frac{\{\theta-\phi\}}{2\pi}\right)^3+\left(\frac{\{\theta-\phi\}}{2\pi}\right)^2-\frac{1}{30}\right].$$

Additionally, in this section, we will use the following two commonly used Mat\'{e}rn $3/2$ and Mat\'{e}rn $5/2$ kernel functions, where $r=\|x-x'\|=\sqrt{\langle x-x',x-x'\rangle}$,
$$K_{Matern}^{3/2} = \left(1+\frac{\sqrt{3}r}{\sigma_l}\right)\exp\left(-\frac{\sqrt{3}r}{\sigma_l}\right),$$
$$ K_{Matern}^{5/2} = \left(1+\frac{\sqrt{5}r}{\sigma_l}+\frac{5r^2}{3\sigma_l^2}\right)\exp\left(-\frac{\sqrt{5}r}{\sigma_l}\right).$$ 

\begin{table}[htbp]
\centering
\setlength\tabcolsep{7mm}
\renewcommand{\arraystretch}{1.3}
\scalebox{0.85}{
\begin{tabular}{|c|c|c|}
\hline
\  &  Example 1 & Example 2\\ \hline
$s$ & $1$  &  $1$\\
$r$ & $\frac{3}{4}$ & $\frac{7}{4}$ \\
optimal fitting $f^*$ & $B_2(\frac{\theta}{2\pi})$ & $B_4(\frac{\theta}{2\pi})$ \\ 
$\frac{\gamma_n}{\gamma_0}=n^{-t}$ & $n^{-0.5}$ & $n^{-0.5}$ \\
kernel &$K|_{s=1}\ \&\ K_{Matern}^{5/2}$ &$K|_{s=1}\ \&\  K|_{s=2}\ \&\ K_{Matern}^{5/2}$\\
noise $\epsilon$ & $U[-0.2,0.2]$& $N(0,0.5)$\\
Truncation level $L_n$ &$n^{\frac{1}{4}}$ & $n^{\frac{1}{8}}$\\
\hline
\end{tabular}
}
\caption{The simulation settings for Kernel SGD and T-kernel SGD}
\label{tab:tabel1}
\end{table}
\begin{figure}[htbp]
\small
\centering  
\subfigure[Example 1]{
\label{Fig.sub.B1.1}
\includegraphics[width=6.cm]{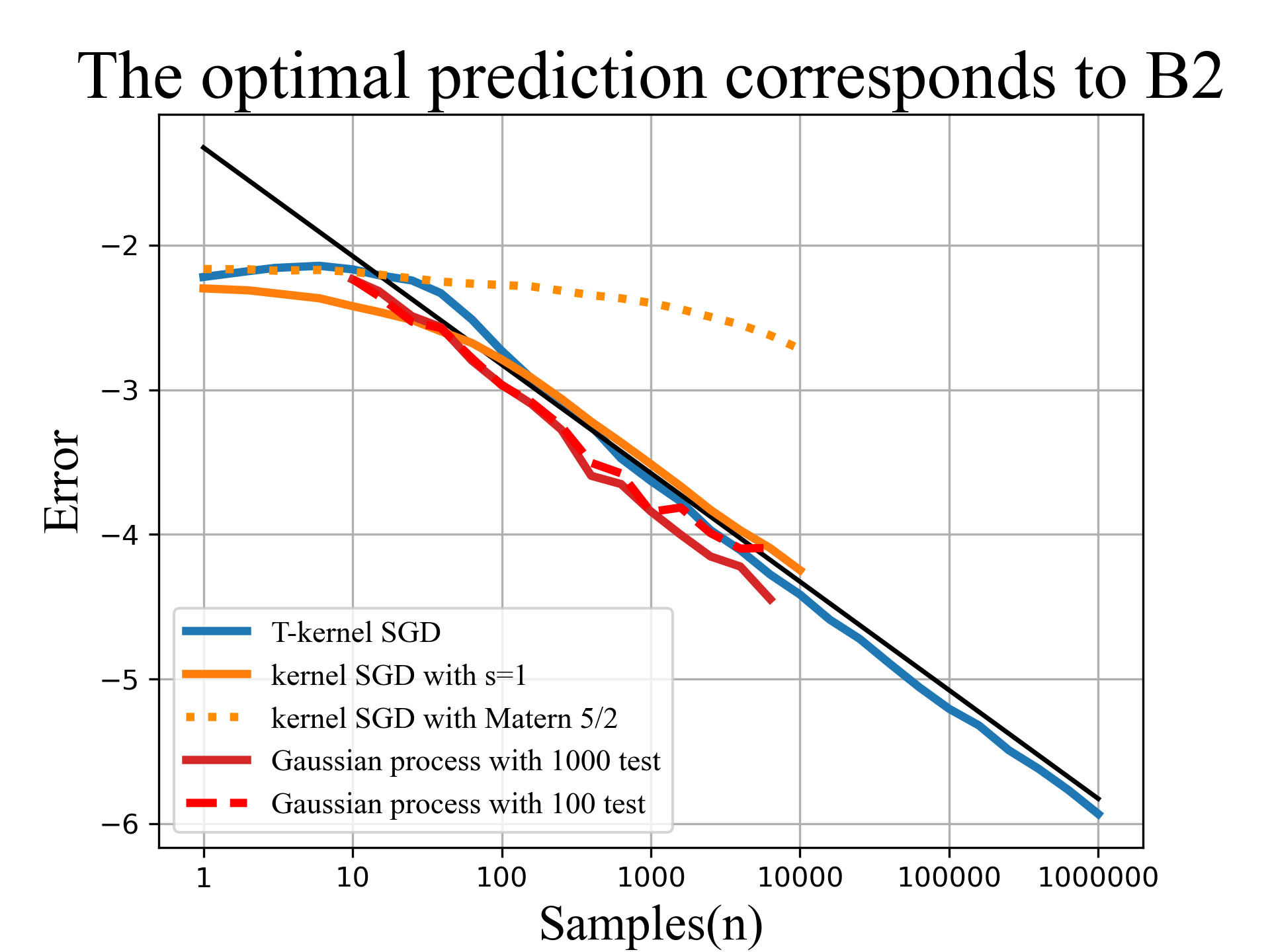}}
\subfigure[Example 1]{
\label{Fig.sub.B1.2}
\includegraphics[width=6.cm]{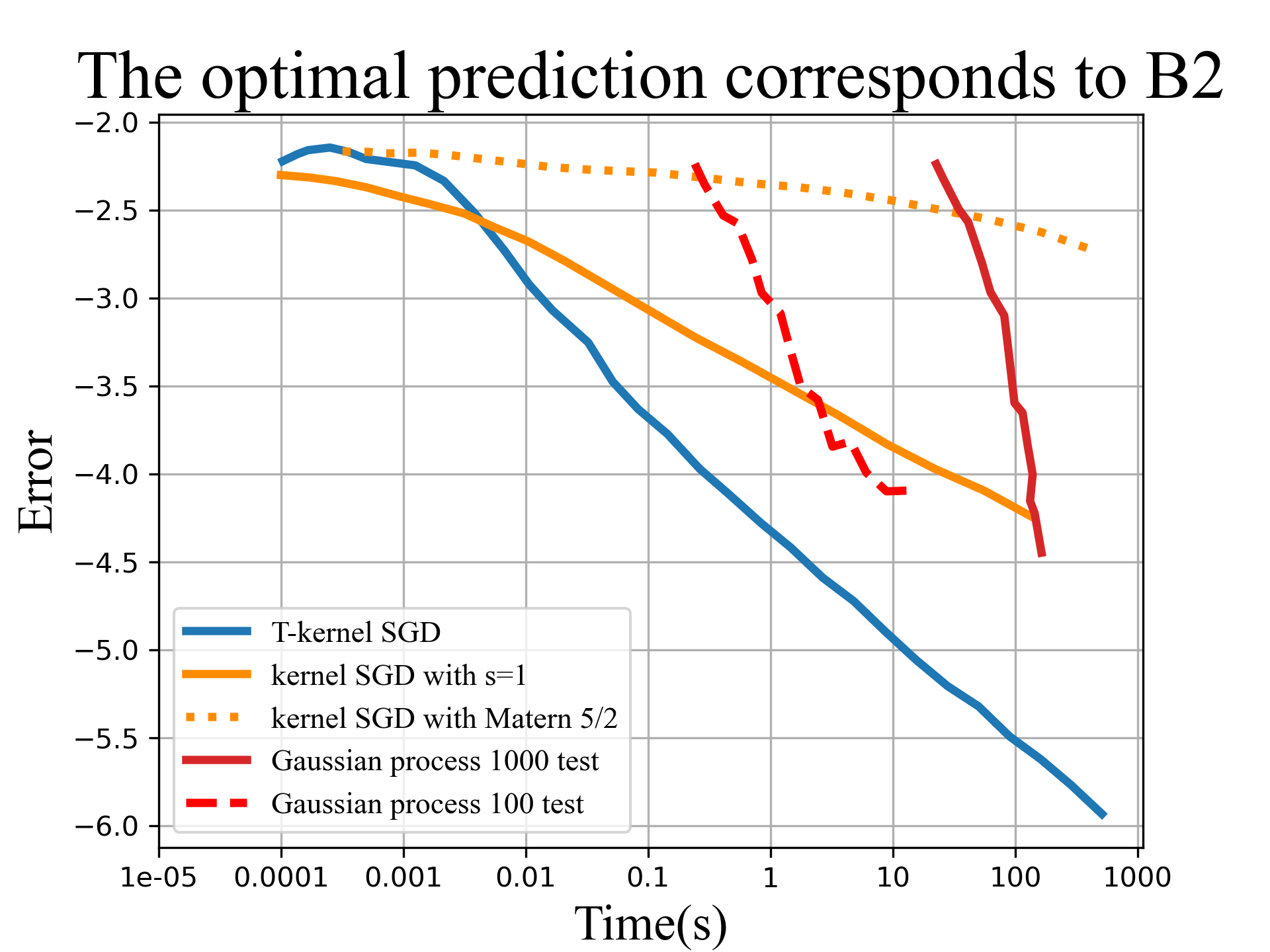}}
\subfigure[Example 2]{
\label{Fig.sub.B2.1}
\includegraphics[width=6.cm]{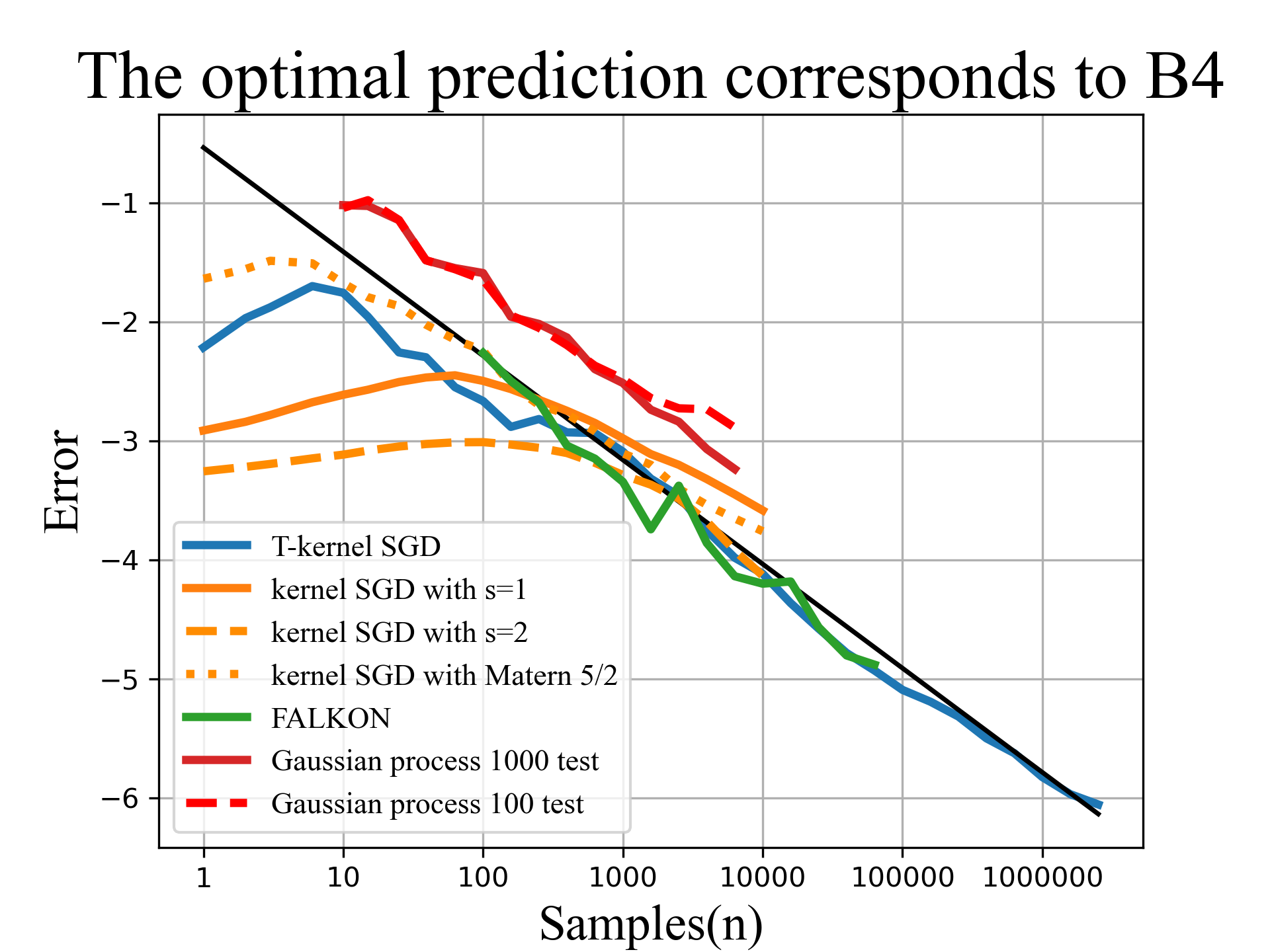}}
\subfigure[Example 2]{
\label{Fig.sub.B2.2}
\includegraphics[width=6.cm]{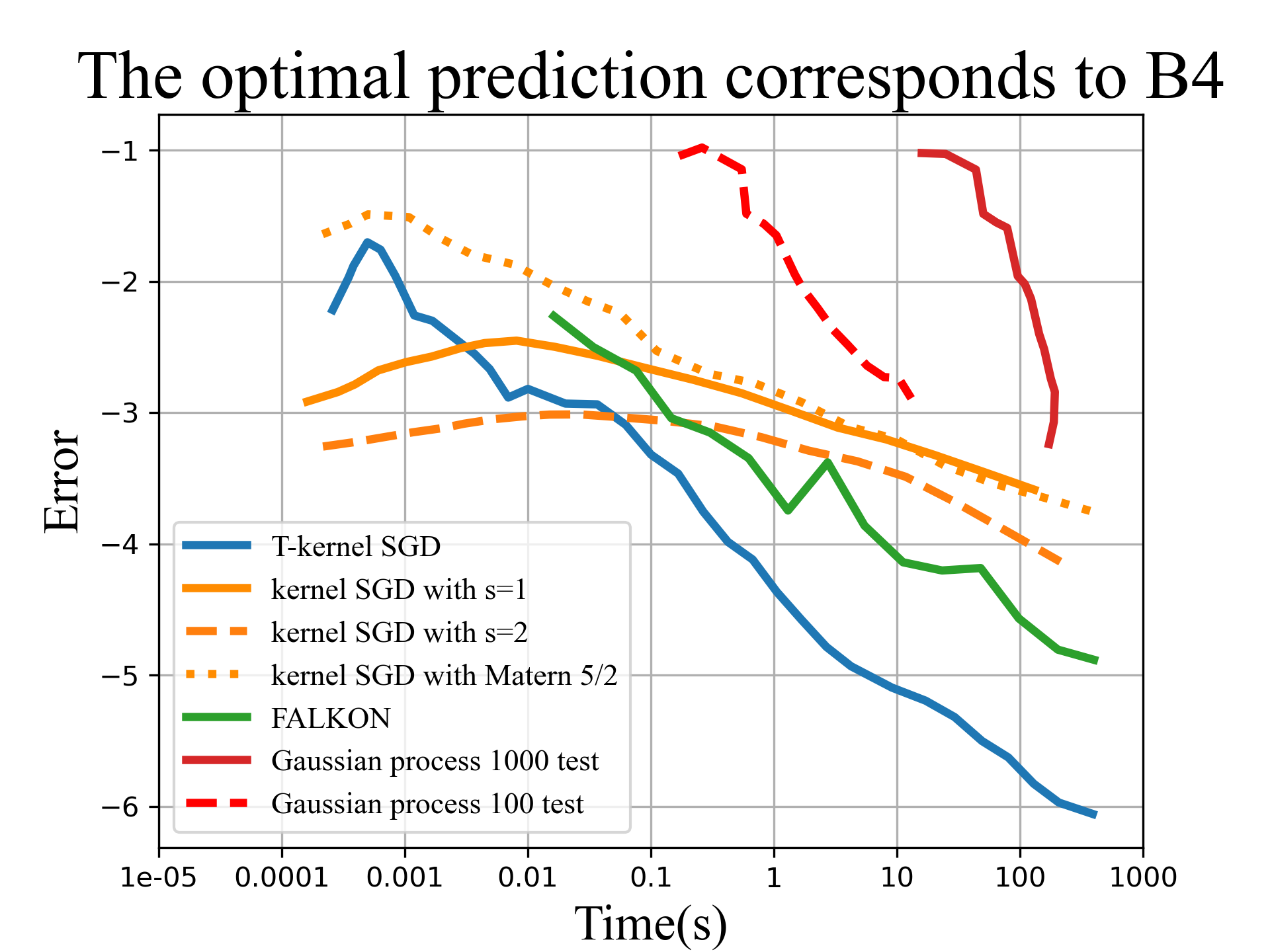}}
\caption{In the figure, ``Error" is defined as $Error=\log_{10}\EE\left[\Vert\bar{f}_n-f^*\Vert_{\mu_X}^2\right]$. The left figure depicts the convergence of model error with increasing sample size, while the right figure illustrates how the model error evolves over runtime (second). The black line indicates the optimal convergence rate, with slopes of $-\frac{3}{4}$ in (a) and $-\frac{7}{8}$ in (c), respectively.}
\label{figure1}
\end{figure}

We consider the model $\Y=f^*(X)+\epsilon$, where $X$ follows a Uniform distribution on $\SS^1$. To validate the convergence analysis presented in \autoref{subsec:ConstantSS}, we use a constant step size $\gamma_n=\gamma_0$ for T-kernel SGD and a polynomial step size $\gamma_n=\gamma_0n^{-\min\{1-\frac{2s}{4sr+1}, \frac{1}{2}\}}$ for kernel SGD as proposed in \cite{dieuleveut2016nonparametric}. Examples 1 and 2 verify the convergence properties of T-kernel SGD and kernel SGD with different regularization parameters $r$. The simulation settings for Kernel SGD and T-kernel SGD are detailed in \autoref{tab:tabel1}. Additionally, in Example 1, we consider noise generated by a Uniform distribution, while in Example 2, the noise is generated by a Normal distribution. 

Moreover, in Example 2, we compare the FALKON algorithm from \cite{alessandro2017falkon}, a widely recognized and efficient kernel method. However, it is important to note that research aimed at further accelerating large-scale kernel ridge regression is ongoing \cite{rudi2018fast,diaz2023robust,abedsoltan2024fast,rathore2024have}. In the FALKON algorithm, the kernel function is set to $K|_{s=2}$. In Example 1 and Example 2, we implemented the efficient Gaussian process regression algorithm proposed in  \cite{schafer2021sparse}, using the commonly adopted Mat\'{e}rn $5/2$ kernel to construct the covariance matrix. Since the computational cost of Gaussian process regression is strongly correlated with the size of the test set, we implemented two versions of the algorithm: one tailored for a test set of size 100, and the other for a test set of size 1000. It can be observed that Gaussian process regression demonstrates significantly better computational efficiency on smaller test sets.

In Example 2, we select the optimal fitting function $f^*(x) = B_4(\frac{\theta}{2\pi})$. For the kernel $K(x,x')|_{s=1}$, the regularization parameter is set to $r = \frac{7}{4}$. The simulation results in \autoref{figure1} reveal that kernel SGD encounters saturation, while T-kernel SGD achieves and maintains the optimal convergence rate. When using the kernel $K(x,x')|_{s=2}$, kernel SGD also reaches the optimal rate with a regularization parameter of $r = \frac{7}{8}$. These outcomes are consistent with the theoretical analysis provided in \autoref{subsec:ConstantSS} for T-kernel SGD and in \cite{dieuleveut2016nonparametric} for kernel SGD. Additionally, the experiments show that the selected general Mat\'{e}rn kernel exhibits a certain degree of saturation. Notably, T-kernel SGD effectively addresses and mitigates the saturation issue. In Example 2, our algorithm is capable of completing 100,000 iterations in under 10 seconds, yielding significantly better performance compared to other algorithms in the experiment and achieving relatively optimal computational efficiency. 

\subsection{Fitting Spherical Data on the 3-dimensional Sphere}\label{subsec:TDS}

In this subsection, we focus on the effect of hyperparameters, such as the truncation level parameter $\theta$ and the step size $\gamma_n$, on the convergence rates of T-kernel SGD. We continue to consider the model $\Y = f^*(X)+\epsilon$, where the optimal fitting function $f^*$ is a polynomial of the form

$$f^*(x)=\sum_{k=0}^{10}\frac{1}{(k+1)^{3/2}}\sum_{|\alpha|=k}x^\alpha,$$ with $X$ following a Uniform distribution on $\SS^{2}$ and noise $\epsilon$ following either a Normal or Uniform distribution.

In Example 3, we fix the polynomial step size $\gamma_n=\gamma_0n^{-\frac{1}{2s+1}}$ and study the effect of different truncation levels, i.e., the dimensionality $n^\theta$, in the hypothesis space on the convergence rates. We observe that the optimal rate is achieved as long as $L_n$ satisfies the lower bound described by $\theta\geq\frac{1}{2s+1}$, as stated in \autoref{theorem:noise case}. The simulation results indicate that choosing $L_n$ smaller than this lower bound results in underfitting. 

In Example 4, we fix the minimum choice of $L_n\geq n^\frac{1}{2s+1}$ and investigate the effect of different step sizes on the convergence rates. We observe that while a constant step size does not lead to overfitting, a step size that is too small leads to underfitting.
\begin{table}[htbp]
\centering
\setlength\tabcolsep{7mm}
\renewcommand{\arraystretch}{1.3}
\scalebox{0.85}{
\begin{tabular}{|c|c|c|}
\hline
\  &  Example 3 & Example 4\\ \hline
$s$ & 1 & 1 \\ 
 $\gamma_0$ & 0.5 & 0.5 \\
$\frac{\gamma_n}{\gamma_0}=n^{-t}$ & $n^{-\frac{1}{3}}$ & $1\ \&\ n^{-\frac{1}{3}}\ \&\ n^{-\frac{1}{2}}$\\
noise $\epsilon$ & U[-0.2,0.2]& U[-0.2,0.2]\\
Truncation level $n^{\theta}$ &$n^{0.1}\ \&\ n^{\frac{1}{3}}\ \&\ n^{0.45}$ & $n^{\frac{1}{3}}$\\
\hline
\  &  Example 5 & Example 6\\ \hline
$s$ & 1 & 1 \\ 
 $\gamma_0$ & 0.5 & 0.5 \\
$\frac{\gamma_n}{\gamma_0}=n^{-t}$ & $1\ \&\ n^{-\frac{1}{6}}\ \&\ n^{-\frac{1}{3}}$ & $1\ \&\ n^{-\frac{1}{4}}\ \&\ n^{-\frac{1}{3}}$\\
noise $\epsilon$ & U[-0.2,0.2]& U[-0.2,0.2]\\
Truncation level $n^{\theta}$ &$n^{0.4}$ & $n^{0.5}$\\
\hline
\end{tabular}
}
\caption{The simulation study settings}
\label{tab:table2}
\end{table}

\begin{figure}[htbp]
\centering
\subfigure{
\includegraphics[width=6.cm]{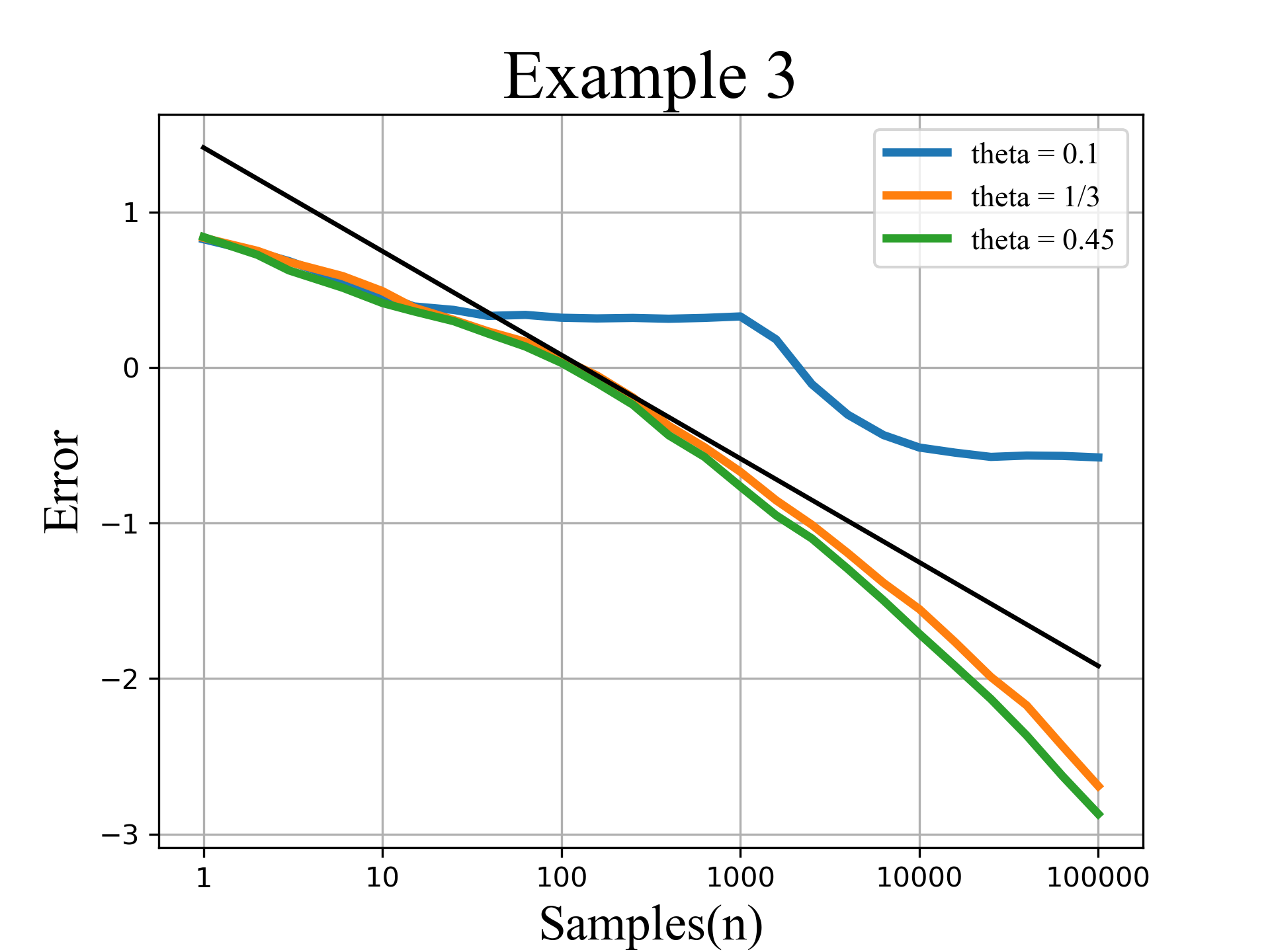}}
\subfigure{
\includegraphics[width=6.cm]{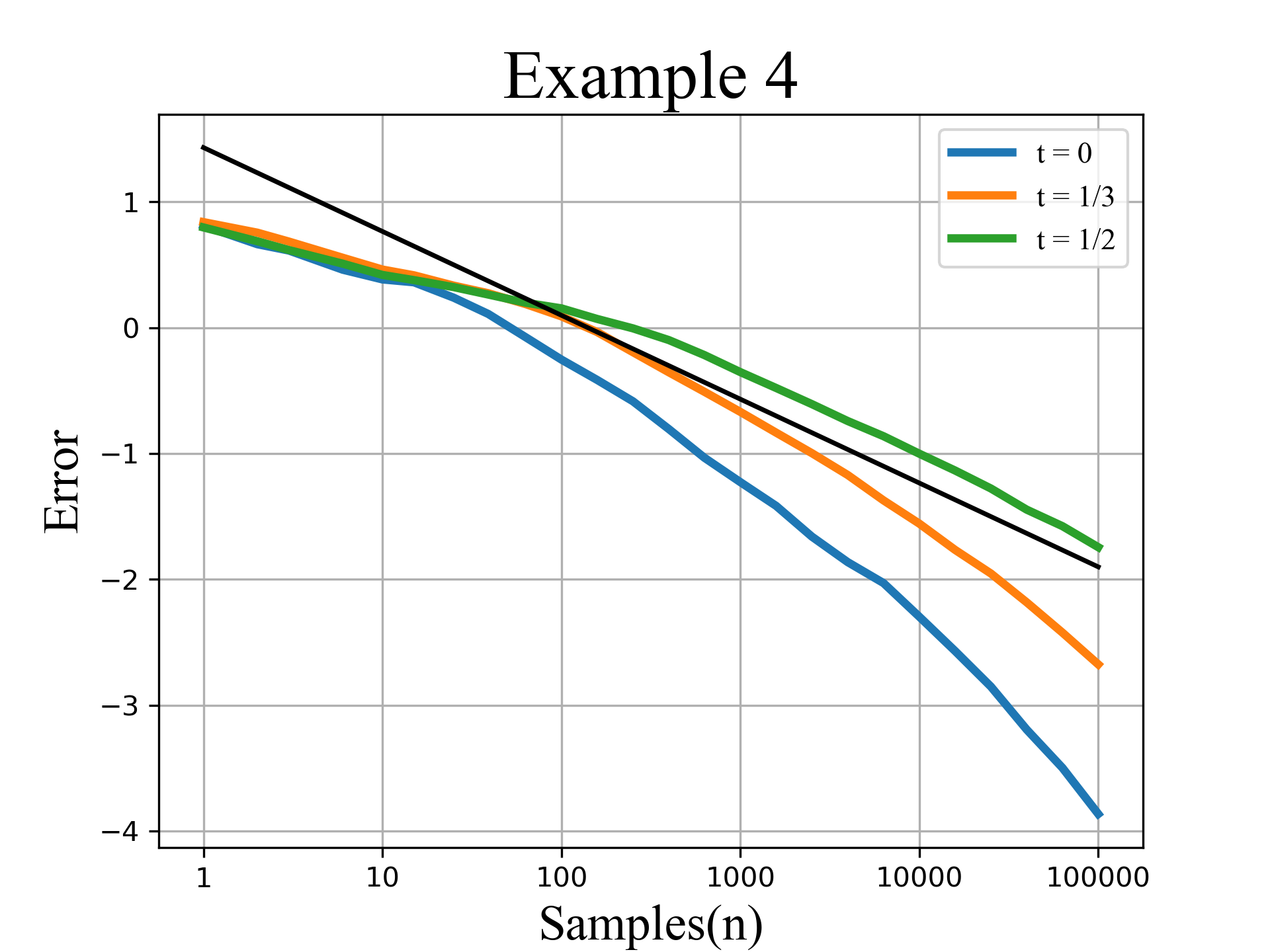}}
\subfigure{
\includegraphics[width=6.cm]{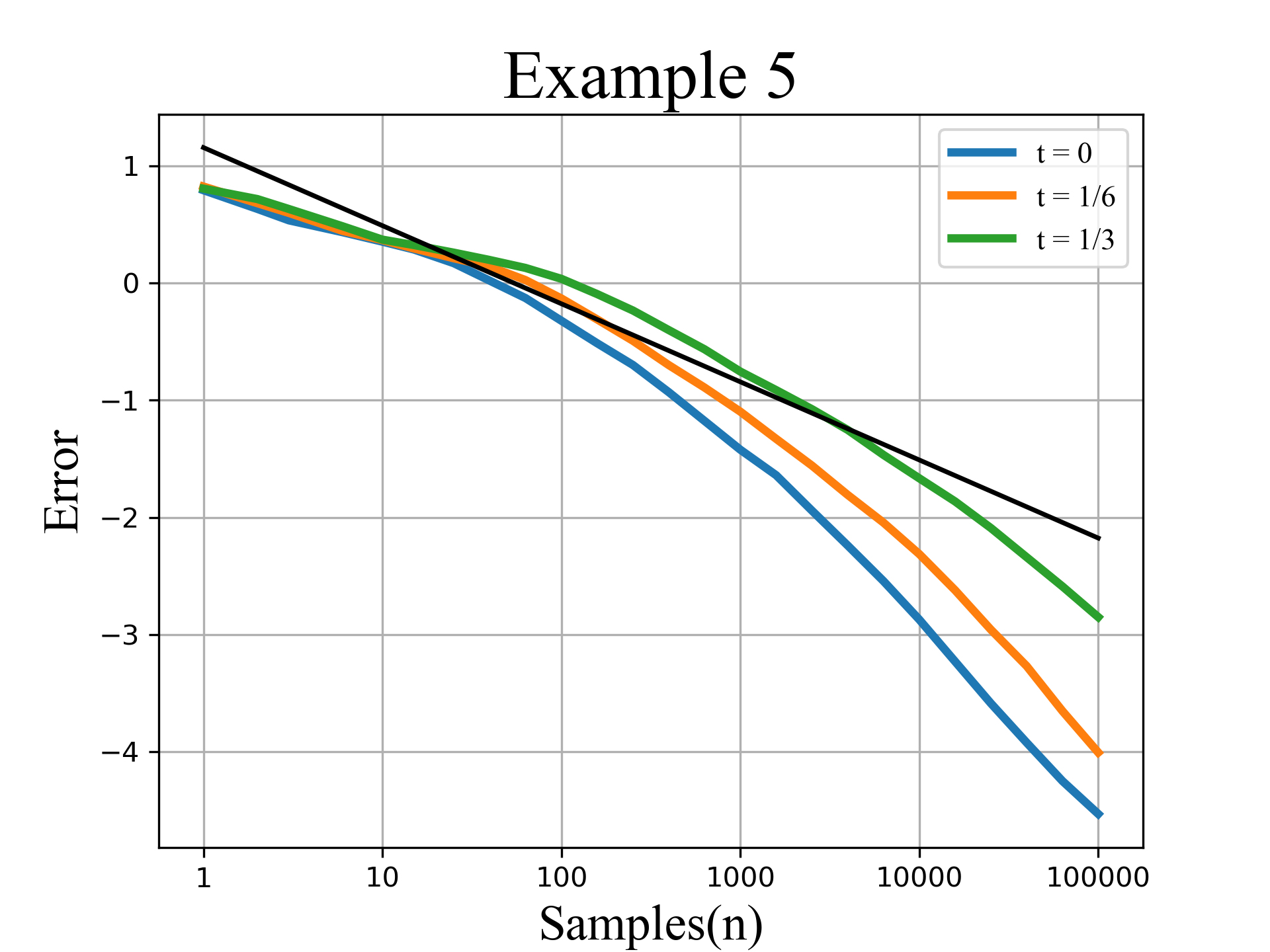}}
\subfigure{
\includegraphics[width=6.cm]{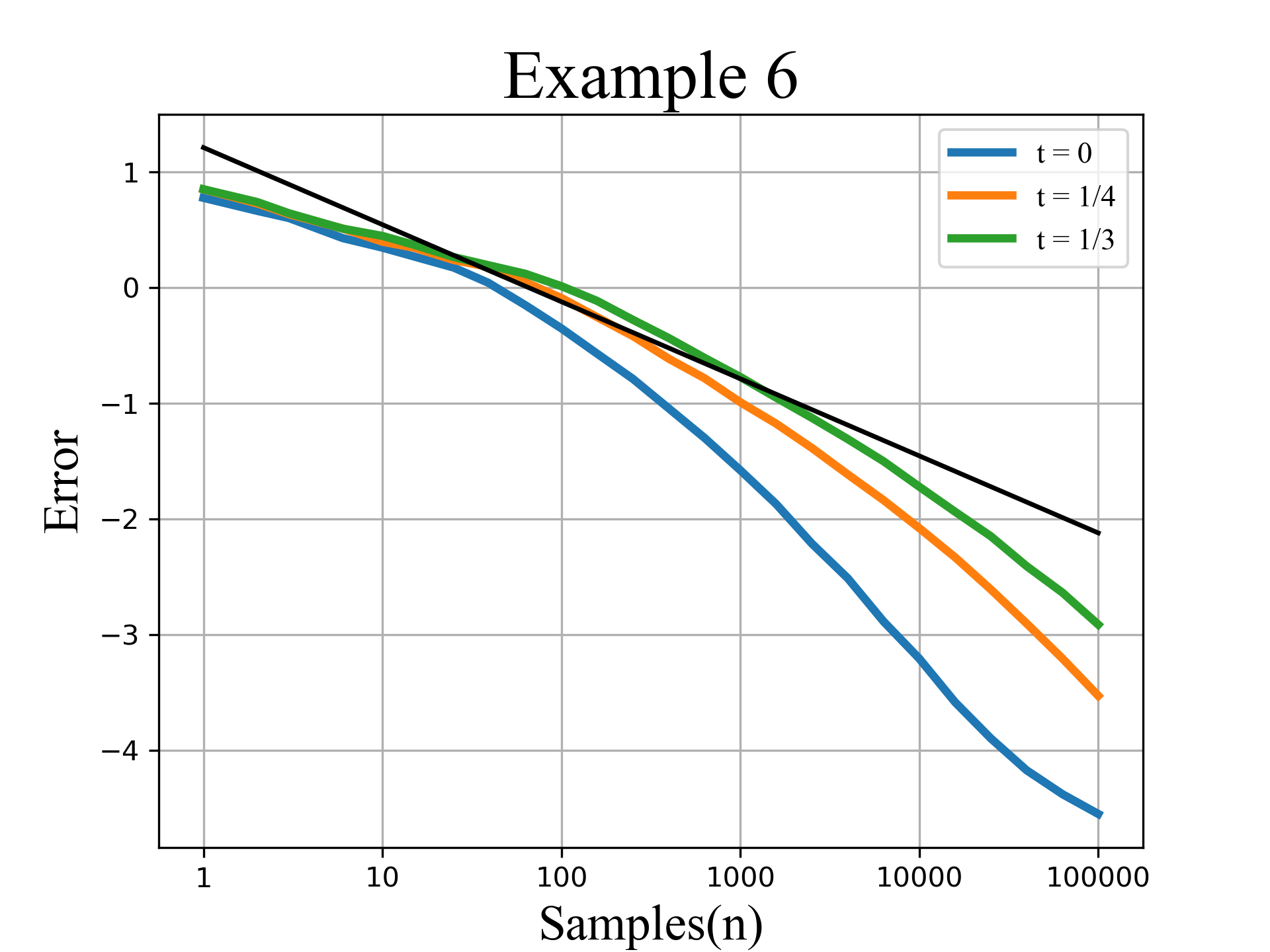}}
\caption{The error in the figure is $Error=\log_{10}\EE\left[\Vert\bar{f}_n-f^*\Vert_{\mu_X}^2\right]$,  The X-axis represents the sample size $n$. The black line in the figure indicates the optimal rate, with a slope of $-\frac{2}{3}$.}
\label{fig:figure2}
\end{figure}
In Examples 5 and Example 6, we consider a larger $L_n$ compared to Example 4, which leads to a faster increase in the dimensionality of the hypothesis space. This suggests that variance may accumulate more quickly, necessitating an exact polynomial step size to balance bias and variance. However, the simulation results indicate that a larger $L_n$ and step size $\gamma_n$ do not result in an excessively rapid accumulation of variance. This implies that the strict hyperparameter restrictions outlined in \autoref{theorem:constant step size} and \autoref{theorem:noise case} may not be necessary. T-kernel SGD might allow the optimal rates to be achieved over a more relaxed range of $L_n$ and polynomial step sizes $\gamma_n$. The simulation settings for T-kernel SGD are detailed in \autoref{tab:table2}, and the simulation results are shown in \autoref{fig:figure2}.

\subsection{Regression on Low- to Medium-dimensional Spherical Data and Non-spherical data}\label{subsec:low medium data}

\begin{figure}[htbp]
\centering
\subfigure[Example\ 7]{
\label{Fig.suub.7}
\includegraphics[width=6.cm]{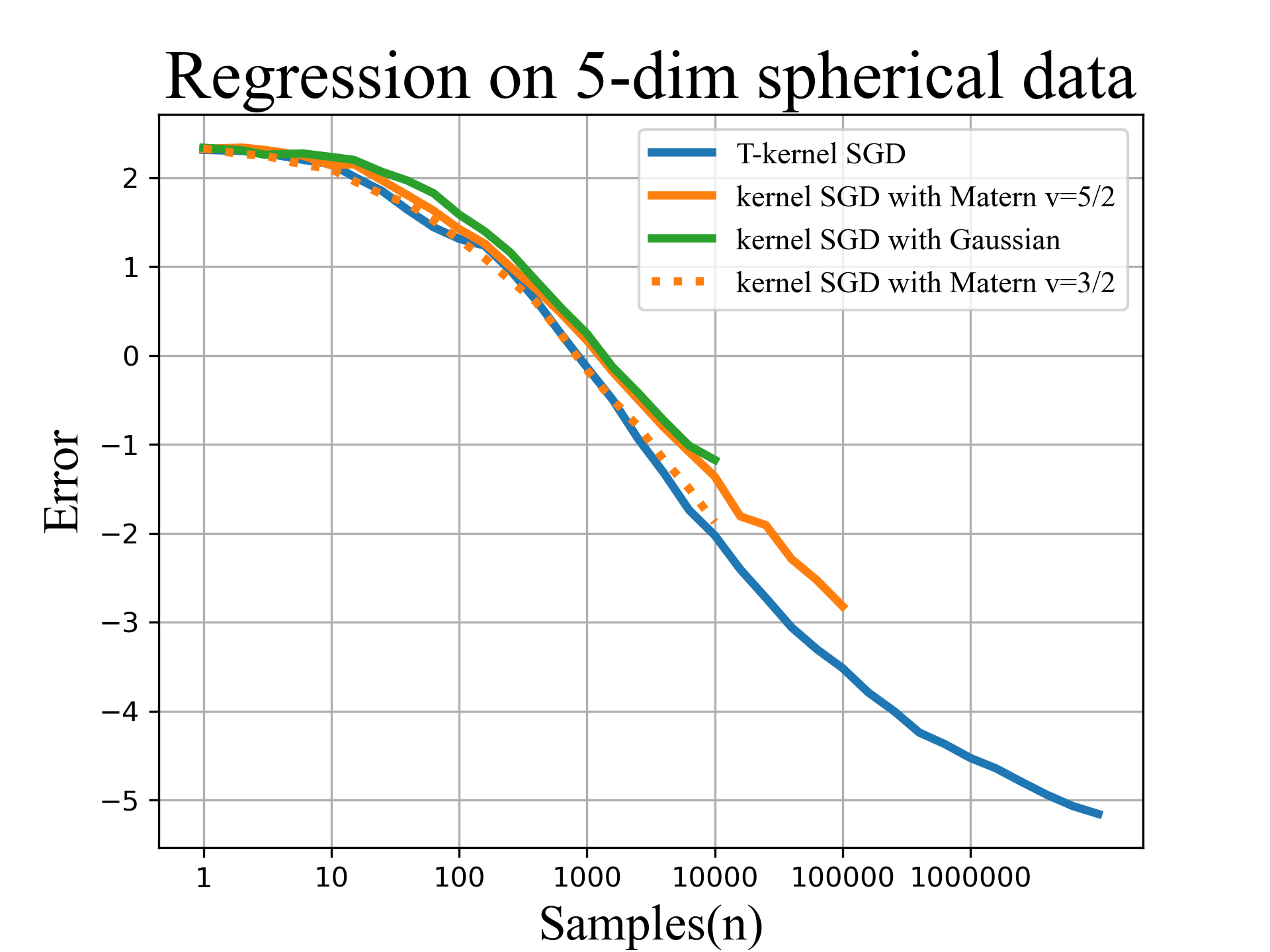}}\subfigure[Example\ 7]{
\label{Fig.suub.8}
\includegraphics[width=6.cm]{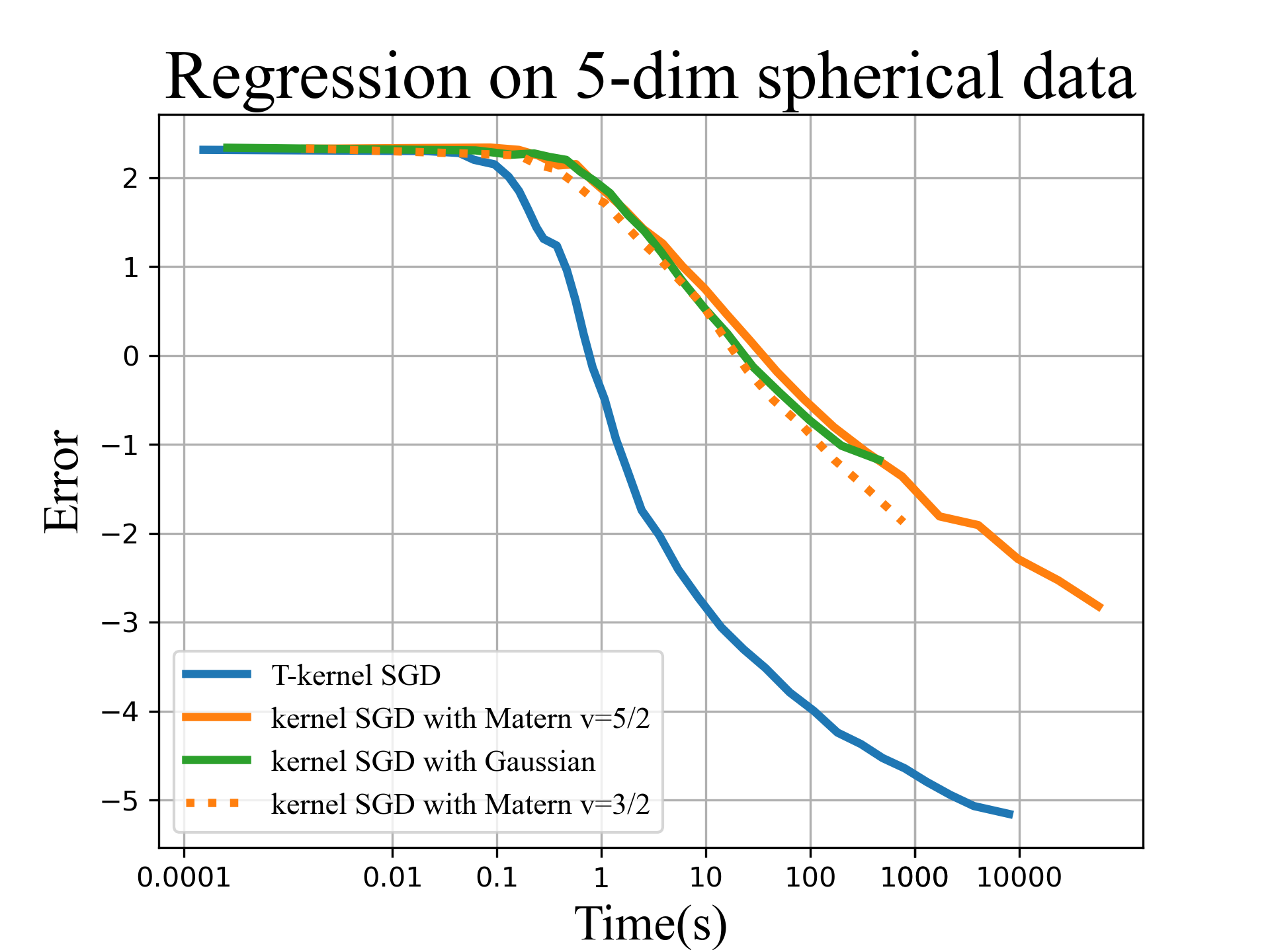}}
\subfigure[Example\ 8]{
\label{Fig.suub.9}
\includegraphics[width=6.cm]{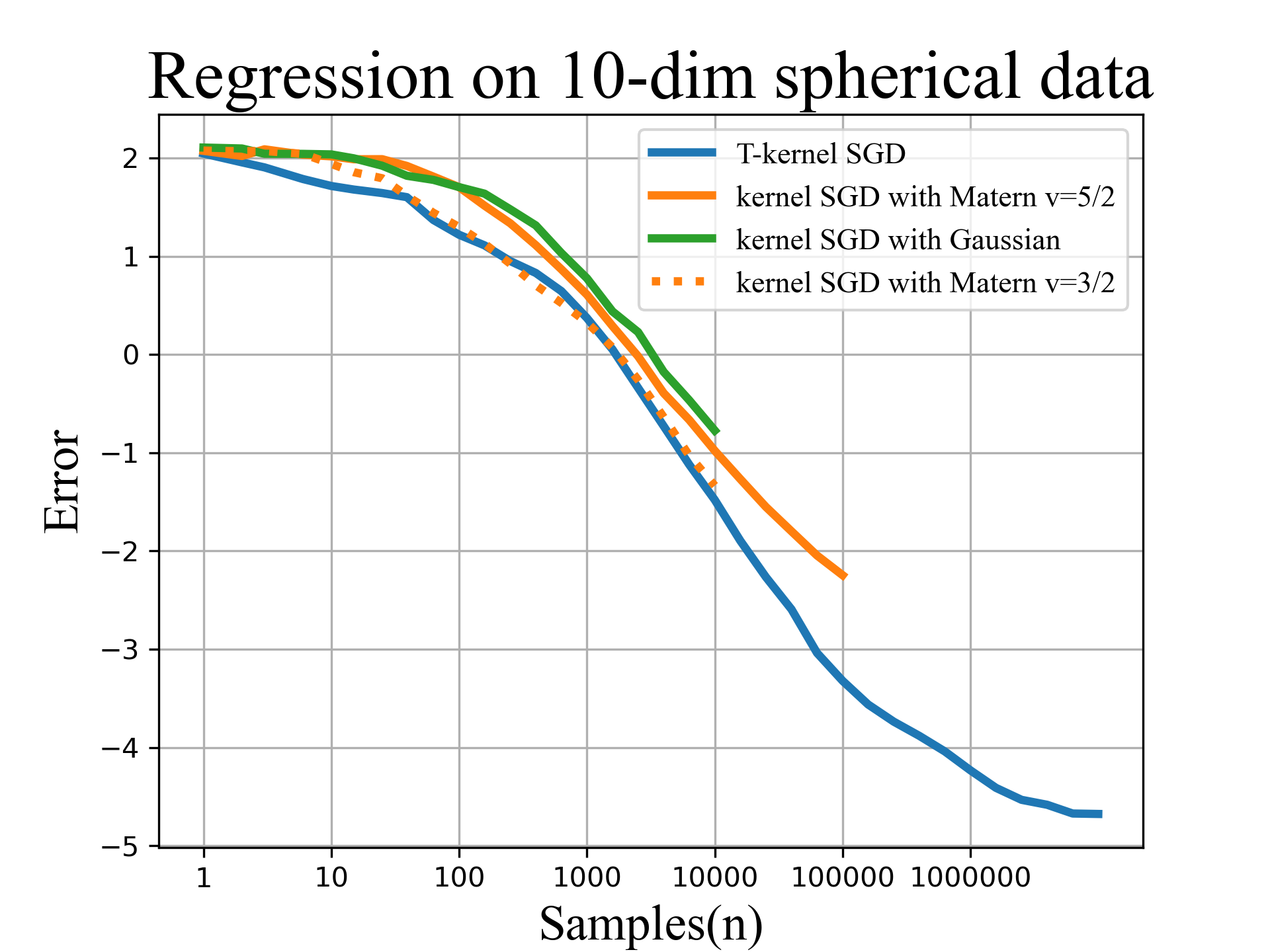}}\subfigure[Example\ 8]{
\label{Fig.suub.10}
\includegraphics[width=6.cm]{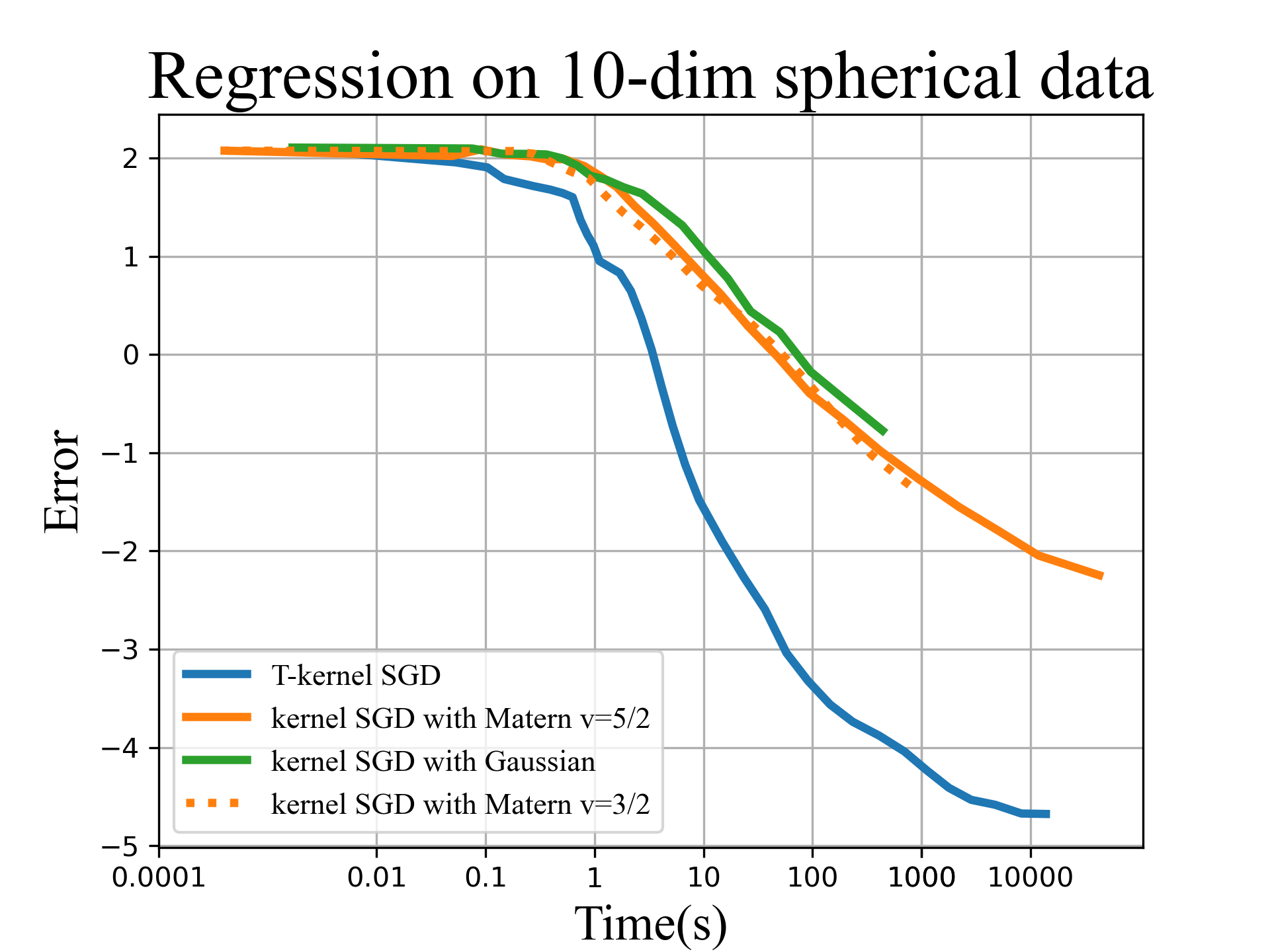}}
\subfigure[Example\ 9]{
\label{Fig.suub.11}
\includegraphics[width=6.cm]{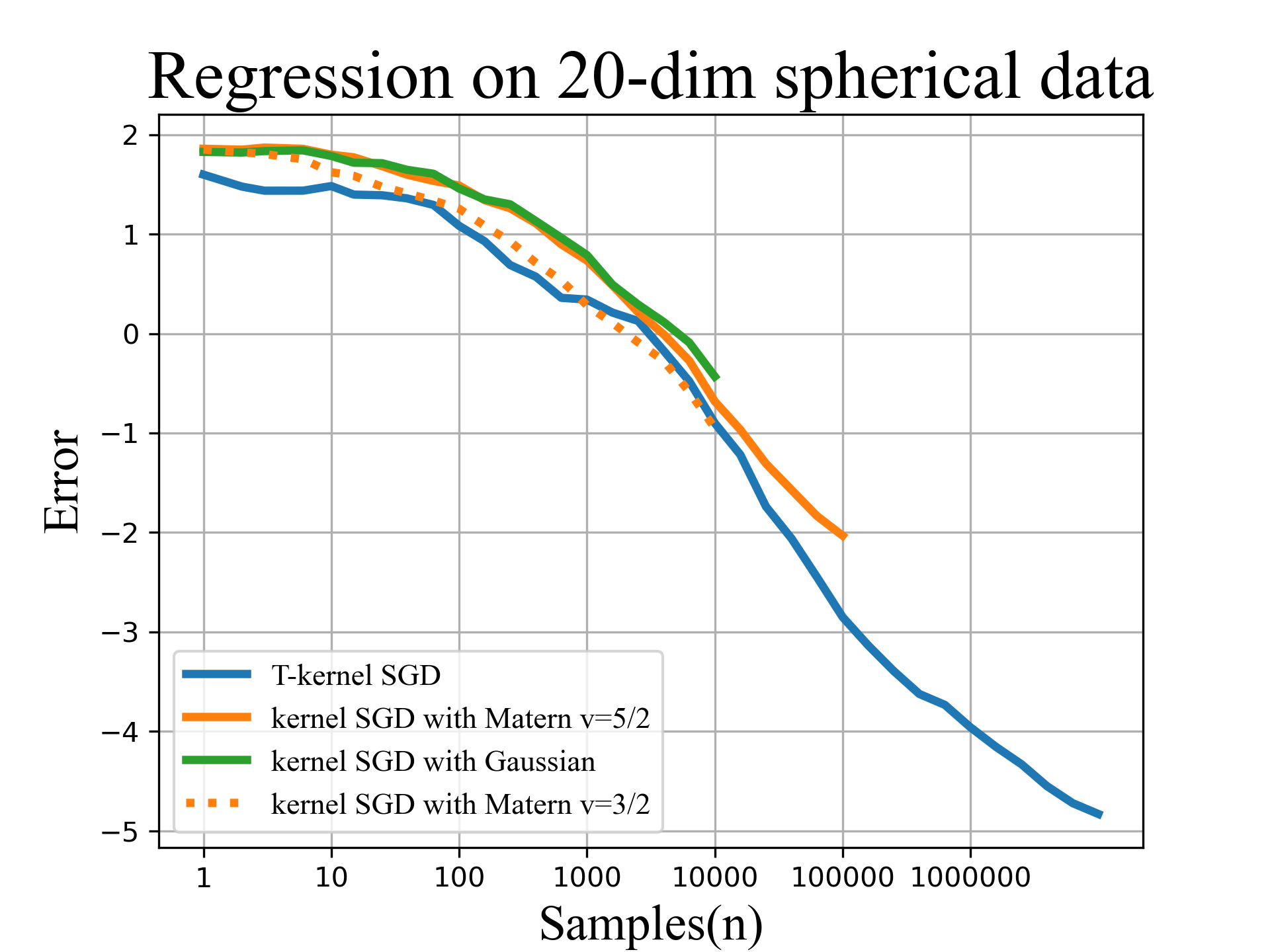}}\subfigure[Example\ 9]{
\label{Fig.suub.12}
\includegraphics[width=6.cm]{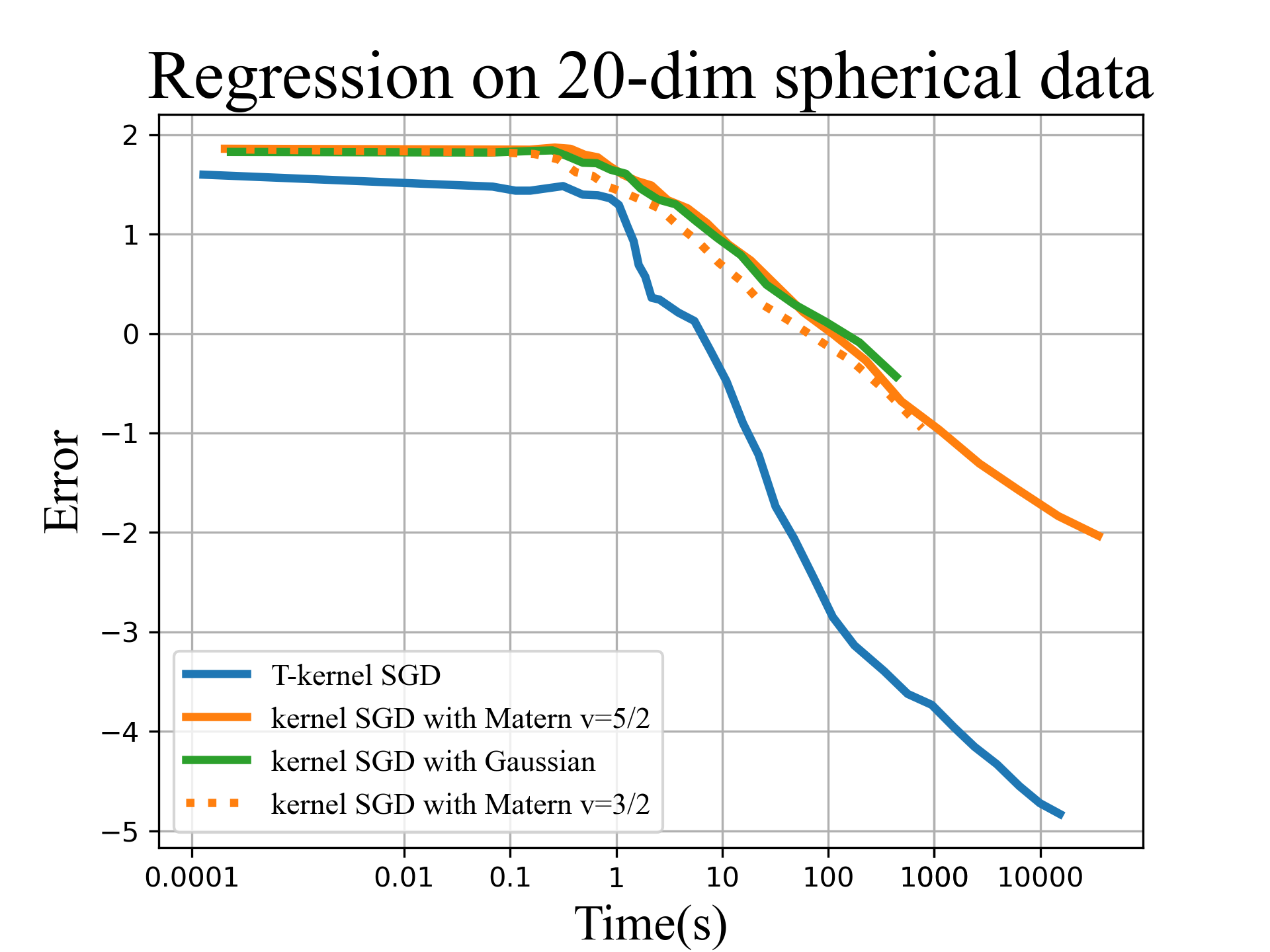}}
\caption{In the plots above, the error represents the logarithm of the mean squared error, i.e., $Error=\log_{10}\EE\left[\Vert\bar{f}_n-f^*\Vert_{\mu_X}^2\right]$. The left plot illustrates the convergence behavior of the logarithm of the error with respect to the number of acquired samples, while the right plot shows the convergence behavior of the logarithm of the MSE as a function of the model runtime. }
\label{figure3}
\end{figure}

\begin{figure}[htbp]
\centering
\subfigure[Example\ 10]{
\label{Fig.suub.13}
\includegraphics[width=6.cm]{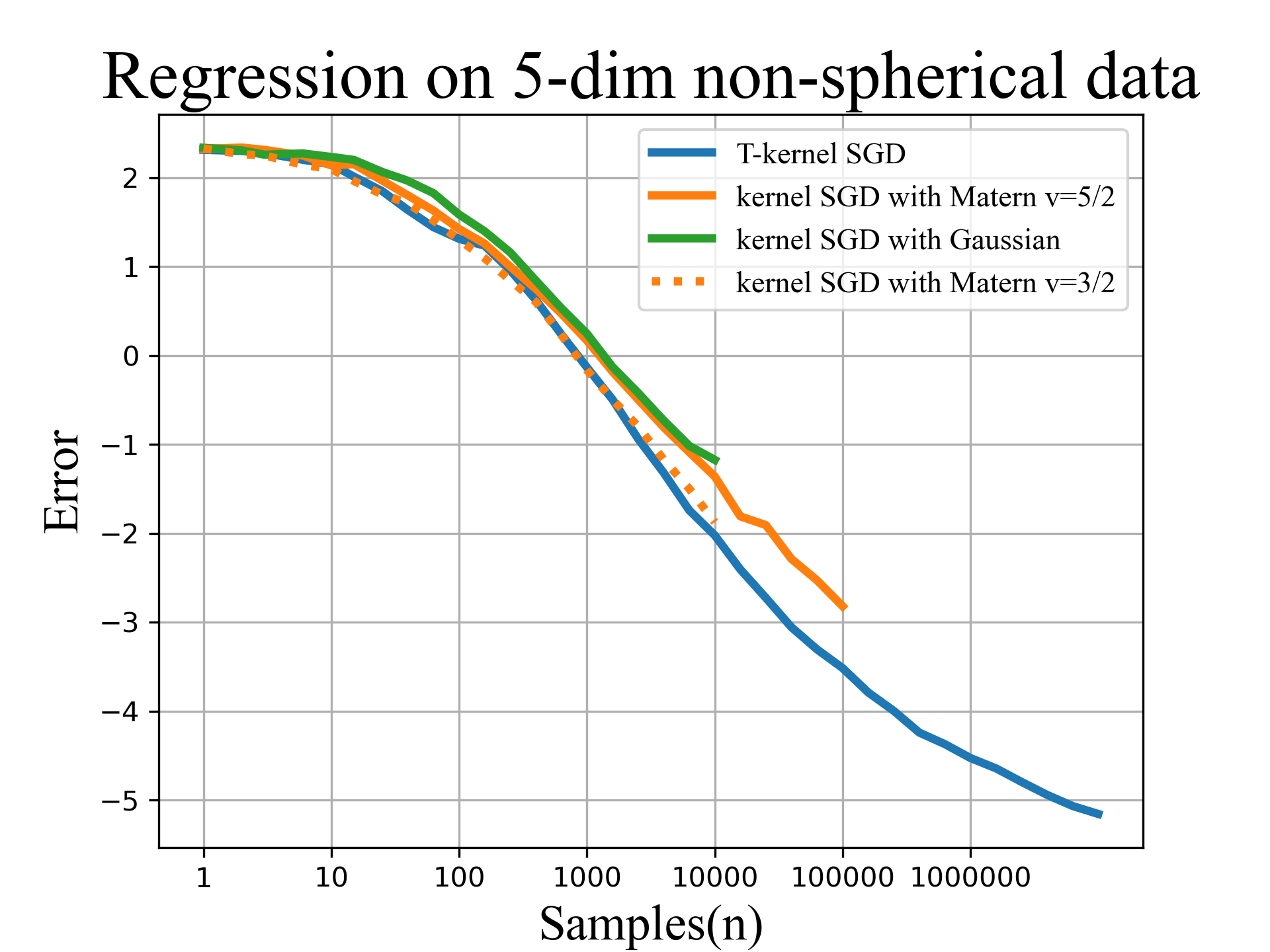}}\subfigure[Example\ 10]{
\label{Fig.suub.14}
\includegraphics[width=6.cm]{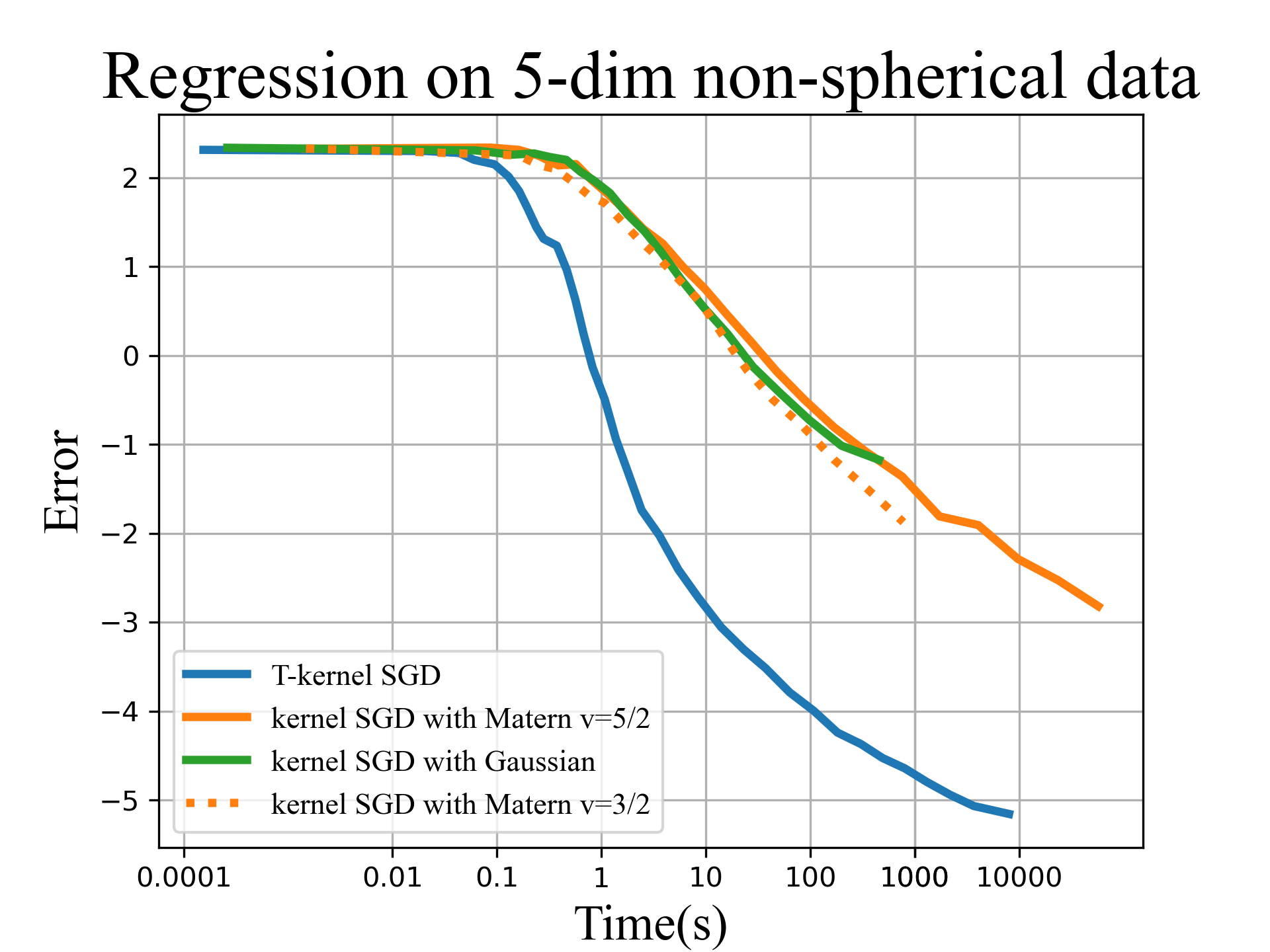}}
\subfigure[Example\ 11]{
\label{Fig.suub.15}
\includegraphics[width=6.cm]{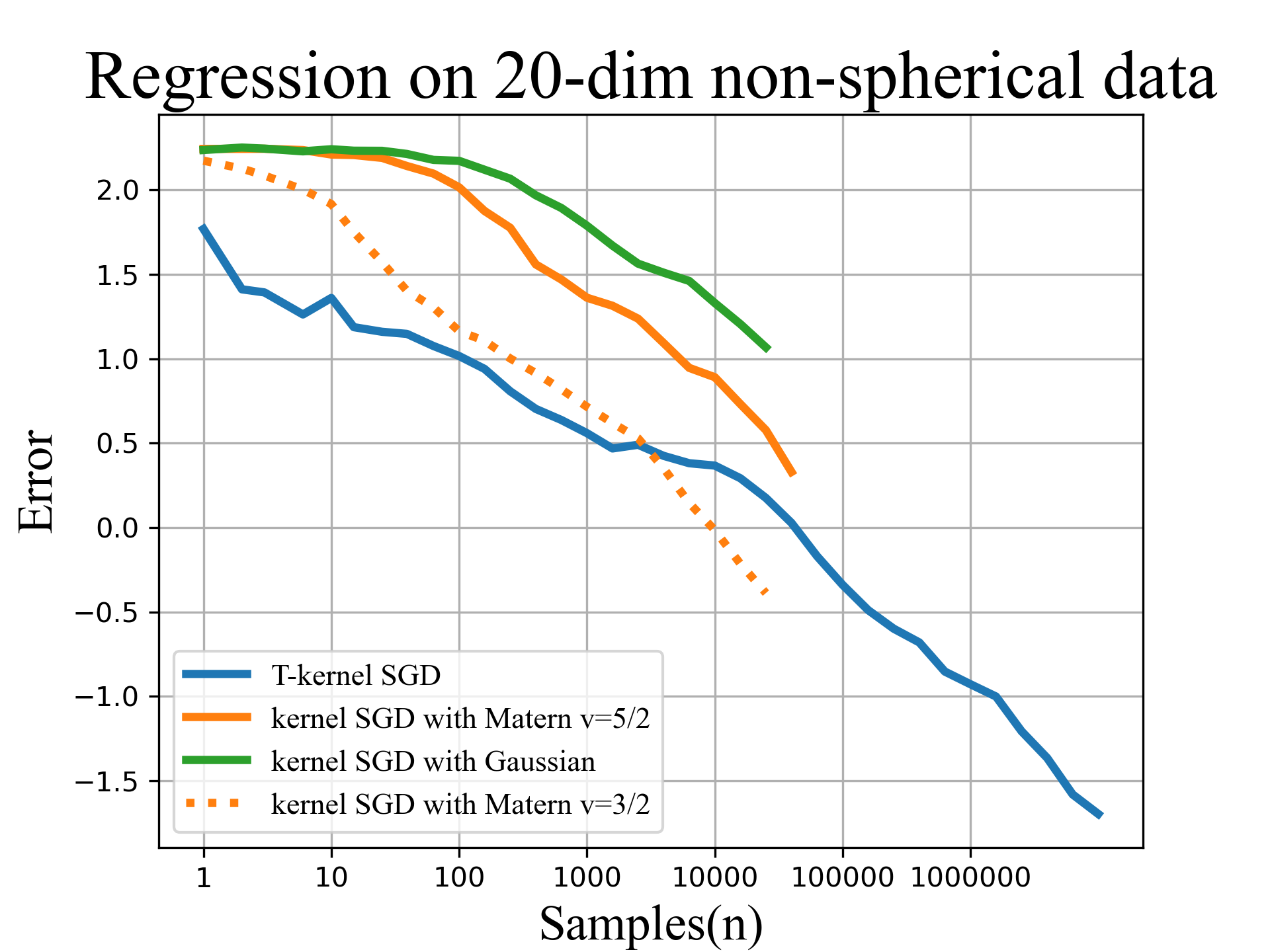}}\subfigure[Example\ 11]{
\label{Fig.suub.16}
\includegraphics[width=6.cm]{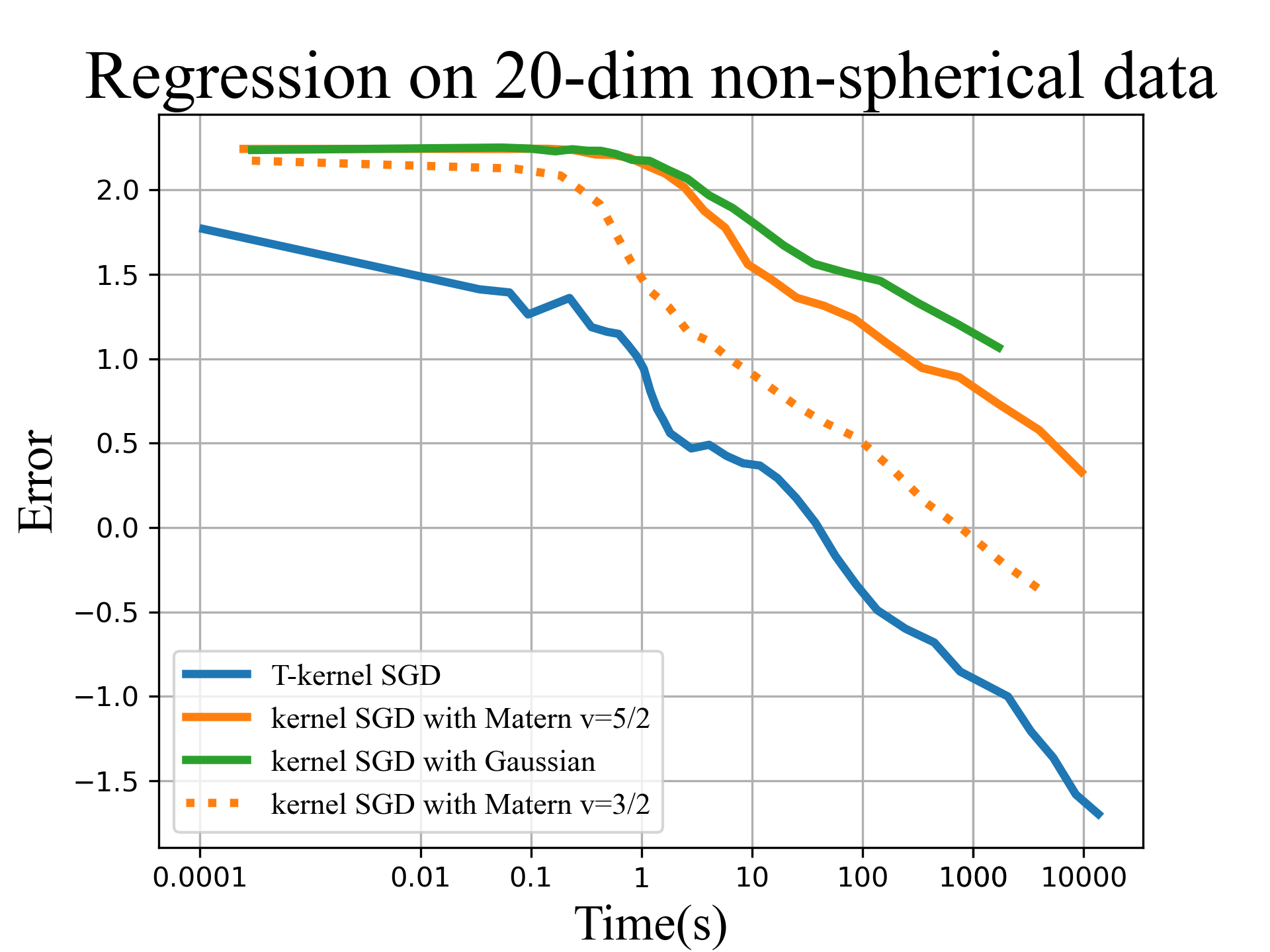}}
\caption{In the plots above, the error represents the logarithm of the mean squared error, i.e., $Error=\log_{10}\EE\left[\Vert\bar{f}_n-f^*\Vert_{\mu_X}^2\right]$. The left plot illustrates the convergence behavior of the logarithm of the error with respect to the number of acquired samples, while the right plot shows the convergence behavior of the logarithm of the MSE as a function of the model runtime. }
\label{figure4}
\end{figure}

In this Subsection, we primarily focus on the regression problem for low- to medium-dimensional spherical and non-spherical datasets. For non-spherical data, we introduce the inverse mapping of the spherical projection to map the non-spherical data onto the sphere and perform fitting using T-kernel SGD. Specifically, we set the same optimal prediction function for all datasets in this Subsection, namely the prediction function from the generalized Friedman synthetic dataset \cite{friedman1991multivariate} (using a more complex regression function than in the original paper), to investigate the impact of data structure (spherical vs. non-spherical) and dimensionality on T-kernel SGD performance. We set three dimensionalities for the spherical data: $d=5, 10, 20$, and two different dimensionalities for the non-spherical data: $d=5, 20$. Additionally, we compare T-kernel SGD with kernel SGD using three different kernels: the Gaussian kernel and two Mat\'{e}rn kernels. It is worth mentioning that even on non-spherical datasets, T-kernel SGD achieves optimal computational performance. More importantly, T-kernel SGD significantly enhances the computational efficiency of the algorithm, allowing for rapid error decay in a very short time. The experimental results for spherical data are presented in \autoref{figure3}, and those for non-spherical data are shown in \autoref{figure4}.

For the spherical data, the explanatory variable $X$ is assumed to obey the uniform distribution on the sphere, while for the non-spherical data, $X$ obey the uniform distribution on $(0,1)^d$.  The optimal prediction function $f^*$ is constructed as:
$$f^*(x)=10\sin(x_1x_2)+20(x_3-1/2)^2+10x_4 +5x_5+x_2x_3x_4+\cos(x_3x_4),$$
where $x=(x_1,\dots,x_d)$. Gaussian noise is added to both spherical and non-spherical data, with standard deviations of 0.2 and 0.5 for the Gaussian distributions, respectively. For kernel SGD, we set three kernel functions: the Gaussian kernel $K_{gauss}=\exp\left(\frac{\|x-x'\|^2}{2\sigma^2}\right)$ with parameter $\sigma=1/2$, and the Mat\'{e}rn $3/2$ and $5/2$ kernels,  where $r=\|x-x'\|$. The parameters are set as $\sigma_l = 1$ and $\sigma_l = 1/2$, with a constant step size $\gamma_n = 0.5$. It is important to note that for kernel SGD, the original data for the explanatory variable $X$ is used. For T-kernel SGD, we set $s = 0.6$ and $\theta = \frac{1}{1 + 2s}$, with the step size $\gamma_n = \gamma_0 n^{-0.05}$, where $\gamma_0$ is set to 0.5, 1.2, or 1.0. For the non-spherical regression problem, we use the inverse mapping of spherical polar projection to map the $d$-dimensional non-spherical data in $\mathbb{R}^d$ to the $(d+1)$-dimensional spherical data in $\mathbb{S}^d$ \cite{jost2017riemannian}. This mapping is one-to-one on $\mathbb{R}^d_+=\left\{(x_1,\dots,x_d)\in\RR^d\,\Big\vert\,x_i>0,i=1,\dots,d\right\}$. Specifically, we consider the following mapping:
\begin{equation*}
\begin{aligned}
\omega :\RR^d_+&\to\SS^{d},\\
x&\to\omega(x)= \frac{1}{4+x_1^2+\dots+x_d^2}\left(4x_1,\dots,4x_d,\left(4-x_1^2-\dots-x_d^2\right)\right).
\end{aligned}
\end{equation*}

Notably, the dimensionality has a negative impact on the convergence rate of the T-kernel SGD model. This negative effect is similarly observed with commonly used Gaussian and Mat\'{e}rn kernels. In the comparison experiments with spherical data using these three general kernel functions, T-kernel SGD achieved a slightly faster convergence rate. However, due to its improvements in computational complexity, the algorithm significantly reduces computation time, achieving better convergence performance than kernel methods in a very short time. Moreover, on non-spherical datasets, T-kernel SGD still demonstrated excellent performance, suggesting that the proposed algorithm has strong generalizability and is likely not limited by spherical structures.

\subsection{High-dimensional Real Data Regression}\label{subsec:High dim data}

In this section, we apply the inverse mapping of spherical-polar projection to convert the non-spherical data from the 784-dimensional MNIST and Fashion MNIST image recognition datasets into 785-dimensional spherical data, on which we implement the T-kernel SGD algorithm. Additionally, we compare the performance of kernel SGD and the recently proposed large-scale general kernel model, EigenPro 3.0 \cite{abedsoltan2023toward}, on the original non-spherical MNIST and Fashion MNIST datasets. It is important to note that we treat the multi-class classification problem as multiple independent binary classification regression problems, where the response variable $Y \in \{0, 1\}$, and the predicted class is the one with the highest output value. 

In T-kernel SGD, since the MNIST and Fashion MNIST datasets consist of 784-dimensional vectors derived from $28 \times 28$ grayscale images, and the input variables $X$ contain no negative values, the inverse spherical-polar projection ensures a one-to-one correspondence when converting non-spherical data to spherical data. For an explanatory variable $x=(x_1,\dots,x_{784})$ in the dataset, its inverse spherical-polar projection $\omega(x)$ is defined as
$$\omega(x)= \frac{1}{1+x_1^2+\dots+x_{784}^2}\left(2x_1,\dots,2x_{784},(1-x_1^2-\dots-x_{784}^2)\right).$$
The hyperparameters for the T-kernel SGD are set as follows: $\theta = 0.68$, with a constant step size $\gamma_n = \gamma_0 = 0.3$, and $s = 0.505$. It should be noted that for such high-dimensional models, it is necessary to increase $\theta$ appropriately so that higher-order polynomials enter the regression model earlier, thereby avoiding underfitting.

In the comparison experiments of kernel SGD, we used the standard Gaussian kernel $K(x,x')=\exp\left(-\frac{\|x-x'\|^2}{2\sigma^2}\right)$, and due to the high dimensionality of the data, we selected a smoother Gaussian kernel with $\sigma = 20$. Additionally, we used the commonly set step size $\gamma_n =\frac{1}{2} \cdot n^{-\frac{1}{2}}$.

In EigenPro 3.0, to ensure the model does not exceed the machine's memory limits, we set $p = 10000$ and $s = 20000$ in the experiments, while using the Laplace kernel as defined \cite{abedsoltan2023toward}, along with the source code from \cite{abedsoltan2023toward} and GPU acceleration.
\begin{figure}[htbp]
\small
\centering  
\subfigure[Example 12]{
\label{Fig.sub.B17}
\includegraphics[width=6.cm]{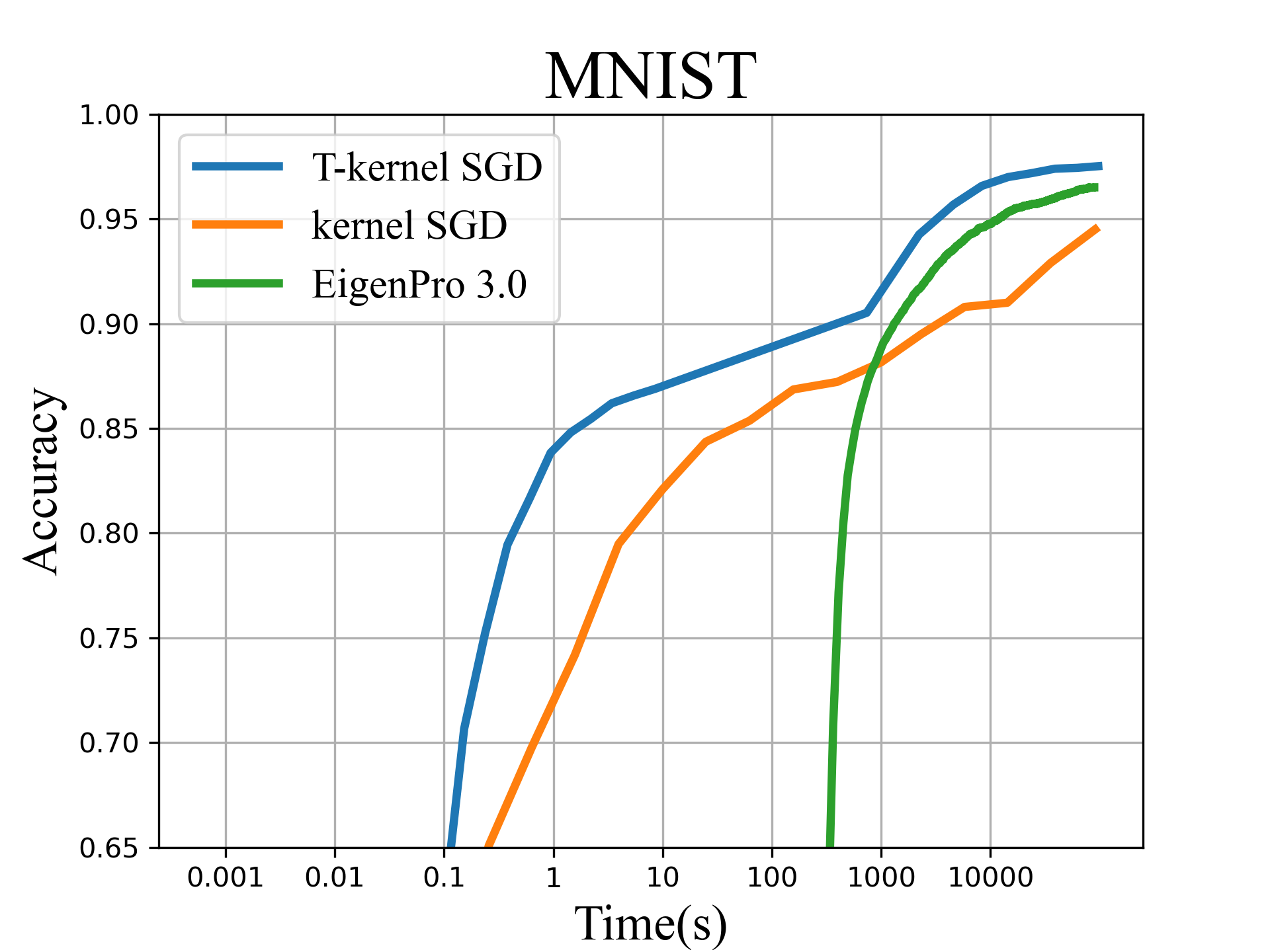}}
\subfigure[Example 13]{
\label{Fig.sub.B18}
\includegraphics[width=6.cm]{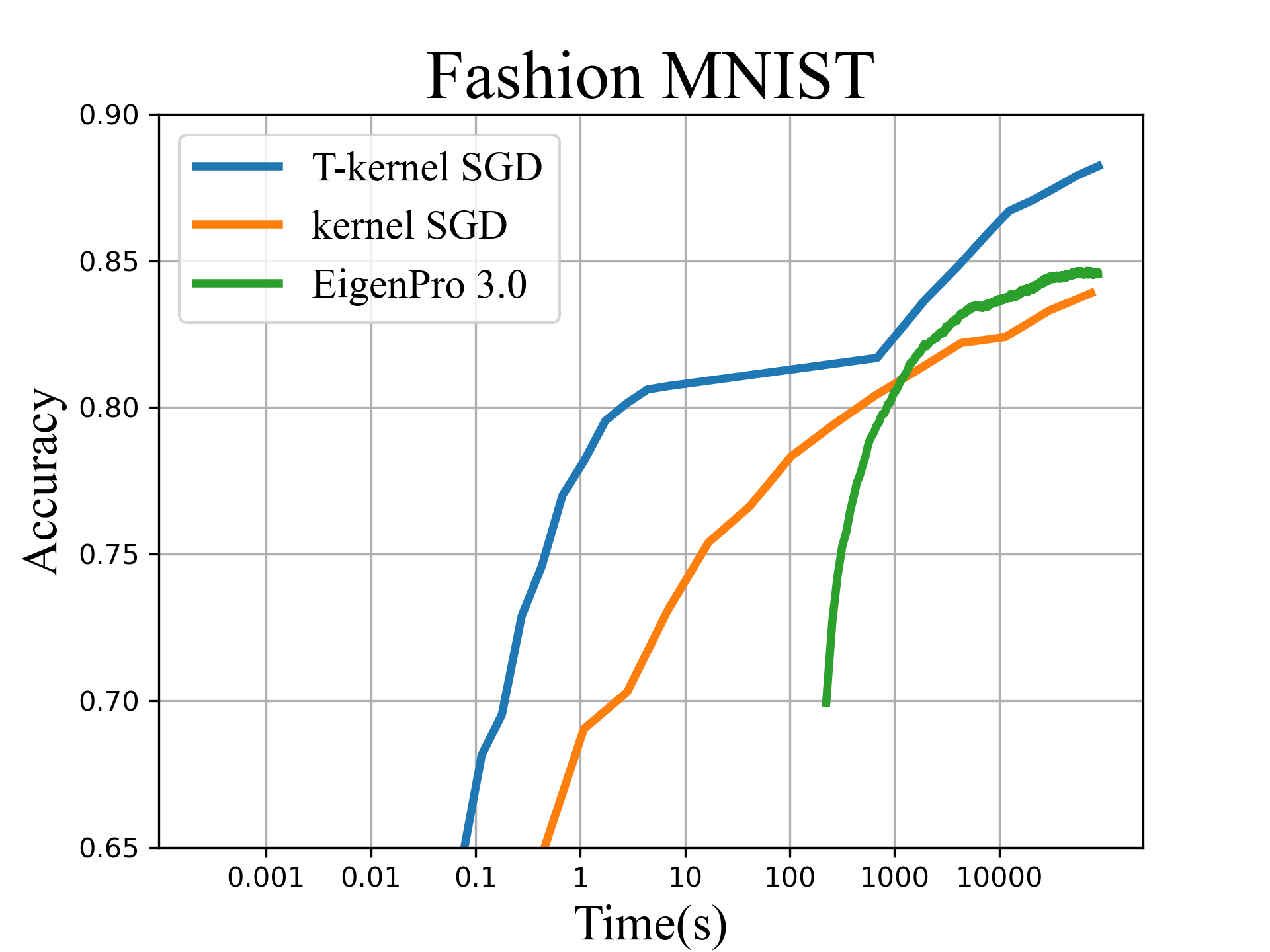}}
\caption{The left and right panels show the change in accuracy of different models on the MNIST and Fashion MNIST test sets as the program runtime progresses.}
\label{figure5.1}
\end{figure}

The experimental results in \autoref{figure5.1} show that for this 784-dimensional non-spherical data, T-kernel SGD still achieves good performance. The MNIST and MNIST Fashion datasets are used as benchmark for high-dimensional data applications, demonstrating the versatility of T-kernel SGD. For general non-spherical datasets, the algorithm still performs well by transforming the data into spherical form through inverse mapping of spherical projections. At the $10^6$-th iteration, $L_n = 2$, indicating that only second-order polynomials were used during the experiment. The prediction $\bar{f}_n$ has a total of 309,2900 parameters. From the perspective of multivariate polynomial interpolation theory, this dataset, with 60,000 training samples and 10,000 test samples, should achieve good accuracy. In terms of model size, the T-kernel SGD requires the least storage space among the three algorithms. In fact, throughout the entire iteration process, T-kernel SGD stores no more than $3 \times 10^7$ parameters, whereas kernel SGD requires at least $1.56 \times 10^8$ parameters, and EigenPro 3.0 needs at least $4 \times 10^8$ parameters.

\subsection{Infinite Sequence Data Regression}\label{Infinite Sequence Data Regression}

In this subsection, we primarily showcase the performance of the T-SGD algorithm and compare the performance of T-SGD with that of the SGD algorithm. To eliminate the influence of specific structures, such as spherical and spherical harmonic function structures, on the model, and to highlight the fitting capability of T-SGD in regression problems with high-dimensional or even infinite-dimensional data, we select a subset of the standard infinite-dimensional sequence dataset $\ell^2=\left\{ \{a^i\}_{i\geq1}\,\Big\vert\,a^i\in\RR,\  \sum_{i=1}^\infty |a^i|^2<\infty\right\}$ as the set of explanatory variables $X=\{X^i\}_{i\geq1}$. We consider the optimal  prediction $f^*=\{f^{*,i}\}_{i\geq1} \in \ell^2$, where the response variable $Y\in\RR$ and noise $\epsilon$ satisfy the following relationship,
$$Y=\langle f^{*},X\rangle +\epsilon=\sum_{i=1}^\infty f^{*,i}X^i+\epsilon.$$
Here, the population risk is still measured using the least-square loss, which leads to the problem of minimizing the population risk in the $\ell^2$ space,
$$\arg\min_{f\in \ell^2}\EE\left[\left(\langle f,X\rangle -Y\right)^2\right].$$
Reference \cite{nicole2019beating} provides the recursive form of stochastic gradient descent for this problem, starting with $g_0 = 0$, for the $n$-th sample $(X_n, Y_n)$, one has
\begin{equation}\label{SGD without kernel}
g_n = g_{n-1} -\gamma_n \left(\langle g_{n-1},X_n\rangle-Y_n\right)X_n,
\end{equation}
where $\gamma_n$ is step size. The average estimator $\bar{g}_n=\frac{1}{n+1}\sum_{i=0}^ng_i$ is used as the output.

For T-SGD, it is natural to select the nested hypothesis space $\{\HH_{L_n}\}_{L_n\geq0}$ as the set of vectors in $\ell^2$ where all components except the first $L_n$ are zero, i.e., $$\HH_{L_n} = \left\{ \{a^i\}_{i\geq1}\in \ell^2\,\Big\vert\,a^i=0,\ \text{if\ } i>L_n\right\}.$$ Therefore, the projection operator described in T-SGD \eqref{iteration_T_SGD} has an explicit form, given by: for any $f=\{f_i\}_{i\geq1} \in \ell^2$, we have
\begin{equation*}
\text{Proj}_{\mathcal{H}_{L_n}}(f)=\left\{
\begin{aligned}
&f^i,\ \text{if }i\leq L_n,\\
&0,\ \text{otherwise.}\\
\end{aligned}
\right.
\end{equation*}
The recursive formula for the T-SGD algorithm can then be derived. For $f_{n-1} \in \HH_{L_{n-1}}$, it holds that
$$\hat{f}_n= \hat{f}_{n-1} - \gamma_n \left(\langle \hat{f}_{n-1},X_n\rangle-Y_n\right)\text{Proj}_{\mathcal{H}_{L_n}}(X_n).$$
The average estimator $\bar{f}_n=\frac{1}{n+1}\sum_{i=0}^n\hat{f}_i$ is also used as the output here.
It is clear, at least in this problem, that directly handling infinite-dimensional gradients is almost impossible. However, T-SGD confines each iteration to an $L_n$-dimensional subspace via projection, ensuring that $\hat{f}_n \in \HH_{L_n}$ has only the first $L_n$ non-zero components. This enables the direct computation of the inner product $\langle X_n, \hat{f}_n \rangle=\sum_{i=1}^{L_n}X_n^i\hat{f}_n^i$. 

\begin{figure}[htbp]
\small
\centering  
\subfigure[Example 14]{
\label{Fig.sub.B19}
\includegraphics[width=6.cm]{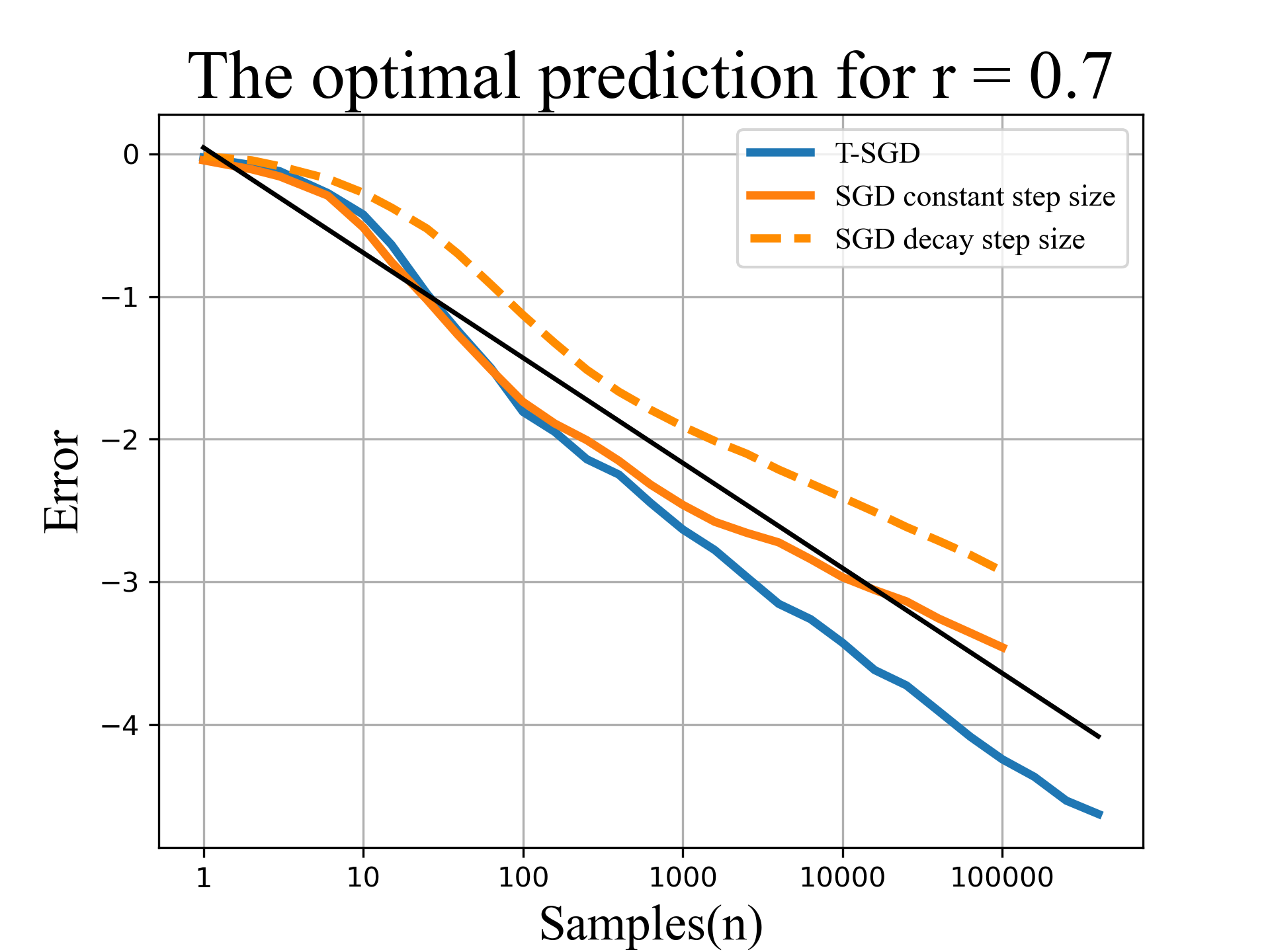}}
\subfigure[Example 15]{
\label{Fig.sub.B20}
\includegraphics[width=6.cm]{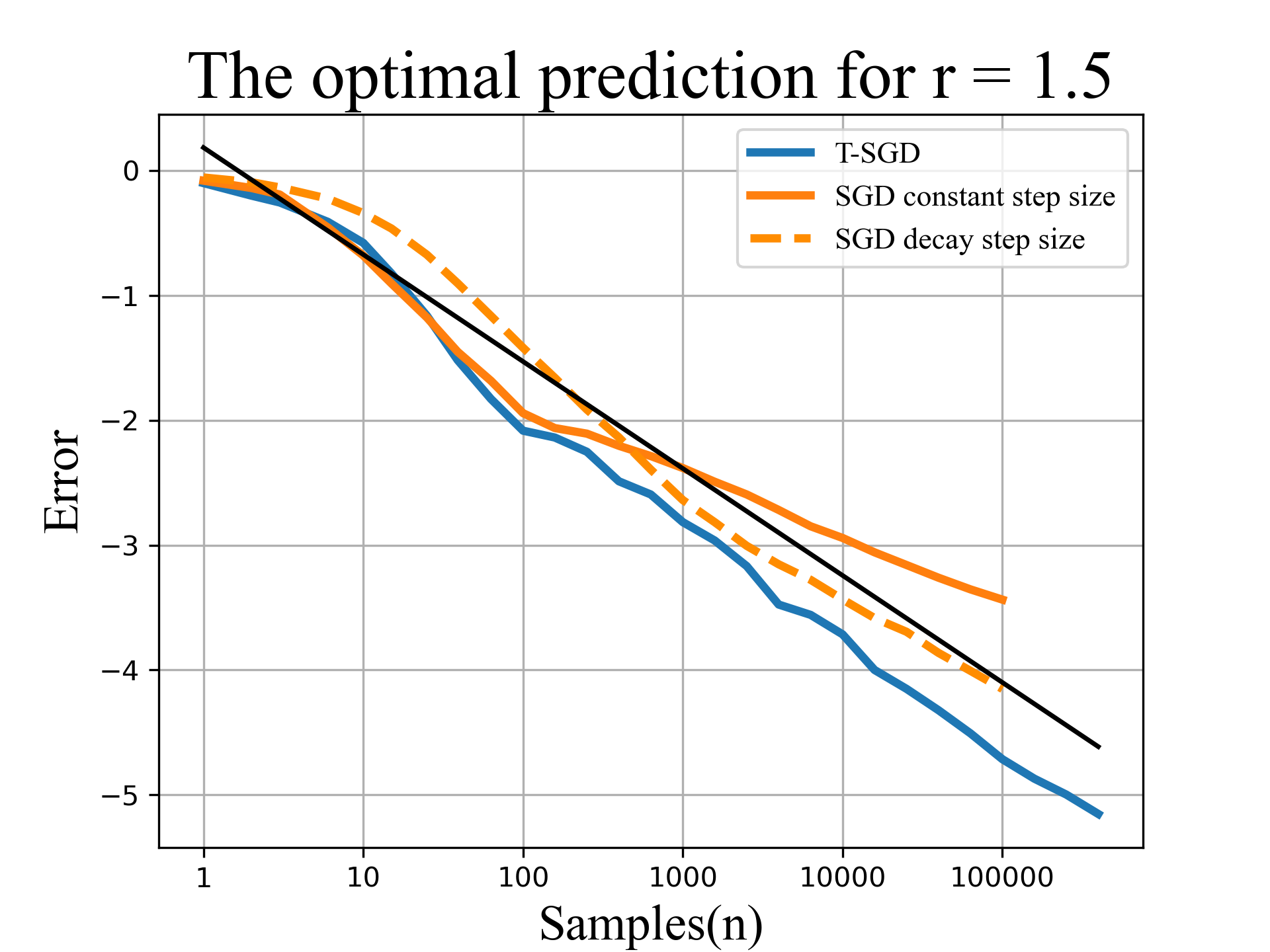}}
\caption{The two plots on the left and right show the relationship between the convergence rate of different algorithms and sample size under different regularization parameters. In the plots above, the ``Error" is the logarithm of the mean squared error. The black line indicates the optimal convergence rate, with slopes of $-\frac{14}{19}$ in (a) and $-\frac{6}{7}$ in (b), respectively.}
\label{figure5}
\end{figure}

Based on the results from \cite{nicole2019beating}, we consider a finite $d=1000$-dimensional dataset as an appropriate approximation for the infinite-dimensional setup. Specifically, the explanatory variables are $X = \{i^{-s}X^i\}_{1\leq i\leq d}$, where each component $X^i$ is i.i.d. and obeys the normal distribution $N(0,1)$. The optimal prediction function is $f^* = \{i^{-2sr-0.5}\}_{1\leq i\leq d}$ with $r \geq \frac{1}{2}$ and $s = 1$. For the stochastic gradient descent shown in \eqref{SGD without kernel}, we set two types of step sizes: a decaying step size $\gamma_n = 0.2 n^{-\min\{1-\frac{2s}{4sr+1}, \frac{1}{2}\}}$ and a constant step size $\gamma_n = 0.2$. For T-SGD, we set the constant step size $\gamma_n = 0.2$ and $L_n = \lfloor n^{\frac{1}{4sr+1}} \rfloor$. The experimental results are presented in \autoref{figure5}.

It can be observed that the novel regularization mechanism employed by T-SGD effectively mitigates the rapid accumulation of variance in gradient descent, achieving faster convergence rates while maintaining the bias-variance balance, demonstrating strong performance. Based on the results of this numerical experiment, we believe that for other problems (see, e.g., \cite{nicole2019beating,shi2024learning}) in infinite-dimensional Hilbert spaces where stochastic gradient descent can be applied, T-SGD may achieve good performance, provided that the projection of the stochastic gradient has a closed-form solution.

In the experiments presented in \autoref{sec:6SS}, we used the Intel Core i5-12500H CPU, featuring 12 cores, 16 threads, and a maximum turbo frequency of 4.50 GHz, with the performance cores reaching up to 4.50 GHz and the efficient cores up to 3.30 GHz. The GPU used is the NVIDIA GeForce RTX 2050, which accelerates Eigenpro 3.0.

\section{Conclusions}\label{sec:conclusions}

We introduce a novel regularization mechanism, which regularizes by projecting stochastic gradients onto finite-dimensional hypothesis spaces and controlling the size of these spaces. Based on the elegant structure of spherical harmonics, the projection used in the regularization mechanism has a closed-form solution, we can utilize a family of increasingly hypothesis spaces, $\{\mathcal{H}_{L_n}\}_{n\geq 0}$, to design a novel SGD algorithm, referred to as T-kernel SGD. This algorithm applies not only to spherical nonparametric estimation but also extends to more general optimization problems. By introducing the truncation level parameter $\theta$, which controls the growth of the hypothesis space dimensionality, T-kernel SGD effectively mitigates the saturation issue and achieves an optimal convergence rate. Leveraging the expression of spherical harmonic polynomials, this paper presents an equivalent form of T-kernel SGD that substantially reduces both computational and storage costs compared to traditional kernel SGD.


\bibliographystyle{plain}
\bibliography{document.bib}

\appendix
\begin{appendices}
\section{The Details of Implementing the T-kernel SGD }\label{details of T-kernel SGD}

We present the implementation of the algorithm in a format similar to Python code. We provide efficient algorithms for polynomial expansion and for evaluating the polynomial basis of the space $\Pi_k^d$ at a given point, among others.
\begin{itemize}
\item It should be noted that in the code of \autoref{details of T-kernel SGD}, for simplicity of implementation, we only use basic concepts, functions, and syntax from Python and the numpy package. Specifically, lists are represented with $[\ ]$, and vectors such as $a = (a_1, \dots, a_k) \in \mathbb{R}^k$ are represented as d-dimensional numpy arrays. We also use Python’s built-in functions: ``append, insert, zip, sum, len", as well as numpy functions (denoted as np) such as ``np.append, np.dot, and np.zeros". 
\end{itemize}

First, we introduce an efficient algorithm for evaluating the value of the polynomial basis $\{x^\alpha\}_{|\alpha|=j,0\leq j\leq k}$ at the point $x'=((x')_1,\dots,(x')_d)$ in $\Pi_k^d(\RR^d)$. Previously, we mentioned that for any polynomial of degree $\alpha$ where $|\alpha| \geq 1$, $(x')^\alpha$ can be expressed as the products of a polynomial $(x')^\beta$ of degree $|\alpha|-1$ and some $(x')_i$. From this, we can categorize all monomials of degree $k\geq2$, as follows:
\begin{equation}\label{poly_computation_1}
\begin{aligned}
&\{(x')^\alpha\}_{|\alpha|=k}=\bigcup_{i=1}^d \{(x')^\alpha\}_{\alpha_1,\dots,\alpha_{i-1}=0,\ \alpha_i>0\text{ and }|\alpha|=k}\\
=&\bigcup_{i=1}^d \{(x')_i (x')^\beta\}_{\beta_1,\dots,\beta_{i-1}=0,\ \beta_i\geq0\text{ and }|\beta|=k-1}. 
\end{aligned}
\end{equation}
Thus, $\{(x')_i (x')^\beta\}_{\beta_1,\dots,\beta_{i-1}=0,\ \beta_i\geq0\text{ and }|\beta|=k-1}$ contains all the product of $(x')_i$ and $(k-1)$-degree monomials that only contain $(x')_i, \dots, (x')_d$. Similarly, for polynomials of degree $k-1$, we also have a division similar to \eqref{poly_computation_1},
$$\{(x')^\beta\}_{|\beta|=k-1}=\bigcup_{i=1}^d \{(x')^\beta\}_{\beta_1,\dots,\beta_{i-1}=0,\ \beta_i>0\text{ and }|\beta|=k-1}.$$
This leads to a recursive expression for evaluating $\{(x')^\alpha\}_{\alpha_1,\dots,\alpha_{i-1}=0,\ \alpha_i>0\text{ and }|\alpha|=k}$,
$$\{(x')^\alpha\}_{\alpha_1,\dots,\alpha_{i-1}=0,\ \alpha_i>0\text{ and }|\alpha|=k}=\bigcup_{j=i}^d \{(x')_i(x')^\beta\}_{\beta_1,\dots,\beta_{j-1}=0,\ \beta_j>0\text{ and }|\beta|=k-1}.$$
Due to the pairwise disjointness of the sets in the partition, the above iteration ensures that exactly $ cardinality\left(\{(x')^\alpha\}_{\alpha_1,\dots,\alpha_{i-1}=0,\ \alpha_i>0\text{ and }|\alpha|=k}\right)$ multiplications are required to obtain all elements in the set $\{(x')^\alpha\}_{\alpha_1,\dots,\alpha_{i-1}=0,\ \alpha_i>0\text{ and }|\alpha|=k}$, which means that computing all $k$-degree monomials requires only $\dim\Pi_k^d(\RR^d)$ multiplications (Here, each 0th- and 1st-order polynomial is also considered as a multiplication operation).
To store the values of the monomials in a list in order, we need to define a lexicographical order for the monomials. Let $\alpha = (\alpha_1, \dots, \alpha_d)$ and $\beta = (\beta_1, \dots, \beta_d)$ such that $|\alpha| = |\beta|$. We define $\alpha > \beta$ if $\alpha_1 = \beta_1, \dots, \alpha_{i-1} = \beta_{i-1}$, and $\alpha_i > \beta_i$. This implies that $(x')^\alpha$ will appear further ahead in the list than $(x')^\beta$. Next, we can use the concept of a list to arrange all monomials of degree $k$ in order. For $k = 0$, the 0-degree monomials are arranged as 
$[(1)]$. For $k = 1$, the monomials are sorted as $[((x')_1), \dots, ((x')_d)]$. When $k > 1$, the monomials of degree $k$ can be arranged as $[a^1, \dots, a^d]$, where 
$a^i\in\RR^{\dim\mathcal{P}_k^{d-i+1}}$ is a permutation of the monomials $\{(x')_i (x')^\beta\}_{\beta_1,\dots,\beta_{i-1}=0,\ \beta_i\geq0\text{ and }|\beta|=k-1}$, and the element $(x')_i (x')^\beta$ appears in the $(\dim\PP_{k-1-\beta_i}^{d-i}+\dim\PP_{k-1-\beta_i-\beta_{i+1}}^{d-i-1}+\dots+\dim\PP_{k-1-\beta_i-\dots-\beta_{d-1}}^{1}+1)$-th position of the vector $a^i$ where we note $\dim\PP_{0}^{d-l}=0, \dim\PP_{l}^0=0,\ l\in\NN\cup\{0\}$. In fact, this order will naturally emerge in the subsequent process involving the monomials, and there is no need to explicitly store or memorize it. An efficient algorithm \ref{alg_comp_mono} can be provided here to compute monomials $\{(x')^\alpha\}_{|\alpha|=j,0\leq j\leq k}$, where $order$ represents the highest degree $k$ of the monomial and $All-mono$ denotes the list of all monomials computed. 
\begin{algorithm}
\caption{\textbf{Def} computation-monomials(x',order):}
\label{alg_comp_mono}
\small
\begin{algorithmic}
\State{\textbf{definition}: the list of monomials $All-mono = [\ ]$, the recursion requires monomials of $l$-th order $T_n=[(1.)]$.}
\If{order = 0:}
\State{$All-mono \leftarrow [T_n]$}
\ElsIf{$order \geq1$:}
\State{$All-mono \leftarrow [T_n]$}
\State{$All-mono.append([(x')_1,\dots,(x')_d])$}
\For{$i=order,order-1,\dots,2$}
\State{$T_n-temp \leftarrow [\ ]$}
\State{$B\leftarrow [\ ]$}
\For{$j=d-1,d-2,\dots,0$}
\State{$B\leftarrow np.append(T_n[j],B)$}
\State{$T_n-temp.insert(0,(x')_{j+1}*B)$}
\EndFor
\State{$T_n\leftarrow T_n-temp$}
\State{$All-mono.append(T_n)$}
\EndFor
\EndIf
\State{\textbf{return}\ $All-mono$}
\end{algorithmic}
\end{algorithm}

In the following, this paper presents an efficient algorithm for computing the polynomial kernel $K_k(x, x')$. According to \eqref{eq_Kernel_K_k}, the core of the algorithm lies in efficiently expanding the polynomial $\{\langle x, x'\rangle^l\}_{1\leq l\leq k}$. In fact, based on the derivations in \autoref{sec:Equivalent T-kernel SGD}, we have obtained the explicit expression $ \binom{l}{\alpha} $ for the coefficient of each monomial $(x'x)^\alpha$ in the polynomial $\langle x, x'\rangle^l$. However, in practice, these coefficients must be stored according to the lexicographical order in Algorithm \ref{alg_comp_mono}. Furthermore, since this non-recursive expression needs to be computed multiple times, a more convenient and efficient algorithm is required. Let the coefficient of the monomial $x^\alpha$ be denoted as $C_\alpha$. Let $i = \arg\min_j \{ \alpha_j > 0 \}$. Then, $C_\alpha = \frac{l!}{\alpha_1!\alpha_2!\dots\alpha_{d}!}=\frac{l!}{\alpha_i!\dots\alpha_d!}=\frac{l}{\alpha_i} \frac{(l-1)!}{(\alpha_i-1)!\dots\alpha_d!}$, where $\frac{(l-1)!}{(\alpha_i-1)!\dots\alpha_d!}= C_\beta$ is the coefficient of $(l-1)$-order monomial $x^\beta$ with $\beta = (\alpha_1,\dots,\alpha_i-1,\dots,\alpha_d)$. Thus, if we have computed the coefficients $\{C_\beta\}_{|\beta|=l-1}$ of all $(l-1)$-order monomials, the coefficient $C_\alpha$ of any $l$-order monomial can be expressed as $C_\alpha = \frac{l}{\alpha_i} C_\beta $, where $x^\beta$ is some $(l-1)$-order monomial. Next, we present an efficient algorithm for computing the coefficients of the polynomial expansion in Algorithm \ref{alg_poly_expansion}. Next, we present the code to compute the kernel function $K_k(x, x')$ using polynomial expansion. For the coefficients $\frac{\left(\frac{d}{2}\right)_k 2^{l}}{(\frac{k-l}{2})!l!(2-k-\frac{d}{2})_{\frac{k-l}{2}}}$  of $\langle x, x'\rangle^l$ in $K_k(x,x')$, we have a faster computation method, and the implementation is available on GitHub.

\begin{algorithm}
\caption{\textbf{Def} polynomial-expansion(order):}
\label{alg_poly_expansion}
\small
\begin{algorithmic}
\State{\textbf{definition}: the list of monomials $All-mono-coe =[\ ]$, the recursion requires monomials of $l$-th order $T_n-coe=[(1.)]$.}
\For{ $k=1,2,\dots,order$:}
\State{$T_n-coe-temp\leftarrow[\ ]$}
\For{$i=1,2,\dots,d$:}
\If{$k=1$}
\State{$T_n-coe-temp.append((1.))$}
\Else
\State{$T_n-coe-part \leftarrow [\ ]$}
\For{$A$\textbf{\ in\ }$T_n-coe[i:]:$}
\State{$T_n-coe-part \leftarrow np.append(T_n-coe-part,A)$}
\EndFor
\State{$Dim-list \leftarrow [0]+[\binom{d+l-1}{l}$ \textbf{for} $l=0,1,\dots,k$]}
\For{ l =1,2,\dots,len(Dim-list):}
\For{ p = Dim-list[l-1],Dim-list[l-1]+1,\dots,Dim-list[l]:}
\State{$T_n-coe-part[p]\leftarrow T_n-coe-part[p]*(k/(k-l+1))$}
\EndFor
\EndFor
\State{$T_n-coe-temp.append(T_n-coe-part)$}
\EndIf
\EndFor
\State{$T_n-coe\leftarrow T_n-coe-temp$}
\State{$All-mono-coe.append(T_n-coe)$}
\EndFor
\State{\textbf{return}\ $All-mono-coe$}
\end{algorithmic}
\end{algorithm}

\begin{algorithm}
\caption{\textbf{Def} kernel-K-k-evaluating(k):}
\label{alg_Ker_k}
\small
\begin{algorithmic}
\State{\textbf{definition}: the list of monomials $All-mono-coe =[\ ]$, the expansion of the kernel $K_k(x,x')$ $K_k=[\ ]$.}
\State{$All-mono-coe\leftarrow polynomial-expansion(k)$}
\For{ $l=0,1,\dots,order$:}
\If{(k-l) \% 2=0}
\State{$K_k.append([ \frac{\left(\frac{d}{2}\right)_k 2^{l}}{(\frac{k-l}{2})!l!(2-k-\frac{d}{2})_{\frac{k-l}{2}}}*A $ \textbf{for} $A$ \textbf{in} $All-mono-coe[l]\ ])$}
\Else
\State{$K_k.append([ 0*A $ \textbf{for} $A$ \textbf{in} $All-mono-coe[l]\ ])$}
\EndIf
\EndFor
\State{\textbf{return}\ $K_k$}
\end{algorithmic}
\end{algorithm}

After introducing Algorithm \ref{alg_comp_mono} and Algorithm \ref{alg_Ker_k}, which efficiently evaluates monomials $\{x^\alpha\}_{|\alpha|=j,0\leq j\leq k}$ and the kernel $K_k(x, x')$ respectively, we can present the equivalent T-kernel SGD algorithm described in \autoref{sec:Equivalent T-kernel SGD}, which can be directly implemented in Algorithm \ref{alg_T_kernel_SGD_PYTRHON}.

\begin{algorithm}
\caption{\textbf{Class} T-kernel SGD:}
\label{alg_T_kernel_SGD_PYTRHON}
\small
\begin{algorithmic}
\State{\textbf{require}: dimensions of $\SS^{d-1}$: $d$, coefficient of component: $s$, step size: $\gamma_0$, truncation level: $\theta$.}
\State{\textbf{initialization}: $K_{L_n}(x,x'): K_n=[[(1.)]]$, $K_{L_n}^T(x,x')$: $K_t=[[(1.)]]$, $\hat{f}_n$: $f_n=[[(0.)]]$, $\bar{f}_n$: $f_n-average=[[(0.)]]$, $L_n$: $L_n=0$, $\dim\Pi_{L_n}^d$: $dim-\Pi-L_n=1$.}
\State{\ }
\While{computation-K-Xnt(X,$K_t$):}
\State{$All-mono \leftarrow computation-monomials(X,L_n)$}
\State{$K^T_{X_n}\leftarrow [\ [\ a*b \textbf{\ for\ } a,b \textbf{\ in\ }zip(A,B)\ ] \textbf{\ for\ } A,B \textbf{\ in\ }zip(All-mono,K_t)\ ]$}
\State{\textbf{return}\ $K^T_{X_n}$}
\EndWhile
\State{\ }
\While{computation-$f_n$(X, $f_n$):}
\State{$All-mono \leftarrow computation-monomials(X,L_n)$}
\State{\textbf{return}\ $\textbf{sum}([\ \textbf{sum}([\ np.dot(a,b) \textbf{\ for\ } a,b \textbf{\ in\ }zip(A,B)\ ] \textbf{\ for\ } A,B \textbf{\ in\ }zip(All-mono,f_n)\ ])$}
\EndWhile
\State{\ }
\While{updating-$K_t$($K_t,L_n$):}
\State{$coe \leftarrow \left(\dim \Pi_{L_n}^d\right)^{-2s}= \left(\binom{L_n+d-1}{d-1}+\binom{L_n+d-2}{d-1}\right)^{-2s}$}
\State{$K_n \leftarrow kernel-K-k-evaluating(L_n)$}
\Return{$ [\ [\ coe*a+b \textbf{\ for\ } a,b \textbf{\ in\ }zip(A,B)\ ] \textbf{\ for\ } A,B \textbf{\ in\ }zip(K_n,K_t)\ ]$}
\EndWhile
\State{\ }
\While{T-kernel-SGD($d,s,\gamma_0,\theta$):}
\State{\textbf{initialization} $K_n,\ K_t,\ f_n,\ f_n-average,\ L_n,\ dim-\Pi-L_n$}
\For{epoch =1,2,\dots}
\State{collect data $(X,Y)$}
\State{$coe =\gamma_0*(Y- computation-f_n(X,f_n))$}
\If{$dim-\Pi-L_n+1<(epoch)^{\theta}$}
\State{$L_n\leftarrow L_n+1$}
\State{$dim-\Pi-L_n+1\leftarrow \binom{L_n+d-1}{d-1}+\binom{L_n+d-2}{d-1}$}
\State{$K_t \leftarrow$ updating-$K_t$($K_t$)}
\State{$A\leftarrow [np.zeros(len(B))\textbf{\ for\ }B\textbf{\ in\ } K_t[-1]]$}
\State{$f_n.append(A)$}
\State{$f_n-average.append(A)$}
\EndIf
\State{$K^T_{X_n}\leftarrow$ computation-K-Xnt(X,$K_t$)}
\State{$f_n\leftarrow [\ [\ coe*a+b \textbf{\ for\ } a,b \textbf{\ in\ }zip(A,B)\ ] \textbf{\ for\ } A,B \textbf{\ in\ }zip(K^T_{X_n},f_n)\ ]$}
\State{$f_n-average\leftarrow [\ [\ \frac{epoch*a+b}{epoch+1} \textbf{\ for\ } a,b \textbf{\ in\ }zip(A,B)\ ] \textbf{\ for\ } A,B \textbf{\ in\ }zip(f_n-average,f_n)\ ]$}
\EndFor
\Return{$f_n-average$}
\EndWhile

\end{algorithmic}
\end{algorithm}
\section{Some Preliminary Results}
\begin{lemma}\label{lemma:dimension}
For $k\in\NN\cup\{0\},\ and\ x,x'\in\SS^{d-1},\ d\geq3$, we have
$$|K_k(x,x')|\leq\dim{\HH_k^{d}}\ \ and\ \ K_k(x,x)=\dim{\HH_k^{d}}.$$
When $d=2$, we have
$$|K_k(x,x')|\leq1<2=\dim{\HH_k^2}.$$
For all $d\geq2,i\in\NN$, one has
\begin{equation*}
\begin{aligned}
&\dim{\PP_k^{d}}=\binom{k+d-1}{d-1},\\
&\dim{\HH_k^{d}}=\dim{\PP_k^{d}}-\dim{\PP_{k-2}^{d}}=\binom{k+d-1}{d-1}-\binom{k+d-3}{d-1},\\
&\dim{\Pi_k^{d}}=\dim{\PP_k^{d}}+\dim{\PP_{k-1}^{d}}=\binom{k+d-1}{d-1}+\binom{k+d-2}{d-1},\\
&\dim{\Pi_k^{d}\left(\RR^d\right)}=\binom{k+d}{d},
\end{aligned}
\end{equation*}
where we defined $\dim{\PP_{k-2}^{d}}=0,\ for\ k=0,1$.
\end{lemma}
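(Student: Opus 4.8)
The plan is to separate the statement into two logically independent parts: the pointwise bounds on the zonal kernel $K_k(x,x')$, and the three dimension identities. For the bounds I would exploit the rotation invariance of the reproducing kernel together with the Cauchy--Schwarz inequality, while for the dimensions I would combine an elementary monomial count with the orthogonal decomposition already recorded in \autoref{lem:decompose LLd and PPkd}.

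For the kernel bounds I would first argue that $x\mapsto K_k(x,x)$ is constant on $\SS^{d-1}$. The key observation is that any $\rho\in O(d)$ preserves $\HH_k^d$ (since $\Delta$ commutes with rotations and homogeneity is preserved), so $\{Y_{k,j}\circ\rho\}_{j}$ is again an orthonormal basis of $\HH_k^d$; because the reproducing kernel \eqref{eq:def Kk} is independent of the chosen orthonormal basis, this gives $K_k(\rho x,\rho x')=K_k(x,x')$. Transitivity of the $O(d)$-action on the sphere then forces $K_k(x,x)$ to be constant, and integrating $K_k(x,x)=\sum_j Y_{k,j}(x)^2$ against $\tfrac{1}{\Omega_{d-1}}\omega$ and using orthonormality evaluates the constant as $\dim\HH_k^d$. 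The off-diagonal bound $|K_k(x,x')|\le\dim\HH_k^d$ is then immediate from Cauchy--Schwarz applied to $\sum_j Y_{k,j}(x)Y_{k,j}(x')$, since each diagonal value equals $\dim\HH_k^d$. For the sharper statement when $d=2$ I would invoke the explicit orthonormal basis $\{\cos k\theta,\sin k\theta\}$ of $\HH_k^2$ (as in Section 1.6.1 of \cite{dai2013approximation}), which yields $K_k(x,x')=\cos k(\theta-\phi)$ and hence $|K_k(x,x')|\le 1<2$.

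For the dimension identities I would proceed in the order $\dim\PP_k^d$, $\dim\HH_k^d$, $\dim\Pi_k^d$. The value $\dim\PP_k^d=\binom{k+d-1}{d-1}$ is the number of monomials $x^\alpha$ with $|\alpha|=k$ in $d$ variables (stars and bars); one checks that restriction to $\SS^{d-1}$ is injective on each $\PP_k^d(\RR^d)$, since a homogeneous polynomial is recovered from its boundary values via $P(x)=|x|^kP(x/|x|)$, so the restricted space has the same dimension. Next, \autoref{lem:decompose LLd and PPkd} gives $\PP_k^d=\bigoplus_{0\le j\le k/2}\HH_{k-2j}^d$, so $\dim\PP_k^d$ and $\dim\PP_{k-2}^d$ differ by exactly the single top summand, yielding $\dim\HH_k^d=\dim\PP_k^d-\dim\PP_{k-2}^d=\binom{k+d-1}{d-1}-\binom{k+d-3}{d-1}$. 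Finally, using $\Pi_k^d=\bigoplus_{0\le j\le k}\HH_j^d$ (which follows from \autoref{lem:decompose LLd and PPkd}) and summing the harmonic formula, the alternating sum $\sum_{j=0}^k(\dim\PP_j^d-\dim\PP_{j-2}^d)$ telescopes (each positive $\PP$-dimension cancels with the negative copy arising two indices later) and leaves only the two uncancelled top terms $\dim\PP_k^d+\dim\PP_{k-1}^d$, with the conventions $\dim\PP_{-1}^d=\dim\PP_{-2}^d=0$.

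I expect the only genuinely delicate step to be the constancy of $K_k(x,x)$, i.e.\ the zonal/addition-theorem property of the reproducing kernel; everything else is either a direct counting argument or a telescoping of \autoref{lem:decompose LLd and PPkd}. If one prefers to avoid the rotation-invariance argument, the same constant can instead be read off from the closed-form Gegenbauer expression for $K_k$ quoted from Theorem 1.2.7 of \cite{dai2013approximation}, so this obstacle is largely a matter of presentation rather than substance.
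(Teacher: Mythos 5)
Your proposal is correct in substance and, unlike the paper, actually proves the statement: the paper's ``proof'' of \autoref{lemma:dimension} is a one-line citation to Chapter 1 of \cite{dai2013approximation}, so everything you wrote is genuinely new content relative to the paper. The dimension part is fully sound: stars-and-bars for $\dim\PP_k^{d}(\RR^d)$, injectivity of restriction on homogeneous polynomials via $P(x)=|x|^kP(x/|x|)$, the difference identity $\dim\HH_k^d=\dim\PP_k^d-\dim\PP_{k-2}^d$ read off from \autoref{lem:decompose LLd and PPkd}, and the telescoping sum for $\dim\Pi_k^d$ are exactly the standard arguments. Likewise, for $d\geq 3$ the rotation-invariance/basis-independence argument, the integration step identifying the constant diagonal value, and Cauchy--Schwarz correctly give $K_k(x,x)=\dim\HH_k^d$ and $|K_k(x,x')|\leq\dim\HH_k^d$.

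Your $d=2$ treatment, however, is internally inconsistent, and the inconsistency exposes a normalization subtlety in the paper itself. If $\{\cos k\theta,\sin k\theta\}$ really were an orthonormal basis of $\HH_k^2$ with respect to $\langle\cdot,\cdot\rangle_\omega=\frac{1}{2\pi}\int_0^{2\pi}\cdot\,d\theta$, then your own rotation-invariance and integration argument (which is valid for every $d\geq 2$) would force $K_k(x,x)=\dim\HH_k^2=2$, contradicting the value $\cos 0=1$ you compute from that basis. In fact $\Vert\cos k\theta\Vert_\omega^2=\frac12$, so these functions are orthonormal only with respect to $\frac{1}{\pi}\,d\theta$; the true reproducing kernel of $\left(\HH_k^2,\langle\cdot,\cdot\rangle_\omega\right)$ is $2\cos k(\theta-\phi)$. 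The lemma's $d=2$ clause (and the paper's later assertion $K_k(x,x)=\frac12\dim\HH_k^2$ inside the proof of \autoref{the:convergence factor}) records the convention of Section 1.6.1 of \cite{dai2013approximation}, in which the $d=2$ kernel is taken to be $\cos k(\theta-\phi)$, i.e.\ half the reproducing kernel; that is precisely why the diagonal value is $1$ rather than $\dim\HH_k^2$, and why the lemma asserts the identity $K_k(x,x)=\dim\HH_k^d$ only for $d\geq 3$. To make your proof coherent you should drop the word ``orthonormal'' in the $d=2$ step and instead say that $K_k$ is there \emph{defined} through this (merely orthogonal) basis, from which $|K_k(x,x')|\leq 1<2$ is immediate; your general argument then covers exactly the cases the lemma claims it should.
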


First, we use the above theorem \autoref{lemma:dimension} to prove the following conclusion regarding the decay rate of $\dim\Pi_k^d$.
\begin{lemma}\label{decay_rate}
The $\dim\Pi_k^d$ satisfies
$$\frac{1}{(d-1)!}k^{(d-1)}\leq\dim\Pi_k^d\leq 2^{d}k^{(d-1)}.$$
\end{lemma}
\begin{proof}
By \autoref{lemma:dimension}, we have
$$\binom{k+d-1}{d-1} \leq\dim\Pi_k^d =\binom{k+d-1}{d-1}+\binom{k+d-2}{d-1}\leq 2\binom{k+d-1}{d-1}.$$
Then, one has
$$\binom{k+d-1}{d-1}=\frac{(k+d-1)\dots(k+1)}{(d-1)!}=\left(\frac{k+d-1}{(d-1)!}\right)\dots\left(\frac{k+1}{(d-1)!}\right)\leq(k+1)^{d-1}.$$
Finally, we can obtain
$$\frac{1}{(d-1)!}k^{(d-1)}\leq\binom{k+d-1}{d-1}\leq (k+1)^{d-1}\leq 2^{d-1}k^{d-1}.$$
\end{proof}

\begin{remark}
This lemma can be found in chapter 1 of the literature \cite{dai2013approximation}.
\end{remark}

Now, we can use \autoref{lemma:dimension} to prove \autoref{lemma:dimensional inequality}.
\begin{proof}
As defined in \autoref{lemma:dimension}, we have
$$\dim\Pi_k^d=\binom{k+d-1}{d-1}+\binom{k+d-2}{d-1},$$
so
$$\frac{\dim\Pi_{k+1}^d}{\dim\Pi_{k}^d}\leq\frac{2\binom{k+d}{d-1}}{\binom{k+d-1}{d-1}}=2\frac{k+d}{k+1}
\leq2d.$$
Then by the chapter 1.1 in \cite{dai2013approximation}, we can obtain
$$\dim\Pi_{k}^d(\RR^d)=\binom{k+d}{d}.$$
Thus, one has
$$\frac{\dim\Pi_{k}^d(\RR^d)}{\dim\Pi_{k}^d}\leq\frac{\binom{k+d}{d}}{\binom{k+d-1}{d-1}}=1+\frac{k}{d},$$
we also have
$$(\dim\Pi_{k}^d)^{\frac{1}{d-1}}\geq\binom{k+d-1}{d-1}^{\frac{1}{d-1}}\geq\frac{k}{d}.$$
Then combining the above two formulas, we have
$$\dim\Pi_{k}^d(\RR^d)\leq\left(1+\frac{d}{k}\right)\left(\dim\Pi_{k}^d\right)^{\frac{d}{d-1}}.$$
So we obtain \autoref{lemma:dimensional inequality}.
\end{proof}

\begin{lemma}\label{proof orthogonal eigensystem}
\begin{equation*}
L_{\omega,K}: \LLd \to \LLd, \quad f \to \frac{1}{\Omega_{d-1}} \int_{\SS^{d-1}} f(x) K(x, \cdot) \, d\omega(x).
\end{equation*}
The above covariance operator defined in \eqref{eq:covariance operator in omega} possesses an orthogonal eigensystem $$\left\{\left(\left(\dim \Pi_k^d\right)^{-2s}, Y_{k,j}(x)\right) \right\}_{1 \leq j \leq \dim \HH_k^d, k \geq 0}.$$
\end{lemma}

\begin{proof}
Applying the definition of $K(x,x')$ in \eqref{eq:def K} and the orthogonality of the basis $\{Y_{k,j}\}$, we have
\begin{equation*}
 \begin{aligned}
L_{\omega,K}(Y_{k,j})&=\frac{1}{\Omega_{d-1}} \int_{\SS^{d-1}} Y_{k,j}(x) K(x, \cdot) \, d\omega(x)\\
&=\frac{1}{\Omega_{d-1}} \int_{\SS^{d-1}} Y_{k,j}(x)\sum_{k=0}^\infty \left(\dim{\Pi_k^{d}}\right)^{-2s}\sum_{j=1}^{\dim \HH_k^d}Y_{k,j}(x)Y_{k,j} \, d\omega(x)\\
&=\frac{1}{\Omega_{d-1}} \left(\dim{\Pi_k^{d}}\right)^{-2s}\int_{\SS^{d-1}} Y_{k,j}(x)Y_{k,j}(x)Y_{k,j} \, d\omega(x)\\
&=\left(\dim{\Pi_k^{d}}\right)^{-2s}Y_{k,j}
\end{aligned}
\end{equation*}
Thus, the orthonormal basis derived from this forms an orthogonal eigensystem $$\left\{\left(\dim \Pi_k^d\right)^{-2s}, Y_{k,j}(x) \right\}_{1 \leq j \leq \dim \HH_k^d, k \geq 0}.$$
\end{proof}

Before giving the \autoref{the:convergence factor}, we first give a meaningful lemma which states the rate of convergence of the eigenvalues of the operator $L_{\omega,K}$ which has been defined in \eqref{eq:covariance operator in omega}.
\begin{lemma}\label{lemma:LmuK eigenvalue restriction}
For the positive eigenvalues $\{\lambda_{j}\}_{j\geq1}$ of the operator $L_{\omega,K}$  which sorted in decreasing order, we have
$$\frac{(2d)^{-2s}}{j^{2s}}\leq\lambda_{j}\leq\frac{1}{j^{2s}}.$$
\end{lemma}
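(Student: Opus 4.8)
The plan is to exploit the explicit eigensystem of $L_{\omega,K}$ exhibited after \eqref{eq:covariance operator in omega}, namely $\left\{\left(\dim\Pi_k^d\right)^{-2s}, Y_{k,j}\right\}_{1\leq j\leq\dim\HH_k^d,\ k\geq0}$. Thus the distinct eigenvalues are $\left(\dim\Pi_k^d\right)^{-2s}$, $k\geq0$, each carried with multiplicity $\dim\HH_k^d$; and since $\dim\Pi_k^d$ is strictly increasing in $k$, listing the eigenvalues in decreasing order is the same as sweeping the levels $k=0,1,2,\dots$ in increasing order. The whole argument reduces to pinning down, for a given ordered index $j$, which level $k$ its eigenvalue belongs to, and then converting the resulting integer inequalities into the claimed bounds.

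First I would record the bookkeeping identity $\dim\Pi_k^d=\sum_{i=0}^k\dim\HH_i^d$, which follows from the decomposition $\Pi_k^d=\bigoplus_{0\leq i\leq k}\HH_i^d$ in \autoref{lem:decompose LLd and PPkd} (equivalently from the formulas in \autoref{lemma:dimension}). Setting $\dim\Pi_{-1}^d:=0$, this says that the eigenvalues of level $k$ occupy exactly the ordered positions $\dim\Pi_{k-1}^d< j\leq\dim\Pi_k^d$, so that for any such $j$ one has $\lambda_j=\left(\dim\Pi_k^d\right)^{-2s}$.

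The upper bound is then immediate: from $j\leq\dim\Pi_k^d$ we get $\lambda_j=\left(\dim\Pi_k^d\right)^{-2s}\leq j^{-2s}$. For the lower bound, I would use that $j$ and $\dim\Pi_{k-1}^d$ are integers, so $j>\dim\Pi_{k-1}^d$ forces $j\geq\dim\Pi_{k-1}^d+1$; combined with the growth estimate $\dim\Pi_k^d\leq 2d\,\dim\Pi_{k-1}^d$ from \autoref{lemma:dimensional inequality}, this yields $\dim\Pi_k^d\leq 2d\,\dim\Pi_{k-1}^d< 2d\,j$, hence $\lambda_j=\left(\dim\Pi_k^d\right)^{-2s}\geq(2d\,j)^{-2s}=(2d)^{-2s}j^{-2s}$, as claimed.

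The only points requiring care are the boundary level $k=0$ (where $\dim\Pi_{-1}^d=0$ and $\dim\Pi_0^d=1$, so that $\lambda_1=1$ and both bounds are checked directly) and verifying that \autoref{lemma:dimensional inequality} is available in the form $\dim\Pi_k^d\leq 2d\,\dim\Pi_{k-1}^d$ for every $k\geq1$, the $k=1$ instance $\dim\Pi_1^d=d+1\leq 2d$ being clear. I expect the main — and essentially only non-routine — obstacle to be the correct translation of the multiplicities $\dim\HH_k^d$ into the index windows $\dim\Pi_{k-1}^d< j\leq\dim\Pi_k^d$; once that correspondence is fixed, both inequalities follow from one-line monotonicity arguments.
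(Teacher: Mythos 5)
Your proposal is correct and follows essentially the same route as the paper's own proof: both identify the index window $\dim\Pi_{k-1}^d < j \leq \dim\Pi_k^d$ so that $\lambda_j = \left(\dim\Pi_k^d\right)^{-2s}$, obtain the upper bound from $j \leq \dim\Pi_k^d$, and obtain the lower bound from $\dim\Pi_{k-1}^d < j$ combined with $\dim\Pi_k^d \leq 2d\,\dim\Pi_{k-1}^d$ from \autoref{lemma:dimensional inequality}. Your treatment is in fact slightly more careful than the paper's, since you explicitly check the boundary level $k=0$ (i.e.\ $j=1$) and the $k=1$ instance of the dimensional inequality.
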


\begin{proof}
For the j-th eigenvalue $\lambda_{j}$ of the operator $L_{\omega,K}$ where $j\geq2$, there must exist some $ k_j\geq1$ such that $\dim\Pi_{k_j-1}^d<j\leq\dim\Pi_{k_j}^d$, so $\lambda_{j}=(\dim\Pi_{k_j}^d)^{-2s}$, we can obtain
$$\lambda_{j}=(\dim\Pi_{k_j}^d)^{-2s}\leq j^{-2s}\leq(\dim\Pi_{k_j-1}^d)^{-2s}.$$
Next, according to \autoref{lemma:dimensional inequality}, one can obtain
$$\frac{j^{-2s}}{\lambda_{j}}=\frac{j^{-2s}}{(\dim\Pi_{k_j}^d)^{-2s}}\leq
\frac{(\dim\Pi_{k_j-1}^d)^{-2s}}{(\dim\Pi_{k_j}^d)^{-2s}}\leq(2d)^{2s}.$$
So we have completed the proof.
\end{proof}
Let's consider the \autoref{the:convergence factor}.
\begin{proposition}\label{the:convergence factor}
For $d\geq2$ and $s>\frac{1}{2}$, for all $L_n\in\NN,\ x,x'\in\SS^{d-1}$, we have
$$\sum_{k=0}^{L_n}\left(\dim{\Pi_k^d}\right)^{-2s}|K_k(x,x')|\leq\sum_{k=0}^{L_n}\left(\dim{\Pi_k^d}\right)^{-2s}K_k(x,x)\leq
\sum_{k=1}^{\dim{\mathcal{H}_{L_n}}}k^{-2s}.$$
\end{proposition}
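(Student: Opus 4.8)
The plan is to establish the two inequalities separately, since every coefficient $\left(\dim\Pi_k^d\right)^{-2s}$ is strictly positive and so each bound reduces to a block-by-block comparison over $0\le k\le L_n$. For the left inequality it suffices to prove the pointwise bound $|K_k(x,x')|\le K_k(x,x)$ for each fixed $k$ and then multiply by $\left(\dim\Pi_k^d\right)^{-2s}$ and sum. The right inequality is where the real content lies: it is a combinatorial regrouping of a harmonic-type sum.

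For the first step, recall from \eqref{eq:def Kk} that $K_k$ is the reproducing kernel of $\left(\HH_k^d,\langle\cdot,\cdot\rangle_\omega\right)$, hence positive semidefinite, so Cauchy--Schwarz in $\HH_k^d$ gives $|K_k(x,x')|=\left|\langle K_k(x,\cdot),K_k(x',\cdot)\rangle_\omega\right|\le\sqrt{K_k(x,x)}\,\sqrt{K_k(x',x')}$. By the addition theorem (the content of \autoref{lemma:dimension}) the diagonal $K_k(x,x)=\sum_{j}Y_{k,j}(x)^2$ is constant on $\SS^{d-1}$, equal to $\dim\HH_k^d$ for $d\ge3$ and to a constant at most $1$ for $d=2$; in particular $K_k(x,x)=K_k(x',x')$, whence $|K_k(x,x')|\le K_k(x,x)$. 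Weighting and summing yields the left-hand bound, and in passing this same estimate records that $K_k(x,x)\le\dim\HH_k^d$ for every $d\ge2$, which feeds into the second step.

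For the right inequality, write $N_k:=\dim\Pi_k^d$ with the convention $N_{-1}=0$. The orthogonal decomposition $\Pi_k^d=\bigoplus_{0\le j\le k}\HH_j^d$ of \autoref{lem:decompose LLd and PPkd} gives $\dim\HH_k^d=N_k-N_{k-1}$ and $\dim\HH_{L_n}=\sum_{k=0}^{L_n}\dim\HH_k^d=N_{L_n}$, so the target right-hand side is exactly $\sum_{m=1}^{N_{L_n}}m^{-2s}$. Using $K_k(x,x)\le\dim\HH_k^d$, it then remains to prove
$$\sum_{k=0}^{L_n}N_k^{-2s}\,\dim\HH_k^d\le\sum_{m=1}^{N_{L_n}}m^{-2s}.$$
Here I would partition $\{1,\dots,N_{L_n}\}$ into the consecutive blocks $B_k=\{N_{k-1}+1,\dots,N_k\}$, each of size $N_k-N_{k-1}=\dim\HH_k^d$. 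Since $t\mapsto t^{-2s}$ is decreasing for $s>\tfrac12$ and every $m\in B_k$ satisfies $m\le N_k$, we have $m^{-2s}\ge N_k^{-2s}$, so $\sum_{m\in B_k}m^{-2s}\ge\left(\dim\HH_k^d\right)N_k^{-2s}$. Summing over $k$ and noting that the $B_k$ tile $\{1,\dots,N_{L_n}\}$ exactly gives the claim.

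The main obstacle is bookkeeping rather than analysis: one must verify that the blocks $B_k$ partition $\{1,\dots,N_{L_n}\}$ with no gaps or overlaps (equivalently $N_k-N_{k-1}=\dim\HH_k^d$ together with $N_{L_n}=\dim\HH_{L_n}$), and that the monotonicity comparison runs in the correct direction, so that the coarse weight $N_k^{-2s}$ repeated $\dim\HH_k^d$ times is dominated by the genuine partial sum $\sum_{m\in B_k}m^{-2s}$. The case $d=2$ requires only a brief remark, since there $K_k(x,x)$ is strictly smaller than $\dim\HH_k^2$, which merely strengthens the bound.
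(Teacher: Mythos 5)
Your proof is correct and follows essentially the same route as the paper: the first inequality is obtained exactly as in the paper via the reproducing property, Cauchy--Schwarz, and the constancy of the diagonal together with $K_k(x,x)\le\dim\HH_k^d$. For the second inequality, the paper reinterprets the weighted sum $\sum_{k=0}^{L_n}\left(\dim\Pi_k^d\right)^{-2s}\dim\HH_k^d$ as the sum of the first $\dim\Pi_{L_n}^d$ eigenvalues of $L_{\omega,K}$ and cites \autoref{lemma:LmuK eigenvalue restriction}, whose own proof is precisely your block-partition comparison, so you have merely inlined that lemma rather than invoking it.
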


\begin{proof}
By Corollary 1.2.7 and 1.6.1 in \cite{dai2013approximation}, when $d=2$, we have $K_k(x,x)=\frac{1}{2}\dim\HH_k^2$; when $d\geq3$, we have $K_k(x,x)=\dim\HH_k^d$. We can obtain 
\begin{equation*}
 \begin{aligned}
&\sum_{k=0}^{L_n}\left(\dim{\Pi_k^d}\right)^{-2s}|K_k(x,x')|=\sum_{k=0}^{L_n}\left(\dim{\Pi_k^d}\right)^{-2s}|\lan K_k(x,\cdot),K_k(x',\cdot)\ran_\omega|\\
\leq&\sum_{k=0}^{L_n}\left(\dim{\Pi_k^d}\right)^{-2s}\lVert K_k(x,\cdot)\rVert_\omega\cdot\lVert K_k(x',\cdot)\rVert_\omega=\sum_{k=0}^{L_n}\left(\dim{\Pi_k^d}\right)^{-2s}K_k(x,x)\\
\leq&\sum_{k=0}^{L_n}\left(\dim{\Pi_k^d}\right)^{-2s}\dim\HH_k^d.
\end{aligned}
\end{equation*}

The operator $L_{\omega,K}$, defined in \eqref{eq:covariance operator in omega}, has an orthonormal eigensystem which is $$\left\{\left(\dim{\Pi_k^d}\right)^{-2s},Y_{k,j}(x)\right\}_{1\leq j\leq\dim \HH_k^d,k\in\NN}.$$ There are exactly $\dim \HH_k^d$ eigenvalues with the value $\left(\dim{\Pi_k^d}\right)^{-2s}$. The term $$\sum_{k=0}^{L_n}\left(\dim{\Pi_k^d}\right)^{-2s}\dim\HH_k^d$$ represents the sum of the first $\dim\Pi_{L_n}^d$ eigenvalues of $L_{\omega,K}$.  Using \autoref{lemma:LmuK eigenvalue restriction}, we conclude that \autoref{the:convergence factor} holds naturally.
\end{proof}

\begin{proposition}\label{general_kernel}
If the kernel functions $K_{L_n}^a(x,x') = \sum_{k=0}^{L_n}a_kK_k(x,x')$ have positive component coefficients $a=\{a_k\}_{k\geq1}$ satisfying $$\lim_{k\rightarrow\infty}\frac{a_k}{k^{-2s(d-1)}} =l,$$ where $0<l<\infty$ and $s>\frac{1}{2}$, then $\{K_{L_n}^a(x,x')\}$ are well-defined. Furthermore, the kernel $K^a(x,x')= \sum_{k=0}^{\infty}a_kK_k(x,x')$ has eigenvalues $\{\lambda_j\}_{j\geq1}$ satisfying $\lim_{j\rightarrow\infty}\frac{\lambda_j}{j^{-2s}}=c$, where $0<c<\infty$.
\end{proposition}
\begin{proof}
First, we prove that the kernel is well-defined. By applying \autoref{decay_rate} and the conditions of the theorem, there exists a constant $l_1$ such that
$$a_k\leq l_1k^{-2s(d-1)}\leq 2^{2sd}l_1\left(\dim\Pi_k^d\right)^{-2s}.$$
Therefore, applying \autoref{the:convergence factor}, we obtain the following equation:
\begin{equation*}
 \begin{aligned}
|K_{L_n}^a(x,x')| &\leq \sum_{k=0}^{L_n}a_k|K_k(x,x')|\leq 2^{2sd}l_1\sum_{k=0}^{L_n}\left(\dim\Pi_k^d\right)^{-2s}|K_k(x,x')|\\
&\leq\sum_{k=1}^{\dim{\mathcal{H}_{L_n}}}k^{-2s}\leq \sum_{k=1}^{\infty}k^{-2s}<\infty.
\end{aligned}
\end{equation*}
Consider the covariance operator $L_{K^a}$  corresponding to the kernel $K^a(x,x')$ as
\begin{equation*}
L_{K^a}: \LLd \to \LLd, \quad f \to \frac{1}{\Omega_{d-1}} \int_{\SS^{d-1}} f(x) K^a(x, \cdot) \, d\omega(x).
\end{equation*}
 The definition of the kernel $K^a(x,x')$ implies that there exists an orthonormal eigensystem $\left\{\left(a_k, Y_{k,j}(x)\right) \right\}_{1 \leq j \leq \dim \HH_k^d, k \geq 0}$ of the operator $L_{K^a}$ in $\LLd$. There exists constants $A_1$ and $A_2$ such that
$$A_1 k^{-2s(d-1)}\leq a_k\leq A_2k^{-2s(d-1)},$$
then, applying \autoref{decay_rate}, one has
$$A_1\left((d-1)!\right)^{-2s}\left(\dim\Pi_k^d\right)^{-2s}\leq a_k\leq A_22^{2sd}\left(\dim\Pi_k^d\right)^{-2s}.$$
Finally, similar to the proof of \autoref{lemma:LmuK eigenvalue restriction}, it can be shown that the eigenvalues exhibit the decay rate stated in the proposition.
\end{proof}

\begin{proposition}\label{kernel2}
The Hilbert space $(\HH_k^d,\lan\cdot,\cdot\ran_K)$ is reproducing kernel Hilbert space with kernel $\left(\dim\Pi_k^d\right)^{-2s}K_k(x,x')$.
\end{proposition}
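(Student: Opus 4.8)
The plan is to verify directly the two defining properties of a reproducing kernel for the Hilbert space $(\HH_k^d,\langle\cdot,\cdot\rangle_K)$: first, that for each fixed $x\in\SS^{d-1}$ the function $\left(\dim\Pi_k^d\right)^{-2s}K_k(x,\cdot)$ lies in $\HH_k^d$; and second, that it reproduces point evaluation under $\langle\cdot,\cdot\rangle_K$. Because $\HH_k^d$ is finite-dimensional, every point-evaluation functional is automatically bounded, so a reproducing kernel exists and the task reduces to identifying it.

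First I would settle membership. By \eqref{eq:def Kk}, $K_k(x,\cdot)=\sum_{j=1}^{\dim\HH_k^d}Y_{k,j}(x)Y_{k,j}(\cdot)$ is a finite linear combination of the basis functions $Y_{k,j}$ and therefore belongs to $\HH_k^d$; scaling by the constant $\left(\dim\Pi_k^d\right)^{-2s}$ preserves this. For the reproducing property, the quickest route invokes the projection identity \eqref{eq:projfk}: for $f\in\HH_k^d$, the orthogonal decomposition $\HH_K=\bigoplus_{m\geq0}\HH_m^d$ forces $\text{proj}_k(f)=f$, so \eqref{eq:projfk} reads $f(x)=\langle f,\left(\dim\Pi_k^d\right)^{-2s}K_k(x,\cdot)\rangle_K$, which is exactly what we want.

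To keep the argument self-contained I would also verify this in coordinates: writing $f=\sum_j f_{k,j}Y_{k,j}$, the $Y_{k,j}$-coefficients of $\left(\dim\Pi_k^d\right)^{-2s}K_k(x,\cdot)$ are $\left(\dim\Pi_k^d\right)^{-2s}Y_{k,j}(x)$, so the inner-product formula \eqref{eq:Hkd inner product in K} yields
$$\left\langle f,\left(\dim\Pi_k^d\right)^{-2s}K_k(x,\cdot)\right\rangle_K=\left(\dim\Pi_k^d\right)^{2s}\sum_{j}f_{k,j}\left(\dim\Pi_k^d\right)^{-2s}Y_{k,j}(x)=\sum_{j}f_{k,j}Y_{k,j}(x)=f(x).$$
Equivalently, one may observe that $\left\{\left(\dim\Pi_k^d\right)^{-s}Y_{k,j}\right\}_j$ is an orthonormal basis of $(\HH_k^d,\langle\cdot,\cdot\rangle_K)$, whence the reproducing kernel of this finite-dimensional RKHS equals $\sum_j\left(\left(\dim\Pi_k^d\right)^{-s}Y_{k,j}(x)\right)\left(\left(\dim\Pi_k^d\right)^{-s}Y_{k,j}(x')\right)=\left(\dim\Pi_k^d\right)^{-2s}K_k(x,x')$.

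Symmetry and positive semidefiniteness of the candidate kernel are inherited from those of $K_k$, multiplied by a positive constant. I expect no genuine obstacle here; the only point requiring care is the bookkeeping of the factor $\left(\dim\Pi_k^d\right)^{-2s}$, which enters through the kernel definition and is matched against the weight $\left(\dim\Pi_k^d\right)^{2s}$ carried by $\langle\cdot,\cdot\rangle_K$, so that the two scalings cancel cleanly and leave the reproducing identity exact.
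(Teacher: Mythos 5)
Your proof is correct, but it takes a genuinely different route from the paper's. You fix the Hilbert space $(\HH_k^d,\langle\cdot,\cdot\rangle_K)$ and verify the two reproducing-kernel axioms directly: membership of $\left(\dim\Pi_k^d\right)^{-2s}K_k(x,\cdot)$ in $\HH_k^d$ (immediate, since $K_k(x,\cdot)$ is a finite linear combination of the $Y_{k,j}$), and the reproducing identity, which you check in coordinates against the inner product \eqref{eq:Hkd inner product in K}, where the factor $\left(\dim\Pi_k^d\right)^{-2s}$ in the kernel cancels the weight $\left(\dim\Pi_k^d\right)^{2s}$ in the inner product. The paper's appendix instead argues from the kernel side: it treats $\left(\dim\Pi_k^d\right)^{-2s}K_k$ as a Mercer kernel, introduces the covariance operator $L_{\omega,K_k}$ with orthonormal eigensystem $\left\{\left(\dim\Pi_k^d\right)^{-2s},Y_{k,j}\right\}_{1\leq j\leq\dim\HH_k^d}$, and invokes the spectral characterization of the RKHS of a Mercer kernel (Chapter 3 of \cite{cucker2002mathematical}) to conclude that the RKHS $\mathcal{H}_{K_k}$ generated by this kernel coincides, as a Hilbert space with its inner product, with $(\HH_k^d,\langle\cdot,\cdot\rangle_K)$. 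Your argument is more elementary and self-contained: it needs no integral-operator machinery or external characterization, only the explicit orthonormal basis $\left\{\left(\dim\Pi_k^d\right)^{-s}Y_{k,j}\right\}_j$, and in finite dimensions this suffices since point evaluations are automatically bounded, as you note. What the paper's route buys is uniformity of exposition: the same spectral characterization is used for $\HH_K$ in \eqref{eq:presentation of HH_K} and later for the operators built from $K_{L_n}^T$, so all RKHS identifications rest on a single cited result. Your appeal to \eqref{eq:projfk} is legitimate and non-circular, since the paper derives \eqref{eq:projfk} from the orthonormal-basis expansion before the proposition is stated; indeed the main text itself remarks that the reproducing property follows from \eqref{eq:projfk}, so your route is essentially the one sketched in the body of the paper, while the appendix formalizes the kernel-side argument.
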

\begin{proof}
We have $\left(\dim\Pi_k^d\right)^{-2s}K_k(x,x')$ is the Mercer kernel and the underlying RKHS we denote by $(\mathcal{H}_{K_k},\langle\cdot,\cdot\rangle_{\mathcal{H}_{K_k}})$. We also define covariance operator
\begin{equation*}
L_{\omega,K_k}: \LLd\to\LLd, \ f\to\frac{1}{\Omega_{d-1}}\int_{\SS^{d-1}}f(x)\left(\dim\Pi_k^d\right)^{-2s}K_k(x,\cdot)d\omega(x).
\end{equation*}
The definition of kernel $\left(\dim\Pi_k^d\right)^{-2s}K_k(x,x')$ implies there exist an orthonormal eigensystem which is $\left\{\left(\dim{\Pi_k^d}\right)^{-2s},Y_{k,j}(x)\right\}_{1\leq j\leq\dim \HH_k^d}$ of operator $L_{\omega,K_k}$ in $\LLd$. By Chapter 3 of \cite{dai2013approximation}, $\mathcal{H}_{K_k}$ can be characterized as
\begin{equation*}
\mathcal{H}_{K_k}=\left\{f=\sum_{1\leq j\leq\dim\HH_k^{d}} f_{k,j}Y_{k,j}\,\Bigg\vert\,\left(\dim\Pi_k^{d}\right)^{2s}\sum_{1\leq j\leq\dim\HH_k^{d}} \left(f_{k,j}\right)^2<\infty\right\},
\end{equation*}
with the inner product
$$\lan f,g\ran_{\mathcal{H}_{K_k}}= \left(\dim\Pi_k^{d}\right)^{2s}\sum_{1\leq j\leq\dim\HH_k^{d}}f_{k,j}\cdot g_{k,j}.$$
Thus $\left(\mathcal{H}_{K_k},\langle\cdot,\cdot\rangle_{\mathcal{H}_{K_k}}\right)=\left(\HH_k^d,\lan\cdot,\cdot\ran_K\right)$ and $\left(\HH_k^d,\lan\cdot,\cdot\ran_K\right)$ is reproducing kernel Hilbert space with kernel $\left(\dim\Pi_k^d\right)^{-2s}K_k(x,x')$.
\end{proof}
\begin{proposition}\label{prop:gradient}
We assume $g_x$ is unbiased estimation of the gradient of $F(x)$ at $x\in\HH$. If $x\in \mathcal{H}_{L_n}$, then $\EE\left[\text{Proj}_{\mathcal{H}_{L_n}}(g_x)\right]$ is gradient of $F(x)$ in $\mathcal{H}_{L_n}$.
\end{proposition}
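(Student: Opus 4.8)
The plan is to reduce the statement to two elementary facts: that the gradient of the restriction $F|_{\mathcal{H}_{L_n}}$ at a point $x \in \mathcal{H}_{L_n}$ is exactly the orthogonal projection onto $\mathcal{H}_{L_n}$ of the ambient gradient $\nabla F(x)$, and that expectation commutes with the bounded linear projection operator $\text{Proj}_{\mathcal{H}_{L_n}}$. The convexity of $F$ plays no essential role here; only differentiability is used.

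First I would make precise what ``gradient of $F$ in $\mathcal{H}_{L_n}$'' means and identify it. Fix an arbitrary test direction $h \in \mathcal{H}_{L_n}$. Since $\mathcal{H}_{L_n}$ is a closed (finite-dimensional) subspace and $x \in \mathcal{H}_{L_n}$, the line $t \mapsto x + th$ stays inside $\mathcal{H}_{L_n}$, so the directional derivative of the restriction $F|_{\mathcal{H}_{L_n}}$ at $x$ along $h$ coincides with the ambient directional derivative $\langle \nabla F(x), h \rangle$. Using that the orthogonal projection is self-adjoint and idempotent, together with $h = \text{Proj}_{\mathcal{H}_{L_n}}(h)$, I would rewrite
$$\langle \nabla F(x), h \rangle = \langle \nabla F(x), \text{Proj}_{\mathcal{H}_{L_n}}(h) \rangle = \langle \text{Proj}_{\mathcal{H}_{L_n}}(\nabla F(x)), h \rangle.$$
Because this holds for every $h \in \mathcal{H}_{L_n}$ and the inner product on $\mathcal{H}_{L_n}$ is inherited from $\mathcal{H}$, the Riesz representative of the derivative of $F|_{\mathcal{H}_{L_n}}$ at $x$ is precisely $\text{Proj}_{\mathcal{H}_{L_n}}(\nabla F(x))$. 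This establishes that the gradient of $F$ in $\mathcal{H}_{L_n}$ at $x$ equals $\text{Proj}_{\mathcal{H}_{L_n}}(\nabla F(x))$.

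Next I would invoke the unbiasedness hypothesis $\EE[g_x] = \nabla F(x)$ and pass the projection through the expectation. Since $\text{Proj}_{\mathcal{H}_{L_n}}$ is a bounded linear operator on $\mathcal{H}$ (indeed a contraction) and $g_x$ is integrable as a Hilbert-space-valued random variable, the operator commutes with the expectation, giving
$$\EE\left[\text{Proj}_{\mathcal{H}_{L_n}}(g_x)\right] = \text{Proj}_{\mathcal{H}_{L_n}}\left(\EE[g_x]\right) = \text{Proj}_{\mathcal{H}_{L_n}}(\nabla F(x)).$$
Combining this with the first step shows $\EE[\text{Proj}_{\mathcal{H}_{L_n}}(g_x)]$ equals the gradient of $F$ in $\mathcal{H}_{L_n}$ at $x$, which is the claim. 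The only genuine subtleties are the definition of the restricted gradient, handled cleanly by the closedness and finite dimensionality of $\mathcal{H}_{L_n}$, and the interchange of expectation and projection, which rests on the standard fact that bounded linear maps commute with the Bochner expectation and requires merely that $g_x$ have a finite first moment, implicit in its being an unbiased estimator.
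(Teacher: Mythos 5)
Your proof is correct and follows essentially the same route as the paper's: the paper verifies the Fr\'{e}chet-derivative limit directly after writing $g_x = \text{Proj}_{\mathcal{H}_{L_n}}(g_x) + h_x$ with $h_x \in \mathcal{H}_{L_n}^{\perp}$ and dropping the orthogonal term, which is exactly your combination of the identity $\nabla\left(F|_{\mathcal{H}_{L_n}}\right)(x) = \text{Proj}_{\mathcal{H}_{L_n}}\left(\nabla F(x)\right)$ with the commutation of expectation and projection. The organization differs only cosmetically; both arguments rest on the same two ingredients, namely the self-adjointness/orthogonality of the projection and the linearity of the Bochner expectation.
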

\begin{proof}
We denote the orthogonal complement of $\mathcal{H}_{L_n}$ is $\mathcal{H}_{L_n}^\perp$ and we interpret $h_x\in \mathcal{H}_{L_n}^\perp$ as $h_x=g_x-\text{Proj}_{\mathcal{H}_{L_n}}(g_x)$. By the fact that $\E[g_x]$ is the gradient of $F(x)$, for $h\in \mathcal{H}_{L_n}$, we have
\begin{equation*}
\begin{aligned}
0=&\lim_{h\rightarrow0}\frac{F(x+h)-F(x)-\lan \E[g_x],h\ran}{\Vert h\Vert}\\
=&\lim_{h\rightarrow0}\frac{F(x+h)-F(x)-\lan \E\l[\text{Proj}_{\mathcal{H}_{L_n}}(g_x)+h_x\r],h\ran}{\Vert h\Vert}\\
=&\lim_{h\rightarrow0}\frac{F(x+h)-F(x)-\lan \E\l[\text{Proj}_{\mathcal{H}_{L_n}}(g_x)\r],h\ran}{\Vert h\Vert}.
\end{aligned}
\end{equation*}
So $\EE\left[\text{Proj}_{\mathcal{H}_{L_n}}(g_x)\right]$ is the gradient of $F(x)$ in $\mathcal{H}_{L_n}$. In this way we construct a new stochastic optimization algorithm that does not require a kernel framework, and the convergence of T-SGD will not be discussed in more depth in this paper.
\end{proof}

\section{Proof of \autoref{theorem:constant step size}}\label{AA}

In \autoref{AA}, we prove the \autoref{theorem:constant step size}. First we give the concepts and conclusions necessary for the proof.

\subsection{Concepts and Operators}\label{A1}

We use $f_{L_n}$ to denote the projection of $f^*$ in $(\mathcal{H}_{L_n},\langle\cdot,\cdot\rangle_\omega)$ and use $f_{L_n}^*$ to denote the projection of $f^*$ in $(\mathcal{H}_{L_n},\langle\cdot,\cdot\rangle_{\mu_X})$. Recall the definition of $\mathcal{H}_{L_n}$ in \autoref{subsec: SNR} \eqref{eq:the reproducing property of HH_L_n}. The $\left(\mathcal{H}_{L_n},\lan\cdot,\cdot\ran_K\right)$ is a subspace of $\HH_K$ with the inner product
$$\langle f,g\rangle_K=\sum_{k=0}^{L_n}\left(\dim\Pi_k^d\right)^{2s}\sum_{j=1}^{\dim\HH_k^d}f_{k,j}g_{k,j},$$
where $f=\sum_{k=0}^{L_n}\sum_{j=1}^{\dim\HH_k^d}f_{k,j}Y_{k,j},\ g=\sum_{k=0}^{L_n}\sum_{j=1}^{\dim\HH_k^d}g_{k,j}Y_{k,j}\in \mathcal{H}_{L_n}$.
We assume $A,B$ are self-adjoint operators on space ($\mathcal{H}_{L_n}$, $\langle\cdot,\cdot\rangle_K$) and use $A\preceq B$ donating $\langle f,(B-A)f\rangle_K\geq0$ for all $f\in \mathcal{H}_{L_n}$. 

We denote $K_{X_i,L_i}^T=K_{L_i}^T(X_i,\cdot)$ and $K_{x,L_i}^T=K_{L_i}^T(x,\cdot)$, we also denote  $K_{X_i,\infty}^T=K(X_i,\cdot)$ and $K_{x,\infty}^T=K(x,\cdot)$. We also note that $I$ is the identity operator. Below, we define several operators. The first of these was introduced in \eqref{eq:operator Lmu Lomega}. The operator  $f\otimes g$ is interpreted as $f\otimes g(h)=\langle h,g\rangle_Kf$ for $h,f,g\in \mathcal{H}_{L_n}$.  Here, $n$ is fixed, and $i$ satisfies $i\leq n$. 
\begin{equation}\label{eq:operator A1}
                \begin{aligned}
                    L_{\mu_X,L_i}:\ \mathcal{H}_{L_n}\to \mathcal{H}_{L_i}, \quad&g\to\int_{\SS^{d-1}}\left\langle g,K_{x',L_i}^T\right\rangle_K K_{x,L_i}^T(x')d\mu_X(x').\\
                    K_{X_i,L_i}^T\otimes K_{X_i,L_n}^T:\ \mathcal{H}_{L_n}\to \mathcal{H}_{L_i},\quad&g\to g(X_i)K_{X_i,L_i}^T.\\
                    K_{X_i,L_i}^T\otimes K_{X_i,L_i}^T:\ \mathcal{H}_{L_n}\to \mathcal{H}_{L_i},\quad&g\to \left\langle g,K_{X_i,L_i}^T\right\rangle_K K_{X_i,L_i}^T.
                \end{aligned} 
            \end{equation}
We can observe that $K_{X_i,L_i}^T\otimes K_{X_i,L_i}^T,\ L_{\mu_X,L_i}$ are semi-positive definite and self-adjoint operators in $(\mathcal{H}_{L_n},\langle\cdot,\cdot\rangle_K)$. We can also remark that $L_{\mu_X,L_i}$ has an orthonormal eigensystem $\{\sigma_{j,L_i}, \phi_{j,L_i}\}_{1\leq j\leq\dim \mathcal{H}_{L_i} }$ where $\{\phi_{j,L_i}\}_{1\leq j\leq\dim \mathcal{H}_{L_i} }\subset \mathcal{H}_{L_i}$.
If we define $L_{\mu_X,L_i}^{\frac{1}{2}}$ as such operator
\begin{equation*}
\begin{aligned}
L_{\mu_X,L_i}^{\frac{1}{2}}:\ \mathcal{H}_{L_i}+\sum_{k={L_i+1}}^{L_n}\HH_k^d&\to \mathcal{H}_{L_i},\\
\sum_{j=1}^{\dim \mathcal{H}_{L_i}}f_{j,L_i}\phi_{j,L_i}+\sum_{k=L_i+1}^{L_n}\sum_{l=1}^{\dim\HH_k^d}f_{k,l}Y_{k,l}&\to
\sum_{j=1}^{\dim \mathcal{H}_{L_i}}\sigma_{j,L_i}^{1/2}f_{j,L_i}\phi_{j,L_i},
\end{aligned}
\end{equation*}
we can verify that $L_{\mu_X,L_i}^{\frac{1}{2}}|_{\mathcal{H}_{L_i}}$ is an isometry homomorphism from $(\mathcal{H}_{L_i},\langle\cdot,\cdot\rangle_{\mu_X})$ to $(\mathcal{H}_{L_i},\langle\cdot,\cdot\rangle_K)$ and $L_{\mu_X,L_i}^{\frac{1}{2}}\circ L_{\mu_X,L_i}^{\frac{1}{2}}=L_{\mu_X,L_i}$.
\subsection{Error Decomposition}\label{A2}
In the following we discuss the error of the quantity $\E\left[\left\Vert \bar{f}_n-f^*\right\Vert^2_{\mu_X}\right]$. In this section we assume that $n\in\NN$ is fixed and $0\leq i\leq n$. We give the new error decomposition which is to decompose $\bar{f}_n-f^*$ into $(\bar{f}_n-f_{L_n}^*)+(f_{L_n}^*-f^*)$.
\begin{equation*}
\begin{split}
\E\left[\left\Vert \bar{f}_n-f^*\right\Vert^2_{\mu_X}\right]&= \E\left[\left\Vert \bar{f}_n-f_{L_n}^*\right\Vert^2_{\mu_X}\right]+\E\left[\left\Vert f_{L_n}^*-f^*\right\Vert^2_{\mu_X}\right]\\
&= \E\left[\left\Vert \bar{f}_n-f_{L_n}^*\right\Vert^2_{\mu_X}\right]+\left\Vert f_{L_n}^*-f^*\right\Vert^2_{\mu_X}.
\end{split}
\end{equation*}
If we assume that $f^*=\sum_{k=0}^\infty\sum_{j=1}^{\dim\HH_k^d}f_{k,j}Y_{k,j}$ and then consider the error of $\left\Vert f_{L_n}^*-f^*\right\Vert^2_{\mu_X}$,
 \begin{equation}\label{eq:A 1.1}
        \begin{aligned}
            &\quad\left\Vert f_{L_n}^*-f^*\right\Vert^2_{\mu_X}\\
&\overset{\text{(i)}}{\leq}\left\Vert f_{L_n}-f^*\right\Vert^2_{\mu_X}\leq M_u\cdot\left\Vert f_{L_n}-f^*\right\Vert^2_{\omega}\\
&\overset{\text{(ii)}}{\leq} M_u\cdot\left\Vert\sum_{k=L_{n}+1}^\infty\sum_{j=1}^{\dim\HH_k^d}f_{k,j}Y_{k,j}\right\Vert^2_{\omega}=\frac{M_u}{\Omega_{d-1}}\int_{\SS^{d-1}}\left(\sum_{k=L_{n}+1}^\infty
\sum_{j=1}^{\dim\HH_k^d}f_{k,j}Y_{k,j}\right)^2d\omega\\
&=\frac{M_u}{\Omega_{d-1}}\int_{\SS^{d-1}}\sum_{k=L_{n}+1}^\infty\sum_{j=1}^{\dim\HH_k^d}(f_{k,j})^2(Y_{k,j})^2d\omega=
M_u\sum_{k=L_{n}+1}^\infty\sum_{j=1}^{\dim\HH_k^d}(f_{k,j})^2\\
&\leq M_u\left(\dim\Pi_{L_n}^d\right)^{-4sr}\sum_{k=L_{n}+1}^\infty\left(\dim\Pi_k^d\right)^{4sr}\sum_{j=1}^{\dim\HH_k^d}(f_{k,j})^2\leq M_un^{-4\theta sr}\Vert u^*\Vert_\omega^2,
        \end{aligned}
    \end{equation}
where (i) because of the definition of projection,
$$f^*_{L_n}=\arg\min_{f\in \mathcal{H}_{L_n}}\Vert f-f^*\Vert^2_{\mu_X},$$
and (ii) by using $\mathcal{H}_{L_n}=\bigoplus_{l=0}^{L_n}\HH_l^d=\text{span}\left\{Y_{k,l}\right\}_{1\leq k\leq\dim\HH_l^d,0\leq l\leq L_n}$. So we have $\Vert f_{L_n}^*-f^*\Vert^2_{\omega}\leq M_un^{-4\theta sr}\Vert u^*\Vert_\omega^2$. 

Next, we decompose $\bar{f}_n-f_{L_n}^*$ into noiseless term $\beta_i$ and noisy term $\xi_i$. We interpreted $\alpha_i$ and $\bar{\alpha}_n$ as
$$\alpha_i=\hat{f}_i-f_{L_n}^*\quad \text{and}\quad \bar{\alpha}_n=\bar{f}_n-f_{L_n}^*=\frac{1}{n+1}\sum_{i=0}^n(\hat{f}_i-f_{L_n}^*).$$
From the iterative formula and the fact that $\hat{f}_i\in\mathcal{H}_{L_i}$, one has
\begin{equation*}
\begin{aligned}
\alpha_0=&-f_{L_n}^*,\\
\alpha_i=&\hat{f}_i-f_{L_n}^*\\
=&\hat{f}_{i-1}-f_{L_n}^*+\gamma_i\left(f_{L_n}^*(X_i)-\hat{f}_{i-1}(X_i)\right)K_{X_i,L_i}^T+
\gamma_i\left(\Y_i-f_{L_n}^*(X_i)\right)K_{X_i,L_i}^T\\
=&\alpha_{i-1}-\gamma_i\alpha_{i-1}(X_i)K_{X_i,L_i}^T+\gamma_i\mathcal{B}_i\\
=&\left(I-\gamma_iK_{X_i,L_i}^T\otimes K_{X_i,L_n}^T\right)\alpha_{i-1}+\gamma_{i}\mathcal{B}_{i},
\end{aligned}
\end{equation*}
where $\mathcal{B}_i=(\Y_i-f_{L_n}^*(X_i))K_{X_i,L_i}^T$ and $I$ is identity operator. We interpreted $\beta_i$ as noiseless term and $\xi_i$ as noise term such that $\alpha_i=\beta_i+\xi_i$.
 \begin{equation}\label{eq:A 1.2}
        \begin{aligned}
           &\beta_0=-f_{L_n}^*,\quad&&\beta_i=(I-\gamma_iK_{X_i,L_i}^T\otimes K_{X_i,L_n}^T)\beta_{i-1}.\\
&\xi_0=0\quad,&&\xi_i=\left(I-\gamma_iK_{X_i,L_i}^T\otimes K_{X_i,L_n}^T\right)\xi_{i-1}+\gamma_i\mathcal{B}_i.
        \end{aligned}
    \end{equation}
Then, we verify that $\alpha_i=\beta_i+\xi_i$ by induction. For $i=0$, it is clear that there is $\alpha_0=\beta_0+\xi_0$. If we assume the result holds for $i-1$, for $i$, we have
\begin{equation}\label{eq:A 1.2.1}
        \begin{aligned}
 \alpha_i&=\left(I-\gamma_iK_{X_i,L_i}^T\otimes K_{X_i,L_n}^T\right)\alpha_{i-1}+\gamma_{i}\mathcal{B}_{i}\\
& =
\left(I-\gamma_iK_{X_i,L_i}^T\otimes K_{X_i,L_n}^T\right)(\beta_{i-1}+\xi_{i-1})+\gamma_{i}\mathcal{B}_{i}
=\beta_i+\xi_i.
        \end{aligned}
    \end{equation}
Then \eqref{eq:A 1.2.1} implies that
$$\bar{\beta_n}=\frac{1}{n+1}\sum_{i=0}^n\beta_i,\ \quad \bar{\xi_n}=\frac{1}{n+1}\sum_{i=0}^n\xi_i.$$
And we have inequality
$$\E\left[\left\Vert\bar{f}_n-f_{L_n}^*\right\Vert_{\mu_X}^2\right]=\E\left[\left\Vert\bar{\alpha}_n\right\Vert_{\mu_X}^2\right]\leq
2\E\left[\left\Vert\bar{\beta}_n\right\Vert_{\mu_X}^2\right]+2\E\left[\left\Vert\bar{\xi}_n\right\Vert_{\mu_X}^2\right].$$
Eventually we can get the error decomposition
\begin{equation}\label{eq:A.1.4}
\E\left[\left\Vert\bar{f}_n-f^*\right\Vert_{\mu_X}^2\right]\leq M_un^{-4\theta sr}\Vert u^*\Vert_\omega^2+2\E\left[\left\Vert\bar{\beta}_n\right\Vert_{\mu_X}^2\right]+2\E\left[\left\Vert\bar{\xi}_n\right\Vert_{\mu_X}^2\right].
\end{equation}

\subsection{Estimating Noiseless Term}\label{A3}

In this section, we establish an upper bound for $\E\left[\left\Vert\bar{\beta}_n\right\Vert_{\mu_X}^2\right]$. We first consider the norm of $\beta_i$,
\begin{equation}\label{eq:A.2.1}
                \begin{aligned}
\Vert\beta_i\Vert_K^2&=\lan\beta_{i-1}-\gamma_i\beta_{i-1}(X_i)K_{X_i,L_i}^T,
\beta_{i-1}-\gamma_i\beta_{i-1}(X_i)K_{X_i,L_i}^T\ran_K\\
&=\Vert\beta_{i-1}\Vert_K^2-2\gamma_i\beta_{i-1}(X_i)\lan\beta_{i-1},K_{X_i,L_i}^T\ran_K+
\gamma_i^2(\beta_{i-1}(X_i))^2\left\Vert K_{X_i,L_i}^T\right\Vert_K^2.
                \end{aligned} 
            \end{equation}
By \autoref{lemma:Kernel bound}, there is $\gamma_i\left\Vert K_{X_i,L_i}^T\right\Vert_K^2\leq\gamma_i\frac{2s}{2s-1}=\gamma_0\frac{2s}{2s-1}<1$. Taking the expectation on \eqref{eq:A.2.1}, one can obtain
\begin{equation*}
\begin{aligned}
&\E[\Vert\beta_i\Vert_K^2]\\
\leq &\E\left[\Vert\beta_{i-1}\Vert_K^2\right]+2\gamma_i\E\left[\beta_{i-1}(X_i)\lan\beta_{i-1},K_{X_i,\infty}^T-K_{X_i,L_i}^T-
K_{X_i,\infty}^T\ran_K\right]+\gamma_i\E\left[(\beta_{i-1}(X_i))^2\right]\\
=&\E\l[\left\Vert\beta_{i-1}\right\Vert_K^2\r]+2\gamma_i\E\left[\beta_{i-1}(X_i)\lan\beta_{i-1},K_{X_i,\infty}^T-
K_{X_i,L_i}^T\ran_K\right]-\gamma_i\E\l[(\beta_{i-1}(X_i))^2\r]\\
\overset{\text{(i)}}{=}&\E\l[\lVert\beta_{i-1}\rVert_K^2\r]+2\gamma_i\E\left[\left(\frac{1}{\sqrt{2}}\beta_{i-1}(X_i)\right)
\left(\sqrt{2}\langle\beta_{i-1},K_{X_i,\infty}^T-K_{X_i,L_i}^T\rangle_K\right)\right]-\gamma_i\E\l[\lVert\beta_{i-1}\rVert_{\mu_X}^2\r]\\
\overset{\text{(ii)}}{\leq}& \E\l[\lVert\beta_{i-1}\rVert_K^2\r]+\frac{\gamma_i}{2}\E\l[\lVert\beta_{i-1}\rVert^2_{\mu_X}\r]+
2\gamma_i\E\l[\lan\beta_{i-1},K_{X_i,\infty}^T-K_{X_i,L_i}^T\ran_K^2\r]-\gamma_i\E\l[\lVert\beta_{i-1}\rVert_{\mu_X}^2\r]\\
=&\E\l[\lVert\beta_{i-1}\rVert_K^2\r]+
2\gamma_i\E\l[\lan\beta_{i-1},K_{X_i,\infty}^T-K_{X_i,L_i}^T\ran_K^2\r]
-\frac{\gamma_i}{2}\E\l[\lVert\beta_{i-1}\rVert_{\mu_X}^2\r].
\end{aligned}
\end{equation*}
Here (i) is due to the property of conditional expectation  
$$\E\l[(\beta_{i-1}(X_i))^2\r]=\E\left[ \E\l[(\beta_{i-1}(X_i))^2|\DD_{i-1}\r]\right]=\E\l[\Vert\beta_{i-1}\Vert_{\mu_X}^2\r].$$
where $\DD_{i-1}$ is a $\sigma-$field and denoted by $\DD_{i-1}=\sigma\{(X_1,\Y_1),\dots,(X_{i-1},\Y_{i-1})\}$. There (ii) is based on the inequality $2ab\leq a^2+b^2$,\ for $a,\ b\in\RR$. Then we can obtain recursion of $\E\left[\left\Vert\beta_n\right\Vert_K^2\right]$,
\begin{equation*}
\begin{aligned}
&\E\left[\left\Vert\beta_n\right\Vert_K^2\right]\\
\leq& \E\left[\left\Vert\beta_{n-1}\right\Vert_K^2\right]+2\gamma_n\E\left[\lan\beta_{n-1},K_{X_n,\infty}^T-K_{X_n,L_n}^T\ran_K^2\right]
-\frac{\gamma_n}{2}\E\left[\left\Vert\beta_{n-1}\right\Vert_{\mu_X}^2\right]\\
\leq& \E\l[\lVert\beta_{0}\rVert_K^2\r]+
2\sum_{i=1}^n\gamma_i\E\l[\lan\beta_{i-1},K_{X_i,\infty}^T-K_{X_i,L_i}^T\ran_K^2\r]
-\sum_{i=1}^n\frac{\gamma_i}{2}\E\l[\lVert\beta_{i-1}\rVert_{\mu_X}^2\r].
\end{aligned}
\end{equation*}
Using $\gamma_i=\gamma_0$, we can obtain
$$\frac{\gamma_0}{2}\sum_{i=1}^n\E\l[\lVert\beta_{i-1}\rVert_{\mu_X}^2\r]\leq\lVert f_{L_n}^*\rVert_K^2+2\gamma_0\sum_{i=1}^n\E\l[\lan\beta_{i-1},K_{X_i,\infty}^T-K_{X_i,L_i}^T\ran_K^2\r].$$
Since $\nu(\cdot)=\Vert\cdot\Vert_{\mu_X}^2$ is convex, by Jensen's inequality which is $\nu(\frac{1}{n}\sum_{i=1}^nx_i)\leq\frac{1}{n}\sum_{i=1}^n\nu(x_i)$,\ for $x_i\in\RR, n\in\NN$, one has
\begin{equation}\label{eq:A.2.2}
                \begin{aligned}
                    \frac{\gamma_0}{2}\E\l[\lVert\bar{\beta}_{n-1}\rVert_{\mu_X}^2\r]&\leq
\frac{\gamma_0}{2n}\sum_{i=1}^n\E\l[\lVert\beta_{i-1}\rVert_{\mu_X}^2\r]\\
&\leq\frac{1}{n}\lVert f_{L_n}^*\rVert_K^2
+\frac{2\gamma_0}{n}\sum_{i=1}^n\E\l[\lan\beta_{i-1},K_{X_i,\infty}^T-K_{X_i,L_i}^T\ran_K^2\r],\\
\E\l[\lVert\bar{\beta}_{n-1}\rVert_{\mu_X}^2\r]&\leq\frac{2}{\gamma_0n}\lVert f_{L_n}^*\rVert_K^2
+\frac{4}{n}\sum_{i=1}^n\E\l[\lan\beta_{i-1},K_{X_i,\infty}^T-K_{X_i,L_i}^T\ran_K^2\r].
                \end{aligned} 
            \end{equation}

Then next by \autoref{lemma:dadaxiang}, for $0<\theta<\frac{1}{4sr}$, we have
\begin{equation}\label{eq:A.2.3}
\sum_{i=1}^n\E\l[\lan\beta_{i-1},K_{X_i,\infty}^T-K_{X_i,L_i}^T\ran_K^2\r]\leq\left(\frac{4}{1-4\theta sr} + 8M_lM_u\right)n^{1-4\theta sr}\lVert u^*\rVert_\omega^2.
\end{equation}
And by \autoref{lemma: fLn,fLn* error}, one can obtain 
\begin{equation}\label{eq:A.2.4}
                \begin{aligned}
                    &\lVert f_{L_n}^*\rVert_K^2\\
                    \leq& 2\lVert f_{L_n}- f_{L_n}^*\rVert_K^2 + 2\lVert f_{L_n}\rVert_K^2\leq 8M_lM_u(2d)^{2s}\lVert u^*\rVert_\omega^2+2\lVert f_{L_n}\rVert_K^2\\
\leq& 8M_lM_u(2d)^{2s}\lVert u^*\rVert_\omega^2+2\lVert f^*\rVert_K^2= 8M_lM_u(2d)^{2s}\lVert u^*\rVert_\omega^2+2\lVert L_{\omega,K}^ru^*\rVert_{K}^2\\
\leq& \left(8M_lM_u(2d)^{2s}+2\right)\lVert u^*\rVert_{\omega}^2
                \end{aligned} 
            \end{equation}
Next combining inequality \eqref{eq:A.2.2}, \eqref{eq:A.2.4}  and inequality \eqref{eq:A.2.3}, one has
\begin{equation}\label{eq:noiseless term 1}
\E[\Vert\bar{\beta}_{n-1}\Vert_{\mu_X}^2]\leq\left(\frac{\left(16M_lM_u(2d)^{2s}+4\right)}{\gamma_0n^{1-4\theta sr}}+\left(\frac{16}{1-4\theta sr} + 32M_lM_u\right)\right)\frac{\Vert u^*\Vert_\omega^2}{n^{4\theta sr}}.
\end{equation}
\subsection{Estimating Noise Term}\label{A4}

In this section, we bound the error of the noise term $\xi_n$. Before the concrete proof, let's review the iterative formula of $\xi_i$.
\begin{equation*}
\begin{aligned}
&\xi_0=0,\quad\xi_i=(I-\gamma_iK_{X_i,L_i}^T\otimes K_{X_i,L_n}^T)\xi_{i-1}+\gamma_i\mathcal{B}_i,\\
&\xi_0=0,\quad\xi_i=(I-\gamma_iK_{X_i,L_i}^T\otimes K_{X_i,L_i}^T)\xi_{i-1}+\gamma_i\mathcal{B}_i,
\end{aligned}
\end{equation*}
 where $\mathcal{B}_i=(\Y_i-f_{L_n}^*(X_i))K_{X_i,L_i}^T$. Clearly there is $\xi_{i-1}\in \mathcal{H}_{L_i}$, so the operators $K_{X_i,L_i}^T\otimes K_{X_i,L_n}^T$ and $K_{X_i,L_i}^T\otimes K_{X_i,L_i}^T$ acting on $\xi_{i-1}$ are equivalent. Because $K_{X_i,L_i}^T\otimes K_{X_i,L_n}^T$ is not a self-adjoint operator, here we consider the self-adjoint operator $K_{X_i,L_i}^T\otimes K_{X_i,L_i}^T$ to handle the error term. Since $K_{X_i,L_i}^T\otimes K_{X_i,L_i}^T$ is a random operator, so we consider to introduce a non-random operator $L_{\mu_X,L_i}$ to replace $K_{X_i,L_i}^T\otimes K_{X_i,L_i}^T$.
In the following we define $\xi_i^q, q\in\NN$ and $q\geq1$,
\begin{equation}\label{eq:A.3.1}
                \begin{aligned}
&\xi_0^0=0,\quad&&\xi_i^0=(I-\gamma_iL_{\mu_X,L_i})\xi_{i-1}^0+\gamma_i\mathcal{B}_i^0,\\
&\xi_0^q=0,\quad&&\xi_i^q=(I-\gamma_iL_{\mu_X,L_i})\xi_{i-1}^q+\gamma_i\mathcal{B}_i^q,
                \end{aligned} 
            \end{equation}
where $\mathcal{B}_i^0=\mathcal{B}_i=(\Y_i-f_{L_n}^*(X_i))K_{X_i,L_i}^T$ and $\mathcal{B}_i^q=(L_{\mu_X,L_i}-K_{X_i,L_i}^T\otimes K_{X_i,L_i}^T)\xi_{i-1}^{q-1}$.
And we define the average of $\xi_i^q$, 
$$\bar{\xi}_n^q=\frac{1}{n+1}\sum_{i=0}^n\xi_i^q.$$
By \autoref{lemma: A.1}, for $q\geq i$, we can obtain
$$\xi_i^q=0\quad\text{and}\quad \xi_i-\sum_{j=0}^q\xi_i^j=0.$$
A similar relationship is found for their average, for $q\geq n$, we have
$$\bar{\xi}_n^q=0\quad\text{and}\quad \bar{\xi}_n-\sum_{j=0}^q\bar{\xi}_n^j=0.$$
Using Minkowski's inequality, we can decompose the error of $\bar{\xi}_n$ into the error of $\bar{\xi}_n^q$,
\begin{equation}\label{eq:A.3.2}
                \begin{aligned}
\left(\E\l[\lVert\bar{\xi}_n\rVert^2_{\mu_X}\r]\right)^{\frac{1}{2}}&\leq
\sum_{j=0}^q\left(\E\l[\lVert\bar{\xi}_n^j\rVert^2_{\mu_X}\r]\right)^{\frac{1}{2}}+
\left(\E\left[\left\Vert\bar{\xi}_n-\sum_{j=0}^q\bar{\xi}_n^j\right\Vert^2_{\mu_X}\right]\right)^{\frac{1}{2}}\\
&=\sum_{j=0}^q\left(\E\l[\lVert\bar{\xi}_n^j\rVert^2_{\mu_X}\r]\right)^{\frac{1}{2}},
 \end{aligned} 
 \end{equation}
where $q\geq n$. Before imposing more specific constraints on the error term $\bar{\xi}_n^q$, we first present several key results derived from \autoref{lemma: operator preceq bound}. For $q\geq0$,
\begin{equation}\label{eq:A.3.3}
                \begin{aligned}
\E[\mathcal{B}_i^q\otimes\mathcal{B}_i^q]&\preceq2\gamma_0^q\left(\frac{2s}{2s-1}\right)^q
\left[C_\epsilon^2+C_d^2\right]L_{\mu_X,L_i},\\
\E[\xi_i^q\otimes\xi_i^q]&\preceq2\gamma_0^{q+1}\left(\frac{2s}{2s-1}\right)^{q}\left[C_\epsilon^2+C_d^2\right]I,
                \end{aligned} 
            \end{equation}
where $C_d^2=\frac{2s}{2s-1}\left(1+4M_lM_u(2d)^{2s}\right)\Vert u^*\Vert_\omega^2$
and the operator $f\otimes g$ interprets as $f\otimes g(h)=\langle h,g\rangle_Kf$ for $h,f,g\in \mathcal{H}_{L_n}$ . And $\mathcal{B}_i^q,\xi_i^q\in \mathcal{H}_{L_i}$, $\E\l[\mathcal{B}_i^q|\DD_{i-1}\r]=0$, where $\DD_{i-1}=\sigma((X_1,\Y_1),\dots, (X_{i-1},\Y_{i-1}))$.

Let's consider the error of the $\xi_i^q$ term below,
$$\xi_i^q=(I-\gamma_iL_{\mu_X,L_i})\xi_{i-1}^q+\gamma_i\mathcal{B}_i^q=\sum_{k=1}^i\prod_{l=k+1}^i(I-\gamma_lL_{\mu_X,L_l})
\gamma_k\mathcal{B}_k^q.$$
When $k=i$, we define $\prod_{l=k+1}^i(I-\gamma_lL_{\mu_X,L_l})=I$. Then $\bar{\xi}_n^q$ can be presented as
$$\bar{\xi}_n^q=\frac{1}{n+1}\sum_{i=0}^n\xi_i^q
=\frac{1}{n+1}\sum_{i=1}^n\sum_{k=1}^i\prod_{l=k+1}^i(I-\gamma_lL_{\mu_X,L_l})\gamma_k\mathcal{B}_k^q.$$
Introducing the error term, we also define the notation $\mathcal{T}_{k+1}^i=\prod_{l=k+1}^i(I-\gamma_lL_{\mu_X,L_l})$ to concisely represent the operator,
\begin{align}
&\E\left[\lVert\bar{\xi}_n^q\rVert_{\mu_X}^2\right]\notag\\
&=\E\left[\left\Vert\frac{1}{n+1}\sum_{i=1}^n\sum_{k=1}^i\prod_{l=k+1}^i\left(I-\gamma_lL_{\mu_X,L_l}\right)\gamma_k\mathcal{B}_k^q
\right\Vert_{\mu_X}^2\right]\notag\\
&=\E\left[\left\Vert\frac{1}{n+1}\sum_{k=1}^n\sum_{i=k}^n\mathcal{T}_{k+1}^i\left(\gamma_k\mathcal{B}_k^q\right)
\right\Vert_{\mu_X}^2\right]\notag\\
&=\E\left[\left\langle\frac{1}{n+1}\sum_{k=1}^n\sum_{i=k}^n\mathcal{T}_{k+1}^i\left(\gamma_k\mathcal{B}_k^q\right),
\frac{1}{n+1}\sum_{k'=1}^n\sum_{i'=k'}^n\mathcal{T}_{k'+1}^{i'}\left(\gamma_{k'}\mathcal{B}_{k'}^q\right)
\right\rangle_{\mu_X}\right]\notag\\
&=\frac{1}{(n+1)^2}\E\left[\left\langle\sum_{k=1}^n\sum_{i=k}^n\mathcal{T}_{k+1}^i\left(\gamma_k\mathcal{B}_k^q\right),
L_{\mu_X,L_n}\sum_{k'=1}^n\sum_{i'=k'}^n\mathcal{T}_{k'+1}^{i'}\left(\gamma_{k'}\mathcal{B}_{k'}^q\right)
\right\rangle_{K}\right]\notag\\
&\overset{\text{(i)}}{=}\frac{1}{(n+1)^2}\E\left[\sum_{k=1}^n\left\langle\sum_{i=k}^n\mathcal{T}_{k+1}^i\left(\gamma_k
\mathcal{B}_k^q\right),L_{\mu_X,L_n}\sum_{i'=k}^n\mathcal{T}_{k+1}^{i'}\left(\gamma_k\mathcal{B}_k^q\right)\right\rangle_{K}\right]\notag\\
&\overset{\text{(ii)}}{=}\frac{1}{(n+1)^2}\sum_{k=1}^ntr\left(L_{\mu_X,L_n}^{1/2}\left(\sum_{i=k}^n\mathcal{T}_{k+1}^i\right)\gamma_k^2
\E[\mathcal{B}_k^q\otimes\mathcal{B}_k^q]
\left(\sum_{i'=k}^n\mathcal{T}_{k+1}^{i'}\right)^*L_{\mu_X,L_n}^{1/2}\right)\notag\\
&\overset{\text{(iii)}}{\leq}\frac{2}{(n+1)^2}\gamma_0^q\left(\frac{2s}{2s-1}\right)^q\left[C_\epsilon^2+C_d^2\right]\notag\\
&\quad\quad\quad\quad\times\sum_{k=1}^ntr\left(L_{\mu_X,L_n}^{1/2}\left(\sum_{i=k}^n\mathcal{T}_{k+1}^i\right)\gamma_k^2
L_{\mu_X,L_k}\left(\sum_{i'=k}^n\mathcal{T}_{k+1}^{i'}\right)^*L_{\mu_X,L_n}^{1/2}\right)\notag\\
&\overset{\text{(iv)}}{\leq}\frac{2}{(n+1)^2}\gamma_0^q\left(\frac{2s}{2s-1}\right)^q\left[C_\epsilon^2+C_d^2\right]\notag\\
&\quad\quad\quad\quad\times\sum_{k=1}^n\sum_{p=1}^{\dim\Pi_{L_k}^d}\left[
\left(\sum_{i=k}^n\prod_{l=k+1}^i(1-\gamma_l\sigma_{p,L_l})\right)^2\gamma_k^2\sigma_{p,L_k}\sigma_{p,L_n}\right]\notag\\
&\overset{\text{(v)}}{\leq}\frac{2\gamma_0^q}{(n+1)^2}\left(\frac{2s}{2s-1}\right)^q\left[C_\epsilon^2+C_d^2\right]\sum_{k=1}^n\sum_{p=1}^{\dim\Pi_{L_n}^d}\left[
\left(\sum_{i=k}^n\prod_{l=k+1}^i(1-\frac{\gamma_lC_2}{p^{2s}})\right)^2\frac{C_1^2\gamma_k^2}{p^{4s}}\right].\notag
\end{align}
The equality $(i)$ is caused by $\E[\mathcal{B}_k^q|\DD_{k-1}]=0$. In $(ii)$, $tr(A)$ refers to the trace of the operator $A$, which is the sum of its eigenvalues, and $A^*$ represents the adjoint of $A$. It can be verified that the operator in equation $(ii)$ has a unique eigenvalue that is $(i)$. And $(iii)$ is due to the formula \eqref{eq:A.3.3} above. In $(iv)$, we use $tr(A_1\dots A_n)\leq\sum_p\sigma_p(A_1)\dots\sigma_p(A_n)$ for operator $A_1, \dots, A_n$ and $\sigma_p()$ means the $p-th$ largest eigenvalue [\cite{marshall1979inequalities}, pp. 342]. In (v), we use $\gamma_0C_1<1$. And then following the above equation, according to $\gamma_i=\gamma_0$, we can obtain,
\begin{equation}\label{eq:A.3.4}
                \begin{aligned}
                    \E\left[\Vert\bar{\xi}_n^q\Vert_{\mu_X}^2\right]\leq
\frac{2}{(n+1)^2}\gamma_0^q&\left(\frac{2s}{2s-1}\right)^q\left[C_\epsilon^2+C_d^2\right]\frac{C_1^2}{C_2^2}\\
\times&\sum_{k=1}^n\sum_{p=1}^{\dim\Pi_{L_n}^d}\left[\left(\sum_{i=k}^n(1-\frac{\gamma_0C_2}{p^{2s}})^{i-k}\right)^2\frac{\gamma_0^2C_2^2}{p^{4s}}\right].
                \end{aligned} 
            \end{equation}
We also observe that
$$\sum_{i=k}^n\left(1-\frac{\gamma_0C_2}{p^{2s}}\right)^{i-k}=\frac{1-(1-\frac{\gamma_0C_2}{p^{2s}})^{n-k+1}}{\frac{\gamma_0C_2}{p^{2s}}},$$
so
\begin{equation}\label{eq:A.3.5}
\left[\sum_{i=k}^n\left(1-\frac{\gamma_0C_2}{p^{2s}}\right)^{i-k}\frac{\gamma_0C_2}{p^{2s}}\right]^2=\left(
1-(1-\frac{\gamma_0C_2}{p^{2s}})^{n-k+1}\right)^2\leq 1.
\end{equation}
Then combining the above two formulas \eqref{eq:A.3.4} and \eqref{eq:A.3.5}, one has
\begin{equation*}
\begin{aligned}
\E\left[\Vert\bar{\xi}_n^q\Vert_{\mu_X}^2\right]&\leq\frac{2}{n+1}\gamma_0^q
\left(\frac{2s}{2s-1}\right)^q\left[C_\epsilon^2+C_d^2\right]\frac{C_1^2}{C_2^2}\dim\Pi_{L_n}^d\\
&\overset{\text{(i)}}{\leq}\frac{4d}{n^{1-\theta}}\gamma_0^q
\left(\frac{2s}{2s-1}\right)^q\left[C_\epsilon^2+C_d^2\right]\frac{C_1^2}{C_2^2}.
\end{aligned}
\end{equation*}
Here (i) is due to \autoref{lemma:dimensional inequality}. Next, returning to the $\bar{\xi}_n$ term, we can conclude that
\begin{align*}
\left(\E[\Vert\bar{\xi}_n\Vert_{\mu_X}^2]\right)^{\frac{1}{2}}&\leq\sum_{q=0}^n\left(\E[
\Vert\bar{\xi}_n^q\Vert_{\mu_X}^2\right)^{\frac{1}{2}}\\
&\leq \left(4dn^{-1+\theta}\left[C_\epsilon^2+C_d^2\right]\frac{C_1^2}{C_2^2}\right)^{\frac{1}{2}}\sum_{q=0}^n\left(\gamma_0\frac{2s}{2s-1}\right)^{\frac{q}{2}}\\
&\leq\left(4dn^{-1+\theta}\left[C_\epsilon^2+C_d^2\right]\frac{C_1^2}{C_2^2}\right)^{\frac{1}{2}}
\frac{1}{1-\left(\gamma_0\frac{2s}{2s-1}\right)^{\frac{1}{2}}},
\end{align*}
then we have
\begin{equation}\label{eq:noise term 1}
\E[\Vert\bar{\xi}_n\Vert_{\mu_X}^2]\leq\frac{4dn^{-1+\theta}\left[C_\epsilon^2+C_d^2\right]C_1^2}{\left(1-\left(\gamma_0\frac{2s}{2s-1}\right)^{\frac{1}{2}}\right)^2C_2^2}.
\end{equation}

Next, combining formulas \eqref{eq:A.1.4}, \eqref{eq:noiseless term 1} and \eqref{eq:noise term 1} yields our \autoref{theorem:constant step size}.

\subsection{Technical Lemmas}\label{3.1prooflemma}
\begin{lemma}\label{lemma:Kernel bound}
For $K_{X_i,L_i}^T=\sum_{k=0}^{L_i}(\dim\Pi_k^d)^{-2s}K_k(X_i,\cdot)$ and $s>\frac{1}{2}$, we have
$$\lVert K_{X_i,L_i}^T\rVert_K^2\leq\frac{2s}{2s-1}.$$
\end{lemma}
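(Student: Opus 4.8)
The plan is to reduce the norm computation to evaluating the truncated kernel on the diagonal, and then invoke the convergence-factor estimate already established in \autoref{the:convergence factor}. First I would observe that $K_{X_i,L_i}^T = K_{L_i}^T(X_i,\cdot)$ is itself an element of $\mathcal{H}_{L_i}$, so the reproducing property \eqref{eq:the reproducing property of HH_L_n} applies to it. Taking $f = K_{L_i}^T(X_i,\cdot)$ and evaluating at the point $X_i$ gives
$$\Vert K_{X_i,L_i}^T\Vert_K^2 = \langle K_{L_i}^T(X_i,\cdot), K_{L_i}^T(X_i,\cdot)\rangle_K = K_{L_i}^T(X_i,X_i),$$
so that the squared norm is exactly the diagonal value of the truncated kernel.

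Next I would expand this diagonal term using the series definition, namely $K_{L_i}^T(X_i,X_i) = \sum_{k=0}^{L_i}\left(\dim\Pi_k^d\right)^{-2s}K_k(X_i,X_i)$, and apply \autoref{the:convergence factor} with $L_n = L_i$ and $x=x'=X_i$. That proposition bounds this sum above by $\sum_{k=1}^{\dim\mathcal{H}_{L_i}}k^{-2s}$, which is in turn dominated by the full series $\sum_{k=1}^\infty k^{-2s}$, since all terms are nonnegative and the hypothesis $s>\tfrac12$ (equivalently $2s>1$) guarantees convergence.

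Finally I would control the resulting zeta-type sum by the integral comparison $\sum_{k=2}^\infty k^{-2s} \leq \int_1^\infty x^{-2s}\,dx = \frac{1}{2s-1}$, which yields
$$\sum_{k=1}^\infty k^{-2s} \leq 1 + \frac{1}{2s-1} = \frac{2s}{2s-1},$$
giving the claimed bound $\Vert K_{X_i,L_i}^T\Vert_K^2\leq \frac{2s}{2s-1}$. There is no genuine obstacle in this argument; the only points requiring care are the justification that the reproducing property may legitimately be applied to the kernel section itself (licensed because $K_{L_i}^T(X_i,\cdot)$ lies in the finite-dimensional RKHS $\mathcal{H}_{L_i}$), and the observation that the sharp constant $\frac{2s}{2s-1}$ arises precisely from the integral estimate and is uniform in both $i$ and $L_i$, which is exactly what the later proof of \autoref{theorem:constant step size} requires.
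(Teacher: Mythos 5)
Your proof is correct and follows essentially the same route as the paper: both reduce $\lVert K_{X_i,L_i}^T\rVert_K^2$ to the diagonal value $K_{L_i}^T(X_i,X_i)=\sum_{k=0}^{L_i}(\dim\Pi_k^d)^{-2s}K_k(X_i,X_i)$, bound this by $\sum_{k\geq 1}k^{-2s}$, and finish with the integral comparison giving $\frac{2s}{2s-1}$. The only difference is packaging: the paper re-derives the middle inequality inside the lemma via \autoref{lemma:dimension} and \autoref{lemma:LmuK eigenvalue restriction}, whereas you invoke \autoref{the:convergence factor}, which encapsulates exactly those steps.
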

\begin{proof}
According to the definition of $K_{X_i,L_i}^T$, we can obtain
\begin{equation*}
\begin{aligned}
\lVert K_{X_i,L_i}^T\rVert_K^2=&\left\langle\sum_{k=0}^{L_i}\left(\dim\Pi_k^d\right)^{-2s}K_k(X_i,\cdot),
\sum_{k=0}^{L_i}\left(\dim\Pi_k^d\right)^{-2s}K_k(X_i,\cdot)\right\rangle_K\\
=&\sum_{k=0}^{L_i}\left(\dim\Pi_k^d\right)^{-2s}K_k(X_i,X_i)\overset{\text{(i)}}{\leq}\sum_{k=0}^{L_i}\left(\dim\Pi_k^d\right)^{-2s}\dim\HH_k^d\\
\overset{\text{(ii)}}{\leq}&\sum_{k=1}^{\dim\Pi_{L_i}^d}k^{-2s}\leq1+\sum_{k=2}^{\infty}k^{-2s}\leq1+\int_1^\infty x^{-2s}dx=\frac{2s}{2s-1},
\end{aligned}
\end{equation*}
 where (i) is due to \autoref{lemma:dimension} and (ii) is due to \autoref{lemma:LmuK eigenvalue restriction}. We complete the proof.

\end{proof}

\begin{lemma}\label{lemma:dadaxiang}
Under the conditions of \autoref{theorem:constant step size}, the term $\sum_{i=1}^n\E[\langle\beta_{i-1},K_{X_i,\infty}^T-K_{X_i,L_i}^T\rangle_K^2]$, defined in equation \eqref{eq:A.2.2}, is bounded above by the following when $4\theta sr<1$,
$$\sum_{i=1}^n\l[\lan\beta_{i-1},K_{X_i,\infty}^T-K_{X_i,L_i}^T\ran_K^2\r]\leq\left(\frac{4}{1-4\theta sr} + 8M_lM_u\right)n^{1-4\theta sr}\Vert u^*\Vert_\omega^2.$$
\end{lemma}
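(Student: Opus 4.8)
The plan is to exploit the nested structure $\mathcal{H}_{L_0}\subseteq\mathcal{H}_{L_1}\subseteq\cdots$ to collapse the random inner product $\langle\beta_{i-1},K_{X_i,\infty}^T-K_{X_i,L_i}^T\rangle_K$ into a deterministic object, and then to bound the resulting sum by separating a ``smooth'' contribution from a ``projection-mismatch'' contribution.

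\textbf{Step 1 (reduction by orthogonality).} From the recursion \eqref{eq:A 1.2}, each increment $\beta_i-\beta_{i-1}=-\gamma_i\beta_{i-1}(X_i)K_{X_i,L_i}^T$ lies in $\mathcal{H}_{L_i}$, so for \emph{every} realization $\beta_{i-1}-\beta_0\in\mathcal{H}_{L_{i-1}}\subseteq\mathcal{H}_{L_i}$. Since $K_{X_i,\infty}^T-K_{X_i,L_i}^T=\sum_{k>L_i}(\dim\Pi_k^d)^{-2s}K_k(X_i,\cdot)$ is $\langle\cdot,\cdot\rangle_K$-orthogonal to $\mathcal{H}_{L_i}$, the increments drop out, and using $\beta_0=-f_{L_n}^*$ I get $\langle\beta_{i-1},K_{X_i,\infty}^T-K_{X_i,L_i}^T\rangle_K=-\langle f_{L_n}^*,K_{X_i,\infty}^T-K_{X_i,L_i}^T\rangle_K$. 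By the reproducing property \eqref{eq:projfk} this equals $-g_i(X_i)$, where $g_i:=\sum_{k=L_i+1}^{L_n}\text{proj}_k(f_{L_n}^*)$ is a \emph{deterministic} function. The dependence on $X_1,\dots,X_{i-1}$ has vanished, so $\E[\langle\beta_{i-1},K_{X_i,\infty}^T-K_{X_i,L_i}^T\rangle_K^2]=\E[g_i(X_i)^2]=\|g_i\|_{\mu_X}^2$, and the whole lemma reduces to bounding $\sum_{i=1}^n\|g_i\|_{\mu_X}^2$. This collapse is what makes the truncated scheme tractable, and I would make it the centerpiece.

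\textbf{Step 2 (the smooth part).} I would write $f_{L_n}^*=f_{L_n}+(f_{L_n}^*-f_{L_n})$, with $f_{L_n}$ the $\langle\cdot,\cdot\rangle_\omega$-truncation of $f^*$, and apply $\|\cdot\|_{\mu_X}^2\le M_u\|\cdot\|_\omega^2$ (\autoref{hyp2} (c)) together with $\|a+b\|^2\le 2\|a\|^2+2\|b\|^2$. Under \autoref{hyp3} (b), $f^*=L_{\omega,K}^r u^*$ forces the $\omega$-coefficients $f_{k,j}=(\dim\Pi_k^d)^{-2sr}u_{k,j}$, so the degree-$(L_i{+}1,\dots,L_n)$ block of $f_{L_n}$ has $\omega$-norm at most $(\dim\Pi_{L_i+1}^d)^{-4sr}\|u^*\|_\omega^2\le i^{-4\theta sr}\|u^*\|_\omega^2$, the last step using $\dim\Pi_{L_i}^d\ge i^\theta$ from \eqref{eq:def t-level parameter}. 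Since $4\theta sr<1$, comparing with an integral gives $\sum_{i=1}^n i^{-4\theta sr}\lesssim\frac{n^{1-4\theta sr}}{1-4\theta sr}$, which produces the $\frac{4}{1-4\theta sr}\,n^{1-4\theta sr}\|u^*\|_\omega^2$ term.

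\textbf{Step 3 (the projection-mismatch part — the crux).} The delicate term is $\sum_i\|Q_i(f_{L_n}^*-f_{L_n})\|_{\mu_X}^2$, where $Q_i$ is the $\omega$-projection onto degrees $>L_i$. The difficulty is that $f_{L_n}^*$ is the $\mu_X$-projection of $f^*$ and does \emph{not} inherit the clean per-degree coefficient decay of the $\omega$-truncation $f_{L_n}$, so one cannot bound its high-degree blocks individually and gain a factor $i^{-4\theta sr}$. The remedy is to not seek decay in $i$ at all: $Q_i$ is a projection, hence non-expansive, so $\|Q_i(f_{L_n}^*-f_{L_n})\|_\omega^2\le\|f_{L_n}^*-f_{L_n}\|_\omega^2$, and this \emph{single} number already carries the decay. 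Because $f_{L_n}^*$ is the $\mu_X$-best approximation in $\mathcal{H}_{L_n}$, one has $\|f_{L_n}^*-f_{L_n}\|_{\mu_X}\le 2\|f^*-f_{L_n}\|_{\mu_X}$; feeding in the two-sided density bounds of \autoref{hyp2} (c) (this is the only point where the lower bound $1/M_l$ is genuinely needed) together with the tail estimate \eqref{eq:A 1.1}, i.e. \autoref{lemma: fLn,fLn* error}, yields $\|f_{L_n}^*-f_{L_n}\|_\omega^2\lesssim M_lM_u\,n^{-4\theta sr}\|u^*\|_\omega^2$. Summing the (constant in $i$) bound over the $n$ iterations gives $\sum_i\|Q_i(f_{L_n}^*-f_{L_n})\|_{\mu_X}^2\lesssim M_lM_u\,n^{1-4\theta sr}\|u^*\|_\omega^2$, matching the $8M_lM_u$ term.

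Combining Steps 2 and 3 gives the stated bound. I expect Step 3 to be the main obstacle: tracking how the $\omega$-versus-$\mu_X$ projection mismatch between $f_{L_n}$ and $f_{L_n}^*$ propagates through \autoref{lemma: fLn,fLn* error} and the density bounds, and recognizing that it must be controlled uniformly rather than via decay in $i$, is the subtle step; the rest is routine once the orthogonality collapse of Step 1 is in place.
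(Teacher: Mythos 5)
Your proposal is correct and follows essentially the same route as the paper's own proof: the same orthogonality collapse reducing $\lan\beta_{i-1},K_{X_i,\infty}^T-K_{X_i,L_i}^T\ran_K$ to $-\lan f_{L_n}^*,K_{X_i,\infty}^T-K_{X_i,L_i}^T\ran_K$ (the paper implements it via the auxiliary sequence $\eta_i=\beta_i+f_{L_n}^*$), the same split of $f_{L_n}^*$ into the $\omega$-truncation $f_{L_n}$ (handled by coefficient decay and integral comparison) plus the projection mismatch $f_{L_n}^*-f_{L_n}$ (bounded uniformly in $i$ through \autoref{lemma: fLn,fLn* error}). The only discrepancy is constant bookkeeping: carrying the factor $M_u$ from $\Vert\cdot\Vert_{\mu_X}^2\leq M_u\Vert\cdot\Vert_\omega^2$ through both terms, as your Steps 2--3 claim to do, would actually yield $\frac{4M_u}{1-4\theta sr}+8M_lM_u^2$ rather than the stated $\frac{4}{1-4\theta sr}+8M_lM_u$, but the paper's own proof is loose at exactly this point (it writes the $\mu_X$-expectation directly as an $\omega$-integral), so this does not distinguish your argument from theirs.
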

\begin{proof}
We first recall the recursion about $\beta_i$ in \eqref{eq:A 1.2},
$$\beta_0=-f_{L_n}^*\quad\ \beta_i=\beta_{i-1}-\gamma_i\beta_{i-1}(X_i)K_{X_i,L_i}^T.$$
We introduce $\eta_i=\beta_i+f_{L_n}^*$, then we can prove $\eta_i\in\sum_{k=0}^{L_i}\HH_k^d$ for $0\leq i\leq n$ by induction. Obviously the conclusion stands for $\eta_0$,  we assume the conclusion holds for $\eta_{i-1}$. For $\eta_i$, we have,
\begin{equation*}
\begin{aligned}
\eta_i=&\beta_i+f_{L_n}^*=\beta_{i-1}+f_{L_n}^*-\gamma_i\beta_{i-1}(X_i)K_{X_i,L_i}^T\\
=&
\eta_{i-1}-\gamma_i\beta_{i-1}(X_i)K_{X_i,L_i}^T\in\sum_{k=0}^{L_i}\HH_k^d.
\end{aligned}
\end{equation*}
Thus we can obtain
\begin{equation*}
\begin{aligned}
&\lan\beta_{i-1},K_{X_i,\infty}^T-K_{X_i,L_i}^T\ran_K\\
=&
\lan\eta_{i-1}-f_{L_n}^*,K_{X_i,\infty}^T-K_{X_i,L_i}^T\ran_K
=-\lan f_{L_n}^*,K_{X_i,\infty}^T-K_{X_i,L_i}^T\ran_K.
\end{aligned}
\end{equation*}
We interpret $f_{L_n}$ as $f_{L_n}=\sum_{k=0}^{L_n}\sum_{j=1}^{\dim\HH_k^d}f_{k,j}^{L_n}Y_{k,j}$, $f_{L_n}^*=\sum_{k=0}^{L_n}\sum_{j=1}^{\dim\HH_k^d}f_{k,j}^*Y_{k,j}$ and $f^*=\sum_{k=0}^{\infty}\sum_{j=1}^{\dim\HH_k^d}f_{k,j}Y_{k,j}$, then one has
\begin{equation}\label{eq:A.4.1}
                \begin{aligned}
&\sum_{i=1}^n\E\l[\lan\beta_{i-1},K_{X_i,\infty}^T-K_{X_i,L_i}^T\ran_K^2\r]=
\sum_{i=1}^n\E\l[\lan f_{L_n}^*,K_{X_i,\infty}^T-K_{X_i,L_i}^T\ran_K^2\r]\\
=&\sum_{i=1}^n\frac{1}{\Omega_{d-1}}\int_{\SS^{d-1}}\left(\sum_{k=L_{i}+1}^{L_n}\sum_{j=1}^{\dim\HH_k^d}f_{k,j}^*Y_{k,j}\right)^2d\omega\\
=&\sum_{i=1}^n\sum_{k=L_{i}+1}^{L_n}\sum_{j=1}^{\dim\HH_k^d}\left(f_{k,j}^*\right)^2=\sum_{i=1}^n\sum_{k=L_{i}+1}^{L_n}\sum_{j=1}^{\dim\HH_k^d}\left(f_{k,j}^*-f_{k,j}^{L_n}+f_{k,j}^{L_n}\right)^2\\
\leq&2\sum_{i=1}^n\sum_{k=L_{i}+1}^{L_n}\sum_{j=1}^{\dim\HH_k^d}(f_{k,j}^{L_n})^2+
2\sum_{i=1}^n\sum_{k=L_{i}+1}^{L_n}\sum_{j=1}^{\dim\HH_k^d}\left(f_{k,j}^*-f_{k,j}^{L_n}\right)^2\\
\leq&2\sum_{i=1}^n\left(\dim\Pi_{L_i+1}^d\right)^{-4sr}
\sum_{k=L_{i}+1}^{L_n}\left(\dim\Pi_k^d\right)^{4sr}\sum_{j=1}^{\dim\HH_{k}^d}\left(f_{k,j}^{L_n}\right)^2+2n\lVert f_{L_n}-f_{L_n}^*\rVert_{\omega}^2\\
=&2\sum_{i=1}^n\left(\dim\Pi_{L_i+1}^d\right)^{-4sr}
\sum_{k=L_{i}+1}^{L_n}\left(\dim\Pi_k^d\right)^{4sr}\sum_{j=1}^{\dim\HH_{k}^d}(f_{k,j})^2+2n\lVert f_{L_n}-f_{L_n}^*\rVert_{\omega}^2\\
\leq&2\sum_{i=1}^n\left(\dim\Pi_{L_i+1}^d\right)^{-4sr}\lVert u^*\rVert_{\omega}^2+2n\lVert f_{L_n}-f_{L_n}^*\rVert_{\omega}^2\\
\overset{\text{(i)}}{\leq}&\left(2\sum_{i=1}^n(i+1)^{-4\theta sr} + 8M_lM_un^{1-4\theta sr}\right)\Vert u^*\Vert_\omega^2\\
\leq&\left(2\int_{1}^{n+1}x^{-4\theta sr}dx + 8M_lM_un^{1-4\theta sr}\right)\Vert u^*\Vert_\omega^2,
                \end{aligned} 
            \end{equation}
where (i) is due to $\dim\Pi_{L_i+1}^d\leq (i+1)^\theta$ and \autoref{lemma: fLn,fLn* error}. Then we have
\begin{align}\label{eq:A.4.2}
&\text{if}\ \ 4\theta sr<1, \quad&&\text{then}\ \ \int_{1}^{n+1}x^{-4\theta sr}dx\leq\frac{(n+1)^{1-4\theta sr}}{1-4\theta sr}.
\end{align}
By \eqref{eq:A.4.2} and \eqref{eq:A.4.1}, the proof is complete.
\end{proof}

\begin{lemma}\label{lemma: fLn,fLn* error}
If assumptions in \autoref{theorem:constant step size} are satisfied, we have
$$\Vert f_{L_n}-f_{L_n}^*\Vert_{\omega}^2\leq 4M_lM_un^{-4\theta sr}\Vert u^*\Vert_\omega^2$$
and
$$\Vert f_{L_n}-f_{L_n}^*\Vert_{K}^2\leq 4M_lM_u(2d)^{2s}\Vert u^*\Vert_\omega^2.$$
\end{lemma}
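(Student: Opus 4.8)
The plan is to control the gap between the two projections $f_{L_n}$ (the $\langle\cdot,\cdot\rangle_\omega$-projection of $f^*$ onto $\mathcal{H}_{L_n}$) and $f_{L_n}^*$ (the $\langle\cdot,\cdot\rangle_{\mu_X}$-projection) by a single truncation error, and then pass between the $\omega$-, $\mu_X$-, and $K$-norms using the norm comparisons available under the hypotheses. First I would record two ingredients. From \autoref{hyp2} (c), the two-sided bound on $\frac{d\mu_X}{d\omega}$ gives the norm comparison
$$\Vert f\Vert_{\mu_X}^2\leq M_u\Vert f\Vert_\omega^2\quad\text{and}\quad\Vert f\Vert_\omega^2\leq M_l\Vert f\Vert_{\mu_X}^2$$
for every $f$ (the first of these is already used in \eqref{eq:A 1.1}). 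Second, because $\{Y_{k,j}\}$ is an $\langle\cdot,\cdot\rangle_\omega$-orthonormal basis, $f_{L_n}$ is simply the degree-$L_n$ truncation of the expansion of $f^*$; writing $f^*=L_{\omega,K}^r(u^*)$ from \autoref{hyp3} (b) so that the coefficients satisfy $f_{k,j}=(\dim\Pi_k^d)^{-2sr}u_{k,j}$, I obtain, exactly as in \eqref{eq:A 1.1},
$$\Vert f_{L_n}-f^*\Vert_\omega^2=\sum_{k>L_n}\sum_j (\dim\Pi_k^d)^{-4sr}u_{k,j}^2\leq (\dim\Pi_{L_n}^d)^{-4sr}\Vert u^*\Vert_\omega^2\leq n^{-4\theta sr}\Vert u^*\Vert_\omega^2,$$
where the last step uses $\dim\Pi_{L_n}^d\geq n^\theta$ from the definition \eqref{eq:def t-level parameter} of $L_n$.

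For the first (the $\omega$-norm) inequality, I would bound $\Vert f_{L_n}-f_{L_n}^*\Vert_{\mu_X}$ by the triangle inequality together with the optimality of the projection $f_{L_n}^*$: since $f_{L_n}\in\mathcal{H}_{L_n}$ and $f_{L_n}^*$ minimizes $\Vert f^*-f\Vert_{\mu_X}$ over $\mathcal{H}_{L_n}$, we have $\Vert f^*-f_{L_n}^*\Vert_{\mu_X}\leq\Vert f^*-f_{L_n}\Vert_{\mu_X}$, hence
$$\Vert f_{L_n}-f_{L_n}^*\Vert_{\mu_X}\leq\Vert f_{L_n}-f^*\Vert_{\mu_X}+\Vert f^*-f_{L_n}^*\Vert_{\mu_X}\leq 2\Vert f_{L_n}-f^*\Vert_{\mu_X}.$$
Squaring, applying $\Vert\cdot\Vert_\omega^2\leq M_l\Vert\cdot\Vert_{\mu_X}^2$ on the left and $\Vert\cdot\Vert_{\mu_X}^2\leq M_u\Vert\cdot\Vert_\omega^2$ on the right, and then inserting the truncation estimate yields
$$\Vert f_{L_n}-f_{L_n}^*\Vert_\omega^2\leq 4M_lM_u\Vert f_{L_n}-f^*\Vert_\omega^2\leq 4M_lM_un^{-4\theta sr}\Vert u^*\Vert_\omega^2,$$
which is the first claim (the triangle inequality being precisely what produces the constant $4$).

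For the $K$-norm bound I would use that $f_{L_n}-f_{L_n}^*$ lies in $\mathcal{H}_{L_n}$, where the $K$-norm and $\omega$-norm differ only through the weights $(\dim\Pi_k^d)^{2s}$ with $0\leq k\leq L_n$; bounding each weight by its largest value gives $\Vert g\Vert_K^2\leq(\dim\Pi_{L_n}^d)^{2s}\Vert g\Vert_\omega^2$ for $g\in\mathcal{H}_{L_n}$, so combining with the first inequality,
$$\Vert f_{L_n}-f_{L_n}^*\Vert_K^2\leq (\dim\Pi_{L_n}^d)^{2s}\cdot 4M_lM_un^{-4\theta sr}\Vert u^*\Vert_\omega^2.$$
The one point that requires care — and the only place the hypothesis $r\geq\frac12$ enters — is the cancellation of the $n$-dependence. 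Using $\dim\Pi_{L_n}^d\leq 2dn^\theta$, which follows from \autoref{lemma:dimensional inequality} and the definition of $L_n$, I get $(\dim\Pi_{L_n}^d)^{2s}n^{-4\theta sr}\leq(2d)^{2s}n^{2\theta s(1-2r)}\leq(2d)^{2s}$, the last inequality holding precisely because $1-2r\leq0$. This gives $\Vert f_{L_n}-f_{L_n}^*\Vert_K^2\leq 4M_lM_u(2d)^{2s}\Vert u^*\Vert_\omega^2$. I expect the main obstacle to be exactly this bookkeeping step: recognizing that the growth $(\dim\Pi_{L_n}^d)^{2s}\sim n^{2\theta s}$ of the $K$-norm weights is dominated by the truncation decay $n^{-4\theta sr}$ if and only if $r\geq\frac12$, so that the $K$-norm bound is genuinely $n$-independent; the rest is the standard projection-plus-norm-comparison routine.
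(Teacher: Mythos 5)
Your proposal is correct and follows essentially the same route as the paper's proof: the truncation estimate $\Vert f_{L_n}-f^*\Vert_\omega^2\leq n^{-4\theta sr}\Vert u^*\Vert_\omega^2$ from \eqref{eq:A 1.1}, projection optimality of $f_{L_n}^*$ combined with the two-sided norm comparisons to produce the factor $4M_lM_u$, and for the $K$-norm bound the weight estimate $(\dim\Pi_{L_n}^d)^{2s}\leq(2d)^{2s}n^{2\theta s}$ via \autoref{lemma:dimensional inequality} with the cancellation $n^{2\theta s(1-2r)}\leq 1$ from $r\geq\frac{1}{2}$. The only cosmetic difference is that you apply the triangle inequality and then square, whereas the paper uses $\Vert a+b\Vert^2\leq 2\Vert a\Vert^2+2\Vert b\Vert^2$ before invoking projection optimality; both yield the same constant $4$.
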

\begin{proof}
We can obtain
\begin{equation*}
\begin{aligned}
\lVert f_{L_n}-f_{L_n}^*\rVert_{\omega}^2&\leq M_l\Vert f_{L_n}-f_{L_n}^*\Vert_{\mu_X}^2\leq 2M_l\Vert f_{L_n}-f^*\Vert_{\mu_X}^2+2M_l\Vert f_{L_n}^*-f^*\Vert_{\mu_X}^2\\
&\leq 4M_l\Vert f_{L_n}-f^*\Vert_{\mu_X}^2\leq 4M_lM_u\Vert f_{L_n}-f^*\Vert_{\omega}^2\overset{\text{(i)}}{\leq} 4M_lM_un^{-4\theta sr}\Vert u^*\Vert_\omega^2,
\end{aligned}
\end{equation*}
where (i) is due to \eqref{eq:A 1.1}. Then one has
\begin{equation*}
\begin{aligned}
\Vert f_{L_n}-f_{L_n}^*\Vert_{K}^2&= \sum_{k=0}^{L_n}\left(\dim\Pi_k^d\right)^{2s}\sum_{j=1}^{\dim\HH_k^d}\left(f_{k,j}^{L_n}-f_{k,j}^*\right)^2\\
&\leq\left(\dim\Pi_{L_n}^d\right)^{2s}\sum_{k=0}^{L_n}\sum_{j=1}^{\dim\HH_k^d}\left(f_{k,j}^{L_n}-f_{k,j}^*\right)^2\overset{\text{(i)}}{\leq} (2d)^{2s}n^{2\theta s}\Vert f_{L_n}-f_{L_n}^*\Vert_{\omega}^2\\
&\leq4M_lM_u(2d)^{2s}n^{-4\theta sr+2\theta s}\Vert u^*\Vert_\omega^2\overset{\text{(ii)}}{\leq}4M_lM_u(2d)^{2s}\Vert u^*\Vert_\omega^2,
\end{aligned}
\end{equation*}
where (i) is due to \autoref{lemma:dimensional inequality}  where $\dim\Pi_{L_n}^d\leq2d\dim\Pi_{L_n-1}^d\leq2dn^\theta$ and (ii) is due to $r\geq\frac{1}{2}$. We complete the proof.
\

\end{proof}

\begin{lemma}\label{lemma: A.1}
The term $\xi_i^q$ is defined according to the equation \eqref{eq:A.3.1}. For $q\geq i$, we have,
$$\xi_i^q=0\quad\text{and}\quad \xi_i-\sum_{j=0}^q\xi_i^q=0.$$
\end{lemma}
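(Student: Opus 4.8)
The plan is to prove both assertions simultaneously by induction on $i$, exploiting the fact that in \eqref{eq:A.3.1} the two indices are coupled only through $\mathcal{B}_i^q=(L_{\mu_X,L_i}-K_{X_i,L_i}^T\otimes K_{X_i,L_i}^T)\xi_{i-1}^{q-1}$, which lowers the superscript by one while pinning the subscript at $i-1$. Thus from the knowledge of the full family $\{\xi_{i-1}^q\}_{q\geq 0}$ one can read off both $\xi_i^q$ and $\sum_j\xi_i^j$, and the inductive bookkeeping reduces to tracking the inequalities $q\geq i$, $q\geq i-1$, and $q-1\geq i-1$.

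For the first identity $\xi_i^q=0$ when $q\geq i$, the base case $i=0$ is immediate, since $\xi_0^q=0$ for every $q$ directly from \eqref{eq:A.3.1}. For the inductive step I would assume $\xi_{i-1}^q=0$ for all $q\geq i-1$ and fix $q\geq i$. Then $q\geq i-1$ gives $\xi_{i-1}^q=0$, while $q-1\geq i-1$ gives $\xi_{i-1}^{q-1}=0$ and hence $\mathcal{B}_i^q=0$; substituting both into $\xi_i^q=(I-\gamma_i L_{\mu_X,L_i})\xi_{i-1}^q+\gamma_i\mathcal{B}_i^q$ yields $\xi_i^q=0$. The only care required is to confirm that $q\geq i$ simultaneously delivers the two shifted inequalities needed to annihilate both terms.

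For the second identity I would first use the first one to note that, for $q\geq i$, the tail terms vanish, so $\sum_{j=0}^q\xi_i^j$ is independent of $q$; it therefore suffices to verify $\xi_i=\sum_{j=0}^q\xi_i^j$, again by induction on $i$. The computation I have in mind is to sum the recursion over $j$, split off $\mathcal{B}_i^0=\mathcal{B}_i$, and reindex $\mathcal{B}_i^j=(L_{\mu_X,L_i}-K_{X_i,L_i}^T\otimes K_{X_i,L_i}^T)\xi_{i-1}^{j-1}$ for $j\geq1$, obtaining
\begin{equation*}
\sum_{j=0}^q\xi_i^j=(I-\gamma_i L_{\mu_X,L_i})\sum_{j=0}^q\xi_{i-1}^j+\gamma_i\mathcal{B}_i+\gamma_i\left(L_{\mu_X,L_i}-K_{X_i,L_i}^T\otimes K_{X_i,L_i}^T\right)\sum_{j=0}^{q-1}\xi_{i-1}^j.
\end{equation*}
Invoking the inductive hypothesis at level $i-1$ — legitimate because $q\geq i-1$ and $q-1\geq i-1$ — both inner sums collapse to $\xi_{i-1}$, the two occurrences of $\gamma_i L_{\mu_X,L_i}\xi_{i-1}$ cancel, and what remains is exactly $(I-\gamma_i K_{X_i,L_i}^T\otimes K_{X_i,L_i}^T)\xi_{i-1}+\gamma_i\mathcal{B}_i=\xi_i$.

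I expect the principal difficulty to be organizational rather than analytic: arranging the double induction so that the hypothesis is available at precisely the shifted indices $q$ and $q-1$ that appear in the collapse. One point worth recording explicitly is that, since $\xi_{i-1}\in\mathcal{H}_{L_{i-1}}\subset\mathcal{H}_{L_i}$, the non–self-adjoint operator $K_{X_i,L_i}^T\otimes K_{X_i,L_n}^T$ in the defining recursion for $\xi_i$ may be replaced by the self-adjoint $K_{X_i,L_i}^T\otimes K_{X_i,L_i}^T$ without altering its value on $\xi_{i-1}$; this substitution is exactly what makes the deterministic $L_{\mu_X,L_i}$ terms telescope cleanly against those introduced by the $\mathcal{B}_i^q$. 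The corresponding statements for the averages $\bar{\xi}_n^q=\tfrac{1}{n+1}\sum_{i=0}^n\xi_i^q$ used in \autoref{A4} then follow term by term.
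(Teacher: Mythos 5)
Your proof is correct and takes essentially the same approach as the paper: induction on $i$, with the inductive step substituting the recursions from \eqref{eq:A.3.1} and invoking the hypothesis at the shifted indices $q\geq i-1$ and $q-1\geq i-1$ to annihilate both $\xi_{i-1}^q$ and $\mathcal{B}_i^q$, and to collapse the partial sums to $\xi_{i-1}$ so that the $\gamma_i L_{\mu_X,L_i}$ terms cancel. The only cosmetic difference is that the paper verifies $\xi_i-\sum_{j=0}^q\xi_i^j=0$ by factoring it as $\bigl(I-\gamma_iK_{X_i,L_i}^T\otimes K_{X_i,L_i}^T\bigr)\bigl(\xi_{i-1}-\sum_{j=0}^q\xi_{i-1}^{j}\bigr)+\gamma_i\bigl(L_{\mu_X,L_i}-K_{X_i,L_i}^T\otimes K_{X_i,L_i}^T\bigr)\xi_{i-1}^q$, whereas you expand $\sum_{j=0}^q\xi_i^j$ directly and recover $\xi_i$ — the same algebra either way.
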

\begin{proof}
We also use induction to prove this lemma. When $i=0$, the equation clearly holds. Assuming the conclusion holds for $i-1$. For $i$, we choose $q\geq i$,
\begin{equation*}
\begin{aligned}
\xi_i^q=&\left(I-\gamma_iL_{\mu_X,L_i}\right)\xi_{i-1}^q+\gamma_i\mathcal{B}_i^q\\
=&\left(I-\gamma_iL_{\mu_X,L_i}\right)\xi_{i-1}^q+\gamma_i
(L_{\mu_X,L_i}-K_{X_i,L_i}^T\otimes K_{X_i,L_i}^T)\xi_{i-1}^{q-1}=0
\end{aligned}
\end{equation*}
and
\begin{equation*}
\begin{aligned}
&\xi_i-\sum_{j=0}^q\xi_i^j\\
=&(I-\gamma_iK_{X_i,L_i}^T\otimes K_{X_i,L_i}^T)\xi_{i-1}+\gamma_i\mathcal{B}_i-\sum_{j=0}^q\left[
(I-\gamma_iL_{\mu_X,L_i})\xi_{i-1}^j+\gamma_i\mathcal{B}_i^j\right]\\
=&(I-\gamma_iK_{X_i,L_i}^T\otimes K_{X_i,L_i}^T)\left(\xi_{i-1}-\sum_{j=0}^q\xi_{i-1}^{j}\right)+\gamma_i
(L_{\mu_X,L_i}-K_{X_i,L_i}^T\otimes K_{X_i,L_i}^T)\xi_{i-1}^q\\
=&0.
\end{aligned}
\end{equation*}
We complete the proof.
\end{proof}

\begin{lemma}\label{lemma: operator preceq bound}
If $\xi_i^q$ and $\mathcal{B}_i^q$ are defined in \eqref{eq:A.3.1} and the assumptions in \autoref{theorem:constant step size} hold, then the following inequalities hold. For $q\in\NN$,
\begin{equation*}
\begin{aligned}
\E\l[\mathcal{B}_i^q\otimes\mathcal{B}_i^q\r]&\preceq2\gamma_0^q\left(\frac{2s}{2s-1}\right)^qL_{\mu_X,L_i}
\left[C_\epsilon^2+C_d^2\right],\\
\E\l[\xi_i^q\otimes\xi_i^q\r]&\preceq2\gamma_0^{q+1}\left(\frac{2s}{2s-1}\right)^q
\left[C_\epsilon^2+C_d^2\right]I,
\end{aligned}
\end{equation*}
where $C_d^2=\frac{2s}{2s-1}\left(1+4M_lM_u(2d)^{2s}\right)\Vert u^*\Vert_\omega^2$ and $\mathcal{B}_i^q,\xi_i^q\in \mathcal{H}_{L_i}=\sum_{k=0}^i\HH_k^d$. We also have $\E[\mathcal{B}_i^q|\DD_{i-1}]=0$ where $\DD_{i-1}=\sigma((X_1,\Y_1),\dots, (X_{i-1},\Y_{i-1}))$.
\end{lemma}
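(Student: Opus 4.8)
The plan is to prove the two operator inequalities together by a single induction on $q$, letting them feed one another: the covariance bound on $\mathcal{B}_i^q$ at level $q$ will come from the covariance bound on $\xi_i^{q-1}$, and the bound on $\xi_i^q$ will then follow from the level-$q$ bound on $\mathcal{B}$ through an inner induction on $i$. Throughout I abbreviate $M_i:=K_{X_i,L_i}^T\otimes K_{X_i,L_i}^T$ and $c_q:=2\gamma_0^q(\tfrac{2s}{2s-1})^q[C_\epsilon^2+C_d^2]$, so that the claims read $\E[\mathcal{B}_i^q\otimes\mathcal{B}_i^q]\preceq c_q L_{\mu_X,L_i}$ and $\E[\xi_i^q\otimes\xi_i^q]\preceq\gamma_0 c_q I$, with $c_q=\gamma_0\tfrac{2s}{2s-1}c_{q-1}$. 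Since $(X_i,\Y_i)$ is independent of $\DD_{i-1}$ and $L_{\mu_X,L_i}=\E[M_i]$, I first record $\E[\mathcal{B}_i^q\mid\DD_{i-1}]=0$. For $q=0$ this uses $\Y_i-f_{L_n}^*(X_i)=\epsilon_i+(f^*-f_{L_n}^*)(X_i)$ with $\E[\epsilon_i\mid X_i]=0$ and, testing against $g\in\mathcal{H}_{L_i}$, the reproducing property $\langle\E[(f^*-f_{L_n}^*)(X_i)K_{X_i,L_i}^T],g\rangle_K=\langle f^*-f_{L_n}^*,g\rangle_{\mu_X}=0$, valid because $f_{L_n}^*$ is the $\mu_X$-projection onto $\mathcal{H}_{L_n}\supseteq\mathcal{H}_{L_i}$. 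For $q\ge1$, $\mathcal{B}_i^q=(L_{\mu_X,L_i}-M_i)\xi_{i-1}^{q-1}$ with $\xi_{i-1}^{q-1}$ being $\DD_{i-1}$-measurable, so $\E[\mathcal{B}_i^q\mid\DD_{i-1}]=0$ as well.

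For the base covariance bound I would condition on $X_i$: $\E[(\Y_i-f_{L_n}^*(X_i))^2\mid X_i]=\E[\epsilon_i^2\mid X_i]+(f^*-f_{L_n}^*)(X_i)^2\le C_\epsilon^2+\sup_x(f^*-f_{L_n}^*)(x)^2$. The one nontrivial ingredient is $\sup_x(f^*-f_{L_n}^*)(x)^2\le 2C_d^2$. This follows from $|(f^*-f_{L_n}^*)(x)|\le\sqrt{K(x,x)}\,\Vert f^*-f_{L_n}^*\Vert_K\le\sqrt{\tfrac{2s}{2s-1}}\,\Vert f^*-f_{L_n}^*\Vert_K$ together with $\Vert f^*-f_{L_n}^*\Vert_K^2\le 2\Vert f^*-f_{L_n}\Vert_K^2+2\Vert f_{L_n}-f_{L_n}^*\Vert_K^2$, where $\Vert f^*-f_{L_n}\Vert_K^2$ is a tail of the convergent series defining $\Vert f^*\Vert_K^2$ and hence $\le\Vert f^*\Vert_K^2\le\Vert u^*\Vert_\omega^2$ as in \eqref{eq:A.2.4}, while $\Vert f_{L_n}-f_{L_n}^*\Vert_K^2\le 4M_lM_u(2d)^{2s}\Vert u^*\Vert_\omega^2$ by \autoref{lemma: fLn,fLn* error}. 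Since $M_i\succeq0$, this gives $\E[\mathcal{B}_i^0\otimes\mathcal{B}_i^0]=\E[(\Y_i-f_{L_n}^*(X_i))^2M_i]\preceq(C_\epsilon^2+2C_d^2)L_{\mu_X,L_i}\preceq c_0 L_{\mu_X,L_i}$.

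For the inductive step $q\ge1$, I write $\zeta=\xi_{i-1}^{q-1}$ and use that $A:=L_{\mu_X,L_i}-M_i$ is self-adjoint, so $\mathcal{B}_i^q\otimes\mathcal{B}_i^q=A(\zeta\otimes\zeta)A$; combined with $\E[M_i\mid\DD_{i-1}]=L_{\mu_X,L_i}$ this yields $\E[\mathcal{B}_i^q\otimes\mathcal{B}_i^q\mid\DD_{i-1}]=\E[M_i(\zeta\otimes\zeta)M_i\mid\DD_{i-1}]-L_{\mu_X,L_i}(\zeta\otimes\zeta)L_{\mu_X,L_i}\preceq\E[M_i(\zeta\otimes\zeta)M_i\mid\DD_{i-1}]$, and since $\zeta\in\mathcal{H}_{L_i}$ the reproducing property collapses $M_i(\zeta\otimes\zeta)M_i=\zeta(X_i)^2M_i$. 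The hard part is bounding $\E[\zeta(X_i)^2M_i]$ without paying a dimension factor: the naive estimate $\zeta(X_i)^2\le\tfrac{2s}{2s-1}\Vert\zeta\Vert_K^2$ would force a trace of the level-$(q-1)$ bound and lose a factor $\dim\mathcal{H}_{L_i}$. Instead I would take the full expectation and integrate over $\DD_{i-1}$ first, exploiting independence: for fixed $x$, $\E_{\DD_{i-1}}[\zeta(x)^2]=\langle K_{x,L_i}^T,\E[\xi_{i-1}^{q-1}\otimes\xi_{i-1}^{q-1}]K_{x,L_i}^T\rangle_K\le\gamma_0 c_{q-1}\Vert K_{x,L_i}^T\Vert_K^2\le\gamma_0 c_{q-1}\tfrac{2s}{2s-1}$, using the level-$(q-1)$ hypothesis and \autoref{lemma:Kernel bound}. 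Integrating this uniform bound against $M_i$ gives $\E[\mathcal{B}_i^q\otimes\mathcal{B}_i^q]\preceq\gamma_0 c_{q-1}\tfrac{2s}{2s-1}L_{\mu_X,L_i}=c_q L_{\mu_X,L_i}$.

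Granting the level-$q$ bound on $\mathcal{B}$, I prove $\E[\xi_i^q\otimes\xi_i^q]\preceq\gamma_0 c_q I$ by induction on $i$, the case $\xi_0^q=0$ being trivial. Conditioning on $\DD_{i-1}$ and using $\E[\mathcal{B}_i^q\mid\DD_{i-1}]=0$ to annihilate the cross term, $\E[\xi_i^q\otimes\xi_i^q]=(I-\gamma_0L_{\mu_X,L_i})\E[\xi_{i-1}^q\otimes\xi_{i-1}^q](I-\gamma_0L_{\mu_X,L_i})+\gamma_0^2\E[\mathcal{B}_i^q\otimes\mathcal{B}_i^q]$. From $\operatorname{tr}L_{\mu_X,L_i}=\E[\Vert K_{X_i,L_i}^T\Vert_K^2]\le\tfrac{2s}{2s-1}$ one gets $0\preceq L_{\mu_X,L_i}\preceq\tfrac{2s}{2s-1}I$, and $\gamma_0<\tfrac{2s-1}{2s}$ makes $I-\gamma_0L_{\mu_X,L_i}$ a positive contraction; inserting the inductive bound $\gamma_0 c_q I$ and the $\mathcal{B}$ bound gives $\E[\xi_i^q\otimes\xi_i^q]\preceq\gamma_0 c_q[(I-\gamma_0L_{\mu_X,L_i})^2+\gamma_0L_{\mu_X,L_i}]$. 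The elementary identity $(I-\gamma_0L)^2+\gamma_0L=I-\gamma_0L(I-\gamma_0L)\preceq I$ then closes the induction, yielding $\E[\xi_i^q\otimes\xi_i^q]\preceq\gamma_0 c_q I=2\gamma_0^{q+1}(\tfrac{2s}{2s-1})^q[C_\epsilon^2+C_d^2]I$. The main obstacle is thus the dimension-free covariance estimate in the third paragraph; the remaining steps are operator-monotonicity manipulations.
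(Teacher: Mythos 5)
Your proof is correct, and its skeleton matches the paper's: a double induction in which the martingale property $\E[\mathcal{B}_i^q\mid\DD_{i-1}]=0$, the level-$(q-1)$ bound on $\E[\xi_{i-1}^{q-1}\otimes\xi_{i-1}^{q-1}]$, independence of $\DD_{i-1}$ from $(X_i,Y_i)$, and the kernel bound of \autoref{lemma:Kernel bound} yield the level-$q$ bound on $\E[\mathcal{B}_i^q\otimes\mathcal{B}_i^q]$, which then controls $\E[\xi_i^q\otimes\xi_i^q]$; your base-case constant $C_\epsilon^2+2C_d^2$ sits inside the claimed $2[C_\epsilon^2+C_d^2]$ (the paper gets $C_d^2$ with no factor $2$ because $f^*-f_{L_n}$ and $f_{L_n}-f_{L_n}^*$ are $K$-orthogonal, so Pythagoras replaces your triangle inequality). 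The genuine difference is the passage from the $\mathcal{B}^q$-bound to the $\xi^q$-bound. The paper expands $\xi_i^q=\sum_{k=1}^i\prod_{l=k+1}^i(I-\gamma_0L_{\mu_X,L_l})\gamma_0\mathcal{B}_k^q$, kills cross terms by the martingale property, and telescopes the resulting operator sum (the computation \eqref{eq:A.4.3}, which needs the auxiliary inequality $AB^2A^*\preceq ABA^*$ for self-adjoint $B\preceq I$ applied to products of the non-commuting family $\{L_{\mu_X,L_l}\}_l$). You instead keep the one-step recursion and induct on $i$, closing with the commuting identity $(I-\gamma_0L_{\mu_X,L_i})^2+\gamma_0L_{\mu_X,L_i}=I-\gamma_0L_{\mu_X,L_i}(I-\gamma_0L_{\mu_X,L_i})\preceq I$. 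Your route is cleaner for this lemma in isolation, since it never touches non-commuting products; the paper's sum-and-telescope form is, however, exactly the machinery re-used immediately afterwards for the averaged iterates $\bar{\xi}_n^q$, where the explicit martingale-sum representation is indispensable, so its choice amortizes work across the two arguments. Likewise in the $\mathcal{B}$-step you drop $-L_{\mu_X,L_i}(\zeta\otimes\zeta)L_{\mu_X,L_i}$ for $\zeta=\xi_{i-1}^{q-1}$, collapse $M_i(\zeta\otimes\zeta)M_i=\zeta(X_i)^2M_i$ with $M_i=K_{X_i,L_i}^T\otimes K_{X_i,L_i}^T$, and bound $\E[\zeta(x)^2]$ uniformly in $x$, whereas the paper inserts the isotropic hypothesis first and then bounds $\E[(L_{\mu_X,L_i}-M_i)^2]\preceq\E[M_i^2]\preceq\frac{2s}{2s-1}L_{\mu_X,L_i}$; these are the same two ingredients in a different order, and both versions correctly dodge the dimension-dependent trace bound you flag as the main obstacle.
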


\begin{proof}
By induction, it is easy to verify  $\xi_i^q,\mathcal{B}_i^q\in \mathcal{H}_{L_i}$. For $g\in \mathcal{H}_{L_i}$, we have
$$\E[K_{X_i,L_i}^T\otimes K_{X_i,L_i}^T(g)|\DD_{i-1}]=\int_{\SS^{d-1}} g(x)K_{x,L_i}^Td\mu_X(x)=L_{\mu_X,L_i}(g).$$
So obviously there is
\begin{equation*}
\begin{aligned}
\E\l[\mathcal{B}_i^q\big|\DD_{i-1}\r]&=\E\l[(L_{\mu_X,L_i}-K_{X_i,L_i}^T\otimes K_{X_i,L_i}^T)\xi_{i-1}^{q-1}\big|\DD_{i-1}\r]\\
&=L_{\mu_X,L_i}(\xi_{i-1}^{q-1})-\E\l[K_{X_i,L_i}^T\otimes K_{X_i,L_i}^T(\xi_{i-1}^{q-1})\big|\DD_{i-1}\r]=0,
\end{aligned}
\end{equation*}
where $q\geq1$. For $q=0$, we have
\begin{equation*}
\begin{aligned}
\E\l[\mathcal{B}_i^0\big|\DD_{i-1}\r]=&\E\l[(\Y_i-f_{L_n}^*(X_i))K_{X_i,L_i}^T\big|\DD_{i-1}\r]\\
=&\E\l[(\Y_i-f^*(X_i)+f^*(X_i)-f_{L_n}^*(X_i))K_{X_i,L_i}^T\big|\DD_{i-1}\r]=0.
\end{aligned}
\end{equation*}
Let's first discuss $\E\l[\mathcal{B}_i^0\otimes\mathcal{B}_i^0\r]$. For $f\in \mathcal{H}_{L_n}$, one has
\begin{equation*}
\begin{aligned}
&\lan f,\E\l[\mathcal{B}_i^0\otimes\mathcal{B}_i^0\r]f\ran_K\\
=&\lan f,\E\l[(\Y_i-f_{L_n}^*(X_i))^2\lan K_{X_i,L_i}^T, f\ran_K K_{X_i,L_i}^T\r]\ran_K\\
=&\lan f,\E\l[(\Y_i-f^*(X_i)+f^*(X_i)-f_{L_n}^*(X_i))^2\lan K_{X_i,L_i}^T, f\ran_K K_{X_i,L_i}^T\r]\ran_K\\
\overset{\text{(i)}}{\leq}&\lan f,2\left[C_\epsilon^2+C_d^2\right] \E\l[\lan K_{X_i,L_i}^T, f\ran_KK_{X_i,L_i}^T\r]\ran_K\\
=&\left\langle f,2\left[C_\epsilon^2+C_d^2\right]L_{\mu_X,L_i}f\right\rangle_K
\end{aligned}
\end{equation*}
where (i) is due to
\begin{equation*}
\begin{aligned}
\left(f^*(X_i)-f_{L_n}^*(X_i)\right)^2=&\lan f^*-f_{L_n}^*,K_{X_i,\infty}^T\ran_K^2\leq \lVert f^*-f_{L_n}^*\rVert_K^2\lVert K_{X_i,\infty}^T \rVert_K^2\\
\leq&\frac{2s}{2s-1}\left(\Vert f^*-f_{L_n}\Vert_K^2+\Vert f_{L_n}-f_{L_n}^*\Vert_K^2\right)\\
\overset{\text{(ii)}}{\leq}&\frac{2s}{2s-1}\left(1+4M_lM_u(2d)^{2s}\right)\Vert u^*\Vert_\omega^2=C_d^2,
\end{aligned}
\end{equation*} 
where (ii) because of \autoref{lemma: fLn,fLn* error} and we note that $\frac{2s}{2s-1}\left(1+4M_lM_u(2d)^{2s}\right)\Vert u^*\Vert_\omega^2=C_d^2$ is a constant. Therefore we have $\E\l[\mathcal{B}_i^0\otimes\mathcal{B}_i^0\r]\preceq2\left[C_\epsilon^2+C_d^2\right]L_{\mu_X,L_i}$. 

Then we consider $\E\l[\xi_i^0\otimes\xi_i^0\r]$. We have,
$$\xi_i^0=(I-\gamma_0L_{\mu_X,L_i})\xi_{i-1}^0+\gamma_0\mathcal{B}_i^0=\sum_{k=1}^i\left[\prod_{l=k+1}^i(I-\gamma_0L_{\mu_X,L_l})
\right]\gamma_0\mathcal{B}_k^0,$$
where $\prod_{l=i+1}^i(I-\gamma_0L_{\mu_X,L_l})=I$.
Here, we define the notation $\mathcal{T}_{k+1}^i=\prod_{l=k+1}^i(I-\gamma_lL_{\mu_X,L_l})$ to concisely represent the operator.
When $k\neq k'$, we next prove $$\E\l[\left(\mathcal{T}_{k+1}^i\left(\gamma_0\mathcal{B}_k^0\right)\right)\otimes
\left(\mathcal{T}_{k'+1}^i\gamma_0\left(\mathcal{B}_{k'}^0\right)\right)\r]=0.$$ First, we assume $k>k'$, we take $f\in\HH_{L_n}$,
\begin{equation*}
\begin{aligned}
&\E\left[\left(\mathcal{T}_{k+1}^i\left(\gamma_0\mathcal{B}_k^0\right)\right)\otimes
\left(\mathcal{T}_{k'+1}^i\left(\gamma_0\mathcal{B}_{k'}^0\right)\right)\right]f\\
=&\E\left[\left\langle\mathcal{T}_{k'+1}^i\left(\gamma_0\mathcal{B}_{k'}^0\right),f\right\rangle_K
\mathcal{T}_{k+1}^i\left(\gamma_0\mathcal{B}_k^0\right)\right]
\\
=&\mathcal{T}_{k+1}^i\left(\E\left[\left\langle\mathcal{T}_{k'+1}^i\left(\gamma_0\mathcal{B}_{k'}^0\right),f\right\rangle_K
\E\l[\gamma_0\mathcal{B}_k^0\bigg|\DD_{k-1}\r]\right]\right)=0
\end{aligned}
\end{equation*} 
and $k<k'$ has a similar proof. In the following we consider a simplification of term 
$$\E\left[\left(\mathcal{T}_{k+1}^i\left(\gamma_0\mathcal{B}_k^0\right)\right)\otimes
\left(\mathcal{T}_{k+1}^i\left(\gamma_0\mathcal{B}_k^0\right)\right)\right]f,$$ 
we can obtain
\begin{equation*}
\begin{aligned}
&\E\left[\left(\mathcal{T}_{k+1}^i\left(\gamma_0\mathcal{B}_k^0\right)\right)\otimes
\left(\mathcal{T}_{k+1}^i\left(\gamma_0\mathcal{B}_k^0\right)\right)\right]f\\
=&\E\left[\left\langle\mathcal{T}_{k+1}^i\left(\gamma_0\mathcal{B}_k^0\right),f\right\rangle_K
\mathcal{T}_{k+1}^i\left(\gamma_0\mathcal{B}_k^0\right)\right]\\
=&\mathcal{T}_{k+1}^i\left(\E\left[\left\langle\gamma_0\mathcal{B}_k^0,\left(\mathcal{T}_{k+1}^i\right)^*\left(f\right)\right\rangle_K
\gamma_0\mathcal{B}_k^0\right]\right)\\
=&\mathcal{T}_{k+1}^i\left(\E\left[\left(\gamma_0\mathcal{B}_k^0\right)\otimes
\left(\gamma_0\mathcal{B}_k^0\right)\right]\left(\mathcal{T}_{k+1}^i\right)^*\left(f\right)\right),
\end{aligned}
\end{equation*} 
where operator $A^*$ is the adjoint operator of $A$. 

Consider the $\E\l[\xi_i^0\otimes\xi_i^0\r]$ item below,
\begin{align}\label{eq:A.4.3}
&\E\l[\xi_i^0\otimes\xi_i^0\r]\notag\\
=&\sum_{k'=1}^i\sum_{k=1}^i\E\left[\left(\mathcal{T}_{k+1}^i\left(\gamma_0\mathcal{B}_k^0\right)\right)\otimes
\left(\mathcal{T}_{k'+1}^i\left(\gamma_0\mathcal{B}_{k'}^0\right)\right)\right]\notag\\
=&\sum_{k=1}^i\mathcal{T}_{k+1}^i\gamma_0^2\E\left[\mathcal{B}_k^0\otimes
\mathcal{B}_k^0\right]\left(\mathcal{T}_{k+1}^i\right)^*\notag\\
\preceq&2\left[C_\epsilon^2+C_d^2\right]\sum_{k=1}^i\mathcal{T}_{k+1}^i\gamma_0^2L_{\mu_X,L_k}\left(\mathcal{T}_{k+1}^i\right)^*\notag\\
=&2\left[C_\epsilon^2+C_d^2\right]\gamma_0\sum_{k=1}^i\left[\mathcal{T}_{k+1}^i-\mathcal{T}_{k}^i\right]
\left(\mathcal{T}_{k+1}^i\right)^*\notag\\
=&2\left[C_\epsilon^2+C_d^2\right]\gamma_0\notag\\
\times&\sum_{k=1}^i\biggr\{(I-\gamma_0L_{\mu_X,L_{i}})\dots(I-\gamma_0L_{\mu_X,L_{k+1}})(I-\gamma_0L_{\mu_X,L_{k+1}})\dots(I-\gamma_0L_{\mu_X,L_i})\notag\\ &\quad-\mathcal{T}_{k}^i\left(\mathcal{T}_{k+1}^i\right)^*\biggr\}\notag\\
\overset{\text{(i)}}{\preceq}&2\left[C_\epsilon^2+C_d^2\right]\gamma_0\sum_{k=1}^i\biggr\{(I-\gamma_0L_{\mu_X,L_{i}})\dots(I-\gamma_0L_{\mu_X,L_{k+1}})\dots(I-\gamma_0L_{\mu_X,L_{i}})\notag\\ &\quad-\mathcal{T}_{k}^i\left(\mathcal{T}_{k+1}^i\right)^*\biggr\}\notag\\
=&2\left[C_\epsilon^2+C_d^2\right]\gamma_0\sum_{k=1}^i\biggr\{\mathcal{T}_{k+1}^i\left(\mathcal{T}_{k+2}^i\right)^*-\mathcal{T}_{k}^i\left(\mathcal{T}_{k+1}^i\right)^*\biggr\}\notag\\
=&2\left[C_\epsilon^2+C_d^2\right]\gamma_0\left\{I-\mathcal{T}_{1}^i\left(\mathcal{T}_{2}^i\right)^*\right\}\notag\\
\preceq&2\left[C_\epsilon^2+C_d^2\right]\gamma_0I
                \end{align} 
In \eqref{eq:A.4.3}, if $k\geq i$, we note that $\prod_{l=k+1}^i(I-\gamma_0L_{\mu_X,L_l})=I$. The inequality (i) follows from the fact that if $A,B$  are operators with $B\preceq I$ and $B$ is self-adjoint, then we have
$$ABBA^*=AB^{1/2}BB^{1/2}A^*\preceq ABA.$$
In the following, we use induction to prove the case of $q\in\NN$, assuming that the following equation holds for $q\geq0$,
\begin{align*}
\E\l[\mathcal{B}_i^q\otimes\mathcal{B}_i^q\r]&\preceq2\gamma_0^q\left(\frac{2s}{2s-1}\right)^q\left[C_\epsilon^2+C_d^2\right]L_{\mu_X,L_i}
,\\
\E\l[\xi_i^q\otimes\xi_i^q\r]&\preceq2\gamma_0^{q+1}\left(\frac{2s}{2s-1}\right)^q\left[C_\epsilon^2+C_d^2\right]I
,
\end{align*}
then for $q+1$ and $f\in \mathcal{H}_{L_n}$, we simply denote $L_{\mu_X,L_i}-K_{X_i,L_i}^T\otimes K_{X_i,L_i}^T$ as $\mathcal{D}_{X_i,L_i}$, one has
\begin{equation}\label{eq:A.4.4}
                \begin{aligned}
&\left\langle f,\E\l[\mathcal{B}_i^{q+1}\otimes\mathcal{B}_i^{q+1}\r]f\right\rangle_K\\
=&\left\langle f,\E\left[\lan\mathcal{D}_{X_i,L_i}\left(\xi_{i-1}^q\right),f\ran_K
\mathcal{D}_{X_i,L_i}\left(\xi_{i-1}^q\right)\right]\right\rangle_K\\
=&\E\left[\left\langle f,\E\left[\lan\mathcal{D}_{X_i,L_i}\left(\xi_{i-1}^q\right),f\ran_K
\mathcal{D}_{X_i,L_i}\left(\xi_{i-1}^q\right)|(X_i,Y_i)\right]\right\rangle_K\right]\\
=&\E\left[\left\langle \mathcal{D}_{X_i,L_i}\left(f\right),\E\left[\lan\xi_{i-1}^q,\mathcal{D}_{X_i,L_i}\left(f\right)\ran_K
\xi_{i-1}^q|(X_i,Y_i)\right]\right\rangle_K\right]\\
=&\E\left[\left\langle \mathcal{D}_{X_i,L_i}\left(f\right),\E\l[\xi_{i-1}^q\otimes\xi_{i-1}^q\r]\mathcal{D}_{X_i,L_i}\left(f\right)\right\rangle_K\right]\\
\leq&\gamma_0^{q+1}\left(\frac{2s}{2s-1}\right)^q2\left[C_\epsilon^2+C_d^2\right]
\lan f,\E\l[\left(\mathcal{D}_{X_i,L_i}\right)^2f\r]\ran_K\\
\leq&\gamma_0^{q+1}\left(\frac{2s}{2s-1}\right)^q2\left[C_\epsilon^2+C_d^2\right]
\lan f,\left(\E\l[(K_{X_i,L_i}^T\otimes K_{X_i,L_i}^T)^2\r]-L_{\mu_X,L_i}^2\right)f\ran_K\\
\leq&\gamma_0^{q+1}\left(\frac{2s}{2s-1}\right)^q2\left[C_\epsilon^2+C_d^2\right]
\lan f,\E\l[(K_{X_i,L_i}^T\otimes K_{X_i,L_i}^T)^2\r]f\ran_K.
                \end{aligned} 
            \end{equation}
Then one can obtain,
\begin{equation}\label{eq:A.4.5}
                \begin{aligned}
&\lan f,\E\l[(K_{X_i,L_i}^T\otimes K_{X_i,L_i}^T)^2\r]f\ran_K=\E\l[\lVert K_{X_i,L_i}^T\otimes K_{X_i,L_i}^T(f)\rVert_K^2\r]\\
=&\E\l[\lan f,K_{X_i,L_i}^T\ran_K^2\lVert K_{X_i,L_i}^T\rVert^2_K\r]\leq\frac{2s}{2s-1}\lan f,L_{\mu_X,L_i}f\ran_K.
                \end{aligned} 
            \end{equation}
Equations  \eqref{eq:A.4.4} and \eqref{eq:A.4.5} together show that the following formula holds:            
$$\E\l[\mathcal{B}_i^{q+1}\otimes\mathcal{B}_i^{q+1}\r]\preceq2\gamma_0^{q+1}\left(\frac{2s}{2s-1}\right)^{q+1}\left[C_\epsilon^2+C_d^2\right]L_{\mu_X,L_i}
.$$
Finally, let's consider the term $\E[\xi_i^{q+1}\otimes\xi_i^{q+1}]$,
\begin{align*}
&\E\l[\xi_i^{q+1}\otimes\xi_i^{q+1}\r]\notag\\
=&\sum_{k'=1}^i\sum_{k=1}^i\E\left[\left(\mathcal{T}_{k+1}^i\left(\gamma_0\mathcal{B}_k^{q+1}\right)\right)\otimes
\left(\mathcal{T}_{k'+1}^i\left(\gamma_0\mathcal{B}_{k'}^{q+1}\right)\right)\right]\\
=&\sum_{k=1}^i\mathcal{T}_{k+1}^i\gamma_0^2\E\left[\mathcal{B}_k^{q+1}\otimes
\mathcal{B}_k^{q+1}\right]\left(\mathcal{T}_{k+1}^i\right)^*\\
\preceq&\gamma_0^{q+1}\left(\frac{2s}{2s-1}\right)^{q+1}2\left[C_\epsilon^2+C_d^2\right]\sum_{k=1}^i\mathcal{T}_{k+1}^i\gamma_0^2L_{\mu_X,L_k}\left(\mathcal{T}_{k+1}^i\right)^*\\
\overset{\text{(i)}}{\preceq}&\gamma_0^{q+1}\left(\frac{2s}{2s-1}\right)^{q+1}2\left[C_\epsilon^2+C_d^2\right]
\gamma_0I\\
=&\gamma_0^{q+2}\left(\frac{2s}{2s-1}\right)^{q+1}2\left[C_\epsilon^2+C_d^2\right]I.
\end{align*}
The process of (i) is similar to formula \eqref{eq:A.4.3}. Then we complete the proof.
\end{proof}

\section{Proofs of \autoref{theorem:noiseless case} and \autoref{theorem:noise case}}\label{APPB}
 Similar to the proof of \autoref{theorem:constant step size}, we still divide the error into two parts, the noise term, and the noiseless term. The difference with the previous proof is that the projection term is no longer considered here. The treatment of the noiseless term is basically the same as that in \autoref{AA}, while there will be some differences in the treatment of the noise term. In this section, we continue with the definitions and concepts in \autoref{A1}, but here we consider to extend the operator from $\mathcal{H}_{L_n}$ to $\HH_K$, so we still use the previous notation. Let us first redefine a few operators,
\begin{equation}\label{eq:operator in HHk }
                \begin{aligned}
&L_{\mu_X,L_i}:\ \HH_K\to \mathcal{H}_{L_i}, \quad&&g\to\int_{\SS^{d-1}}\lan g,K_{x',L_i}^T\ran_K K_{x,L_i}^T(x')d\mu_X(x').\\
&K_{X_i,L_i}^T\otimes K_{X_i,\infty}^T:\ \HH_K\to \mathcal{H}_{L_i},\quad&&g\to g(X_i)K_{X_i,L_i}^T.\\
&K_{X_i,L_i}^T\otimes K_{X_i,L_i}^T:\ \HH_K\to \mathcal{H}_{L_i},\quad&&g\to \lan g,K_{X_i,L_i}^T\ran_K K_{X_i,L_i}^T.
                \end{aligned} 
            \end{equation}
We can observe that $K_{X_i,L_i}^T\otimes K_{X_i,L_i}^T,\ L_{\mu_X,L_i}$ are self-adjoint operators in space $(\HH_K,\langle\cdot,\cdot\rangle_K)$. The following conclusions can be obtained from the chapter 3 of literature \cite{cucker2002mathematical}, we have $L_{\mu_X,L_i}$ is a positive semi-definite, self-adjoint and compact operator. If \autoref{hyp2}\textbf{(c)} holds, the images of $L_{\mu_X,L_i}$ and $\mathcal{H}_{L_i}$ have the same dimensions, so $L_{\mu_X,L_i}\left(\LL_{\mu_X}^2\left(\SS^{d-1}\right)\right)=\mathcal{H}_{L_i}$.
By Mercer theorem we can obtain there exist an orthonormal eigensystem $\left\{\left(\sigma_{j,L_i},\phi_{j,L_i}\right)\right\}_{1\leq j\leq \dim\Pi_{L_i}^d}$ of operator $L_{\mu_X,L_i}$ such that
$$L_{\mu_X,L_i}(\phi_{j,L_i})=\sigma_{j,L_i}\phi_{j,L_i},\quad K_{L_i}^T(x,x')=\sum_{j=1}^{\dim \mathcal{H}_{L_i}}\sigma_{j,L_i}\phi_{j,L_i}(x)\phi_{j,L_i}(x').$$
Thus $\mathcal{H}_{L_i}$ with inner product $\langle\cdot,\cdot\rangle_K$ can be interpreted as
$$\mathcal{H}_{L_i}=\left\{\sum_{j=1}^{\dim \mathcal{H}_{L_i}}f_{j,L_i}\phi_{j,L_i}\,\Bigg\vert\,\sum_{j=1}^{\dim \mathcal{H}_{L_i}}\frac{(f_{j,L_i})^2}{\sigma_{j,L_i}}<\infty\right\},\quad\langle f,g\rangle_K=\sum_{j=1}^{\dim \mathcal{H}_{L_i}}\frac{f_{j,L_i}g_{j,L_i}}{\sigma_{j,L_i}},$$
for any $f=\sum_{j=1}^{\dim \mathcal{H}_{L_i}}f_{j,L_i}\phi_{j,L_i},g=\sum_{j=1}^{\dim \mathcal{H}_{L_i}}g_{j,L_i}\phi_{j,L_i}\in \mathcal{H}_{L_i}$ and $L_i$ could be equal to infinity when we denote $\mathcal{H}_{\infty}=\HH_K$. Then we define $L_{\mu_X,L_i}^{\frac{1}{2}}$ as
$$L_{\mu_X,L_i}^{\frac{1}{2}}:\ \HH_K=\mathcal{H}_{L_i}\oplus \mathcal{H}_{L_i}^\perp\to \mathcal{H}_{L_i},\quad\sum_{j=1}^{\dim \mathcal{H}_{L_i}}f_{j,L_i}\phi_{j,L_i}+h\to
\sum_{j=1}^{\dim \mathcal{H}_{L_i}}\sigma_{j,L_i}^{1/2}f_{j,L_i}\phi_{j,L_i}.$$
$L_{\mu_X,L_i}^{\frac{1}{2}}$ is isometry isomorphic from $(\mathcal{H}_{L_i},\langle\cdot,\cdot\rangle_{\mu_X})$ to $(\mathcal{H}_{L_i},\langle\cdot,\cdot\rangle_K)$ and $L_{\mu_X,L_i}^{\frac{1}{2}}\circ L_{\mu_X,L_i}^{\frac{1}{2}}=L_{\mu_X,L_i}$. If $\langle f, (B - A) f \rangle_K \geq 0$ for all $f \in \HH_K$, where $A$ and $B$ are self-adjoint operators on $\HH_K$, we write $A \preccurlyeq B$. Here, we use the notation $a\vee b:=\max\{a,b\}$ and $a\land b:=\min\{a,b\}$.

Similar to the decomposition of $\alpha_n$ in \autoref{A2}, we decompose $\hat{f}_n-f^*$ into noise term $\xi_n$ and noiseless term $\beta_n$, so the results obtained are exactly the same and we will simply write them below. For $n\geq i\geq0$, we define
$$\alpha_i=\hat{f}_i-f^*\quad\bar{\alpha}_n=\frac{1}{n+1}
\sum_{i=0}^n\alpha_i$$
and
\begin{equation*}
\begin{aligned}
&\beta_0=-f^*\quad,&&\beta_i=(I-\gamma_iK_{X_i,L_i}^T\otimes K_{X_i,\infty}^T)\beta_{i-1},\quad&&\bar{\beta}_n=\frac{1}{n+1}
\sum_{i=0}^n\beta_i,\\
&\xi_0=0,\quad&&\xi_i=(I-\gamma_iK_{X_i,L_i}^T\otimes K_{X_i,\infty}^T)\xi_{i-1}+\gamma_i\mathcal{B}_i,\quad&&\bar{\xi}_n=\frac{1}{n+1}
\sum_{i=0}^n\xi_i,
\end{aligned}
\end{equation*} 
where $\mathcal{B}_i=(\Y_i-f^*(X_i))K_{X_i,L_i}^T$. It is easy to verify that
$$\hat{f}_i-f^*=\alpha_i=\beta_i+\xi_i$$
and
$$\bar{f}_n-f^*=\bar{\alpha}_n=\bar{\beta}_n+\bar{\xi}_n.$$ 
Then similarly we can decompose the error as
\begin{equation}\label{eq:error decompostion 2}
\E\l[\lVert\bar{f}_n-f^*\rVert_{\mu_X}^2\r]\leq 2\E\l[\lVert\bar{\beta}_n\rVert_{\mu_X}^2\r]+2\E\l[\lVert\bar{\xi}_n\rVert_{\mu_X}^2\r].
\end{equation}

The noiseless case, where $\xi_i \equiv 0$, is equivalent to the noisy case when considering only the noiseless term $\E[\Vert\bar{\beta}_n\Vert_{\mu_X}^2]$. Therefore, the proofs of \autoref{theorem:noiseless case} and \autoref{theorem:noise case} can be combined.

\subsection{Estimating Noiseless Term}\label{B2}
In this section, we discuss the error of the noiseless term. As the proof closely resembles the one in \autoref{A3} concerning the noiseless term, we provide only a brief procedure.

For $\beta_i$, we apply the norm $\Vert\cdot\Vert_K$ and take the expectation on both sides,
\begin{equation*}
\begin{aligned}
\E\l[\lVert\beta_i\rVert_K^2\r]\leq\E\l[\lVert\beta_{i-1}\rVert_K^2\r]+&2\gamma_i\E\l[\beta_{i-1}(X_i)\lan\beta_{i-1},K_{X_i,\infty}^T-K_{X_i,L_i}^T-
K_{X_i,\infty}^T\ran_K\r]\\
+&\gamma_i^2\E\l[(\beta_{i-1}(X_i))^2\lVert K_{X_i,L_i}^T\rVert_K^2\r]\\
\leq\E\l[\lVert\beta_{i-1}\rVert_K^2\r]+&
2\gamma_i\E\l[\lan\beta_{i-1},K_{X_i,\infty}^T-K_{X_i,L_i}^T\ran_K^2\r]
-\frac{\gamma_i}{2}\E\l[\lVert\beta_{i-1}\rVert_{\mu_X}^2\r].
\end{aligned}
\end{equation*} 
Then we have upper bound about $\beta_n$,
\begin{equation*}
\begin{aligned}
&\E\l[\lVert\beta_n\rVert_K^2\r]\\
\leq& \E\l[\lVert\beta_{0}\rVert_K^2\r]+
2\sum_{i=1}^n\gamma_i\E\l[\lan\beta_{i-1},K_{X_i,\infty}^T-K_{X_i,L_i}^T\ran_K^2\r]
-\sum_{i=1}^n\frac{\gamma_i}{2}\E\l[\lVert\beta_{i-1}\rVert_{\mu_X}^2\r].
\end{aligned}
\end{equation*}
Then shifting term and by the step size $\gamma_i=\gamma_0i^{-t}$, one has
\begin{equation*}
\begin{aligned}
\sum_{i=1}^n\frac{\gamma_i}{2}\E\l[\lVert\beta_{i-1}\rVert_{\mu_X}^2\r]\leq&\lVert\beta_{0}\rVert_K^2+
2\sum_{i=1}^n\gamma_i\E\l[\lan\beta_{i-1},K_{X_i,\infty}^T-K_{X_i,L_i}^T\ran_K^2\r]\\
\frac{\gamma_n}{2}\sum_{i=1}^n\E\l[\lVert\beta_{i-1}\rVert_{\mu_X}^2\r]\leq&
\sum_{i=1}^n\frac{\gamma_i}{2}\E\l[\lVert\beta_{i-1}\rVert_{\mu_X}^2\r]\\
\leq&\lVert\beta_{0}\rVert_K^2+
2\sum_{i=1}^n\gamma_i\E\l[\lan\beta_{i-1},K_{X_i,\infty}^T-K_{X_i,L_i}^T\ran_K^2\r].
\end{aligned}
\end{equation*} 
Similarly, we can obtain $\E\l[\lVert\bar{\beta}_{n-1}\rVert_{\mu_X}^2\r]\leq\frac{1}{n}\sum_{i=1}^n\E\l[\lVert\beta_{i-1}\rVert_{\mu_X}^2\r]$, thus
\begin{equation}\label{eq:B.2.1}
                \begin{aligned}
\frac{n\gamma_n}{2}\E\l[\lVert\bar{\beta}_{n-1}\rVert_{\mu_X}^2\r] &\leq\frac{\gamma_n}{2}\sum_{i=1}^n\E\l[\lVert\beta_{i-1}\rVert_{\mu_X}^2\r]\\
&\leq\Vert\beta_{0}\Vert_K^2+
2\sum_{i=1}^n\gamma_i\E\l[\lan\beta_{i-1},K_{X_i,\infty}^T-K_{X_i,L_i}^T\ran_K^2\r]\\
\E\l[\lVert\bar{\beta}_{n-1}\rVert_{\mu_X}^2\r]&\leq\frac{2}{\gamma_0n^{1-t}}\lVert f^*\rVert_K^2+\frac{4}{\gamma_0n^{1-t}}
\sum_{i=1}^n\gamma_i\E\l[\lan\beta_{i-1},K_{X_i,\infty}^T-K_{X_i,L_i}^T\ran_K^2\r].
                \end{aligned} 
            \end{equation}
We then combine \autoref{lemma: noiseless lemma} with \eqref{eq:B.2.1} to derive an upper bound for the noiseless term.
when $2\theta s+t<1$,
$$\E\l[\lVert\bar{\beta}_{n-1}\rVert_{\mu_X}^2\r]\leq\left(\frac{2}{\gamma_0n^{1-t}}
+\frac{16M_u}{n^{2\theta s}}\frac{1}{1-2\theta s-t}\right)\lVert f^*\rVert_K^2,$$
when $2\theta s+t=1$,
$$\E\l[\lVert\bar{\beta}_{n-1}\rVert_{\mu_X}^2\r]\leq\left(\frac{2}{\gamma_0n^{1-t}}
+\frac{8M_u}{n^{1-t}}log(n+1)\right)\lVert f^*\rVert_K^2,$$
when $2\theta s+t>1,$
$$\E\l[\lVert\bar{\beta}_{n-1}\rVert_{\mu_X}^2\r]\leq\left(\frac{2}{\gamma_0n^{1-t}}
+\frac{8M_u}{n^{1-t}}\frac{1}{2\theta s+t-1}\right)\Vert f^*\Vert_K^2.$$
We combine the above results with \eqref{eq:error decompostion 2} to obatin \autoref{theorem:noiseless case}.

\subsection{Estimating Noise Term}

Here we use the same processing of the noise term as in the proof of \autoref{theorem:constant step size}, we also decompose the noise $\xi_i$ into term $\xi_i^q$ based on the non-random operator $L_{\mu_X,L_i}$. Then we define $\xi_i^q$ for $q\geq1$, but this definition differs slightly from the proof of \autoref{theorem:constant step size}.
\begin{equation}\label{eq:B.3.1}
                \begin{aligned}
&\xi_0^0=0,\quad&&\xi_i^0=(I-\gamma_iL_{\mu_X,L_i})\xi_{i-1}^0+\gamma_i\mathcal{B}_i^0,\quad&&\bar{\xi}_n^0=\frac{1}{n+1}
\sum_{i=0}^n\xi_i^0,\\
&\xi_0^q=0,\quad&&\xi_i^q=(I-\gamma_iL_{\mu_X,L_i})\xi_{i-1}^q+\gamma_i\mathcal{B}_i^q,\quad&&\bar{\xi}_n^q=\frac{1}{n+1}
\sum_{i=0}^n\xi_i^q,
                \end{aligned} 
            \end{equation}
where $\mathcal{B}_i^0=\mathcal{B}_i=(\Y_i-f^*(X_i))K_{X_i,L_i}^T$ and $\mathcal{B}_i^q=(L_{\mu_X,L_i}-K_{X_i,L_i}^T\otimes K_{X_i,L_i}^T)\xi_{i-1}^{q-1}$. Similarly to \autoref{lemma: A.1} we can prove, for $q\geq i$ and $q\geq n$,
$$\xi_i^q=0,\quad\quad\xi_i=\sum_{j=0}^q\xi_i^j,\quad\quad\bar{\xi}_n=\sum_{j=0}^q\bar{\xi}_n^j.$$
By applying the Minkowski inequality, we obtain
$$\left(\E\l[\lVert\bar{\xi}_n\rVert_{\mu_X}^2\r]\right)^{\frac{1}{2}}\leq\sum_{q=0}^{n}
\left(\E\l[\lVert\bar{\xi}_n^q\rVert_{\mu_X}^2\r]\right)^{\frac{1}{2}}.$$
Similar to the proof of \autoref{theorem:constant step size}, we also have \autoref{lemma: noise term 2},
\begin{equation*}
\begin{aligned}
\E\l[\mathcal{B}_i^q\otimes\mathcal{B}_i^q\r]&\preccurlyeq\gamma_0^q\left(\frac{2s}{2s-1}\right)^qC_\epsilon^2L_{\mu_X,L_i},\\
\E\l[\xi_i^q\otimes\xi_i^q\r]&\preccurlyeq\gamma_0^{q+1}\left(\frac{2s}{2s-1}\right)^qC_\epsilon^2I,
\end{aligned}
\end{equation*}
and $\mathcal{B}_i^q,\xi_i^q\in \mathcal{H}_{L_i}=\sum_{k=0}^i\HH_k^d$. We also have $\E[\mathcal{B}_i^q|\DD_{i-1}]=0$, where $$\DD_{i-1}=\sigma((X_1,\Y_1),\dots,(X_{i-1},\Y_{i-1})).$$
For $\xi_i^q$, we have
$$\xi_i^q=(I-\gamma_iL_{\mu_X,L_i})\xi_{i-1}^q+\gamma_i\mathcal{B}_i^q=\sum_{k=1}^i\prod_{l=k+1}^i(I-\gamma_lL_{\mu_X,L_l})
\gamma_k\mathcal{B}_k^q.$$
Here, we also use the notation $\mathcal{T}_{k+1}^i=\prod_{l=k+1}^i(I-\gamma_lL_{\mu_X,L_l})$ to concisely represent the operator. We interpret $\mathcal{D}_k^n$ as
$$\mathcal{D}_k^n=\sum_{i=k}^n\left(\prod_{l=k+1}^i(I-\gamma_lL_{\mu_X,L_l})\right)=\sum_{i=k}^n\mathcal{T}_{k+1}^i.$$
Then one has,
\begin{equation*}
\begin{aligned}
&n^2\E\l[\lVert\bar{\xi}_n^q\rVert_{\mu_X}^2\r]\\
=&\E\left[\left\langle\sum_{i=1}^n\sum_{k=1}^i\mathcal{T}_{k+1}^i\left(\gamma_k\mathcal{B}_k^q\right),L_{\mu_X,L_n}
\sum_{i=1}^n\sum_{k=1}^i\mathcal{T}_{k+1}^i\left(\gamma_k\mathcal{B}_k^q\right)\right\rangle_K\right]\\
=&\E\left[\left\langle\sum_{k=1}^n\sum_{i=k}^n\mathcal{T}_{k+1}^i\left(\gamma_k\mathcal{B}_k^q\right),L_{\mu_X,L_n}
\sum_{k=1}^n\sum_{i=k}^n\mathcal{T}_{k+1}^i\left(\gamma_k\mathcal{B}_k^q\right)\right\rangle_K\right]\\
\overset{\text{(i)}}{=}&\sum_{k=1}^n\E\left[\left\langle\sum_{i=k}^n\mathcal{T}_{k+1}^i\left(\gamma_k\mathcal{B}_k^q\right),L_{\mu_X,L_n}
\sum_{i=k}^n\mathcal{T}_{k+1}^i\left(\gamma_k\mathcal{B}_k^q\right)\right\rangle_K\right]\\
=&\sum_{k=1}^n\E\left[\left\langle \mathcal{D}_k^n \gamma_k\mathcal{B}_k^q,L_{\mu_X,L_n}
\mathcal{D}_k^n\gamma_k\mathcal{B}_k^q\right\rangle_K\right],
\end{aligned}
\end{equation*}
where (i) uses the conclusion of \autoref{lemma: noise term 2}. Operator $(\mathcal{D}_k^n)^* $ is used to denote the adjoint operator of $\mathcal{D}_k^n$. Notice that the operator $L_{\mu_X,L_n}^{\frac{1}{2}}\mathcal{D}_k^n\mathcal{B}_k^q\otimes\mathcal{B}_k^q(\mathcal{D}_k^n)^*L_{\mu_X,L_n}^{\frac{1}{2}}$ has a unique eigenvalue $\langle \mathcal{D}_k^n \mathcal{B}_k^q,L_{\mu_X,L_n}\mathcal{D}_k^n\mathcal{B}_k^q\rangle_K$, we can obtain

\begin{align}\label{eq:B.3.2}
&n^2\E\l[\lVert\bar{\xi}_n^q\rVert_{\mu_X}^2\r]=\sum_{k=1}^n\E\left[\left\langle \mathcal{D}_k^n \gamma_k\mathcal{B}_k^q,L_{\mu_X,L_n}
\mathcal{D}_k^n\gamma_k\mathcal{B}_k^q\right\rangle_K\right]\notag\\
=&\sum_{k=1}^n\gamma_k^2\E\left[tr\left(L_{\mu_X,L_n}^{\frac{1}{2}}\mathcal{D}_k^n\mathcal{B}_k^q\otimes\mathcal{B}_k^q(\mathcal{D}_k^n)^*L_{\mu_X,L_n}^{\frac{1}{2}}\right)\right]\notag\\
=&\sum_{k=1}^n\gamma_k^2tr\left(L_{\mu_X,L_n}\mathcal{D}_k^n\E\l[\mathcal{B}_k^q\otimes\mathcal{B}_k^q\r]\left(\mathcal{D}_k^n\right)^*\right)\notag\\
\overset{\text{(i)}}{\leq}&\gamma_0^q\left(\frac{2s}{2s-1}\right)^qC_\epsilon^2
\sum_{k=1}^n\gamma_k^2tr\left(L_{\mu_X,L_n}\mathcal{D}_k^nL_{\mu_X,L_k}(\mathcal{D}_k^n)^*\right)\notag\\
\overset{\text{(ii)}}{\leq}&\gamma_0^q\left(\frac{2s}{2s-1}\right)^qC_\epsilon^2
\sum_{k=1}^n\gamma_k^2\sum_{p=1}^\infty\left( \left[\sum_{i=k}^n\prod_{l=k+1}^i(1-\gamma_l\sigma_{p,L_l})\right]^2\sigma_{p,L_n}\sigma_{p,L_k}\right)\notag\\
\leq&\gamma_0^q\left(\frac{2s}{2s-1}\right)^qC_\epsilon^2
\sum_{k=1}^n\gamma_k^2\sum_{p=1}^\infty\left( \left[\sum_{i=k}^n\prod_{l=k+1}^i\left(1-\gamma_l\frac{C_2}{C_1}\sigma_{p,L_n}\right)\right]^2\sigma_{p,L_n}\sigma_{p,L_k}\right),      
\end{align} 
where (i) is due to \autoref{lemma: noise term 2}. We use $\sigma_{p,L_l}$ to denote the $p-$th largest eigenvalue of $L_{\mu_X,L_l}$ and (ii) follows from the trace inequality $tr(A_1\dots A_n)\leq\sum_p\sigma_p(A_1)\dots\sigma_p(A_n)$ for operator $A_1, \dots, A_n$ and $\sigma_p()$ denotes the $p-$th largest eigenvalue [page 342 of \cite{marshall1979inequalities}]. Next we use \autoref{lemma: B.3.} to obtain,
\begin{equation}\label{eq:B.3.3}
                \begin{aligned}
                    \sum_{i=k}^n\left(\prod_{l=k+1}^i\left(1-\gamma_l\frac{C_2}{C_1}\sigma_{p,L_n}\right)\right)&\leq(n-k)\land \left(\frac{6(C_1)^2}{(C_2\gamma_0)^{2}}\sigma_{p,L_n}^{-\frac{1}{1-t}}\vee
\frac{4C_1}{C_2\gamma_0}\cdot{k}^{t}\sigma_{p,L_n}^{-1}\right)+1\\
&\leq\frac{6(C_1)^2}{(C_2\gamma_0)^{2}(C_2)^2}(n-k)\land \left(p^{\frac{2s}{1-t}}\vee
{k}^{t}p^{2s}\right)+1.
                \end{aligned} 
            \end{equation}
Combining \eqref{eq:B.3.2} with \eqref{eq:B.3.3}, we have
\begin{align*}
&\sum_{k=1}^n\gamma_k^2\sum_{p=1}^\infty\left( \left[\sum_{i=k}^n\prod_{l=k+1}^i\left(1-\gamma_l\frac{C_2}{C_1}\sigma_{p,L_n}\right)\right]^2\sigma_{p,L_n}\sigma_{p,L_k}\right)\\
\leq&\sum_{k=1}^n\gamma_k^2\sum_{p=1}^\infty2\left(\frac{6(C_1)^2}{(C_2\gamma_0)^{2}(C_2)^2}\right)^2\left[(n-k)^2\land \left(p^{\frac{4s}{1-t}}\vee
{k}^{2t}p^{4s}\right)\right]\frac{(C_1)^2}{p^{4s}}\\
&+2\sum_{k=1}^n\gamma_k^2\sum_{p=1}^\infty\frac{(C_1)^2}{p^{4s}}\\ 
=&\frac{72(C_1)^6}{(C_2)^8(\gamma_0)^{4}}\sum_{k=1}^n\gamma_k^2\sum_{p=1}^\infty\left[(n-k)^2\land \left(p^{\frac{4s}{1-t}}\vee{k}^{2t}p^{4s}\right)\right]\frac{1}{p^{4s}}\\
&+2(\gamma_0C_1)^2\sum_{k=1}^nk^{-2(1-t)}\sum_{p=1}^\infty\frac{1}{p^{4s}}\\
\leq&\frac{72(C_1)^6}{(C_2)^8(\gamma_0)^{4}}\left(\sum_{k=1}^n\gamma_k^2\sum_{p=1}^\infty\left[(n-k)^2\land p^{\frac{4s}{1-t}}\right]\frac{1}{p^{4s}}\right)+8(\gamma_0C_1)^2\sum_{k=1}^n\sum_{p=1}^\infty\frac{(k+1)^{-2(1-t)}}{p^{2}}\\
&+\frac{72(C_1)^6}{(C_2)^8(\gamma_0)^{4}}\left(\sum_{k=1}^n\gamma_k^2\sum_{p=1}^\infty\left[(n-k)^2\land k^{2t}p^{4s}\right]\frac{1}{p^{4s}}\right).
\end{align*}
First, we consider the second term of the above equation
\begin{align*}
&8(\gamma_0C_1)^2\sum_{k=1}^n(k+1)^{-2(1-t)}\sum_{p=1}^\infty\frac{1}{p^{2}}\leq16(\gamma_0C_1)^2\sum_{k=1}^n(k+1)^{-2(1-t)}\\
\leq&16(\gamma_0C_1)^2\int_1^{n+1}u^{-2(1-t)}du\leq16\frac{(\gamma_0C_1)^2}{1-2t}\\
\end{align*}
If we note that
\begin{equation}\label{eq:def S1 S2}
                \begin{aligned}
                    S_1&=\sum_{k=1}^n\gamma_k^2\sum_{p=1}^\infty\left[(n-k)^2\land p^{\frac{4s}{1-t}}\right]\frac{1}{p^{4s}},\\
S_2&=\sum_{k=1}^n\gamma_k^2\sum_{p=1}^\infty\left[(n-k)^2\land k^{2t}p^{4s}\right]\frac{1}{p^{4s}}.
                \end{aligned} 
            \end{equation}
By using \autoref{s1s2hehe}, one has
\begin{equation*}
\begin{aligned}
&S_1\leq2\gamma_0^2D_1n^{1+\frac{1-t}{2s}}+2\gamma_0^2D_1n,\\
&S_2\leq2\gamma_0^2D_2n^{1+\frac{1-t}{2s}}+2\gamma_0^2n,
\end{aligned}
\end{equation*}
where $D_1$ and $D_2$ are constants only related to $t$. Thus we obtain
\begin{equation}\label{eq:B.3.4}
                \begin{aligned}
                    &\sum_{k=1}^n\gamma_k^2\sum_{p=1}^\infty\left( \left[\sum_{i=k}^n\prod_{l=k+1}^i\left(1-\gamma_l\frac{C_2}{C_1}\sigma_{p,L_n}\right)\right]^2\sigma_{p,L_n}\sigma_{p,L_k}\right)\\
\leq&\frac{72(C_1)^6}{(C_2)^8(\gamma_0)^{4}}\left[2\gamma_0^2D_1n^{1+\frac{1-t}{2s}}+2\gamma_0^2D_1n+2\gamma_0^2D_2n^{1+\frac{1-t}{2s}}+2\gamma_0^2n+\frac{\gamma_0^2}{1-2t}\right]\\
\leq&\frac{144(C_1)^6}{(C_2)^8(\gamma_0)^{2}}\left[2D_1+D_2+1+\frac{1}{2-4t}\right]n^{1+\frac{1-t}{2s}}.
                \end{aligned} 
            \end{equation}
According to these two inequalities \eqref{eq:B.3.2} and \eqref{eq:B.3.4}, we can obtain
\begin{equation*}
\begin{aligned}
&n^2\E\l[\lVert\bar{\xi}_n^q\rVert_{\mu_X}^2\r]\leq\gamma_0^q\left(\frac{2s}{2s-1}\right)^qC_\epsilon^2
\frac{144(C_1)^6}{(C_2)^8(\gamma_0)^{2}}\left[2D_1+D_2+1+\frac{1}{2-4t}\right]n^{1+\frac{1-t}{2s}}\\
&\E\l[\lVert\bar{\xi}_n^q\rVert_{\mu_X}^2\r]\leq\gamma_0^q\left(\frac{2s}{2s-1}\right)^qC_\epsilon^2
\frac{144(C_1)^6}{(C_2)^8(\gamma_0)^{2}}\left[2D_1+D_2+1+\frac{1}{2-4t}\right]n^{-1+\frac{1-t}{2s}}.
\end{aligned}
\end{equation*}
Eventually we can bound the noise term $\bar{\xi}_n$ to complete the proof
\begin{equation}\label{eq:B.3.5}
\begin{aligned}
&\left(\E\l[\lVert\bar{\xi}_n\rVert_{\mu_X}^2\r]\right)^{\frac{1}{2}}\leq\sum_{q=0}^n\left(\E\l[\lVert\bar{\xi}_n^q\rVert_{\mu_X}^2\r]\right)^{\frac{1}{2}}\\
\leq&\frac{1}{1-\left(\gamma_0\frac{2s}{2s-1}\right)^{\frac{1}{2}}}\left[C_\epsilon^2
\frac{144(C_1)^6}{(C_2)^8(\gamma_0)^{2}}\left[2D_1+D_2+1+\frac{1}{2-4t}\right]n^{-1+\frac{1-t}{2s}}\right]^{\frac{1}{2}}.
\end{aligned}
\end{equation}
Finally we can combine \autoref{lemma: noiseless lemma}, \eqref{eq:B.2.1} and \eqref{eq:B.3.5} to derive \autoref{theorem:noise case}.
\subsection{Technical Lemmas}\label{B4}

\begin{lemma}\label{lemma: noiseless lemma}
Under the assumptions of \autoref{theorem:noiseless case} and $\sum_{i=1}^n\gamma_i\E[\langle\beta_{i-1},K_{X_i,\infty}^T-K_{X_i,L_i}^T\rangle_K^2]$ is defined by \eqref{eq:B.2.1}. Then we have the following properties.

When $2\theta s+t<1$,
$$\sum_{i=1}^n\gamma_i\E\l[\lan\beta_{i-1},K_{X_i,\infty}^T-K_{X_i,L_i}^T\ran_K^2\r]\leq2M_u\gamma_0\Vert f^*\Vert_K^2\frac{(n+1)^{1-2\theta s-t}}{1-2\theta s-t},$$
when $2\theta s+t=1$,
$$\sum_{i=1}^n\gamma_i\E\l[\lan\beta_{i-1},K_{X_i,\infty}^T-K_{X_i,L_i}^T\ran_K^2\r]\leq2M_u\gamma_0\Vert f^*\Vert_K^2log(n+1),$$
when $2\theta s+t>1$,
$$\sum_{i=1}^n\gamma_i\E\l[\lan\beta_{i-1},K_{X_i,\infty}^T-K_{X_i,L_i}^T\ran_K^2\r]\leq2M_u\gamma_0\Vert f^*\Vert_K^2\frac{1}{2\theta s+t-1}.$$
\end{lemma}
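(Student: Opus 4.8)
The plan is to follow the template of \autoref{lemma:dadaxiang}, adapted to the present setting of \autoref{theorem:noiseless case} and \autoref{theorem:noise case}, where the projection $f_{L_n}^*$ no longer appears (we work directly with $f^*$, using only \autoref{hyp3}(a)) and where each summand now carries the step-size weight $\gamma_i=\gamma_0 i^{-t}$. The starting point is the noiseless recursion $\beta_i=\beta_{i-1}-\gamma_i\beta_{i-1}(X_i)K_{X_i,L_i}^T$ with $\beta_0=-f^*$. First I would introduce the auxiliary sequence $\eta_i=\beta_i+f^*$ and show by induction that $\eta_i\in\mathcal{H}_{L_i}$: since $\eta_0=0$ and $\eta_i=\eta_{i-1}-\gamma_i\beta_{i-1}(X_i)K_{X_i,L_i}^T$, the claim follows from $\eta_{i-1}\in\mathcal{H}_{L_{i-1}}\subseteq\mathcal{H}_{L_i}$ together with $K_{X_i,L_i}^T\in\mathcal{H}_{L_i}$. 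Because $K_{X_i,\infty}^T-K_{X_i,L_i}^T=\sum_{k>L_i}(\dim\Pi_k^d)^{-2s}K_k(X_i,\cdot)$ lies in the orthogonal complement $\bigoplus_{k>L_i}\HH_k^d$ of $\mathcal{H}_{L_i}$, the inner product $\langle\eta_{i-1},K_{X_i,\infty}^T-K_{X_i,L_i}^T\rangle_K$ vanishes for every realization, yielding the key path-wise identity $\langle\beta_{i-1},K_{X_i,\infty}^T-K_{X_i,L_i}^T\rangle_K=-\langle f^*,K_{X_i,\infty}^T-K_{X_i,L_i}^T\rangle_K$. Since the right-hand side depends only on $X_i$, taking expectations collapses all dependence on the past to an expectation over $X_i$ alone.

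Next I would evaluate that expectation. Writing $f^*=\sum_{k,j}f_{k,j}Y_{k,j}$ and applying the projection identity \eqref{eq:projfk} term-by-term gives $\langle f^*,K_{X_i,\infty}^T-K_{X_i,L_i}^T\rangle_K=\sum_{k>L_i}\sum_j f_{k,j}Y_{k,j}(X_i)$, so that $\E\l[\langle\beta_{i-1},K_{X_i,\infty}^T-K_{X_i,L_i}^T\rangle_K^2\r]=\Vert\sum_{k>L_i}\sum_j f_{k,j}Y_{k,j}\Vert_{\mu_X}^2\le M_u\sum_{k>L_i}\sum_j f_{k,j}^2$, where the inequality uses \autoref{hyp2}(a) and the $\omega$-orthonormality of $\{Y_{k,j}\}$. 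Factoring the smallest tail weight $(\dim\Pi_{L_i+1}^d)^{2s}$ out of the series and recalling $\Vert f^*\Vert_K^2=\sum_{k}(\dim\Pi_k^d)^{2s}\sum_j f_{k,j}^2$ yields $\E[\langle\beta_{i-1},\cdot\rangle_K^2]\le M_u(\dim\Pi_{L_i+1}^d)^{-2s}\Vert f^*\Vert_K^2$. Exactly as in \autoref{lemma:dadaxiang}, the definition of $L_i$ forces $\dim\Pi_{L_i+1}^d\ge(i+1)^\theta$, so the per-term bound becomes $M_u(i+1)^{-2\theta s}\Vert f^*\Vert_K^2$. This is the step where \autoref{hyp3}(a) substitutes for the stronger smoothness \autoref{hyp3}(b) of the earlier lemma, which is why the exponent $2\theta s$ appears here in place of $4\theta sr$.

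Finally, inserting $\gamma_i=\gamma_0 i^{-t}$ reduces the statement to the purely deterministic estimate $\sum_{i=1}^n\gamma_i\E[\cdots]\le M_u\gamma_0\Vert f^*\Vert_K^2\sum_{i=1}^n i^{-t}(i+1)^{-2\theta s}$, and I would bound $\sum_{i=1}^n i^{-t}(i+1)^{-2\theta s}\le\sum_{i=1}^n i^{-(2\theta s+t)}$ by comparison with $\int x^{-(2\theta s+t)}\,dx$ plus its leading term, splitting into the three regimes $2\theta s+t<1$, $2\theta s+t=1$, and $2\theta s+t>1$, which produce the polynomial factor $(n+1)^{1-2\theta s-t}/(1-2\theta s-t)$, the logarithmic factor $\log(n+1)$, and the bounded factor $1/(2\theta s+t-1)$ respectively. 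The main obstacle is this final case split: one must verify that the leading ($i=1$) term of the discrete sum can be absorbed into the integral so as to reproduce the stated constants (e.g. $1+\int_1^n x^{-(2\theta s+t)}\,dx\le \tfrac{2}{1-2\theta s-t}(n+1)^{1-2\theta s-t}$ in the subcritical case, and the analogous absorptions in the critical and convergent cases). Everything else is a routine transcription of \autoref{lemma:dadaxiang}, and the resulting three bounds combine with \eqref{eq:B.2.1} to control the noiseless term in both \autoref{theorem:noiseless case} and \autoref{theorem:noise case}.
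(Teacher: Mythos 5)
Your argument is, up to the last display, exactly the paper's: the auxiliary sequence $\eta_i=\beta_i+f^*\in\mathcal{H}_{L_i}$ (induction from $\eta_0=0$), the orthogonality of $\mathcal{H}_{L_i}$ and $\bigoplus_{k>L_i}\HH_k^d$ reducing the inner product pathwise to $-\langle f^*,K_{X_i,\infty}^T-K_{X_i,L_i}^T\rangle_K$, and the chain
$\E\left[\langle f^*,K_{X_i,\infty}^T-K_{X_i,L_i}^T\rangle_K^2\right]\le M_u\sum_{k>L_i}\sum_j f_{k,j}^2\le M_u\left(\dim\Pi_{L_i+1}^d\right)^{-2s}\Vert f^*\Vert_K^2\le M_u(i+1)^{-2\theta s}\Vert f^*\Vert_K^2$
all coincide with the paper's proof of this lemma.

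The gap is in the final summation step, and it is a real one. You weaken $(i+1)^{-2\theta s}$ to $i^{-2\theta s}$ and then compare $\sum_{i=1}^n i^{-(2\theta s+t)}$ with $1+\int_1^n x^{-(2\theta s+t)}\,dx$, asserting that the leading term can be absorbed in all three regimes. Write $a:=2\theta s+t$. In the convergent regime $a>1$ your absorption requires $1+\frac{1}{a-1}\le\frac{2}{a-1}$, i.e. $a\le 2$; but \autoref{theorem:noiseless case} and \autoref{theorem:noise case} allow arbitrary $\theta>0$ and $s>\frac12$, so $a$ can be arbitrarily large. Once $a>2$ your bound $M_u\gamma_0\Vert f^*\Vert_K^2\left(1+\frac{1}{a-1}\right)$ exceeds the stated bound $\frac{2M_u\gamma_0\Vert f^*\Vert_K^2}{a-1}$, and once $a>3$ the very first term of your majorant, $i=1$ giving the value $1$, already exceeds $\frac{2}{a-1}$, so no treatment of the remaining terms can repair the estimate. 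The damage is done precisely by discarding the shift: the true first summand $1^{-t}\cdot 2^{-2\theta s}=2^{-2\theta s}$ is tiny for large $\theta s$, whereas your replacement makes it $1$. The paper instead shifts the step-size factor: $\gamma_i=\gamma_0 i^{-t}\le 2\gamma_0(i+1)^{-t}$ (this is the source of the factor $2$ in the lemma), so that
$\sum_{i=1}^n\gamma_i(i+1)^{-2\theta s}\le 2\gamma_0\sum_{i=1}^n(i+1)^{-a}\le 2\gamma_0\int_1^{n+1}x^{-a}\,dx$,
the last inequality holding termwise via $(i+1)^{-a}\le\int_i^{i+1}x^{-a}\,dx$ with no leading-term correction; evaluating this one integral in the three regimes gives exactly the stated constants. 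With that single substitution your proof closes and is identical to the paper's.
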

\begin{proof}
Similarly to the proof of \autoref{lemma:dadaxiang}, here we denote $\eta_i=\beta_i+f^*$ for $0\leq i\leq n$. We can also prove $\eta_i\in\sum_{k=0}^{L_i}\HH_k^d$. We also denote $f^*=\sum_{k=0}^{\infty}\sum_{j=1}^{\dim\HH_k^d}f_{k,j}Y_{k,j}$, one has
\begin{equation}\label{eq:B.4.1}
                \begin{aligned}
                   &\sum_{i=1}^n\gamma_i\E\l[\lan\beta_{i-1},K_{X_i,\infty}^T-K_{X_i,L_i}^T\ran_K^2\r]\\
=&\sum_{i=1}^n\gamma_i\E\l[\lan f^*,K_{X_i,\infty}^T-K_{X_i,L_i}^T\ran_K^2\r]\\
\leq&\sum_{i=1}^n\gamma_i\int_{\SS^{d-1}}\left(\sum_{k=L_i+1}^{\infty}\sum_{j=1}^{\dim\HH_k^d}f_{k,j}Y_{k,j}\right)^2d
\mu_X\\
\leq&M_u\sum_{i=1}^n\gamma_i\frac{1}{\Omega_{d-1}}\int_{\SS^{d-1}}\left(\sum_{k=L_i+1}^{\infty}\sum_{j=1}^{\dim\HH_k^d}f_{k,j}Y_{k,j}\right)^2d
\omega\\
=&M_u\sum_{i=1}^n\gamma_i\sum_{k=L_i+1}^{\infty}\sum_{j=1}^{\dim\HH_k^d}(f_{k,j})^2\\
\leq&M_u\sum_{i=1}^n\gamma_i(\dim\Pi_{L_i+1}^d)^{-2s}
\sum_{k=L_i+1}^{\infty}(\dim\Pi_{k}^d)^{2s}\sum_{j=1}^{\dim\HH_k^d}(f_{k,j})^2\\
\leq&M_u\Vert f^*\Vert_{K}^2\sum_{i=1}^n\gamma_i(i+1)^{-2\theta s}\\
\leq&2M_u\gamma_0\Vert f^*\Vert_{K}^2\sum_{i=1}^n(i+1)^{-(2\theta s+t)}\leq
2M_u\gamma_0\Vert f^*\Vert_{K}^2\int_1^{(n+1)}x^{-(2\theta s+t)}dx.
                \end{aligned} 
            \end{equation}
We can obtain \autoref{lemma: noiseless lemma} by simply integrating over the inequality \eqref{eq:B.4.1}. Thus we complete the proof.

\end{proof}
\begin{lemma}\label{lemma: noise term 2}
Terms $\xi_i^q$ and $\mathcal{B}_i^q$ defined according to the equation \eqref{eq:B.3.1}, we have
\begin{equation*}
\begin{aligned}
\E\l[\mathcal{B}_i^q\otimes\mathcal{B}_i^q\r]&\preccurlyeq\gamma_0^q\left(\frac{2s}{2s-1}\right)^qC_\epsilon^2L_{\mu_X,L_i},\\
\E\l[\xi_i^q\otimes\xi_i^q\r]&\preccurlyeq\gamma_0^{q+1}\left(\frac{2s}{2s-1}\right)^qC_\epsilon^2I,
\end{aligned}
\end{equation*}
where $\mathcal{B}_i^q,\xi_i^q\in \mathcal{H}_{L_i}=\sum_{k=0}^{L_i}\HH_k^d$. We also have $\E\l[\mathcal{B}_i^q\big|\DD_{i-1}\r]=0$, where $$\DD_{i-1}=\sigma((X_1,\Y_1),\dots,(X_{i-1},\Y_{i-1})).$$
\end{lemma}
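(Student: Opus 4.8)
The plan is to prove both Loewner bounds simultaneously by induction on $q$, following the template of \autoref{lemma: operator preceq bound} in \autoref{AA}. The key structural simplification is that here the error is decomposed around $f^*$ itself rather than its projection $f_{L_n}^*$; consequently $\mathcal{B}_i^0 = (\Y_i - f^*(X_i))K_{X_i,L_i}^T = \epsilon_i K_{X_i,L_i}^T$ involves the genuine noise $\epsilon_i$, which satisfies $\E[\epsilon_i \mid X_i]=0$ and $\E[\epsilon_i^2 \mid X_i]\leq C_\epsilon^2$ under \autoref{hyp4}. This is precisely why the bias constant $C_d^2$ appearing in \autoref{lemma: operator preceq bound} is absent, leaving only the cleaner factor $C_\epsilon^2$.

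First I would dispatch the base case $q=0$. That $\mathcal{B}_i^q,\xi_i^q\in\mathcal{H}_{L_i}$ and $\E[\mathcal{B}_i^q\mid\DD_{i-1}]=0$ follow by a routine induction: for $q\geq 1$ one uses $\E[K_{X_i,L_i}^T\otimes K_{X_i,L_i}^T\mid\DD_{i-1}]=L_{\mu_X,L_i}$, while for $q=0$ one uses $\E[\epsilon_i\mid X_i]=0$, which holds because $f^*(X)=\E[\Y\mid X]$. For the first bound at $q=0$, for any $f\in\HH_K$ I would compute $\langle f,\E[\mathcal{B}_i^0\otimes\mathcal{B}_i^0]f\rangle_K=\E[\epsilon_i^2\langle f,K_{X_i,L_i}^T\rangle_K^2]\leq C_\epsilon^2\E[\langle f,K_{X_i,L_i}^T\rangle_K^2]=C_\epsilon^2\langle f,L_{\mu_X,L_i}f\rangle_K$, giving $\E[\mathcal{B}_i^0\otimes\mathcal{B}_i^0]\preccurlyeq C_\epsilon^2 L_{\mu_X,L_i}$.

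For $\E[\xi_i^0\otimes\xi_i^0]$ I would expand $\xi_i^0=\sum_{k=1}^i\prod_{l=k+1}^i(I-\gamma_l L_{\mu_X,L_l})\gamma_k\mathcal{B}_k^0$; the martingale property annihilates all cross terms $k\neq k'$, so only the diagonal survives, and inserting $\E[\mathcal{B}_k^0\otimes\mathcal{B}_k^0]\preccurlyeq C_\epsilon^2 L_{\mu_X,L_k}$ reduces the task to controlling $\sum_{k=1}^i P_k\gamma_k^2 L_{\mu_X,L_k}P_k^*$, where $P_k=\prod_{l=k+1}^i(I-\gamma_l L_{\mu_X,L_l})$. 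Here the proof departs from \autoref{lemma: operator preceq bound}, whose exact telescoping in \eqref{eq:A.4.3} relied on the constant step size. The hard part, and the step I expect to be the main obstacle, is to establish the per-term Loewner inequality $P_k\gamma_k^2 L_{\mu_X,L_k}P_k^*\preccurlyeq\gamma_0(P_kP_k^*-P_{k-1}P_{k-1}^*)$. Using $P_{k-1}=P_k(I-\gamma_k L_{\mu_X,L_k})$, this reduces, after conjugation by $P_k$ (which preserves $\preccurlyeq$), to the scalar inequality $\gamma_k^2\sigma\leq\gamma_0(2\gamma_k\sigma-\gamma_k^2\sigma^2)$ for every eigenvalue $\sigma$ of $L_{\mu_X,L_k}$; this holds because $\gamma_k=\gamma_0 k^{-t}\leq\gamma_0$ and $\gamma_0\sigma\leq\gamma_0 C_1<1$ by \autoref{hyp2}\,(b) and the step-size restriction $\gamma_0<1/C_1$. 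Summing then telescopes to $\gamma_0(I-P_0P_0^*)\preccurlyeq\gamma_0 I$, yielding $\E[\xi_i^0\otimes\xi_i^0]\preccurlyeq\gamma_0 C_\epsilon^2 I$.

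Finally, for the inductive step $q\to q+1$ I would bound $\E[\mathcal{B}_i^{q+1}\otimes\mathcal{B}_i^{q+1}]$ exactly as in \eqref{eq:A.4.4}: exploiting the self-adjointness of $L_{\mu_X,L_i}-K_{X_i,L_i}^T\otimes K_{X_i,L_i}^T$ and the independence of $\xi_{i-1}^q$ from $(X_i,\Y_i)$, the quadratic form equals $\E[\langle g,\E[\xi_{i-1}^q\otimes\xi_{i-1}^q]g\rangle_K]$ with $g=(L_{\mu_X,L_i}-K_{X_i,L_i}^T\otimes K_{X_i,L_i}^T)f$. The inductive hypothesis bounds this by $\gamma_0^{q+1}(\frac{2s}{2s-1})^q C_\epsilon^2\,\E[\Vert g\Vert_K^2]$, and, as in \eqref{eq:A.4.5}, $\E[\Vert g\Vert_K^2]=\langle f,(\E[(K_{X_i,L_i}^T\otimes K_{X_i,L_i}^T)^2]-L_{\mu_X,L_i}^2)f\rangle_K\leq\frac{2s}{2s-1}\langle f,L_{\mu_X,L_i}f\rangle_K$, using $\Vert K_{X_i,L_i}^T\Vert_K^2\leq\frac{2s}{2s-1}$ from \autoref{lemma:Kernel bound}. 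This produces the first bound at level $q+1$, after which the identical telescoping argument applied to $\xi_i^{q+1}$ gives the second bound, closing the induction; the martingale identity $\E[\mathcal{B}_i^q\mid\DD_{i-1}]=0$ is recorded along the way.
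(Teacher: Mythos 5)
Your proposal is correct, and it follows the route the paper intends: the paper omits this proof entirely, remarking only that it is ``similar'' to that of \autoref{lemma: operator preceq bound}, and your argument is indeed that proof's template (induction on $q$, martingale cancellation of cross terms, per-term operator bounds, telescoping). There are two points where you genuinely depart from, and improve on, what the paper actually wrote down. First, you correctly identify why the constant $C_d^2$ disappears: in Appendix B the recursion is centered at $f^*$ rather than at the projection $f_{L_n}^*$, so $\mathcal{B}_i^0=\epsilon_i K_{X_i,L_i}^T$ with $\E[\epsilon_i\mid X_i]=0$ and $\E[\epsilon_i^2\mid X_i]\leq C_\epsilon^2$ under either case of \autoref{hyp4}; this is the whole reason the bound here is cleaner (and has no factor $2$). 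Second, and more substantively, the telescoping in the paper's \eqref{eq:A.4.3} is written for the constant step size $\gamma_i=\gamma_0$ and telescopes the mixed products $P_kP_{k+1}^*-P_{k-1}P_k^*$ via the inequality $ABBA^*\preccurlyeq ABA^*$ for self-adjoint $B\preccurlyeq I$; but \autoref{lemma: noise term 2} is invoked with the polynomial step $\gamma_i=\gamma_0 i^{-t}$, so some modification is unavoidable, and the paper never supplies it. Your replacement, the per-term Loewner bound $P_k\gamma_k^2L_{\mu_X,L_k}P_k^*\preccurlyeq\gamma_0\left(P_kP_k^*-P_{k-1}P_{k-1}^*\right)$, reduces by conjugation-invariance of $\preccurlyeq$ and the spectral calculus of the single operator $L_{\mu_X,L_k}$ to the scalar inequality $\gamma_k\leq\gamma_0(2-\gamma_k\sigma)$, which holds since $\gamma_k\leq\gamma_0$ and $\gamma_k\sigma\leq\gamma_0C_1<1$ under the hypotheses of \autoref{theorem:noise case}; summing then telescopes the symmetric products to $\gamma_0(I-P_0P_0^*)\preccurlyeq\gamma_0 I$. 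This handles any non-increasing step sequence in one stroke, whereas adapting the paper's own telescoping would additionally require first extracting $\gamma_k\leq\gamma_0$ from each PSD summand. Your inductive step matches \eqref{eq:A.4.4}--\eqref{eq:A.4.5} exactly, with the factor $\frac{2s}{2s-1}$ coming from \autoref{lemma:Kernel bound}, so the induction closes and the constants agree with the statement. In short: same strategy as the paper, but your writeup actually covers the varying-step case that the paper's ``proof omitted'' glosses over.
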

\begin{remark}
Since the conclusions and the proofs of \autoref{lemma: noise term 2} and \autoref{lemma: operator preceq bound} are similar, our proof is omitted here.
\end{remark}

\begin{lemma}\label{lemma: B.3.}
Using concepts in \eqref{eq:B.3.2}, we have
$$\sum_{i=k}^n\left(\prod_{l=k+1}^i\left(1-\gamma_l\frac{C_2}{C_1}\sigma_{p,L_n}\right)\right)\leq(n-k)\land \left(\frac{6(C_1)^2}{(C_2\gamma_0)^{2}}\sigma_{p,L_n}^{-\frac{1}{1-t}}\vee
\frac{4C_1}{C_2\gamma_0}\cdot{k}^{t}\sigma_{p,L_n}^{-1}\right)+1.$$
\end{lemma}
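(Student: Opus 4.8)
The plan is to treat $\lambda := \frac{C_2}{C_1}\sigma_{p,L_n}$ as a fixed positive number, so that every factor is $1-\gamma_l\lambda$ with $\gamma_l=\gamma_0 l^{-t}$, and write $A:=\frac{6C_1^2}{(C_2\gamma_0)^2}\sigma_{p,L_n}^{-1/(1-t)}$ and $B:=\frac{4C_1}{C_2\gamma_0}k^t\sigma_{p,L_n}^{-1}$ for the two terms in the maximum. First I would record that all factors lie in $(0,1)$: since $C_2\le 1\le C_1$ and $\sigma_{p,L_n}\le C_1$ (\autoref{hyp2} (b)), one has $\gamma_l\lambda\le\gamma_0 C_2\le\gamma_0 C_1<1$. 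Hence each product $\prod_{l=k+1}^i(1-\gamma_l\lambda)$ is at most $1$, and as $i$ ranges over the $n-k+1$ values $k,\dots,n$ the sum is trivially bounded by $(n-k)+1$. This supplies the $(n-k)$ branch of the minimum, so the remaining task is to bound the sum by $(A\vee B)+1$.

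For the second branch I would use $1-x\le e^{-x}$ together with $\sum_{l=k+1}^i l^{-t}\ge\int_{k+1}^{i+1}x^{-t}\,dx=\frac{(i+1)^{1-t}-(k+1)^{1-t}}{1-t}$, giving $\prod_{l=k+1}^i(1-\gamma_l\lambda)\le g(i)$ with $g(x)=\exp\!\big(-\tfrac{\gamma_0\lambda}{1-t}[(x+1)^{1-t}-(k+1)^{1-t}]\big)$. Because $g$ is decreasing and $g(k)=1$, the integral comparison $\sum_{i=k}^n g(i)\le g(k)+\int_k^n g(x)\,dx\le 1+\int_k^\infty g(x)\,dx$ applies; the isolated term $g(k)=1$ is exactly the source of the ``$+1$'' in the statement. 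The substitution $y=(x+1)^{1-t}$ followed by $z=y-(k+1)^{1-t}$ converts the integral into $\frac{1}{1-t}\int_0^\infty e^{-cz}(z+y_0)^{a}\,dz$, where $a=\tfrac{t}{1-t}$, $c=\tfrac{\gamma_0\lambda}{1-t}$ and $y_0=(k+1)^{1-t}$.

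The key structural point is that the hypothesis $t<\tfrac12$ forces $a=\tfrac{t}{1-t}<1$, so I may use the subadditivity $(z+y_0)^a\le z^a+y_0^a$ and evaluate the two resulting integrals via $\int_0^\infty e^{-cz}z^a\,dz=\Gamma(a+1)c^{-(a+1)}$ and $\int_0^\infty e^{-cz}\,dz=c^{-1}$. Substituting $\lambda=\frac{C_2}{C_1}\sigma_{p,L_n}$ and using $a+1=\tfrac{1}{1-t}$ and $y_0^a=(k+1)^t$, I expect the first integral to be bounded by $\tfrac16 A$ and the second by $\tfrac12 B$. This rests on three elementary estimates valid for $t\in[0,\tfrac12)$: $\Gamma(\tfrac{1}{1-t})\le 1$ and $(1-t)^{t/(1-t)}\le 1$ (arguments/bases in the right ranges), and $\big(\tfrac{C_1}{C_2\gamma_0}\big)^{1/(1-t)}\le\big(\tfrac{C_1}{C_2\gamma_0}\big)^2$, which holds because $\tfrac{C_1}{C_2\gamma_0}>1$ (from $C_1\ge1\ge C_2$ and $\gamma_0 C_1<1$) and $\tfrac{1}{1-t}\le 2$; the crude bound $(k+1)^t\le 2k^t$ absorbs the shift in $B$.

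Finally I would combine the two pieces. The main subtlety is that the subadditivity split yields a sum $\tfrac16 A+\tfrac12 B$ rather than a maximum, while the statement asks for $A\vee B$. This is reconciled by $\tfrac16 A+\tfrac12 B\le\tfrac23(A\vee B)\le A\vee B$, so $\int_k^\infty g\le A\vee B$ and therefore $\sum_{i=k}^n\prod_{l=k+1}^i(1-\gamma_l\lambda)\le 1+(A\vee B)$; together with the trivial bound $(n-k)+1$ this gives the claim. The one genuinely delicate point is guaranteeing that the prefactors emerging from the Gamma-function evaluation are small enough ($\tfrac16$ and $\tfrac12$) for their sum to remain under a single maximum — this is precisely where the slack in the constants $6$ and $4$ and the restriction $t<\tfrac12$ are consumed.
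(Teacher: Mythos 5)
Your proposal is correct and follows essentially the same route as the paper's proof: bound each product by $\exp\left(-\gamma_0\lambda\,\frac{(i+1)^{1-t}-(k+1)^{1-t}}{1-t}\right)$ with $\lambda=\frac{C_2}{C_1}\sigma_{p,L_n}$ via $1-x\le e^{-x}$ and an integral comparison of $\sum_l l^{-t}$, compare the sum with an integral (the isolated term $g(k)=1$ producing the ``$+1$''), change variables, and evaluate the resulting Gamma-type integral, with $t<\tfrac12$ supplying exactly the slack needed for the constants $6$ and $4$. The only difference is cosmetic: you split $(z+y_0)^a\le z^a+y_0^a$ (valid since $a=\tfrac{t}{1-t}<1$) and use $\Gamma\left(\tfrac{1}{1-t}\right)\le 1$, then reconcile the resulting sum with the maximum via $\tfrac16 A+\tfrac12 B\le A\vee B$, whereas the paper splits the Jacobian factor by a pointwise maximum $\left(1+w\right)^{1/\rho-1}\le 2^{1/\rho-1}\left(1\vee w\right)^{1/\rho-1}$ and bounds the stretched-exponential integrals $\int_0^\infty e^{-v^\rho}\,dv\le 3$ and $\int_0^\infty v^{\rho-1}e^{-v^\rho}\,dv=\tfrac1\rho$ directly.
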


\begin{proof}
First we consider the part inside the brackets,
\begin{equation*}
\begin{aligned}
\prod_{l=k+1}^i\left(1-\gamma_l\frac{C_2}{C_1}\sigma_{p,L_n}\right)&=
\prod_{l=k+1}^i\exp\left(\ln\left(1-\gamma_l\frac{C_2}{C_1}\sigma_{p,L_n}\right)\right)\\
&\leq\exp\left(-\sum_{l=k+1}^i\gamma_l\frac{C_2}{C_1}\sigma_{p,L_n}\right)\\
&=\exp\left(-\frac{C_2}{C_1}\sigma_{p,L_n}\gamma_0\sum_{l=k+1}^il^{-t}\right)\\
&\leq\exp\left(-\frac{C_2}{C_1}\sigma_{p,L_n}\gamma_0\int_{u=k+1}^{i+1}\frac{1}{u^t}du\right)\\
&\leq\exp\left(-\frac{C_2}{C_1}\sigma_{p,L_n}\gamma_0\frac{(i+1)^{1-t}-(k+1)^{1-t}}{1-t}\right).
\end{aligned}
\end{equation*}
Then we can obtain
\begin{equation}\label{eq:B.4.2}
                \begin{aligned}
&\sum_{i=k}^n\prod_{l=k+1}^i\left(1-\gamma_l\frac{C_2}{C_1}\sigma_{p,L_n}\right)\\
\leq&\sum_{i=k}^n\exp\left(-\frac{C_2}{C_1}\sigma_{p,L_n}\gamma_0\frac{(i+1)^{1-t}-(k+1)^{1-t}}{1-t}\right)\\
\leq&\int_k^n\exp\left(-\frac{C_2}{C_1}\sigma_{p,L_n}\gamma_0\frac{(u+1)^{1-t}-(k+1)^{1-t}}{1-t}\right)du+1\\
=&\int_{k+1}^{n+1}\exp\left(-\frac{C_2}{C_1}\sigma_{p,L_n}\gamma_0\frac{(u)^{1-t}-(k+1)^{1-t}}{1-t}\right)du+1.
                \end{aligned} 
            \end{equation}
Let $\rho:=1-t$ and $K:={\left(\frac{C_2\sigma_{p,L_n}\gamma_0}{C_1(1-t)}\right)}^{1/\rho},$ we also define
\begin{equation}\label{eq:B.4.3}
                \begin{aligned}
v^\rho&=\frac{C_2\sigma_{p,L_n}\gamma_0}{C_1(1-t)}\left(u^\rho-(k+1)^\rho\right)\\
v&={\left(\frac{C_2\sigma_{p,L_n}\gamma_0}{C_1(1-t)}\right)}^{1/\rho}\left(u^\rho-(k+1)^\rho\right)^{1/\rho}\\
dv&=K\left(1-(\frac{k+1}{u})^\rho\right)^{1/\rho-1}du\\
&\frac{1}{K}\left(1+\frac{(k+1)^\rho K^\rho}{v^\rho}\right)^{1/\rho-1}dv=du.
                \end{aligned} 
            \end{equation}
We bring \eqref{eq:B.4.3} into \eqref{eq:B.4.2} to obtain
\begin{equation}\label{eq:B.4.4}
                \begin{aligned}
&\int_{k+1}^{n+1}\exp\left(-\frac{C_2}{C_1}\sigma_{p,L_n}\gamma_0\frac{u^{1-t}-(k+1)^{1-t}}{1-t}\right)du\\
\leq&\int_0^\infty\frac{1}{K}
\left(1+\frac{(k+1)^\rho K^\rho}{v^\rho}\right)^{1/\rho-1}\exp(-v^\rho)dv\\
\leq&\frac{2^{1/\rho-1}}{K}\int_0^\infty\left(1\vee\frac{(k+1)^\rho K^\rho}{v^\rho}\right)^{1/\rho-1}\exp(-v^\rho)dv\\
\leq&\frac{2^{1/\rho-1}}{K}\int_0^\infty\left(1\vee\frac{(k+1)^{1-\rho}K^{1-\rho}}{v^{1-\rho}} \right)\exp(-v^\rho)dv\\
\leq&\frac{2}{K}\left(\int_0^\infty\exp(-v^\rho)dv\vee\int_0^\infty
\frac{(k+1)^{1-\rho}K^{1-\rho}}{v^{1-\rho}}\exp(-v^\rho)dv\right)\\
=&\left(\frac{2}{K}\int_0^\infty\exp(-v^\rho)dv\right)\vee\left(\frac{2(k+1)^{1-\rho}}{K^\rho}\int_0^\infty
v^{-1+\rho}\exp(-v^\rho)dv\right).
                \end{aligned} 
            \end{equation}
We have
\begin{equation}\label{eq:B.4.5}
\int_0^\infty v^{-1+\rho}\exp(-v^\rho)dv=\frac{1}{\rho}\int_0^\infty\exp(-v^\rho)dv^\rho=\frac{1}{\rho}
\int_0^\infty\exp(-x)dx=\frac{1}{\rho},
\end{equation}
we can also find
\begin{equation}\label{eq:B.4.6}
                \begin{aligned}
                   \int_0^\infty\exp(-v^\rho)dv&\leq1+\int_1^\infty\exp(-v^\rho)dv\\
&\leq1+\int_1^\infty\exp(-v^{1/2})dv=1+\int_1^\infty2x\exp(-x)dx=1+\frac{4}{e}<3.
                \end{aligned} 
            \end{equation}
Combining \eqref{eq:B.4.4}, \eqref{eq:B.4.5} and \eqref{eq:B.4.6}, we complete the proof.
\begin{align}
&\left(\frac{2}{K}\int_0^\infty\exp(-v^\rho)dv\right)\vee\left(\frac{2(k+1)^{1-\rho}}{K^\rho}\int_0^\infty
v^{-1+\rho}\exp(-v^\rho)dv\right)\notag\\
\leq&\left(\frac{2}{K}\cdot3\right)\vee\left(\frac{2(k+1)^{1-\rho}}{K^\rho}\cdot\frac{1}{\rho}\right)\notag\\
=&\left(\frac{6{C_1}^{1/\rho}}{(C_2\gamma_0)^{1/\rho}}\sigma_{p,L_n}^{-1/\rho}\right)\vee\left(
\frac{2{(k+1)}^{1-\rho}C_1}{C_2\gamma_0}\cdot\sigma_{p,L_n}^{-1}\right)\\
\leq&\left(\frac{6(C_1)^2}{(C_2\gamma_0)^2}\sigma_{p,L_n}^{-1/\rho}\right)\vee\left(\frac{4C_1}{C_2\gamma_0}k^{1-\rho}\sigma_{p,L_n}^{-1}\right)\notag\\
=&\left(\frac{6(C_1)^2}{(C_2\gamma_0)^{2}}\sigma_{p,L_n}^{-\frac{1}{1-t}}\vee
\frac{4C_1}{C_2\gamma_0}\cdot{k}^{t}\sigma_{p,L_n}^{-1}\right).\notag
\end{align}
\end{proof}

\begin{lemma}\label{s1s2hehe}
Following $S_1$ and $S_2$ are defined according to \eqref{eq:def S1 S2}, we have following upper bounds
\begin{equation*}
\begin{aligned}
&S_1\leq2\gamma_0^2D_1n^{1+\frac{1-t}{2s}}+2\gamma_0^2D_1n,\\
&S_2\leq2\gamma_0^2D_2n^{1+\frac{1-t}{2s}}+2\gamma_0^2n,
\end{aligned}
\end{equation*}
where $D_1$ is the bound of Riemann sum $\frac{1}{n}\sum_{k=1}^n(\frac{1}{k/n}-1)^{2t}$ which  corresponds to the integral $\int_{0}^1\left(\frac{1}{x}-1\right)^{2t}dx$ and $D_2$ is the bound of Riemann sum $\frac{1}{n}\sum_{k=1}^n\left(\frac{k}{n}\right)^{-\frac{1}{2}}$ which corresponds to the integral $\int_{0}^1x^{-\frac{1}{2}}dx$.
\end{lemma}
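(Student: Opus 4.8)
The plan is to estimate, for each fixed $k$, the inner sum over $p$ by splitting it at the crossover point where the two arguments of the minimum coincide, then to sum over $k$ and recognize the resulting expressions as the Riemann sums that define $D_1$ and $D_2$. Throughout I write $\gamma_k^2=\gamma_0^2 k^{-2t}$ and set $m=n-k$, and I will repeatedly use $4s>1$ (so that tails $\sum_p p^{-4s}$ converge) and $t<\tfrac12$.

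For $S_1$ I split the $p$-sum at $p_0=m^{(1-t)/(2s)}$, the value where $(n-k)^2=p_0^{4s/(1-t)}$. On $p\le p_0$ the minimum equals $p^{4s/(1-t)}$, contributing $\sum_{p\le p_0}p^{4s/(1-t)-4s}=\sum_{p\le p_0}p^{4st/(1-t)}$; bounding this increasing sum by an integral gives a leading piece of order $m^{2t+(1-t)/(2s)}$ plus a boundary correction of order $m^{2t}$. On $p>p_0$ the minimum equals $(n-k)^2$, and $\sum_{p>p_0}(n-k)^2p^{-4s}$ is again of order $m^{2t+(1-t)/(2s)}$ with the prefactor $(4s-1)^{-1}$. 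Hence the inner sum is at most $2\bigl(m^{2t+(1-t)/(2s)}+m^{2t}\bigr)$. Multiplying by $\gamma_0^2k^{-2t}$, bounding $m^{(1-t)/(2s)}\le n^{(1-t)/(2s)}$ on the leading piece, and using the identity $\sum_{k=1}^n k^{-2t}(n-k)^{2t}=\sum_{k=1}^n\bigl(\tfrac{1}{k/n}-1\bigr)^{2t}=n\cdot\tfrac1n\sum_{k=1}^n\bigl(\tfrac{1}{k/n}-1\bigr)^{2t}\le nD_1$ yields the stated bound $S_1\le 2\gamma_0^2 D_1 n^{1+(1-t)/(2s)}+2\gamma_0^2 D_1 n$.

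For $S_2$ the crossover is $p_0=m^{1/(2s)}k^{-t/(2s)}$. When $p_0\ge1$, the region $p\le p_0$ contributes $k^{2t}\lfloor p_0\rfloor\le k^{2t-t/(2s)}m^{1/(2s)}$ and the tail $(n-k)^2\sum_{p>p_0}p^{-4s}$ is of the same order, so the inner sum is at most $2k^{2t-t/(2s)}m^{1/(2s)}$. The delicate case is $p_0<1$, i.e.\ $m<k^{t}$: then the minimum equals $(n-k)^2$ for every $p\ge1$, the inner sum is $(n-k)^2\zeta(4s)\le 2m^2\le 2k^{2t}$, and after the factor $k^{-2t}$ this contributes $\sum_{k}2\gamma_0^2=2\gamma_0^2 n$. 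For the leading piece I pull out powers of $n$: $\sum_{k=1}^n k^{-t/(2s)}(n-k)^{1/(2s)}=n^{1+(1-t)/(2s)}\cdot\tfrac1n\sum_{k=1}^n(k/n)^{-t/(2s)}(1-k/n)^{1/(2s)}$, and since $t/(2s)<\tfrac12$ (because $t<\tfrac12<s$) the crude comparison $(k/n)^{-t/(2s)}(1-k/n)^{1/(2s)}\le(k/n)^{-1/2}$ bounds the Riemann sum by $D_2$. This gives $S_2\le 2\gamma_0^2 D_2 n^{1+(1-t)/(2s)}+2\gamma_0^2 n$.

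The main obstacle is the bookkeeping around the crossover $p_0$: one must verify that the two regimes of the minimum produce estimates of the same order so they can be combined into a single clean power of $m$, and then correctly isolate the lower-order $n$ terms — which for $S_2$ arise precisely from the degenerate regime $p_0<1$, and for $S_1$ from the boundary correction of the integral comparison. A secondary point is convergence of the limiting constants: the Riemann sum defining $D_1$ is finite only because $2t<1$, and the replacement of the Beta-type Riemann sum by $D_2$ hinges on the uniform bound $x^{-t/(2s)}\le x^{-1/2}$ on $(0,1]$, which is exactly where the hypothesis $t/(2s)<\tfrac12$ is used.
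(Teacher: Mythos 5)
Your proposal is correct and follows essentially the same route as the paper's proof: split each inner sum over $p$ at the crossover of the minimum, compare the two regimes with integrals (using $4s>1$), and then recognize the sums over $k$ as Riemann sums bounded by $D_1$ and $D_2$ (using exactly $2t<1$ and $t/(2s)<1/2$, as the paper does). The only differences are organizational — your explicit case split $p_0\ge 1$ versus $p_0<1$ for $S_2$ plays the role of the paper's catch-all boundary term $2k^{2t}$, and your intermediate constant in the $p_0\ge1$ case (inner sum $\le 2k^{2t-t/(2s)}m^{1/(2s)}$) is slightly optimistic about the tail comparison — but these are at the same level of bookkeeping rigor as the paper itself and do not affect the final bounds.
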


\begin{proof}
We first deal with $S_1$.
\begin{equation*}
\begin{aligned}
S_1&=\sum_{k=1}^n\gamma_k^2\sum_{p=1}^\infty\frac{1}{p^{4s}}\left[(n-k)^2\land p^{\frac{4s}{1-t}}\right]\\
&\leq\sum_{k=1}^n\gamma_k^2\left[\sum_{p=(n-k)^{\frac{1-t}{2s}}}^\infty(n-k)^2\frac{1}{p^{4s}}+
\sum_{p=1}^{(n-k)^{\frac{1-t}{2s}}}\frac{1}{p^{4s}}p^{\frac{4s}{1-t}}\right]\\
&\leq\sum_{k=1}^n\gamma_k^2\left[(n-k)^2\sum_{p=(n-k)^{\frac{1-t}{2s}}+1}^\infty p^{-4s}+
\sum_{p=1}^{(n-k)^{\frac{1-t}{2s}}-1}p^{\frac{4st}{1-t}}+2(n-k)^{2t}\right]\\
&\leq\sum_{k=1}^n\gamma_k^2\left[(n-k)^2(n-k)^{\frac{1-t}{2s}(-4s+1)}+
(n-k)^{\frac{1-t}{2s}(\frac{4st}{1-t}+1)}+2(n-k)^{2t}\right]\\
&=\sum_{k=1}^n\gamma_k^2\left[2(n-k)^{\frac{1-t+4st}{2s}}+2(n-k)^{2t}\right]\\
&=2\gamma_0^2\sum_{k=1}^nk^{-2t}\left[(n-k)^{\frac{1-t+4st}{2s}}+(n-k)^{2t}\right]\\
&=2\gamma_0^2\sum_{k=1}^nk^{-2t}(n-k)^{\frac{1-t+4st}{2s}}+2\gamma_0^2\sum_{k=1}^n\left(\frac{n}{k}-1\right)^{2t}\\
&=2\gamma_0^2n^{\frac{1-t}{2s}}\sum_{k=1}^n\left(\frac{n}{k}\right)^{2t}\left(1-\frac{k}{n}\right)^{2t+\frac{1-t}{2s}}+
2\gamma_0^2n\left[\frac{1}{n}\sum_{k=1}^n\left(\frac{1}{k/n}-1\right)^{2t}\right]\\
&=2\gamma_0^2n^{1+\frac{1-t}{2s}}\left[\frac{1}{n}\sum_{k=1}^n\left(1-\frac{k}{n}\right)^{\frac{1-t}{2s}}\left(\frac{1}{k/n}-1\right)^{2t}\right]+
2\gamma_0^2n\left[\frac{1}{n}\sum_{k=1}^n\left(\frac{1}{k/n}-1\right)^{2t}\right]\\
&\leq2\gamma_0^2n^{1+\frac{1-t}{2s}}\left[\frac{1}{n}\sum_{k=1}^n\left(\frac{1}{k/n}-1\right)^{2t}\right]+
2\gamma_0^2n\left[\frac{1}{n}\sum_{k=1}^n\left(\frac{1}{k/n}-1\right)^{2t}\right].
\end{aligned}
\end{equation*}
Since $t<1/2$, one has
$$\int_{0}^1\left(\frac{1}{x}-1\right)^{2t}dx<\infty,$$
implying the existence of an upper bound $D_1$ such that
$$\frac{1}{n}\sum_{k=1}^n\left(\frac{1}{k/n}-1\right)^{2t}\leq D_1.$$
We obtain the bound of $S_1$
\begin{equation*}
S_1\leq2\gamma_0^2D_1n^{1+\frac{1-t}{2s}}+2\gamma_0^2D_1n.
\end{equation*}
Next, we deal with $S_2$.
\begin{equation*}
\begin{aligned}
S_2=&\sum_{k=1}^n\gamma_k^2\left[\sum_{p=1}^\infty\frac{1}{p^{4s}}\left[(n-k)^2\land k^{2t}p^{4s}\right]\right]\\
\leq&\sum_{k=1}^n\gamma_k^2\left[\sum_{p=(n-k)^{\frac{1}{2s}}k^{-\frac{t}{2s}}}^\infty \frac{1}{p^{4s}}(n-k)^2+
\sum_{p=1}^{(n-k)^{\frac{1}{2s}}k^{-\frac{t}{2s}}}k^{2t}\right]\\
\leq&\sum_{k=1}^n\gamma_k^2\left[\sum_{p=(n-k)^{\frac{1}{2s}}k^{-\frac{t}{2s}}+1}^\infty \frac{1}{p^{4s}}(n-k)^2+
\sum_{p=1}^{(n-k)^{\frac{1}{2s}}k^{-\frac{t}{2s}}-1}k^{2t}+2k^{2t}\right]\\
\leq&\sum_{k=1}^n\gamma_k^2\left[(n-k)^2\left[(n-k)^{\frac{1}{2s}}k^{-\frac{t}{2s}}\right]^{-4s+1}+
(n-k)^{\frac{1}{2s}}k^{-\frac{t}{2s}}k^{2t}+2k^{2t}\right]\\
\leq&2\gamma_0^2\sum_{k=1}^n\left[(n-k)^{\frac{1}{2s}}k^{-\frac{t}{2s}}\right]+2\gamma_0^2n\\
\leq&2\gamma_0^2n^{\frac{1-t}{2s}}\sum_{k=1}^n\left[\left(1-\frac{k}{n}\right)^{\frac{1}{2s}}\left(\frac{n}{k}\right)^{\frac{t}{2s}}\right]+2\gamma_0^2n\\
\leq&2\gamma_0^2n^{1+\frac{1-t}{2s}}\left[\frac{1}{n}
\sum_{k=1}^n\left(1-\frac{k}{n}\right)^{\frac{1}{2s}}\left(\frac{k}{n}\right)^{-\frac{t}{2s}}
\right]+2\gamma_0^2n\\
\leq&2\gamma_0^2n^{1+\frac{1-t}{2s}}\left[\frac{1}{n}
\sum_{k=1}^n\left(\frac{k}{n}\right)^{-\frac{t}{2s}}
\right]+2\gamma_0^2n\\
\leq&2\gamma_0^2n^{1+\frac{1-t}{2s}}\left[\frac{1}{n}
\sum_{k=1}^n\left(\frac{k}{n}\right)^{-\frac{1}{2}}
\right]+2\gamma_0^2n.
\end{aligned}
\end{equation*}
Similarly we can obtain
$$\int_{0}^1x^{-\frac{1}{2}}dx<\infty.$$
So there is a constant $D_2$ such that
\begin{equation*}
S_2\leq2\gamma_0^2D_2n^{1+\frac{1-t}{2s}}+2\gamma_0^2n.
\end{equation*}
Thus we complete the proof.
\end{proof}

\end{appendices}

\end{document}